\title{Provably Efficient Fictitious Play Policy Optimization for   Zero-Sum Markov Games with Structured Transitions}
\begin{document}

\author{Shuang Qiu\thanks{University of Chicago. 
Email: \texttt{qiush@umich.edu}.} 
       \qquad
        Xiaohan Wei\thanks{Meta Platforms, Inc.
Email: \texttt{ubimeteor@fb.com}.}         	
	\qquad
		Jieping Ye\thanks{University of Michigan. 
    Email: \texttt{jpye@umich.edu}.}
	\qquad    
       	Zhaoran Wang\thanks{Northwestern University.
    Email: \texttt{zhaoranwang@gmail.com}.}
    \qquad 
    Zhuoran Yang\thanks{
   Yale University. 
	Email: \texttt{zhuoran.yang@yale.edu}.}
}

\maketitle


\begin{abstract}
While single-agent policy optimization in a fixed environment has attracted a lot of research attention recently in the reinforcement learning community, much less is known theoretically when there are multiple agents playing in a potentially competitive environment. We take steps forward by proposing and analyzing new fictitious play policy optimization algorithms for zero-sum Markov games with structured but unknown transitions. We consider two classes of transition structures: factored independent transition and single-controller transition. For both scenarios, we prove tight $\widetilde{\mathcal{O}}(\sqrt{K})$ regret bounds after $K$ episodes in a two-agent competitive game scenario. The regret of each agent is measured against a potentially adversarial opponent who can choose a single best policy in hindsight after observing the full policy sequence. Our algorithms feature a combination of Upper Confidence Bound (UCB)-type optimism and fictitious play under the scope of simultaneous policy optimization in a non-stationary environment. When both players adopt the proposed algorithms, their overall optimality gap is $\widetilde{\mathcal{O}}(\sqrt{K})$.
\end{abstract}

\section{Introduction}

Widely applied in multi-agent reinforcement learning \citep{sutton2018reinforcement, bu2008comprehensive}, Policy Optimization (PO) has achieved tremendous empirical success \citep{foerster2016learning,leibo2017multi,silver2016mastering,silver2017mastering,berner2019dota,vinyals2019grandmaster}, due to its high efficiency and easiness to combine with different optimization techniques.  Despite these empirical successes, theoretical understanding of multi-agent policy optimization, especially the zero-sum Markov game \citep{littman1994markov} via policy optimization, lags rather behind.  Most recent works studying zero-sum Markov games (e.g.,  \citet{xie2020learning,bai2020provable}) focus on value-based methods achieving $\tilde{\cO}(\sqrt{K})$ regrets and they assume there is a central controller available solving for coarse correlated equilibrium or Nash equilibrium at each step, which brings extra computational cost. Here we let $K$ denote the total number of episodes. On the other hand, although there has been great progress in understanding single-agent PO algorithms \citep{sutton2000policy,kakade2002natural,schulman2015trust,papini2018stochastic,cai2019provably,bhandari2019global,liu2019neural}, directly extending the single-agent PO to the multi-agent setting encounters the main challenge of non-stationary environments caused by agents changing their own policies simultaneously \citep{bu2008comprehensive,zhang2019multi}. In this paper, we aim to answer the following open question:
\begin{center}
\emph{Can policy optimization provably solve zero-sum Markov games to achieve $\cO(\sqrt{K})$ regrets?}
\end{center}
As an initial attempt to tackle the problem, in this work, we focus on two \emph{non-trivial} classes of zero-sum Markov games with structured transitions: the \emph{factored independent} transition and the \emph{single-controller transition}. For the game with factored independent transition, the transition model is factored into two independent parts, and each player makes a transition following their own transition model. The single-controller zero-sum game assumes that the transition model is entirely controlled by the actions of Player 1. In both settings, the rewards received are decided jointly by the actions of both players. These two problems capture the non-stationarity of the multi-agent reinforcement learning in the following aspects: (1) the rewards depend on both players' potentially adversarial actions and policies in both settings; (2) the rewards further depend on both players' states in the factored independent transition setting; (3) Player 2 in the single-controller transition setting faces non-stationary states determined by Player 1’s policies. In addition to the non-stationarity, practically, the true transition model of the environment could be unseen and only bandit feedback is accessible to players. Thus, the non-stationarity, as well as the unknown transition and reward function, poses great challenges to multi-agent PO problems. 

In this paper, we propose two novel optimistic Fictitious Play (FP) policy optimization algorithms for the games with factored independent transition and single-controller zero-sum games respectively. Our algorithms are motivated by the close connection between the multi-agent PO and the FP framework. Specifically, FP \citep{robinson1951iterative} is a classical framework for solving games based on simultaneous policy updates, which includes two major steps: inferring the opponent (including learning the opponent's policy) and taking the best response policy against the policy of the opponent. As an extension of FP to Markov games, our proposed PO algorithms possess two phases of learning, namely policy evaluation and policy improvement. The policy evaluation phase involves exchanging the policies of the previous episode, which is motivated by the step of inferring the opponent in FP. By making use of the policies from the previous episode, the algorithms further compute the value function and the Q-function with the estimated reward function and transition model. By the principle of ``optimism in the face of uncertainty'' \citep{auer2002finite,bubeck2012regret}, their estimation incorporates UCB bonus terms to handle the non-stationarity of the environment as well as the uncertainty arising from only observing finite historical data. Furthermore, the policy improvement phase 
corresponds to taking the (regularized) best response policy via a mirror descent/ascent step (where the regularization comes from the KL divergence), which can be viewed as a soft-greedy step based on the historical information about the opponent and the environment. This step resembles the smoothed FP \citep{fudenberg1995consistency,perolat2018actor,zhang2019multi} for normal form games (or matrix games). During this phase, both players in the factored independent transition setting and Player 2 in the single-controller setting demand to estimate the opponent's state reaching probability to handle the non-stationarity.

For each player, we measure the performance of its algorithm by the regret of the learned policy sequence comparing against the best policy in hindsight after $K$ episodes. In the two settings, our proposed algorithms can achieve an $\tilde{\cO}(\sqrt{K})$ regret for both players, matching the regret of value-based algorithms. Furthermore, with both players running the proposed PO algorithms, they have $\tilde{\cO}(\sqrt{K})$ optimality gap.  To the best of our knowledge, this seems the first provably sample-efficient fictitious play policy optimization algorithm for zero-sum Markov games with the two structured but unknown transitions. Our work also partially solves one open question in \citet{bai2020provable} that how to solve a zero-sum Markov game of multiple steps $(H \geq 2)$ with an $\tilde{\cO}(\sqrt{K})$ regret via mirror descent-type (policy optimization) algorithms.

\section{Related Work}

There have been a large number of classical works studying the games with the independent transition model, e.g., \citet{altman2005zero,altman2008constrained,flesch2008stochastic,singh2014characterization}.
In addition, the single-controller games are also broadly investigated in many existing works, .e.g., \citet{parthasarathy1981orderfield,filar1984matrix,rosenberg2004stochastic,guan2016regret}. Most of the aforementioned works do not focus on the non-asymptotic regret analysis. \citet{guan2016regret} studies the regret of the single-controller zero-sum game but with an assumption that the transition model is known to players. In contrast, our paper provides a regret analysis for both transition models under a more realistic setting that the transition model is unknown. Games with the two structured transition models are closely associated with the applications in communications. The game with the factored independent transition \citep{altman2005zero} finds applications in wireless communications.  An application example of the single-controller game is the attack-defense modeling in communications \citep{eldosouky2016single}.

Recently, many works are focusing on the non-asymptotic analysis of Markov games \citep{heinrich2016deep, guan2016regret, wei2017online, perolat2018actor,zhang2019policy,xie2020learning, bai2020provable}. Some of them aim to propose sample-efficient algorithms with theoretical regret guarantees for zero-sum games.  \citet{wei2017online} proposes an algorithm extending single-agent UCRL2 algorithm \citep{jaksch2010near}, which requires solving a constrained optimization problem each round. \citet{zhang2019policy} also studies PO algorithms but does not provide regret analysis, which also assumes an extra linear quadratic structure and a known transition model. In addition, recent works on Markov games   \citep{xie2020learning,bai2020provable,liu2020sharp,bai2020near} propose value-based algorithms under the assumption that there exists a central controller that specifies the policies of agents by finding the coarse correlated equilibrium or Nash equilibrium for a set of matrix games in each episode. \citet{bai2020provable} also makes an attempt to investigate PO algorithms in zero-sum games. However, their work shows restrictive results where each player only plays one step in each episode. Right prior to our work, \citet{daskalakis2021independent} also studies the policy optimization algorithm for a two-player zero-sum Markov game under an assumption of bounded distribution mismatch coefficient in a non-episodic setting. To achieve a certain error $\varepsilon$ for the convergence measure defined in their work, their proposed algorithm requires an $\cO(\varepsilon^{-12.5})$ sample complexity. A concurrent work \citep{tian2020provably} studies zero-sum games under a different online agnostic setting with PO methods and achieves an $\tilde{\cO}(K^{2/3})$ regret. Motivated by classical fictitious play works \citep{robinson1951iterative,fudenberg1995consistency,heinrich2015fictitious,perolat2020poincar},
for the episodic Markov game, we focus on the setting where there is no central controller which determines the policies of the two players and we propose a policy optimization algorithm where each player updates its own policy based solely on the historical information at hand. Moreover, under the framework of FP, our work does not require the bounded distribution mismatch coefficient assumption (or gradient domination) that is used in some recent works. Our result matches the  $\cO(\sqrt{K})$  regret upper bounds in  \citet{xie2020learning,bai2020provable} that are obtained by value-based methods.

Furthermore, we note that the game for each individual player can be viewed as a special case of MDPs with adversarial rewards and  bandit feedbacks due to the adversarial actions of opponents. 
For such a class of MDP models in general, \citet{jin2019learning} 
proposes an  algorithm based on  mirror descent involving occupancy measures  and attains an $\tilde \cO(\sqrt{K})$ regret. 
However, each update step of the algorithm requires solving another optimization problem which is more computationally demanding than our PO method. Besides, it is also unclear whether the  algorithm in \citet{jin2019learning}  can be extended to zero-sum games. 
Moreover, for the same MDP model,  \citet{efroni2020optimistic} proposes an optimistic policy optimization algorithm that achieves an $\tilde{\cO}(K^{2/3})$ regret. 
Thus, directly applying this result would yield an $\tilde{\cO}(K^{2/3})$ regret. 
In fact, regarding the problem as an MDP with adversarial rewards neglects the fact that such ``adversarial reward functions'' are determined by the actions and policies of the opponent. 
Thus, since each player knows the past actions taken and policies executed by the opponent under the FP framework, both players can construct accurate estimators of the environment after a sufficiently large number of episodes. 
As we will show in Sections \ref{sec:FIT} and \ref{sec:SCT}, 
the proposed PO methods explicitly utilize the information of the opponent in the policy evaluation step, which is critical for the methods to obtain an $\tilde{\cO}(\sqrt{K})$ regret.

\section{Background and Preliminaries}
In this section, we formally introduce notations and setups. Then, we describe the two transition structures in details.

\subsection{Notations and Setups}
We define a tabular episodic two-player zero-sum Markov game (MG) by the tuple $(\cS, \cA, \cB,  H, \cP, r)$, where $\cS$ is the state space, $\cA$ and $\cB$ are the action spaces of Player 1 and Player 2 respectively, $H$ is the length of each episode, $\cP_h(s' \given s, a )$ denotes the transition probability at the $h$-th step to the state $s'$ in the $(h+1)$-th step when Player 1 takes action $a\in\cA$ in an episode, $r_h : \cS \times \cA \times \cB \mapsto [0, 1]$ denotes the reward function at the $h$-step, with the value normalized in the range $[0, 1]$. In this paper, we let $\cP=\{\cP_h\}_{h=1}^H$ be the \emph{true} transition model, which is \emph{unknown} to both players. Throughout this paper, we let $\langle\cdot, \cdot \rangle_\cS$, $\langle \cdot, \cdot \rangle_\cA$, and $\langle \cdot, \cdot \rangle_\cB$ denote the inner product  over $\cS$, $\cA$, and $\cB$ respectively.
 
The policy of Player 1 is a collection of probability distributions $\mu = \{ \mu_h\}_{h=1}^H$ where $\mu_h( a | s) \in \Delta_\cA$ with $\Delta_\cA$ denoting a probability simplex defined on space $\cA$.  Analogously, we have the policy of Player 2 as a collection of probability distributions $\nu = \{ \nu_h\}_{h=1}^H $, where $\nu_h(b|s) \in \Delta_\cB$ with $\Delta_\cB$ denoting the probability simplex on space $\cB$. We denote $\mu^k = \{ \mu^k_h\}_{h=1}^H$ and $\nu^k = \{ \nu^k_h\}_{h=1}^H$ as the policies at episode $k$ for Players 1 and 2. 

\vspace{3pt}
\noindent\textbf{Fictitious Play.} At the beginning of the $k$-th episode, each player observes the opponent's policy during the $(k-1)$-th episode. For simplicity of theoretical analysis, here we assume there exists an oracle such that each player can obtain the opponent's past policy. Then, they update their own policies for this episode and make simultaneous moves. By the end of the $k$-th episodes, each player observes only the trajectory $\{(s^k_h, a^k_h, b^k_h, s_{h+1}^k)\}_{h=1}^H$ and the bandit feedback along the trajectory.  The bandit setting is more challenging than the full-information setting, where only the reward values $\{r_h^k (s^k_h, a^k_h, b_h^k) \}_{h=1}^H$ on the trajectory are observed rather than the exact value function $r_h(s,a,b)$ for all $(s,a,b)\in \cS\times\cA\times \cB$. Moreover, the rewards $r_h^k(\cdot, \cdot,\cdot) \in [0,1]$ is time-varying with its expectation $r_h= \EE[r_h^k]$ which can be adversarially affected by the opponent's action or policy, indicating the non-stationarity of the environment.

\vspace{3pt}
\noindent\textbf{Value Function.} We define the value function $V_h^{\mu, \nu}: \cS \mapsto \RR$ under any policies $\mu = \{ \mu_h \}_{h=1}^H$, $\nu = \{ \nu_h \}_{h=1}^H$ and the transition model $\cP = \{ \cP_h \}_{h=1}^H$ by $
V_h^{\mu, \nu}(s) := \EE[ \sum_{h'=h}^H r_{h'} (s_{h'}, a_{h'}, b_{h'})  \given s_h = s ]$, where the expectation is taken over the random state-action pairs $\{(s_{h'}, a_{h'}, b_{h'})\}_{h'=h}^H$. The corresponding action-value function (Q-function)  $Q_h^{\mu, \nu}: \cS \times \cA \times \cB \mapsto \RR$ is then defined as $Q_h^{\mu, \nu}(s, a, b) := \EE[ \sum_{h'=h}^H r_{h'}(s_{h'}, a_{h'}, b_{h'} )  \given s_h = s, a_h = a,  b_h = b]$. Therefore, according to the above definitions, we have the following Bellman equation
\begin{align}
&V_h^{\mu, \nu}(s) = [\mu_h(\cdot | s)]^\top Q_h^{\mu, \nu}(s, \cdot , \cdot) \nu_h(\cdot | s), \label{eq:bellman_V}\\
&Q_h^{\mu, \nu}(s, a, b) = r_h (s, a, b) +  \big\langle  \cP_h(\cdot | s, a, b), V_{h+1}^{\mu, \nu} (\cdot) \big\rangle_\cS, \label{eq:bellman_Q}
\end{align}
where $\mu_h(\cdot | s)$ and $\nu_h(\cdot | s)$ are column vectors over the space $\cA$ and the space $\cB$ respectively,  $V_{h+1}^{\mu, \nu} (\cdot)$ is a column vector over the space $\cS$, and $Q_h^{\mu, \nu}(s, \cdot , \cdot)$ is a matrix over the space $\cA \times \cB$. The above Bellman equation holds for all $h \in [H]$ with setting $V_{H+1}^{\mu, \nu} (s) = 0, \forall s \in \cS$. Hereafter, to simplify the notation, we let $\cP V( s, a, b) :=\langle\cP(\cdot | s, a, b), V(\cdot) \big\rangle_\cS$ for any value function $V$ and transition $\cP$. 


\vspace{3pt}
\noindent\textbf{Regret and Optimality Gap.} The goal for Player 1 is to learn a sequence of policies, $\{\mu^k\}_{k > 0}$, to have a small regret as possible in $K$ episodes, which is defined as 
\begin{align}
\Regret_1(K) := \sum_{k=1}^K \Big[ V_1^{\mu^*, \nu^k}(s_1) - V_1^{\mu^k, \nu^k}(s_1) \Big], \label{eq:regret_1}
\end{align}
and $\{\nu^k\}_{k=1}^K$ is any possible and potentially adversarial policy sequence of Player 2. The policy $\mu^*$ is \emph{the best policies in hindsight}, which is defined as $\mu^* := \argmax_{\mu} \sum_{k=1}^K V_1^{\mu , \nu^k}(s_1)$ for any specific $\{\nu^k\}_{k=1}^K$. Similarly, Player 2 aims to learn a sequence of policies, $\{\nu^k\}_{k > 0}$, to have a small regret defined as
\begin{align}
\Regret_2(K) := \sum_{k=1}^K \Big[ V_1^{\mu^k, \nu^k}(s_1) - V_1^{\mu^k, \nu^*}(s_1) \Big]. \label{eq:regret_2}
\end{align}
where $\{\mu^k\}_{k=1}^K$ is any possible policy sequence of Player 1. The policies $\nu^*$ is also \emph{the best policies in hindsight} which is defined as $\nu^* := \argmin_{\nu} \sum_{k=1}^K V_1^{\mu^k, \nu}(s_1)$ for any specific $\{\mu^k\}_{k=1}^K$. Note that $\mu^*$ and $\nu^*$ depend on opponents' policy sequence and is non-deterministic, and we drop such a dependency in the notation for simplicity.
We further define the \emph{optimality gap} $\Gap(K)$ as follows
\begin{align}
\label{eq:regret}
\Gap(K) :=&\Regret_1(K) + \Regret_2(K). 
\end{align} 
Our definition of optimality gap is consistent with a certain form of the regret to measure the learning performance of zero-sum games defined in \citet[Definition 8]{bai2020provable}. Specifically, when the two players executes their algorithms to have small regrets, i.e., $\Regret_1(K)$ and $\Regret_2(K)$ are small, then their optimality gap $\Gap(K)$ is small as well. 

On the other hand, letting the uniform mixture policies $\hat{\pi} \sim \mathrm{Unif}(\pi^1, \ldots, \pi^K)$ and $\hat{\nu} \sim \mathrm{Unif}(\nu^1, \ldots, \nu^K)$ be random policies sampled uniformly from the learned policies, then $(\hat{\pi}, \hat{\nu})$ can be viewed as an $\varepsilon$-approximate NE if $ \Regret(K)/K \leq \varepsilon$. This build a connection between the approximate NE and the optimality gap.

\subsection{Structured Transition Models}

\noindent \textbf{Factored Independent Transition.} Consider a two-player MG where the state space are factored as $\cS = \cS_1 \times \cS_2$ such that a state can be represented as $s = (s^1, s^2)$ with $s^1\in \cS_1$ and $s^2\in \cS_2$. Moreover, the size of the space $\cS$ is $|\cS| = |\cS_1|\cdot |\cS_2|$. Under this setting, the transition model is factored into two independent components, i.e.,  
\begin{align}\label{eq:ind_trans}
\cP_h(s'\given s, a, b) = \cP^1_h(s^1{}'\given s^1, a) \cP^2_h( s^2{}'\given  s^2, b),
\end{align}
where we also have $s'=(s^1{}', s^2{}')$, and $\cP_h(s^1{}'\given s^1, a)$ is the transition model for Player 1 and $\cP_h( s^2{}'\given  s^2, b)$ for Player 2. Additionally, we consider the case where the policy of Player 1 only depends on its own state $s^1$ such that we have $\mu(a|s) = \mu(a|s^1)$ and meanwhile Player 2 similarly has the policy of the form $\nu(b|s) = \nu(b|s^2)$. Though the transitions, policies, and state spaces of two players are independent of each other, the reward function still depends on both players' actions and states, i.e., $r_h(s,a,b) = r_h(s^1,s^2,a,b)$. 

\vspace{3pt}
\noindent \textbf{Single-Controller Transition.} In this setting, we take steps forward by not assuming the relatively independent structures of the policies and state spaces for two players. For the single-controller game, we consider that the transition model is controlled by the action of one player, e.g., Player 1 in this paper, which is thus characterized by 
\begin{align} \label{eq:sc_trans}
\cP_h(s'\given s, a, b) = \cP_h(s'\given s, a).
\end{align}
In addition, the policies remain to be $\mu(a|s)$ and $\nu(b|s)$ that depend on the state $s$ jointly decided by both players,  and reward $r_h(s,a,b)$ is determined by both players as well.  

\begin{remark}[Misspecification] 
When the above models are not ideally satisfied, one can potentially consider scenarios that the transition model satisfies, for example, $\max_{s'}\allowbreak|\PP_h(s'\given s, a, b) - \PP^1_h(s^1{}'\given s^1, a) \PP^2_h( s^2{}'\given  s^2, b)| \leq \varrho$ or $\max_{s'} |\PP_h(s'\given s, a, b)  - \PP_h(s'\given s, a)| \leq \varrho$, $\forall (s,a,b, h)$, with a misspecification error $\varrho$. One can still follow the techniques in this paper to analyze such misspecified scenarios and obtain regrets with an extra bias term depending on the misspecification error $\varrho$. When $\varrho$ is small, it implies that the MG  has approximately factored independent transition or single-controller transition structures, and then the bias term depending on $\varrho$ should be small. 
\end{remark}

\section{MG with Factored Independent Transition}\label{sec:FIT}
In this section, we propose optimistic policy optimization algorithms for both players under the setting of factored independent transition.

\vspace{3pt} 
\noindent\textbf{Algorithm for Player 1.} The algorithm for Player 1 is illustrated in Algorithm \ref{alg:po1_it}. Assume that the game starts from a fixed state $s_1= (s^1_1, s^2_1)$ each round. We also assume that the true transition model $\cP$ is not known to Player 1, and Player 1 can only access the bandit feedback of the rewards along this trajectory instead of the full information. Thus, Player 1 needs to empirically estimate the reward function and the transition model for all $(s,a,b,s')$ and $h\in [H]$  via 
\begin{align}
\begin{aligned}\label{eq:estimate_it}
&\hat{r}_h^k(s,a,b) = \frac{\sum_{\tau = 1}^k \mathbbm{1}_{\{(s,a,b) = (s_h^\tau, a_h^\tau, b_h^\tau)\}  } r^k_h(s,a,b)}{\max\{ N_h^k(s,a, b), 1\}}, \\
&\hat{\cP}_h^{1,k}(s^1{}' |s^1, a) = \frac{\sum_{\tau = 1}^k \mathbbm{1}_{\{(s^1,a, s^1{}') = (s_h^{1,\tau}, a_h^\tau, s_{h+1}^{1,\tau})\}}}{\max\{ N_h^k(s^1,a), 1\}},\\
&\hat{\cP}_h^{2,k}(s^2{}' |s^2, b) = \frac{\sum_{\tau = 1}^k \mathbbm{1}_{\{(s^2,b, s^2{}') = (s_h^{2,\tau}, b_h^\tau, s_{h+1}^{2,\tau})\}}}{\max\{ N_h^k(s^2, b), 1\}},
\end{aligned}
\end{align}
where we denote $\mathbbm{1}_{\{\cdot\}}$ as an indicator function, and $N^k_h(s,a, b)$ counts the empirical number of observation for a certain tuple $(s,a, b)$ at step $h$ until $k$-th iteration as well as  $N^k_h(s^1,a)$ for $(s^1, a)$ and $N^k_h(s^2,b)$ for $(s^2, b)$. Then, we have the estimation of the overall transition as $\hat{\cP}_h^k(s'|s,a,b) = \hat{\cP}_h^{1,k}(s^1{}' |s^1, a)  \hat{\cP}_h^{2,k}(s^2{}' |s^2, b)$. For simplicity of presentation, in this section, we let $s = (s^1,s^2)$ and we use $s^1,s^2$ when necessary.

Based on the estimation of the transition model and reward function, we further estimate the Q-function and value-function as shown in Line 7 and 8 in Algorithm \ref{alg:po1_it}. In terms of the principle of \emph{``optimism in the face of uncertainty''}, bonus terms are introduced to construct a UCB update for Q-function as shown in Line 7 of Algorithm \ref{alg:po1_it}. Here, we can set the bonus term as
\begin{align}
\beta_h^{k}(s,a,b) = \beta_h^{r, k}(s,a,b) + \beta_h^{\cP, k}(s,a, b), \label{eq:bonus_decomp_it}
\end{align}
where we define $\beta_h^{r, k}(s,a,b) := \sqrt{\frac{4
\log (|\cS_1| |\cS_2| |\cA| |\cB| H K /\delta)}{ \max \{ N^k_h(s ,a,b), 1\} }}$ as well as $
\beta_h^{\cP, k}(s,a,b) := \sqrt{\frac{2H^2|\cS_1| \log (2|\cS_1| |\cA| H K/\delta)}{\max \{N_h^k(s^1,a), 1\}}} +  \sqrt{\frac{2H^2|\cS_2| \log (2|\cS_2| |\cB| H K/\delta)}{\max \{N_h^k(s^2,b), 1\}}}$ with $\delta \in (0, 1)$. Here, we decompose $\beta_h^{k}(s,a,b)$ into two terms where $\beta_h^{r, k}(s,a,b)$ is the bonus term for the reward and $\beta_h^{\cP, k}(s,a)$ for the transition estimation. As shown in Lemmas \ref{lem:r_bound_it} and \ref{lem:P_bound_it} of the supplementary material, the bonus terms $\beta_h^{r, k}(s,a,b)$ and $\beta_h^{\cP, k}(s,a,b)$ are obtained by using Hoeffding's inequality. Note that the two terms in the definition of $\beta_h^{\cP, k}$ stem from the uncertainties of estimating the transitions $\cP^1_h(s^1{}'\given s^1, a)$ and $\cP^2_h( s^2{}'\given  s^2, b)$.

Here we introduce the notion of the state reaching probability $q^{\nu^k, \cP^2}(s^2)$ for any state $s^2\in \cS_2$ under the policy $\nu^k$ and the true transition $\cP^2$, which is defined as 
\begin{align*}
q_h^{\nu^k, \cP^2}(s^2):=\Pr(s^2_h = s^2 \given \nu^k, \cP^2, s^2_1), \forall h\in [H].
\end{align*}
To handle non-stationarity of the opponent, as in Line 10, Player 1 needs to estimate the state reaching probability of Player 2 by the empirical reaching probability under the empirical transition model $\hat{\cP}^{2,k}$ for Player 2, i.e., 
\begin{align*}
d_h^{\nu^k,\hat{\cP}^{2,k}} (s^2) = \Pr(s^2_h = s^2 \given  \nu^k, \hat{\cP}^{2,k}, s^2_1), \forall h\in [H]. 
\end{align*}
The empirical reaching probability can be simply computed dynamically from $h=1$ to $H$ by $d_h^{\nu^k,\hat{\cP}^{2,k}} (s^2)  = \sum_{s^2{}'\in \cS_2} \sum_{a'\in \cA} d_{h-1}^{\nu^k,\hat{\cP}^{2,k}} (s^2{}')  \nu_{h-1}^k(b'|s^2{}') \hat{\cP}_{h-1}^{2,k}(s^2|s^2{}',b')$. 

Based on the estimated state reaching probability, the policy improvement step is associated with solving the following optimization problem
\begin{align}  \label{eq:ascent_it}
\max_{\mu} \sum_{h=1}^H  [\overline{G}_h^{k-1}(\mu_h) - \eta^{-1}  D_{\mathrm{KL}}( \mu_h(\cdot| s^1), \mu_h^{k-1}(\cdot| s^1 ))], 
\end{align}
where we define the linear function as $\overline{G}_h^{k-1}(\mu_h) := \langle \mu_h(\cdot | s^1)-\mu_h^{k-1}(\cdot | s^1), \sum_{s^2 \in \cS_2} F_h^{1,k-1}(s, \cdot) \cdot \allowbreak d_h^{\nu^{k-1},\hat{\cP}^{2,k-1}}(s^2) \rangle_\cA$ with $F_h^{1,k-1}(s, a) = \langle \overline{Q}_h^{k-1}(s, a, \cdot),  \nu_h^{k-1}(\cdot | s^2)\rangle_{\cB}$. One can see that \eqref{eq:ascent_it} is a mirror ascent step and has a solution as $\mu_h^k(a|s^1) = (Y_h^{k-1})^{-1} \mu_h^{k-1}(a\given s^1)\cdot  \exp\{ \eta  \sum_{s^2 \in \cS_2} F_h^{1,k-1}(s, a) \cdot \allowbreak d_h^{\nu^{k-1},\hat{\cP}^{2,k-1}}(s^2)\}$, where $Y_h^{k-1}$ is a probability normalization term.

\begin{algorithm}[t]\caption{Optimistic Policy Optimization for Player 1 with Factored Independent Transition} 
   \setstretch{1.0}
	\begin{algorithmic}[1]
		\State {\bfseries Initialize:} For all $h\in [H]$, $(s^1, s^2, a, b)\in \cS_1 \times \cS_2 \times \cA \times \cB$: $\mu_h^0(\cdot|s^1) = \boldsymbol{1}/ |\cA|$, $\hat{\cP}_h^{1,0}(\cdot|s^1,a) = \boldsymbol{1}/ |\cS_1|$, $\hat{\cP}_h^{2,0}(\cdot|s^2,b)  = \boldsymbol{1}/ |\cS_2|$, $\hat{r}_h^0(\cdot,\cdot,\cdot) = \beta_h^0(\cdot,\cdot,\cdot) = \boldsymbol{0}$. 
		\For{episode $k=1,\ldots,K$}   
			\State Observe Player 2's policy $ \{\nu_h^{k-1}\}_{h=1}^H $.	
		   	\State Start from state $ s_1= (s^1_1, s^2_1)$, set $\overline{V}_{H+1}^{k-1}(\cdot) = \boldsymbol 0$. 
		        \For{step $h=H, H-1,\ldots, 1$}  \Comment{{\color{blue} Policy Evaluation}}
					\State Estimate the transition and reward function by $\hat{\cP}_h^{k-1}(\cdot |\cdot, \cdot)$ and $\hat{r}^{k-1}_h(\cdot, \cdot, \cdot)$ as \eqref{eq:estimate}. 			
					\State \label{line:Q_up_it1}Update Q-function $\forall (s, a, b) \in \cS \times \cA \times \cB$:
						\begin{align*}
							&\overline{Q}_h^{k-1}(s,a, b) =  \min \{  ( \hat{r}^{k-1}_h + \hat{\cP}_h^{k-1}\overline{V}_{h+1}^{k-1}   + \beta_h^{k-1})(s,a,b), H-h+1\}^+.  
						\end{align*} 
					\State  \label{line:V_up_it1}Update value-function $\forall s \in \cS$: 
						\begin{align*}						
							\overline{V}_h^{k-1}(s) =  \big[\mu_h^{k-1}(\cdot | s)\big]^\top \overline{Q}_h^{k-1}(s, \cdot , \cdot) \nu_h^{k-1}(\cdot | s). 
						\end{align*}
	            \EndFor	    				
				\State \label{line:reach_it1}
				Estimate the state reaching probability of Player 2 by $d_h^{\nu^{k-1},\hat{\cP}^{2,k-1}} (s^2)$, $\forall s^2\in \cS_2, h\in [H]$.
				
					\State Update policy $\mu_h^k(a|s^1)$ by solving \eqref{eq:ascent_it}, $\forall (s^1, a, h)$.	 \Comment{{\color{blue}Policy Improvement}}
				\State Take actions following $a_h^k \sim \mu_h^{k}(\cdot | s_h^{1,k}),\  \forall h \in [H]$. 	
				\State Observe the trajectory $\{(s^k_h, a^k_h, b^k_h, s_{h+1}^k)\}_{h=1}^H$, and  rewards $\{r^k_h(s^k_h, a^k_h, b_h^k) \}_{h=1}^H$.   						                 
    \EndFor             
	\end{algorithmic}\label{alg:po1_it}
\end{algorithm}

\vspace{3pt}

\noindent\textbf{Algorithm for Player 2.} For the setting of MG with factored independent transition, the algorithm for Player 2 is trying to minimize the expected cumulative reward w.r.t. $r_h(\cdot,\cdot,\cdot)$. In another word, Player 2 is maximizing the expected cumulative reward w.r.t. $-r_h(\cdot,\cdot,\cdot)$. From this perspective, one can view the algorithm for Player 2 as a \emph{`symmetric'} version of Algorithm \ref{alg:po1_it}. We summarized the optimistic policy optimization algorithm for Player 2 as in Algorithm \ref{alg:po2_it}. Specifically, in this algorithm, Player 2 also estimates the transition model and the reward function the same as \eqref{eq:estimate}.  Since Player 2 is minimizing the expected cumulative reward, the bonus terms as \eqref{eq:bonus_decomp_it} are subtracted in the Q-function estimation step by the UCB optimism principle. The algorithm further estimates the state reaching probability of Player 1, $q_h^{\mu^k, \cP^1}(s^1)$, by the empirical one $d_h^{\mu^k,\hat{\cP}^{1,k}} (s^1)$, which can be dynamically computed. For the policy improvement step, Algorithm \ref{alg:po2_it} performs a mirror descent step based on the empirical reaching probability.  Based on the  empirical state reaching probability, the policy improvement step is associated with solving the following optimization problem
\begin{align}  \label{eq:descent_it}
\max_{\mu} \sum_{h=1}^H  [\underline{G}_h^{k-1}(\nu_h) + \gamma^{-1}  D_{\mathrm{KL}}( \nu_h(\cdot| s^2), \nu_h^{k-1}(\cdot| s^2 ))], 
\end{align}
where we define $\underline{G}_h^{k-1}(\mu_h) := \langle \nu_h(\cdot | s^2)-\nu_h^{k-1}(\cdot | s^2), \sum_{s^1 \in \cS_1} F_h^{2,k-1}(s, \cdot)  d_h^{\mu^{k-1},\hat{\cP}^{1,k-1}}(s^1) \rangle_\cB$ with $F_h^{2,k-1}$ defined as $F_h^{2,k-1}(s, b) = \langle \underline{Q}_h^{k-1}(s, \cdot, b),  \mu_h^{k-1}(\cdot | s^1)\rangle_{\cA}$ where $s=(s^1,s^2)$. Here \eqref{eq:descent_it}  is a standard mirror descent step and admits a closed-form solution as $\nu_h^k(b|s^2) = (\tilde{Y}_h^{k-1})^{-1} \nu_h^{k-1}(b\given s^2)\cdot \exp\{ - \gamma \sum_{s^1 \in \cS_1} F_h^{2,k-1}(s,b) \cdot \allowbreak d_h^{\mu^{k-1},\hat{\cP}^{1,k-1}}(s^1) \}$, where $\tilde{Y}_h^{k-1}$ is a probability normalization term.

\begin{algorithm}[t]\caption{Optimistic Policy Optimization for Player 2 with Factored Independent Transition} 
   \setstretch{1.0}
	\begin{algorithmic}[1]
		\State {\bfseries Initialize:} For all $h\in [H]$, $(s^1, s^2, a, b)\in \cS_1 \times \cS_2 \times \cA \times \cB$: $\mu_h^0(\cdot|s^1) = \boldsymbol{1}/ |\cA|$, $\hat{\cP}_h^{1,0}(\cdot|s^1,a) = \boldsymbol{1}/ |\cS_1|$, $\hat{\cP}_h^{2,0}(\cdot|s^2,b)  = \boldsymbol{1}/ |\cS_2|$, $\hat{r}_h^0(\cdot,\cdot,\cdot) = \beta_h^0(\cdot,\cdot,\cdot) = \boldsymbol{0}$. 
		\For{episode $k=1,\ldots,K$}   
			\State Observe Player 1's policy $ \{\mu_h^{k-1}\}_{h=1}^H $.	
		   	\State Start from state $s_1= (s^1_1, s^2_1)$, set $\overline{V}_{H+1}^{k-1}(\cdot) = \boldsymbol 0$. 
		        \For{step $h=H, H-1,\ldots, 1$} \Comment{{\color{blue}Policy Evaluation}}
					\State Estimate the transition and reward function by $\hat{\cP}_h^{k-1}(\cdot |\cdot, \cdot)$ and $\hat{r}^{k-1}_h(\cdot, \cdot, \cdot)$ as \eqref{eq:estimate}. 			
					\State \label{line:Q_up_it2}Update Q-function $\forall (s, a, b) \in \cS \times \cA \times \cB$:
						\begin{align*}
							&\underline{Q}_h^{k-1}(s,a, b) =  \min \{   (\hat{r}^{k-1}_h + \hat{\cP}_h^{k-1}\underline{V}_{h+1}^{k-1}  - \beta_h^{k-1})(s,a,b), H-h+1\}^+.  
						\end{align*} 
					\State  \label{line:V_up_it2}Update value-function $\forall s \in \cS$: 
						\begin{align*}						
							\underline{V}_h^{k-1}(s) =  \big[\mu_h^{k-1}(\cdot | s)\big]^\top \underline{Q}_h^{k-1}(s, \cdot , \cdot) \nu_h^{k-1}(\cdot | s). 
						\end{align*}
	            \EndFor	    				
				\State \label{line:reach_it2}
				Estimate the state reaching probability of Player 1 by $d_h^{\mu^{k-1},\hat{\cP}^{1,k-1}} (s^1)$,  $\forall s^1\in \cS_1, h\in [H]$.
					\State Update policy $\nu_h^k(b|s^2)$ by solving \eqref{eq:descent_it}, $\forall (s^2, b, h)$.	 	\Comment{{\color{blue}Policy Improvement}}		
				\State Take actions following $b_h^k \sim \nu_h^{k}(\cdot | s_h^{2,k}),\  \forall h \in [H]$. 	
				\State Observe the trajectory $\{(s^k_h, a^k_h, b^k_h, s_{h+1}^k)\}_{h=1}^H$, and  rewards $\{r^k_h(s^k_h, a^k_h, b_h^k) \}_{h=1}^H$.   			                  
    \EndFor             
	\end{algorithmic}\label{alg:po2_it}
\end{algorithm}

\subsection{Theoretical Results} \label{sec:FIT_result}
In this subsection, we show our main results of the upper bounds of the regrets for each player under the setting of the factored independent transition model.
\begin{theorem} \label{thm:main_FIT1} By setting $\eta = \sqrt{  \log |\cA|/(KH^2)}$,  with probability at least $1-4\delta$, Algorithm \ref{alg:po1_it} ensures the sublinear regret bound for Player 1\footnote{Hereafter, we use $\tilde{\cO}$ to hide the logarithmic factors on $|\cS|, |\cA|, |\cB|,H, K$, and $1/\delta$.}, i.e., $\Regret_1(K) \leq  \tilde{\cO} \big( C\sqrt{T} \big)$, where $T = KH$ denotes the total rounds, and the constant $C = \sqrt{(|\cS_1|^2|\cA| + |\cS_2|^2|\cB|)H^3} + \sqrt{|\cS_1||\cS_2||\cA||\cB|H}$.
\end{theorem}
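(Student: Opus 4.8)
The plan is to treat Player~1's problem as a single-agent adversarial MDP that is made tractable by the factored structure, and then run the standard optimism-plus-mirror-descent template while paying close attention to the opponent-induced non-stationarity. Because $\cP_h = \cP^1_h \cP^2_h$ and $\mu_h(\cdot|s)=\mu_h(\cdot|s^1)$, against a fixed $\nu^k$ the coordinate $s^2_h$ evolves under $(\nu^k,\cP^2)$ independently of Player~1, so $V_1^{\mu,\nu^k}(s_1)$ is exactly the value of a single-agent MDP on $\cS_1$ with transition $\cP^1$ and time-varying reward $\tilde r_h^k(s^1,a)=\sum_{s^2}q_h^{\nu^k,\cP^2}(s^2)\langle r_h(s^1,s^2,a,\cdot),\nu_h^k(\cdot|s^2)\rangle_\cB$. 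I would insert the optimistic value $\overline V_1^k$ (Lines~7--8 evaluated at $(\mu^k,\nu^k)$) and split each summand of $\Regret_1(K)$ as $[V_1^{\mu^*,\nu^k}(s_1)-\overline V_1^k(s_1)] + [\overline V_1^k(s_1)-V_1^{\mu^k,\nu^k}(s_1)]$, calling the sums over $k$ Term~I and Term~II.

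For optimism and Term~II, Lemmas~\ref{lem:r_bound_it} and \ref{lem:P_bound_it} give, via Hoeffding, $|\hat r_h^k-r_h|\le\beta_h^{r,k}$ and $|(\hat\cP^{1,k}_h-\cP^1_h)V|,\,|(\hat\cP^{2,k}_h-\cP^2_h)V|$ below the two pieces of $\beta_h^{\cP,k}$ whenever $V\le H$, so that the one-step Bellman residual $r_h+\cP_h\overline V_{h+1}^k-\overline Q_h^k$ lies in $[-2\beta_h^k,0]$. The upper bound $0$ (optimism) will make the model-error part of Term~I non-positive; the lower bound $-2\beta_h^k$ turns Term~II, via the value-difference lemma along $(\mu^k,\nu^k)$, into at most $\sum_{k,h}\EE_{\mu^k,\nu^k,\cP}[2\beta_h^k(s_h,a_h,b_h)]$. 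A key sub-step is that optimism must hold for the \emph{product} estimate $\hat\cP^k=\hat\cP^{1,k}\hat\cP^{2,k}$: a telescoping/triangle-inequality argument bounds $|(\hat\cP^{1,k}_h\hat\cP^{2,k}_h-\cP^1_h\cP^2_h)\overline V_{h+1}^k|$ by the sum of the two marginal deviations, which is exactly why $\beta_h^{\cP,k}$ is a sum of two terms carrying $|\cS_1|$ and $|\cS_2|$.

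For Term~I, I would apply the extended value-difference lemma to $V_1^{\mu^*,\nu^k}-\overline V_1^k$; optimism kills its model-error part, leaving $\sum_h \EE_{\mu^*,\nu^k,\cP}[\langle \mu_h^*(\cdot|s^1_h)-\mu_h^k(\cdot|s^1_h),\, F_h^{1,k}(s_h,\cdot)\rangle_\cA]$. Factorization splits the expectation as $\sum_{s^1}q_h^{\mu^*,\cP^1}(s^1)\sum_{s^2}q_h^{\nu^k,\cP^2}(s^2)(\cdots)$, and replacing the true $q_h^{\nu^k,\cP^2}$ by the algorithm's estimate $d_h^{\nu^k,\hat\cP^{2,k}}$ turns the inner object into precisely the mirror-ascent gradient $g_h^k(s^1,\cdot)=\sum_{s^2}F_h^{1,k}(s,\cdot)\,d_h^{\nu^k,\hat\cP^{2,k}}(s^2)$ used to form $\mu^{k+1}$, at the cost of an error $\|q_h^{\nu^k,\cP^2}-d_h^{\nu^k,\hat\cP^{2,k}}\|_1\cdot\|F_h^{1,k}\|_\infty$; I would bound $\|q-d\|_1$ by telescoping the per-step deviation of $\hat\cP^{2,k}$ from $\cP^2$ (again $\beta^{\cP,2}$-type), contributing at the order of the $|\cS_2|$ part of $C\sqrt T$. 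For the linearized term, for each fixed $(s^1,h)$ the update $\mu^{k+1}\propto\mu^k\exp(\eta g^k)$ is online exponentiated gradient ascent, so $\sum_k\langle\mu_h^*-\mu_h^k,g_h^k\rangle\le \eta^{-1}\log|\cA|+\tfrac{\eta}{2}\sum_k\|g_h^k\|_\infty^2$ with $\|g_h^k\|_\infty\le H$; weighting by $q_h^{\mu^*,\cP^1}(s^1)$ (which sums to one over $s^1$) and summing over $h$ gives $\tilde\cO\big(H(\eta^{-1}\log|\cA|+\eta KH^2)\big)$, minimized at $\eta=\sqrt{\log|\cA|/(KH^2)}$ to $\tilde\cO(H^{3/2}\sqrt T)$, which is dominated by $C\sqrt T$.

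To finish, I would bound the bonus sums by the pigeonhole inequality $\sum_k 1/\sqrt{\max\{N_h^k(\cdot),1\}}\le 2\sqrt{N_h^K(\cdot)}$ with Cauchy--Schwarz over the visitation counts: the reward bonus yields $\sqrt{|\cS_1||\cS_2||\cA||\cB|H}\,\sqrt T$ and the two transition bonuses yield $\sqrt{(|\cS_1|^2|\cA|+|\cS_2|^2|\cB|)H^3}\,\sqrt T$, assembling the claimed $C$. Passing from the played-trajectory expectations to their realized counterparts in Term~II costs one Azuma--Hoeffding term; together with the three Hoeffding concentration events this accounts for the $1-4\delta$ failure probability. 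I expect the main obstacle to be not the single-agent mirror-descent bookkeeping but the two multi-agent-specific pieces: (i) propagating optimism through the product transition $\hat\cP^{1,k}\hat\cP^{2,k}$ so that the decomposed $\beta^{\cP,k}$ genuinely dominates the product deviation, and (ii) controlling the opponent's reaching-probability error $\|q^{\nu^k,\cP^2}-d^{\nu^k,\hat\cP^{2,k}}\|_1$ and threading it through the gradient so that it contributes only an $\tilde\cO(\sqrt T)$ term rather than coupling the two players' analyses.
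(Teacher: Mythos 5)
Your proposal follows essentially the same route as the paper's proof: the same decomposition after inserting the optimistic value $\overline{V}_1^k$ (optimism forcing the model-prediction error to be non-positive, the OMD/exponentiated-gradient bound for the mirror-ascent term with the same $\eta$ and the same $H^2\sqrt{K\log|\cA|}$ rate, and the opponent reaching-probability error), the same Hoeffding bonuses with the triangle-inequality splitting of the product transition $\hat{\cP}^{1,k}\hat{\cP}^{2,k}$, and the same pigeonhole bonus-sum plus Azuma--Hoeffding accounting for the $1-4\delta$ failure probability. The only cosmetic difference is that you bound $\sum_{s^2}|q_h^{\nu^k,\cP^2}(s^2)-d_h^{\nu^k,\hat{\cP}^{2,k}}(s^2)|$ by directly telescoping one-step transition deviations, whereas the paper routes this through the occupancy-measure confidence-set lemmas adapted from \citet{jin2019learning}; both arguments reduce to the same count-versus-occupancy concentration and give the same $\tilde{\cO}\big(H^2|\cS_2|\sqrt{|\cB|K}\big)$ contribution.
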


Theorem \ref{thm:main_FIT1} shows that Player 1 can obtain an $\tilde{\cO}(\sqrt{K})$ regret by Algorithm \ref{alg:po1_it}, when the opponent, Player 2, takes actions following potentially adversarial policies. 

\begin{theorem} \label{thm:main_FIT2} By setting $\gamma = \sqrt{  \log |\cB|/(KH^2)}$, with probability at least $1-4\delta$, Algorithm \ref{alg:po2_it} ensures the sublinear regret bound for Player 2, i.e., $\Regret_2(K) \leq \tilde{\cO} \big( C\sqrt{T} \big)$, where $T = KH$ denotes the total rounds, and the constant $C = \sqrt{(|\cS_1|^2|\cA| + |\cS_2|^2|\cB|)H^3} + \sqrt{|\cS_1||\cS_2||\cA||\cB|H}$.
\end{theorem}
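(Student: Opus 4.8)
The plan is to mirror the proof of Theorem \ref{thm:main_FIT1}, exploiting the symmetry built into Algorithm \ref{alg:po2_it}: because Player 2 maximizes the cumulative reward with respect to $-r$, every optimistic estimate in the Player 1 analysis is replaced by its pessimistic counterpart and the UCB bonus $\beta_h^{k-1}$ is subtracted rather than added. First I would fix a high-probability good event by intersecting the concentration events of Lemmas \ref{lem:r_bound_it} and \ref{lem:P_bound_it}, on which the empirical reward and the two factored transition estimates are uniformly close to their true counterparts; a union bound over the failure probabilities yields the stated $1-4\delta$, and all subsequent steps are carried out on this event. The second step is a pessimism guarantee proved by backward induction on $h$: since $\beta_h^{k-1}$ dominates the sum of the reward- and transition-estimation errors at each layer, the lower estimate satisfies $\underline{V}_1^{k-1}(s_1) \le V_1^{\mu^{k-1},\nu}(s_1)$ for every comparator policy $\nu$ of Player 2, and in particular for $\nu^*$.

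Next I would decompose $\Regret_2(K)$ into an optimization error and an estimation error. Applying the value (performance) difference lemma to $V_1^{\mu^k,\nu^k}(s_1)-V_1^{\mu^k,\nu^*}(s_1)$, and using that Player 2's dynamics factor through $\cP^2$ independently of Player 1, reduces each episode's contribution to a sum over $h$ of inner products $\langle \nu_h^k(\cdot|s^2)-\nu_h^*(\cdot|s^2),\, \bar{F}_h \rangle_\cB$ weighted by a reaching probability, where $\bar{F}_h$ is the averaged Q-value against Player 1's state-action distribution, precisely the quantity $F_h^{2,k-1}$ appearing in \eqref{eq:descent_it}. The optimization error is then the online-mirror-descent regret of the KL-regularized updates against the fixed comparator $\nu^*$; the standard potential-function argument with step size $\gamma=\sqrt{\log|\cB|/(KH^2)}$ gives an $\tilde{\cO}(\sqrt{KH^2\log|\cB|})$ bound for this term, summed over the $|\cS_2|$ states.

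The estimation error collects the Bellman residuals between the true model and $(\hat{r}^{k-1},\hat{\cP}^{k-1})$, which on the good event are controlled by $\beta_h^{k-1}$. Summing these bonuses over episodes with the standard pigeonhole/counting inequality, applied separately to the counts $N_h^k(s^1,a)$, $N_h^k(s^2,b)$, and $N_h^k(s,a,b)$, produces the three contributions $\sqrt{|\cS_1|^2|\cA|H^3}$, $\sqrt{|\cS_2|^2|\cB|H^3}$, and $\sqrt{|\cS_1||\cS_2||\cA||\cB|H}$, whose sum is exactly the constant $C$; combining with the optimization term and the choice of $\gamma$ gives $\Regret_2(K)\le\tilde{\cO}(C\sqrt{T})$.

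The hardest part will be controlling the reaching-probability mismatch. The mirror-descent gradient in \eqref{eq:descent_it} is weighted by the empirical $d_h^{\mu^{k-1},\hat{\cP}^{1,k-1}}$, whereas the value-difference decomposition is naturally phrased with the true $q_h^{\mu^{k-1},\cP^1}$, and the Q-values used are the pessimistic $\underline{Q}_h^{k-1}$ rather than the true ones. I would bound $\sum_{s^1}|d_h^{\mu^{k-1},\hat{\cP}^{1,k-1}}(s^1)-q_h^{\mu^{k-1},\cP^1}(s^1)|$ by unrolling the two chains layer by layer and propagating the one-step error of $\hat{\cP}^{1,k-1}$ through the $h$ transitions, then sum the resulting additional error across $k$ by the same counting argument so that it is absorbed into the $\sqrt{|\cS_1|^2|\cA|H^3}$ term. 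Showing that these two substitutions, empirical for true reaching probability and $\underline{Q}$ for the true Q-function, inject only terms already present in $C\sqrt{T}$ is the delicate coupling between the non-stationary opponent estimation and the policy-improvement step, and is precisely what allows the fictitious-play scheme to dispense with the bounded distribution-mismatch assumption.
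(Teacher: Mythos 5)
Your overall route is the right one and is in fact the paper's own: the paper proves Theorem \ref{thm:main_FIT2} purely by symmetry with Theorem \ref{thm:main_FIT1}, i.e., by mirroring the decomposition of Lemma \ref{lem:V_diff_it1} into a bias term, a prediction-error term, a mirror-descent term, and a reaching-probability term, and then invoking the analogues of Lemmas \ref{lem:mirror_it1}, \ref{lem:pred_err_it1}, \ref{lem:value_diff_it1}, and \ref{lem:stationary_dist_err_P2}. Your good-event construction, your mirror-descent bound with $\gamma=\sqrt{\log|\cB|/(KH^2)}$, your three bonus-sum constants, and your handling of $\sum_{s^1}\lvert q_h^{\mu^k,\cP^1}(s^1)-d_h^{\mu^k,\hat{\cP}^{1,k}}(s^1)\rvert$ all match the paper's ingredients. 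However, your second step contains a genuine error. The claim that $\underline{V}_1^{k-1}(s_1)\le V_1^{\mu^{k-1},\nu}(s_1)$ for \emph{every} comparator $\nu$ is false: $\underline{V}_1^{k-1}$ is computed by mixing $\underline{Q}_h^{k-1}$ with Player 2's \emph{own} policy $\nu^{k-1}$ (Line 8 of Algorithm \ref{alg:po2_it}), so even with exact knowledge of $r$ and $\cP$ (zero bonuses) one gets $\underline{V}_1^{k-1}=V_1^{\mu^{k-1},\nu^{k-1}}$, which in general strictly exceeds $\min_\nu V_1^{\mu^{k-1},\nu}$; subtracting bonuses corrects estimation error, not the suboptimality of $\nu^{k-1}$. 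Indeed, if your claim held, regret control would follow for an arbitrary policy sequence and the mirror-descent step would be superfluous. The statement that is true, and that the mirror of Lemma \ref{lem:pred_err_it1} actually provides, is the \emph{pointwise} sign of the prediction error: $\underline{\iota}_h^k(s,a,b):=r_h(s,a,b)+\cP_h\underline{V}_{h+1}^k(s,a,b)-\underline{Q}_h^k(s,a,b)\ge 0$ for all $(s,a,b,h,k)$ on the good event.

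This misstatement matters because it propagates into your treatment of the estimation error. In the correct decomposition the Bellman residuals appear under two different measures: (i) under the executed pair $(\mu^k,\nu^k)$, giving the bias $V_1^{\mu^k,\nu^k}(s_1)-\underline{V}_1^k(s_1)$, and (ii) under the comparator measure $\EE_{\mu^k,\cP,\nu^*}$. Your plan to bound all residuals by "summing bonuses with pigeonhole" works only for (i), where the counting argument sums $1/\sqrt{N_h^k}$ at the actually visited tuples $(s_h^k,a_h^k,b_h^k)$ (together with an Azuma--Hoeffding term for the martingale part, as in Lemma \ref{lem:value_diff_it1}). For (ii) it fails outright: $\nu^*$ may concentrate on actions $b$ that $\nu^k$ essentially never plays, so $N_h^k(s,a,b)$ never grows along the comparator's measure and the bonus sum there is linear in $K$. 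That term cannot be bounded in magnitude at all; it must be eliminated by its sign, which is exactly what the pointwise pessimism $\underline{\iota}_h^k\ge 0$ accomplishes. With these two corrections, replacing your value-level pessimism claim by the Q-level one and using it to kill the comparator-side term rather than pigeonholing it, your argument coincides with the paper's symmetric proof.
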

Theorem \ref{thm:main_FIT2} shows that $\Regret_2(K)$ admits the same $\tilde{\cO}(\sqrt{T})$ regret as Theorem \ref{thm:main_FIT1} given any arbitrary and adversarial policies of the opponent Player 1, due to the symmetric nature of the two algorithms.

From the perspective of each individual player, the game can be viewed as a special case of an MDP with adversarial bandit feedback due to the potentially adversarial actions or policies of the opponent.  For MDPs with adversarial bandit feedback, \citet{jin2019learning} attains an $\tilde{\cO}(\sqrt{T})$ regret via an occupancy measure based method, which requires solving a constrained optimization problem in each update step that is more computationally demanding than PO. \citet{efroni2020optimistic} proposes a PO method for the same MDP model, achieving an $\tilde{\cO}(T^{2/3})$ regret. Thus, directly applying this result would yield an $\tilde{\cO}(T^{2/3})$ regret. However, for the problem of zero-sum games, regarding the problem faced by one player as an MDP with adversarial rewards neglects the fact that such ``adversarial reward functions'' are determined by the actions and policies of the opponent. Thus, under the FP framework, by utilizing the past actions and policies of the opponent, Algorithm \ref{alg:po1_it} and \ref{alg:po2_it} obtain an $\tilde{\cO}(\sqrt{T})$ regret.

In particular, if Player 1 runs Algorithm \ref{alg:po1_it} and Player 2 runs Algorithm \ref{alg:po2_it} \emph{simultaneously}, then we have the following corollary of Theorems \ref{thm:main_FIT1} and \ref{thm:main_FIT2}.

\begin{corollary} \label{coro:main_FIT} By setting $\eta$ and $\gamma$ as in Theorem \ref{thm:main_FIT1} and Theorem \ref{thm:main_FIT2}, letting $T = K H$, with probability at least $1-8\delta$,  Algorithm \ref{alg:po1_it} and Algorithm \ref{alg:po2_it} ensure the following optimality gap $\Gap(K) \leq  \tilde{\cO} \big( \sqrt{T } \big)$.
\end{corollary}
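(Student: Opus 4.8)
The plan is to derive this as an immediate consequence of Theorems \ref{thm:main_FIT1} and \ref{thm:main_FIT2}, since by the definition \eqref{eq:regret} the optimality gap decomposes exactly as $\Gap(K) = \Regret_1(K) + \Regret_2(K)$. No fresh analysis of the coupled dynamics is needed; the whole argument reduces to specializing the two adversarial regret guarantees to the self-play policy sequences and then taking a union bound. The conceptual point I would emphasize first is that Theorem \ref{thm:main_FIT1} bounds $\Regret_1(K)$ against \emph{any} possible and potentially adversarial policy sequence $\{\nu^k\}_{k=1}^K$ of Player 2, with $\mu^*$ being the best response in hindsight to that specific sequence. When Player 2 runs Algorithm \ref{alg:po2_it}, the realized sequence $\{\nu^k\}_{k=1}^K$ is one particular such adversarial sequence, so the bound $\Regret_1(K) \leq \tilde{\cO}(C\sqrt{T})$ applies verbatim. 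Symmetrically, Theorem \ref{thm:main_FIT2} bounds $\Regret_2(K)$ against any sequence $\{\mu^k\}_{k=1}^K$ of Player 1, which includes the sequence generated by Algorithm \ref{alg:po1_it}.

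Next I would handle the probabilistic bookkeeping. Each theorem holds on a high-probability event (the concentration of the empirical reward and transition estimates used to build the UCB bonuses) of probability at least $1-4\delta$. In the simultaneous-play setting the two players share a single trajectory $\{(s_h^k, a_h^k, b_h^k, s_{h+1}^k)\}$, so I would simply intersect the two good events: by a union bound over the two $4\delta$ failure probabilities, the intersection has probability at least $1 - 8\delta$. On this event both regret bounds hold at once, and summing them gives $\Gap(K) = \Regret_1(K) + \Regret_2(K) \leq \tilde{\cO}(C\sqrt{T}) + \tilde{\cO}(C\sqrt{T}) = \tilde{\cO}(C\sqrt{T})$, with the constant $C = \sqrt{(|\cS_1|^2|\cA| + |\cS_2|^2|\cB|)H^3} + \sqrt{|\cS_1||\cS_2||\cA||\cB|H}$ absorbed into the $\tilde{\cO}(\cdot)$ notation to yield the stated $\tilde{\cO}(\sqrt{T})$.

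Honestly there is no genuine obstacle to overcome: the corollary is a clean combination step, and its correctness hinges entirely on the fact that Theorems \ref{thm:main_FIT1} and \ref{thm:main_FIT2} are \emph{worst-case} over the opponent's policy sequence, rather than relying on the opponent behaving in any particular way. The only point that deserves care is precisely this verification that the adversarial guarantees remain valid when the ``adversary'' is itself the learning algorithm of the other player---which is automatic here because the best-in-hindsight comparators $\mu^*$ and $\nu^*$ are defined relative to whatever opponent sequence actually occurs. I would state this explicitly to preempt any concern that self-play introduces a dependence between the two regret analyses that the union bound fails to capture; it does not, since each player's bound treats the other's realized trajectory as an exogenous adversarial input.
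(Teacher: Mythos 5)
Your proposal is correct and matches the paper's intended argument exactly: the corollary follows by combining Theorems \ref{thm:main_FIT1} and \ref{thm:main_FIT2} via the decomposition $\Gap(K) = \Regret_1(K) + \Regret_2(K)$, noting that each theorem's adversarial guarantee applies to the opponent's realized self-play sequence, and taking a union bound over the two $4\delta$ failure events to obtain probability $1-8\delta$. Your explicit remark that the worst-case nature of the comparators $\mu^*, \nu^*$ makes the self-play specialization automatic is exactly the right point of care, and is the same observation the paper relies on implicitly.
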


\section{MG with Single-Controller Transition}\label{sec:SCT}

In this section, we propose optimistic policy optimization algorithms for the single-controller game.
 
\vspace{3pt} 
\noindent\textbf{Algorithm for Player 1.} The algorithm for Player 1 is illustrated in Algorithm \ref{alg:po1}. Since transition model is unknown and only bandit feedback of the rewards is available, Player 1 needs to empirically estimate the reward function and the transition model for all $(s,a,b,s')$ and $h\in [H]$ via 
\begin{align}
\begin{aligned}\label{eq:estimate}
&\hat{r}_h^k(s,a,b) = \frac{\sum_{\tau = 1}^k \mathbbm{1}_{\{(s,a,b) = (s_h^\tau, a_h^\tau, b_h^\tau)\}  } r^k_h(s,a,b)}{\max\{ N_h^k(s,a, b), 1\}}, \\
&\hat{\cP}_h^k(s' |s, a) = \frac{\sum_{\tau = 1}^k \mathbbm{1}_{\{(s,a, s') = (s_h^\tau, a_h^\tau, s_{h+1}^\tau)\}}}{\max\{ N_h^k(s,a), 1\}}.
\end{aligned}
\end{align}
Based on the estimations, Algorithm \ref{alg:po1} further estimates the Q-function and value-function for policy evaluation. In terms of the optimism principle, bonus terms are added to construct a UCB update for Q-function as shown in Line 7 of Algorithm \ref{alg:po1}. The bonus terms are computed as
\begin{align}
\beta_h^{k}(s,a,b) = \beta_h^{r, k}(s,a,b) + \beta_h^{\cP, k}(s,a), \label{eq:bonus_decomp}
\end{align}
where the two bonus terms above are expressed as $\beta_h^{r, k}(s,a,b) := \sqrt{\frac{ 4\log (|\cS| |\cA| |\cB| HK/\delta)}{\max\{N^k_h(s,a, b), 1\}}}$ and $\beta_h^{\cP, k}(s,a) := \sqrt{\frac{2H^2 |\cS| \log (|\cS||\cA|HK/\delta)}{ \max\{N_h^k(s,a), 1\}}}$ for $\delta \in (0, 1)$. Here we also decompose $\beta_h^{k}(s,a,b)$ into two terms with $\beta_h^{r, k}(s,a,b)$ denoting the bonus term for the reward and $\beta_h^{\cP, k}(s,a)$ for the transition estimation. Note that the transition bonus are only associated with $(s,a)$ due to the single-controller structure. The bonus terms are derived in Lemmas \ref{lem:r_bound} and \ref{lem:P_bound} of the supplementary material.

Different from Algorithm \ref{alg:po1_it}, in this algorithm for Player 1, there is no need to estimate the state reaching probability of the opponent as the transition only depends on Player 1. The policy improvement step is then associated with solving the following optimization problem
\begin{align} 
&\max_{\mu} \sum_{h=1}^H [ \overline{L}_h^{k-1}(\mu_h)  - \eta^{-1}  D_{\mathrm{KL}}\big( \mu_h(\cdot| s), \mu_h^{k-1}(\cdot| s ) ) ] , \label{eq:ascent}
\end{align} 
where we define the function $\overline{L}_h^{k-1}(\mu_h) :=  \big[\mu_h(\cdot|s) - \mu_h^{k-1}(\cdot|s)\big]^\top \overline{Q}_h^{k-1}(s, \cdot, \cdot) \nu_h^{k-1}(\cdot|s)$. This is a mirror ascent step with the solution $\mu_h^k(a|s) =  (Z_h^{k-1})^{-1}   \mu_h^{k-1}(a\given s)\exp\{ \eta  \big \langle \overline{Q}_h^{k-1}(s, a , \cdot), \nu_h^{k-1}(\cdot\given s) \big\rangle_\cB \}$,
where $Z_h^{k-1}$ i s a probability normalization term.
 
 \begin{algorithm}[t]\caption{Optimistic Policy Optimization for Player 1 with Single-Controller Transition} 
   \setstretch{1}
	\begin{algorithmic}[1]
		\State {\bfseries Initialize:} $\mu_h^0(\cdot|s) = \boldsymbol{1}/ |\cA|$ for all $s\in \cS$ and $h\in [H]$. $\hat{\cP}_h^0(\cdot|s,a) = \boldsymbol{1}/ |\cS|$ for all $(s, a)\in \cS \times \cA$ and $h\in [H]$. $\hat{r}_h^0(\cdot,\cdot,\cdot) = \beta_h^0(\cdot,\cdot,\cdot) = \boldsymbol{0}$ for all $h\in [H]$. 
		\For{episode $k=1,\ldots,K$}   
			\State Observe Player 2's policy $ \{\nu_h^{k-1}\}_{h=1}^H $.	
		   	\State Start from $s_1^k= s_1$, and set $\overline{V}_{H+1}^{k-1}(\cdot) = \boldsymbol 0$. 
		        \For{step $h=H, H-1,\ldots, 1$} \Comment{{\color{blue}Policy Evaluation}}
					\State Estimate the transition and reward function by $\hat{\cP}_h^{k-1}(\cdot |\cdot, \cdot)$ and $\hat{r}^{k-1}_h(\cdot, \cdot, \cdot)$ as \eqref{eq:estimate}. 			
					\State \label{line:Q_up}Update Q-function $\forall (s, a, b) \in \cS \times \cA \times \cB$:
							\begin{align*}
							\overline{Q}_h^{k-1}(s,a, b) =  \min \{   \hat{r}^{k-1}_h(s,a,b) + \hat{\cP}_h^{k-1}\overline{V}_{h+1}^{k-1}(s,a)  + \beta_h^{k-1}(s,a,b), H-h+1 \}^+  
						\end{align*} 
					\State  \label{line:V_up}Update value-function $\forall s \in \cS$: 
						\begin{align*}						
							\overline{V}_h^{k-1}(s) =  \big[\mu_h^{k-1}(\cdot | s)\big]^\top \overline{Q}_h^{k-1}(s, \cdot , \cdot) \nu_h^{k-1}(\cdot | s). 
						\end{align*}
	            \EndFor	    				

					\State Update policy $\mu_h^k(a|s)$ by solving \eqref{eq:ascent}, $\forall (s, a, h)$. \Comment{{\color{blue}Policy Improvement}}
				\State Take actions following $a_h^k \sim \mu_h^{k}(\cdot | s_h^k),\  \forall h \in [H]$. 	
				\State Observe the trajectory $\{(s^k_h, a^k_h, b^k_h, s_{h+1}^k)\}_{h=1}^H$, and  rewards $\{r^k_h(s^k_h, a^k_h, b_h^k) \}_{h=1}^H$.   						                 
    \EndFor             
	\end{algorithmic}\label{alg:po1}
\end{algorithm}

\vspace{3pt} 
\noindent\textbf{Algorithm for Player 2.} The algorithm for Player 2 is illustrated in Algorithm \ref{alg:po2}. Player 2 also estimates the transition model and the reward function the same as \eqref{eq:estimate}. However, due to the asymmetric nature of the single-controller transition model, Player 2 has a different way to learning the policy. The main differences to Algorithm \ref{alg:po1} are summarized in the following three aspects: First, according to our theoretical analysis shown in Lemma \ref{lem:V_diff_2}, no transition model estimation is involved. Instead, only a reward function estimation is considered in Line 7 of Algorithm \ref{alg:po2}. Second, in the policy improvement step, Player 2 needs to approximate the state reaching probability  $q_h^{\mu^k, \cP}(s) := \Pr(s_h = s \given  \mu^k, \cP, s_1 )$ under $
\mu^k$ and true transition $\cP$ by the empirical reaching probability $d_h^k(s) = \Pr(s_h = s \given  \mu^k, \hat{\cP}^k, s_1 )$ with the empirical transition model $\hat{\cP}^k$, which can also be computed dynamically from $h=1$ to $H$.
Third, we subtract a reward bonus term $\beta^{r,k-1}_h$ in Line 7 instead of adding the bonus. Similar to our discussion in Section \ref{sec:FIT}, it is still a UCB estimation if viewing Player 2 is maximizing the cumulative reward w.r.t. $-r_h(\cdot, \cdot, \cdot)$.

Particularly, the policy improvement step of Algorithm \ref{alg:po2} is associated with solving the following minimization problem
\begin{align} 
\min_{\nu} \sum_{h=1}^H \{ \underline{L}^{k-1}_h(\nu_h) + \gamma^{-1} D_{\mathrm{KL}}\big( \nu_h(\cdot| s), \nu_h^{k-1}(\cdot| s) \big)\} ,\label{eq:descent} 
\end{align} 
where we define $\underline{L}^{k-1}_h(\nu_h) := d^{k-1}_h(s) [\mu_h^{k-1}(\cdot|s) ]^\top \cdot \allowbreak \tilde{r}_h^{k-1}(s, \cdot, \cdot)  [\nu_h(\cdot|s)-\nu_h^{k-1}(\cdot|s)]$. This is a mirror descent step with the solution $\nu_h^k(b|s) = (\tilde{Z}_h^{k-1})^{-1} \cdot \allowbreak \nu_h^k(b\given s)\exp \{-\gamma   d^{k-1}_h(s)  \langle \tilde{r}_h^{k-1}(s, \cdot , b),   \mu^{k-1}_h( \cdot \given s)  \rangle_\cA \}$, with the denominator $\tilde{Z}_h^{k-1}$ being a normalization term.

\begin{algorithm}[t]\caption{Optimistic Policy Optimization for Player 2 with Single-Controller Transition} 
    \setstretch{1}
	\begin{algorithmic}[1]
		\State {\bfseries Initialize:} $\nu_h^0(\cdot|s) = \boldsymbol{1}/ |\cB|$ for all $s\in \cS$ and $h\in [H]$. $\hat{\cP}_h^0(\cdot|s,a) = \boldsymbol{1}/ |\cS|$ for all $(s, a)\in \cS \times \cA$ and $h\in [H]$. $\hat{r}_h^0(\cdot,\cdot,\cdot) = \beta_h^{r, 0}(\cdot,\cdot,\cdot) = \boldsymbol{0}$ for all $h\in [H]$. 
		        		
			\For{episode $k=1,\ldots,K$}   
				\State Observe Player 1's policy $ \{\mu_h^{k-1}\}_{h=1}^H $.
 	
			   	\State Start from the initial state $s_1^k= s_1$. 
		        \For{step $h=1, 2,\ldots, H$} \Comment{{\color{blue}Policy Evaluation}}
					\State Estimate the transition and reward function by $\hat{\cP}_h^{k-1}$ and $\hat{r}_h^{k-1}$ as \eqref{eq:estimate}. 		
			
					\State \label{line:def_til_r}Update $\tilde{r}_h^{k-1}$, $\forall (s, a, b) \in \cS \times \cA \times \cB$: 	
						\begin{align*}
\tilde{r}_h^{k-1}(s, a, b) = \max \big\{ \hat{r}^{k-1}_h(s,a,b)  - \beta_h^{r, k-1}(s,a,b), 0 \big\}. 						
						\end{align*}	
					\State 
Estimate the state reaching probability by $d_h^{\mu^{k-1}, \hat{\cP}^{k-1}}(s)$, $\forall s \in \cS, h\in [H]$.
	            \EndFor


					\State Update policy $\nu_h^k(b|s)$ by solving \eqref{eq:descent}, $\forall (s, b, h)$. \Comment{{\color{blue}Policy Improvement}}
	            \State Take actions following $b_h^k \sim \nu_h^k(\cdot | s_h^k),  \forall h \in [H]$. 
	           \State Observe the trajectory $\{(s^k_h, a^k_h, b^k_h, s_{h+1}^k)\}_{h=1}^H$, and  rewards $\{r^k_h(s^k_h, a^k_h, b_h^k) \}_{h=1}^H$.   			
		\EndFor               
	\end{algorithmic}\label{alg:po2}
\end{algorithm}

\subsection{Theoretical Results}
Next, we present the main results of the regrets for the single-controller transition model.

\begin{theorem} \label{thm:main_1} By setting $\eta = \sqrt{  \log |\cA|/(KH^2)}$,  with probability at least $1-3\delta$, Algorithm \ref{alg:po1} ensures the following regret bound for Player 1 $\Regret_1(K) \leq  \tilde{\cO} \big( C\sqrt{ T } \big)$, where $T = KH$ denotes the total steps, and the constant $C = \sqrt{  |\cS|^2 |\cA| H^3} + \sqrt{  |\cS| |\cA| |\cB| H }$.
\end{theorem}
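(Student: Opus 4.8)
The plan is to follow the standard optimism-plus-online-learning template, specialized to the single-controller structure. The key structural observation is that once Player~2's policy $\nu^k$ is fixed, the problem faced by Player~1 is an ordinary MDP with transition $\cP_h(\cdot\given s,a)$ and marginal reward $\langle r_h(s,a,\cdot),\nu_h^k(\cdot\given s)\rangle_\cB$; because the dynamics depend only on $(s,a)$, the true visitation law $q_h^{\mu^*,\cP}$ induced by the \emph{fixed} comparator $\mu^*$ does not depend on $\nu^k$. This decoupling is exactly what lets the mirror-ascent regret split into independent expert problems, one per $(h,s)$. I would first condition on the high-probability events behind Lemmas~\ref{lem:r_bound} and~\ref{lem:P_bound}, which via Hoeffding give, uniformly in $(k,h,s,a,b)$, that $|\hat r_h^k-r_h|\le\beta_h^{r,k}$ and $|(\hat\cP_h^k-\cP_h)\overline V_{h+1}^k|\le\beta_h^{\cP,k}$. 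Together with the truncation in Line~7, an induction on $h$ yields one-sided optimism $\overline Q_h^k(s,a,b)\ge r_h(s,a,b)+(\cP_h\overline V_{h+1}^k)(s,a)$ together with $\overline Q_h^k-(r_h+\cP_h\overline V_{h+1}^k)\le 2\beta_h^k$, where $\overline Q_h^k,\overline V_h^k$ denote the estimates built from data through episode $k$ with policies $\mu^k,\nu^k$; these coincide with what the algorithm actually forms when it produces the update $\mu^k\mapsto\mu^{k+1}$.

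Next I decompose $\Regret_1(K)=\sum_k A_k+\sum_k B_k$ with $A_k=V_1^{\mu^*,\nu^k}(s_1)-\overline V_1^k(s_1)$ and $B_k=\overline V_1^k(s_1)-V_1^{\mu^k,\nu^k}(s_1)$. For the optimism cost $B_k$, unrolling the Bellman recursion under $(\mu^k,\nu^k)$ and the true model and applying the upper optimism bound gives $B_k\le\EE_{\mu^k,\cP}[\sum_h 2\beta_h^k(s_h,a_h,b_h)]$. I then pass from this expectation to the realized trajectory $(s_h^k,a_h^k,b_h^k)$ by Azuma--Hoeffding (the third high-probability event, which supplies the remaining $\delta$ and explains the $1-3\delta$ guarantee), and bound the resulting bonus sum by the usual pigeonhole inequality, which controls $\sum_k(\max\{N_h^k,1\})^{-1/2}$ by $\cO(\sqrt{MK})$ when the counter ranges over $M$ distinct cells. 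The reward bonus ($M=|\cS||\cA||\cB|$) contributes $\tilde\cO(\sqrt{|\cS||\cA||\cB|H\,T})$ and the transition bonus ($M=|\cS||\cA|$, prefactor $H\sqrt{|\cS|}$) contributes $\tilde\cO(\sqrt{|\cS|^2|\cA|H^3\,T})$, which are precisely the two pieces of $C\sqrt T$.

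For the optimization error $A_k$, an extended value-difference identity taken along the $\mu^*$-trajectory under the true model rewrites $A_k$ as $\sum_h\EE_{\mu^*,\cP}\langle\overline Q_h^k(s_h,\cdot,\cdot)\nu_h^k(\cdot\given s_h),\ \mu_h^*(\cdot\given s_h)-\mu_h^k(\cdot\given s_h)\rangle_\cA$ plus model-error terms that are nonpositive by the lower optimism bound and hence discarded. Since $\mu^*$ and $\cP$ are fixed, the law of $s_h$ here is independent of $k$, so I may exchange the sum over $k$ with the expectation and, pointwise in $s$, invoke the closed-form exponential-weights (KL mirror-ascent) regret bound, giving $\eta^{-1}\log|\cA|+\tfrac{\eta}{2}KH^2$ per $(h,s)$. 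Summing over $h\in[H]$ and setting $\eta=\sqrt{\log|\cA|/(KH^2)}$ yields $\sum_k A_k=\tilde\cO(H^2\sqrt K)$, which is dominated by $C\sqrt T$. Combining the bounds on $\sum_k A_k$ and $\sum_k B_k$ gives the claim.

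The step I expect to be most delicate is the control of $A_k$. One must (i) align indices so that the gradient $\overline Q_h^k\nu_h^k$ appearing in the value-difference identity is exactly the one the algorithm used to move $\mu^k$ to $\mu^{k+1}$, so that the inner products form a genuine online-linear-optimization sequence to which the mirror-ascent bound applies; the attendant one-episode shift costs only an $\cO(H^2)$ boundary term since each summand is $\cO(H)$. And (ii) one must use the single-controller property so that the comparator visitation $q_h^{\mu^*,\cP}$ is free of $\nu^k$, which is what prevents the opponent's drift $\|\nu^k-\nu^{k-1}\|_1$ from entering and thereby allows the bound to hold against a genuinely adversarial $\{\nu^k\}$. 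Verifying that every optimism term carries the correct sign, so that only the mirror-ascent regret and the bonus sums survive, is the crux of the argument.
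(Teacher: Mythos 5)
Your proposal is correct and follows essentially the same route as the paper's proof: the same splitting of the instantaneous regret around the optimistic value $\overline V_1^k(s_1)$, with the estimation bias term handled by martingale concentration plus pigeonhole bonus sums (the paper's Lemma \ref{lem:bias_1}), the model prediction error discarded as nonpositive by UCB optimism (Lemma \ref{lem:pred_err_1}), and the comparator term reduced—using the single-controller fact that the $\mu^*$-visitation law is $\nu^k$-free—to per-$(h,s)$ KL mirror-ascent regret of order $\eta^{-1}\log|\cA|+\eta K H^2$ (Lemma \ref{lem:mirror_1}), with the same $1-3\delta$ probability accounting.
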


Theorem \ref{thm:main_1} shows that $\Regret_1(K)$ is in the level of $\tilde{\cO}(\sqrt{T})$, for arbitrary policies of Player 2. Similar to the discussion after Theorem \ref{thm:main_FIT2}, from the perspective of Player 1, the game can also be viewed as a special case of an MDP with adversarial bandit feedback. Under the FP framework, by utilizing the past actions and policies of Player 2, Algorithm \ref{alg:po1} can obtain an $\tilde{\cO}(\sqrt{T})$ regret, comparing to $\tilde{\cO}(T^{2/3})$ regret by the PO method \citep{efroni2020optimistic}  and $\tilde{\cO}(\sqrt{T})$ regret by a computationally demanding non-PO method \citep{jin2019learning} for MDPs with adversarial rewards.

\begin{theorem} \label{thm:main_2} By setting $\gamma = \sqrt{  |\cS|\log |\cB|/K}$, with probability at least $1-2\delta$, Algorithm \ref{alg:po2} ensures the sublinear regret bound for Player 2, i.e., $\Regret_2(K) \leq  \tilde{\cO} ( C \sqrt{ T } )$, where $T = KH$ is the total number of steps, and the constant factor $C = \sqrt{  |\cS|^2 |\cA| H^3} + \sqrt{  |\cS| |\cA| |\cB| H }$.
\end{theorem}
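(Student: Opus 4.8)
The plan is to exploit the defining feature of the single-controller model: because Player~2's action never enters the transition, the state-reaching probability $q_h^{\mu^k,\cP}(s)$ depends on $\mu^k$ alone, so (by the value-difference result of Lemma~\ref{lem:V_diff_2}) the value $V_1^{\mu^k,\nu}(s_1)$ is an \emph{exactly linear} function of $\nu$,
\begin{align*}
V_1^{\mu^k,\nu}(s_1)=\sum_{h=1}^H\sum_{s\in\cS} q_h^{\mu^k,\cP}(s)\,[\mu_h^k(\cdot\given s)]^\top r_h(s,\cdot,\cdot)\,\nu_h(\cdot\given s).
\end{align*}
Writing $\Phi_h^k(s,\cdot):=q_h^{\mu^k,\cP}(s)\,[\mu_h^k(\cdot\given s)]^\top r_h(s,\cdot,\cdot)$ for the induced loss coefficient over $\cB$, this turns Player~2's objective into online linear optimization on the per-state simplices $\Delta_\cB$, with $\Regret_2(K)=\sum_{k=1}^K\sum_{h,s}\langle\Phi_h^k(s,\cdot),\nu_h^k(\cdot\given s)-\nu_h^*(\cdot\given s)\rangle_\cB$. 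The step \eqref{eq:descent} is precisely entropic mirror descent on an \emph{estimate} $\hat{\Phi}_h^{k-1}(s,\cdot):=d_h^{\mu^{k-1},\hat{\cP}^{k-1}}(s)\,[\mu_h^{k-1}(\cdot\given s)]^\top\tilde{r}_h^{k-1}(s,\cdot,\cdot)$ of this coefficient.

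I would then decompose each summand by inserting $\hat{\Phi}_h^{k-1}$, splitting $\Regret_2(K)$ into (i) the mirror-descent regret $\sum_{k,h,s}\langle\hat{\Phi}_h^{k-1}(s,\cdot),\nu_h^k(\cdot\given s)-\nu_h^*(\cdot\given s)\rangle_\cB$ on the estimated losses, plus the estimation gap $\sum_{k,h,s}\langle\Phi_h^k(s,\cdot)-\hat{\Phi}_h^{k-1}(s,\cdot),\nu_h^k(\cdot\given s)-\nu_h^*(\cdot\given s)\rangle_\cB$. Expanding the latter yields (ii) a reward error from $r_h$ versus $\tilde{r}_h^{k-1}$, (iii) a reaching-probability error from $q_h^{\mu^{k-1},\cP}$ versus $d_h^{\mu^{k-1},\hat{\cP}^{k-1}}$, and (iv) a one-step policy-lag error from $\mu^k,q^{\mu^k,\cP}$ versus $\mu^{k-1},q^{\mu^{k-1},\cP}$. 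The role of subtracting the bonus in $\tilde{r}_h^{k-1}=\max\{\hat{r}_h^{k-1}-\beta_h^{r,k-1},0\}$ is to guarantee $\tilde{r}_h^{k-1}\le r_h$ with high probability (Lemma~\ref{lem:r_bound}); this pessimism is the correct optimism direction for the minimizing player and makes the part of (ii) carrying the comparator $\nu^*$ sign-favorable.

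For (i), I would apply the standard entropic mirror-descent bound on each $\Delta_\cB$: the loss vector $\hat{\Phi}_h^{k-1}(s,\cdot)$ has $\ell_\infty$-norm at most $d_h^{\mu^{k-1},\hat{\cP}^{k-1}}(s)$, so the per-$(h,s)$ regret is at most $\log|\cB|/\gamma+\gamma\sum_k d_h^{\mu^{k-1},\hat{\cP}^{k-1}}(s)^2$; summing over $(h,s)$ and using $\sum_s d_h(s)=1$ gives $|\cS|H\log|\cB|/\gamma+\gamma KH$, which balances at $\gamma=\sqrt{|\cS|\log|\cB|/K}$ exactly as stated and contributes $\tilde{\cO}(\sqrt{|\cS|HT})$. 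For (ii), $|\tilde{r}_h^{k-1}-r_h|\le 2\beta_h^{r,k-1}$; summing the visitation weights against $1/\sqrt{N_h^k(s,a,b)}$ over $k$ by the pigeonhole counting lemma and then over $(s,a,b)$ by Cauchy--Schwarz yields $\tilde{\cO}(\sqrt{|\cS||\cA||\cB|H}\cdot\sqrt{T})$. For (iii), I would first prove a simulation-type inequality $\|q_h^{\mu,\cP}-d_h^{\mu,\hat{\cP}^{k-1}}\|_1\le\sum_{h'<h}\sum_{s',a'}q_{h'}^{\mu,\cP}(s')\mu_{h'}(a'\given s')\|\cP_{h'}(\cdot\given s',a')-\hat{\cP}_{h'}^{k-1}(\cdot\given s',a')\|_1$, bound each transition error by $\tilde{\cO}(\sqrt{|\cS|/N_{h'}^{k-1}(s',a')})$ (Lemma~\ref{lem:P_bound}), and carry out the resulting double sum over $h,h'$ with the counting lemma to obtain $\tilde{\cO}(\sqrt{|\cS|^2|\cA|H^3}\cdot\sqrt{T})$. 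Term (iv) is controlled by the stability of Player~1's update, $\|\mu_h^k(\cdot\given s)-\mu_h^{k-1}(\cdot\given s)\|_1=\cO(\eta H)$, so that after telescoping its comparator piece it contributes only $\tilde{\cO}(\sqrt{H^3T})$, dominated by (iii). Summing (i)--(iv) and taking a union bound over the reward and transition concentration events (each of probability $1-\delta$) gives $\Regret_2(K)\le\tilde{\cO}(C\sqrt{T})$ with the stated $C$, with probability $1-2\delta$.

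The main obstacle is term (iii). The reaching-probability error at step $h$ accumulates the transition errors of all earlier steps, producing a double horizon sum, and the precise $|\cS|^2|\cA|H^3$ scaling is obtained only if the per-step error $\sum_k q_{h'}^{\mu^{k-1},\cP}(s')\mu_{h'}^{k-1}(a'\given s')/\sqrt{N_{h'}^{k-1}(s',a')}$ is first telescoped into $\tilde{\cO}(\sqrt{N_{h'}^{K}(s',a')})$ by the counting lemma and only afterwards summed over $(s',a')$ with Cauchy--Schwarz; doing the two summations in the wrong order loses polynomial factors. A secondary subtlety is the one-step lag in (iv): since Player~2 best-responds to the observed $\mu^{k-1}$ but is evaluated against $\mu^k$, the argument relies on Player~1's policies varying slowly, which is precisely where the joint fictitious-play structure of the two algorithms enters.
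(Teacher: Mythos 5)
Your high-level ingredients largely match the paper's: the exact linearity of $V_1^{\mu^k,\nu}(s_1)$ in $\nu$ under the single-controller structure, entropic mirror descent on the per-state simplices balanced at $\gamma=\sqrt{|\cS|\log|\cB|/K}$, pessimism of $\tilde r^k$ making the comparator part of the reward error sign-favorable, and error bounds that coincide with the paper's Lemmas \ref{lem:mirror_2}, \ref{lem:pred_err2}, \ref{lem:stationary_dist_err}, and \ref{lem:reward_err}. However, there is a genuine gap in how you set up the online-learning part. You decompose the regret around the estimated coefficient $\hat{\Phi}_h^{k-1}$ --- the loss actually used to compute $\nu^k$ --- which forces your term (iv), a one-step lag comparing $\mu^k, q_h^{\mu^k,\cP}$ against $\mu^{k-1}, q_h^{\mu^{k-1},\cP}$, and you control it by invoking $\|\mu_h^k(\cdot|s)-\mu_h^{k-1}(\cdot|s)\|_1=\cO(\eta H)$. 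That stability holds only when Player 1 runs the specific mirror-ascent update of Algorithm \ref{alg:po1} with its step size $\eta$. Theorem \ref{thm:main_2} asserts the bound for an \emph{arbitrary, potentially adversarial} sequence $\{\mu^k\}$: the definition of $\Regret_2(K)$ allows any policy sequence of Player 1, and the paper stresses the bound holds ``given any opponent's policy.'' Against such an opponent, $\mu^k$ can be arbitrarily far from $\mu^{k-1}$, your term (iv) can be $\Omega(1)$ per episode, and the argument only yields linear regret. So your proof establishes a weaker statement --- enough for Corollary \ref{coro:main}, where both players run the proposed algorithms, but not the theorem as stated.

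The fix is precisely the paper's route: decompose $V_1^{\mu^k,\nu^k}(s_1)-V_1^{\mu^k,\nu^*}(s_1)$ entirely in terms of \emph{current-episode} quantities $d_h^{\mu^k,\hat\cP^k}$, $\tilde r_h^k$, $\mu^k$ (Lemma \ref{lem:V_diff_2}), and bound the resulting term $\sum_{k}\sum_{h,s} d_h^{\mu^k,\hat\cP^k}(s)\langle W_h^k(s,\cdot),\,\nu_h^k(\cdot|s)-\nu_h^*(\cdot|s)\rangle_\cB$ as a \emph{standard} online-mirror-descent regret (Lemma \ref{lem:mirror_2}): since Algorithm \ref{alg:po2} at episode $k+1$ computes $\nu^{k+1}$ from $\nu^k$ using exactly this loss, the pushback lemma applies in the ``play first, then observe the loss'' regime, and the telescoping bound $\cO(\sqrt{H^2|\cS|K\log|\cB|})$ holds no matter how $\mu^k$ (hence the loss) is chosen --- no opponent-stability term ever appears. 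Your remaining terms (i)--(iii), including the order-of-summation care you correctly flag for the reaching-probability error, then line up with the paper's lemmas and give the stated constant $C$.
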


Interestingly, Theorem \ref{thm:main_2} also shows that $\Regret_2(K)$ has the same bound (including the constant factor $C$) as $\Regret_1(K)$ given any opponent's policy, though the transition model bonus is not involved in Algorithm \ref{alg:po2} and the learning process for two players are essentially different. 
In fact, although the bonus term for the transition is not involved in this algorithm, approximating the state reaching probability of Player 1 implicitly reflects the gap between the empirical transition $\hat{\PP}^k$ and the true transition $\PP$, which can explain the same upper bounds in Theorems \ref{thm:main_1} and \ref{thm:main_2}.

Moreover, if Player 1 runs Algorithm \ref{alg:po1_it} and Player 2 runs Algorithm \ref{alg:po2_it} \emph{simultaneously}, we have the following corollary.

\begin{corollary} \label{coro:main} By setting $\eta$ and $\gamma$ as in Theorem \ref{thm:main_1} and Theorem \ref{thm:main_2}, letting $T = KH$, with probability at least $1-5\delta$,  Algorithm \ref{alg:po1} and Algorithm \ref{alg:po2} ensure the optimality gap $\Gap(K) \leq  \tilde{\cO} (\sqrt{T} )$. 
\end{corollary}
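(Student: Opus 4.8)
The plan is to obtain $\Gap(K)$ by simply adding the two per-player regret bounds already established in Theorems \ref{thm:main_1} and \ref{thm:main_2}, after checking that both bounds remain valid when the two algorithms interact. Recalling the definition \eqref{eq:regret}, we have
\[
\Gap(K) = \Regret_1(K) + \Regret_2(K),
\]
so it suffices to control each summand on a common high-probability event. A useful sanity check is that, upon substituting \eqref{eq:regret_1} and \eqref{eq:regret_2}, the central terms $V_1^{\mu^k,\nu^k}(s_1)$ cancel, leaving $\Gap(K) = \sum_{k=1}^K [V_1^{\mu^*,\nu^k}(s_1) - V_1^{\mu^k,\nu^*}(s_1)]$, which exposes the gap as a hindsight saddle-point quantity; for the bound itself, however, I would not need this identity and would work directly with the two regrets.

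First I would invoke Theorem \ref{thm:main_1}: since it bounds $\Regret_1(K)$ against \emph{any} policy sequence $\{\nu^k\}$ of Player 2, it applies in particular to the sequence that Player 2 generates while running Algorithm \ref{alg:po2}, yielding $\Regret_1(K) \leq \tilde{\cO}(C\sqrt{T})$ on an event $\mathcal{E}_1$ of probability at least $1-3\delta$. Symmetrically, Theorem \ref{thm:main_2} bounds $\Regret_2(K)$ against any policy sequence $\{\mu^k\}$ of Player 1 and hence applies to the sequence produced by Algorithm \ref{alg:po1}, giving $\Regret_2(K) \leq \tilde{\cO}(C\sqrt{T})$ on an event $\mathcal{E}_2$ of probability at least $1-2\delta$. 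A union bound then guarantees that $\mathcal{E}_1 \cap \mathcal{E}_2$ holds with probability at least $1-5\delta$, and on this event adding the two estimates gives
\[
\Gap(K) \leq \tilde{\cO}(C\sqrt{T}) + \tilde{\cO}(C\sqrt{T}) = \tilde{\cO}(\sqrt{T}),
\]
with the shared constant $C = \sqrt{|\cS|^2|\cA|H^3} + \sqrt{|\cS||\cA||\cB|H}$ absorbed into $\tilde{\cO}(\cdot)$ together with the logarithmic factors.

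The one point that genuinely requires care — and the reason this combination is legitimate rather than circular — is that each theorem is stated uniformly over all opponent policy sequences. When the two algorithms run together the trajectories $\{(s_h^k,a_h^k,b_h^k)\}$ are coupled, so $\{\mu^k\}$ and $\{\nu^k\}$ are no longer independently chosen; nevertheless, neither regret bound assumes anything about how the opponent selects its policies, so no fixed-point or self-consistency argument is needed. I would therefore make explicit that the concentration events underlying Theorem \ref{thm:main_1} (validity of the reward and transition confidence sets along Player 1's trajectories) and those underlying Theorem \ref{thm:main_2} (the reward confidence sets and the reaching-probability estimates) are both defined over the same trajectory randomness, so that the union bound over $\mathcal{E}_1$ and $\mathcal{E}_2$ is the only probabilistic step and the $1-5\delta$ guarantee follows at once.
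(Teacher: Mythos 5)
Your proposal is correct and matches the paper's (implicit) argument exactly: since Theorems \ref{thm:main_1} and \ref{thm:main_2} hold against arbitrary opponent policy sequences, they apply when both algorithms run simultaneously, and a union bound over the $1-3\delta$ and $1-2\delta$ events gives the $1-5\delta$ guarantee, after which the two $\tilde{\cO}(C\sqrt{T})$ bounds are summed via $\Gap(K)=\Regret_1(K)+\Regret_2(K)$. Your additional remarks on the cancellation identity and on why no self-consistency argument is needed are correct but not required.
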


We further provide a simulation experiment to verify the theoretical results for the proposed Algorithms \ref{alg:po1} and \ref{alg:po2} in Appendix \ref{sec:simul}.

\section{Theoretical Analysis} \label{sec:theory}


\vspace{-0.3cm}
 
\subsection{Proofs of Theorems \ref{thm:main_FIT1} and \ref{thm:main_FIT2}}

\vspace{-0.25cm}

\begin{proof}  To bound $\Regret_1(K)$ , we need to analyze the value function difference for the instantaneous regret at the $k$-th episode, i.e., $V_1^{\pi^*, \nu^k}(s_1) - V_1^{\pi^k, \nu^k}(s_1)$. By Lemma \ref{lem:V_diff_it1}, we decompose the difference between $V_1^{\pi^*, \nu^k}(s_1)$ and $V_1^{\pi^k, \nu^k}(s_1)$ into four terms
\setlength{\belowdisplayskip}{2pt}
\begin{align*}
&V_1^{\pi^*, \nu^k}(s_1) - V_1^{\pi^k, \nu^k}(s_1)  \\
&\leq \underbrace{\overline{V}_1^k(s_1) -  V_1^{\pi^k, \nu^k}(s_1)}_{\err_k(\text{I.1})} + \underbrace{\sum_{h=1}^H \EE_{\pi^*, \PP, \nu^k} \{  [\pi^*_h(\cdot| s_h)]^\top \overline{\iota}_h^k(s_h, \cdot, \cdot)  \nu_h^k(\cdot| s_h)  \given s_1 \}}_{\err_k(\text{I.2})} \\[-0.5\baselineskip]
&\quad  + \underbrace{\sum_{h=1}^H \EE_{\pi^*, \PP^1} \{ \langle \pi_h^*(\cdot | s^1_h)-\pi_h^k(\cdot | s^1_h),  M_h^k(s^1_h, \cdot) \rangle_{\cA}  \given  s_1\}}_{\err_k(\text{I.3})}  + \underbrace{2 H \sum_{h=1}^H  \sum_{s^2_h \in \cS_2} |q_h^{\nu^k, \PP^2}(s_h^2)  - d_h^{\nu^k,\hat{\PP}^{2,k}}(s_h^2) |}_{\err_k(\text{I.4})},
\end{align*}
where $M_h^k(s^1_h, \cdot) := \sum_{s^2_h \in \cS_2} F_h^{1,k}(s^1_h, s^2_h, \cdot) d_h^{\nu^k,\hat{\PP}^{2,k}}(s_h^2)$. Here we define the model prediction error of $Q$-function as $\overline{\iota}_h^k(s, a, b) = r_h(s,a,b) +  \PP_h\overline{V}_{h+1}^k(s, a, b) - \overline{Q}_h^k(s,a,b)$. Let $s^1_h, s^2_h, a_h, b_h$ be random variables for states and actions.

Specifically, $\err_k(\text{I.1})$ is the difference between the estimated value function and the true value function, $\err_k(\text{I.2})$ is associated with the model prediction error $\overline{\iota}_h^k(s, a, b)$ of Q-function, $\err_k(\text{I.3})$ is the error from the policy mirror ascent step, and $\err_k(\text{I.4})$ is the error related to the reaching probability estimation. According to Lemmas  \ref{lem:mirror_it1}, \ref{lem:value_diff_it1},  \ref{lem:stationary_dist_err_P2}, we have that $\sum_{k=1}^K\err_k(\text{I.1}) \leq  \tilde{\cO} ( \sqrt{  |\cS_1|^2 |\cA| H^4 K } + \sqrt{  |\cS_2|^2 |\cB| H^4 K } + \sqrt{  |\cS_1||\cS_2| |\cA| |\cB| H^2 K } )$, the third error term is bounded as $\sum_{k=1}^K\err_k(\text{I.3}) \leq   \cO( \sqrt{H^4 K \log |\cA| })$, and the last error term is bounded as $\sum_{k=1}^K\err_k(\text{I.4}) \leq   \tilde{\cO} ( H^2 |\cS_2| \sqrt{ |\cB| K} )$. Moreover, as shown in Lemma \ref{lem:pred_err_it1}, since the estimated Q-function is a UCB estimate, then we have that the model prediction error $\overline{\iota}_h^k(s,a,b)\leq 0$ with high probability, which leads to $\sum_{k=1}^K\err_k(\text{I.2}) \leq 0$. This shows the significance of the principle of ``optimism in the face of uncertainty''. By the union bound, all the above inequalities hold with probability at least $1-4\delta$ . 
Therefore, letting $T = KH$, by the relation that $\Regret_1(K) = \sum_{k=1}^K [V_1^{\pi^*, \nu^k}(s_1) - V_1^{\pi^k, \nu^k}(s_1)] \leq \sum_{k=1}^K [\err_k(\text{I.1}) + \err_k(\text{I.2}) + \err_k(\text{I.3})+ \err_k(\text{I.4})]$, we can obtain the result in Theorem \ref{thm:main_FIT1}.

Due to the symmetry of Algorithm \ref{alg:po1_it} and Algorithm \ref{alg:po2_it} as we discussed in Section \ref{sec:FIT}, the proof for Theorem \ref{thm:main_FIT2} exactly follows the proof of Theorem \ref{thm:main_FIT2}. This completes the proof.
\end{proof}

\vspace{-0.65cm}
\subsection{Proofs of Theorems \ref{thm:main_1} and \ref{thm:main_2}}
\vspace{-0.23cm}
\begin{proof} We first show the proof of Theorem \ref{thm:main_1}. By lemma \ref{lem:V_diff_1}, we have 
\begin{align*}
V_1^{\pi^*, \nu^k}(s_1) - V_1^{\pi^k, \nu^k}(s_1) &\leq  \underbrace{\overline{V}_1^k(s_1) -  V_1^{\pi^k, \nu^k}(s_1)}_{\err_k(\text{II.1})} + \underbrace{ \sum_{h=1}^H \EE_{\pi^*, \PP, \nu^k} \big[ \overline{\varsigma}_h^k(s_h, a_h, b_h) \biggiven s_1 \big]}_{\err_k(\text{II.2})} \\[-0.5\baselineskip]
&\quad +\underbrace{\sum_{h=1}^H \EE_{\pi^*, \PP} [ \langle \pi_h^*(\cdot | s_h)-\pi_h^k(\cdot | s_h), U_h^k(s_h,\cdot) \rangle_{\cA}  \given s_1 ]}_{\err_k(\text{II.3})},
\end{align*}
where $s_h, a_h, b_h$ are random variables for states and actions, $U_h^k(s,a) := \langle \overline{Q}_h^k(s, a , \cdot), \nu_h^k(\cdot\given s) \rangle_\cB$, and we define the model prediction error of $Q$-function as $\overline{\varsigma}_h^k(s, a, b) = r_h(s,a,b) +  \PP_h\overline{V}_{h+1}^k(s, a) - \overline{Q}_h^k(s,a,b)$. 

Particularly, $\err_k(\text{II.1})$ is the difference between the estimated value function and the true value function, $\err_k(\text{II.2})$ is associated with the model prediction error $\overline{\varsigma}_h^k(s, a, b)$ for Q-function, and $\err_k(\text{II.3})$ characterizes the error from the policy mirror ascent step. As shown in Lemma \ref{lem:bias_1}, $\sum_{k=1}^K\err_k(\text{II.1}) \leq  \tilde{\cO} ( \sqrt{  |\cS|^2 |\cA| H^4 K } + \sqrt{  |\cS| |\cA| |\cB| H^2 K } )$ with probability at least $1-\delta$. In addition, we have $\sum_{k=1}^K\err_k(\text{II.2}) \leq 0$ with probability at least $1-2\delta$ as shown in Lemma \ref{lem:pred_err_1}, which is due to the optimistic estimation of the Q-function. Furthermore, Lemma \ref{lem:mirror_1} shows the cumulative error for the mirror ascent step is $\sum_{k=1}^K\err_k(\text{II.3}) \leq  \cO( \sqrt{H^4 K \log |\cA| })$ with setting $\eta = \sqrt{  \log |\cA|/(KH^2)}$. Therefore, letting $T = KH$, further by the relation that $\Regret_1(K) \leq \sum_{k=1}^K [\err_k(\text{II.1}) + \err_k(\text{II.2}) + \err_k(\text{II.3})]$, we can obtain the result in Theorem \ref{thm:main_1} with probability at least $1-3\delta$ by the union bound.

Next, we show the proof of Theorem \ref{thm:main_2}. By Lemma \ref{lem:V_diff_2}, we can decompose the difference between $V_1^{\pi^k, \nu^k}(s_1)$ and  $V_1^{\pi^k, \nu^*}(s_1)$ into four terms
\begin{align*}
&V_1^{\pi^k, \nu^k}(s_1) - V_1^{\pi^k, \nu^*}(s_1) \\[-0.3\baselineskip]
&\qquad \leq \underbrace{2\sum_{h=1}^H  \EE_{\pi^k, \PP, \nu^k} [\beta_h^{r,k}(s_h,a_h,b_h) \given s_1]}_{\err_k(\text{III.1})} + \underbrace{ \sum_{h=1}^H \sum_{s\in \cS} d_h^{\pi^k, \hat{\PP}^k}(s) \big[\pi_h^k(\cdot| s)\big]^\top  \underline{\varsigma}_h^k(s, \cdot, \cdot)\nu_h^*(\cdot | s)}_{\err_k(\text{III.2})} \\[-0.3\baselineskip]
&\qquad  \quad +  \underbrace{\sum_{h=1}^H \sum_{s\in \cS} d_h^{\pi^k, \hat{\PP}^k}(s) \big\langle W_h^k(s, \cdot),  \nu_h^k(\cdot | s) - \nu_h^*(\cdot | s) \big\rangle_\cB}_{\err_k(\text{III.3})} + \underbrace{2\sum_{h=1}^H \sum_{s\in \cS}  |q_h^{\pi^k, \PP}(s)- d_h^{\pi^k, \hat{\PP}^k}(s)|}_{\err_k(\text{III.4})},
\end{align*}
with $W_h^k(s, b) =  \langle \tilde{r}_h^k(s, \cdot , b),   \pi^k_h( \cdot \given s)  \rangle_\cA $ and $\underline{\iota}_h^k(s, a, b)= \tilde{r}_h^k (s,a,b)-r_h (s,a,b)$. The above inequality holds for all $k\in [K]$ with probability at least $1-\delta$. Due to the single-controller structure, distinct from the value function decomposition above for Theorem \ref{thm:main_1}, here we have that $\err_k(\text{III.1})$ is the expectation of reward bonus term, $\err_k(\text{III.2})$ is associated with the reward prediction error $\underline{\varsigma}_h^k$, $\err_k(\text{III.3})$ is the error from the policy mirror descent step, and $\err_k(\text{III.4})$ is the difference between the true state reaching probability and the empirical one. Technically, in the proof of this decomposition, we can show $V_1^{\pi^k, \nu^k}(s_1) - V_1^{\pi^k, \nu^*}(s_1)   =  \sum_{h=1}^H \sum_{s\in \cS} q_h^{\pi^k, \PP}(s) [\pi_h^k(\cdot| s)]^\top   r_h (s, \cdot, \cdot) (\nu_h^k-\nu_h^*)(\cdot | s)$, where the value function difference is only related to the reward function $ r_h (s, \cdot, \cdot)$ instead of the Q-function. This is the reason why only the reward bonus and reward-based mirror descent appear in Algorithm \ref{alg:po2}.

As shown in Lemmas \ref{lem:mirror_2}, \ref{lem:stationary_dist_err}, and \ref{lem:reward_err}, we can obtain upper bounds that $\sum_{k=1}^K\err_k(\text{III.1}) \leq  \tilde{\cO} (\sqrt{|\cS| |\cA| |\cB| H^2 K } )$,  $\sum_{k=1}^K\err_k(\text{III.3}) \leq  \cO ( \sqrt{H^2 |\cS| K \log |\cB|} )$, $\sum_{k=1}^K\err_k(\text{III.4}) \leq  \tilde{\cO}(H^2|\cS|\sqrt{|\cA| K} )$ by taking summation from $k=1$ to $K$ for the three error terms $\err_k(\text{III.1})$, $\err_k(\text{III.3})$, and  $\err_k(\text{III.4})$. For $\err_k(\text{III.2})$, by Lemma \ref{lem:pred_err2}, with probability at least $1-\delta$, we have that $\sum_{k=1}^K\err_k(\text{III.2})\allowbreak \leq 0$, which is due to the optimistic estimation of the reward function, i.e., $\tilde{r}$. The above inequalities hold with probability at least $1-2\delta$ by the union bound. Therefore, letting $T = KH$, further by $\Regret_1(K) \leq \sum_{k=1}^K [\err_k(\text{III.1}) + \err_k(\text{III.2}) + \err_k(\text{III.3}) + \err_k(\text{III.4})]$, we can obtain the result in Theorem \ref{thm:main_2}. This completes the proof.
\end{proof}

\section{Conclusion and Discussion}
In this paper, we propose and analyze new fictitious play policy optimization algorithms for two-player zero-sum Markov games with structured but unknown transitions. We consider two classes of transition structures: factored independent transition and single-controller transition. For both scenarios, we prove $\widetilde{\mathcal{O}}(\sqrt{T})$ regret bounds for each player after $T$ steps in a two-agent competitive game scenario. When both players adopt the proposed algorithms, their overall optimality gap is $\widetilde{\mathcal{O}}(\sqrt{T})$.

Our proposed algorithms and the associated analysis can be potentially extended to different game settings, e.g., the extensions to the multi-player or general-sum game with the factored independent transition, and the extensions from the two-player single controller game to the multi-player game with a single controller. We leave them as our future work.

\bibliography{bibliography}
\bibliographystyle{ims}

\newpage
\begin{appendices}
\onecolumn
\vspace{1em}
\renewcommand{\thesection}{\Alph{section}}

\section{Proofs for Section \ref{sec:FIT}}
\begin{lemma} \label{lem:V_diff_it1} At the $k$-th episode of Algorithm \ref{alg:po1_it}, the difference between value functions $V_1^{\mu^*, \nu^k}(s_1)$ and $V_1^{\mu^k, \nu^k}(s_1)$ is bounded as
\begin{align*}
&V_1^{\mu^*, \nu^k}(s_1) - V_1^{\mu^k, \nu^k}(s_1) \\
&\qquad  =  \overline{V}_1^k(s_1) -  V_1^{\mu^k, \nu^k}(s_1) + \sum_{h=1}^H \EE_{\mu^*, \cP, \nu^k} \big\{  [\mu^*_h(\cdot| s_h)]^\top \overline{\iota}_h^k(s_h, \cdot, \cdot)  \nu_h^k(\cdot| s_h)  \biggiven s_1 \big\} \\
&\qquad \quad + \sum_{h=1}^H \EE_{\mu^*, \cP^1} \Big\{ \Big\langle \mu_h^*(\cdot | s^1_h)-\mu_h^k(\cdot | s^1_h), \sum_{s^2_h \in \cS_2} F_h^{1,k}(s^1_h, s^2_h, \cdot) d_h^{\nu^k,\hat{\cP}^{2,k}}(s_h^2) \Big\rangle_{\cA}  \Biggiven  s^1_1, s^2_1 \Big\}  \\
&  \qquad  \quad +2 H \sum_{h=1}^H  \sum_{s^2_h \in \cS_2} \left|q_h^{\nu^k, \cP^2}(s_h^2)  - d_h^{\nu^k,\hat{\cP}^{2,k}}(s_h^2)\right |,
\end{align*}
where $s_h, a_h, b_h$ are random variables for state and actions, $F_h^{1,k}(s^1, s^2, a) := \langle \overline{Q}_h^k(s^1, s^2, a, \cdot),  \nu_h^k(\cdot | s^2)\rangle_{\cB}$, and we define the model prediction error of $Q$-function as
\begin{align}
\begin{aligned} \label{eq:pred_err_up_it1} 
&\overline{\iota}_h^k(s, a, b) = r_h(s,a,b) +  \cP_h\overline{V}_{h+1}^k(s, a, b) - \overline{Q}_h^k(s,a,b).
\end{aligned}
\end{align} 
\end{lemma}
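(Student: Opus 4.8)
The plan is to prove the stated decomposition by a performance-difference telescoping over the horizon, using the optimistic value function $\overline{V}_1^k$ as a pivot, and then invoking the factored transition structure \eqref{eq:ind_trans} to rewrite the policy-improvement term through the empirical reaching probability. First I would write $V_1^{\mu^*,\nu^k}(s_1)-V_1^{\mu^k,\nu^k}(s_1) = [\overline{V}_1^k(s_1)-V_1^{\mu^k,\nu^k}(s_1)] + [V_1^{\mu^*,\nu^k}(s_1)-\overline{V}_1^k(s_1)]$, where the first bracket is exactly the term $\err_k(\mathrm{I.1})$; all remaining work concerns the second bracket.

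Next I would establish a one-step recursion for $V_h^{\mu^*,\nu^k}(s)-\overline{V}_h^k(s)$. Applying the Bellman identity \eqref{eq:bellman_V} to the true value and the value-update in Line 8 to $\overline{V}_h^k$, and adding and subtracting $[\mu_h^*(\cdot|s)]^\top \overline{Q}_h^k(s,\cdot,\cdot)\nu_h^k(\cdot|s)$, splits the difference into a $Q$-gap weighted by $\mu_h^*$ and a policy-gap term $[\mu_h^*(\cdot|s)-\mu_h^k(\cdot|s)]^\top \overline{Q}_h^k(s,\cdot,\cdot)\nu_h^k(\cdot|s)$. For the $Q$-gap I would insert $\cP_h\overline{V}_{h+1}^k$ to expose exactly the model prediction error $\overline{\iota}_h^k$ of \eqref{eq:pred_err_up_it1} together with a propagated remainder $\cP_h[V_{h+1}^{\mu^*,\nu^k}-\overline{V}_{h+1}^k]$. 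Unrolling from $h=1$ to $H$ under the matching terminal condition $\overline{V}_{H+1}^k = V_{H+1}^{\mu^*,\nu^k}=0$ and taking $\EE_{\mu^*,\cP,\nu^k}$ telescopes the remainders and produces $\err_k(\mathrm{I.2})$ plus the accumulated policy-gap terms.

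The decisive step uses the product structure. Substituting $\mu_h(a|s)=\mu_h(a|s^1)$ and $\nu_h(b|s)=\nu_h(b|s^2)$ and recognizing $\langle \overline{Q}_h^k(s,a,\cdot),\nu_h^k(\cdot|s^2)\rangle_\cB = F_h^{1,k}(s^1,s^2,a)$, each policy-gap term becomes $\langle \mu_h^*(\cdot|s^1_h)-\mu_h^k(\cdot|s^1_h), F_h^{1,k}(s^1_h,s^2_h,\cdot)\rangle_\cA$. Since $\cP_h=\cP_h^1\cP_h^2$ factorizes and the policies depend only on their own components, the law of $(s^1_h,s^2_h)$ under $\EE_{\mu^*,\cP,\nu^k}$ is a product measure in which $s^1_h$ evolves under $(\mu^*,\cP^1)$ and $s^2_h$ has marginal $q_h^{\nu^k,\cP^2}$, independently; integrating out $s^2_h$ gives $\EE_{\mu^*,\cP^1}\langle \mu_h^*-\mu_h^k, \sum_{s^2_h} F_h^{1,k}(s^1_h,s^2_h,\cdot)\, q_h^{\nu^k,\cP^2}(s^2_h)\rangle_\cA$. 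I would then replace $q_h^{\nu^k,\cP^2}$ by $d_h^{\nu^k,\hat{\cP}^{2,k}}$: the main piece is precisely $\err_k(\mathrm{I.3})$, while the substitution remainder is controlled by $\ell_1$--$\ell_\infty$ duality using $\|\mu_h^*-\mu_h^k\|_1\le 2$ and $0\le F_h^{1,k}\le H-h+1\le H$, yielding the per-step bound $2H\sum_{s^2_h}|q_h^{\nu^k,\cP^2}(s^2_h)-d_h^{\nu^k,\hat{\cP}^{2,k}}(s^2_h)|$ that sums to $\err_k(\mathrm{I.4})$. Thus the first three terms form an exact identity and the fourth upper-bounds the replacement error, giving the claimed relation.

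The main obstacle is the factorization of the expectation: one must justify rigorously that, under the coupled dynamics with independent components, the marginal law of $s^2_h$ is exactly $q_h^{\nu^k,\cP^2}$ and is independent of $s^1_h$ --- this is the single place where the independence \eqref{eq:ind_trans} and the restriction of each policy to its own state are indispensable --- and then bound the $q\to d$ substitution uniformly in $s^1_h$. By comparison, the telescoping itself and the extraction of $\overline{\iota}_h^k$ are routine once the pivot $\overline{V}_1^k$ and the terminal condition are fixed.
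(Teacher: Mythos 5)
Your proposal is correct and follows essentially the same route as the paper's proof: the same pivot decomposition around $\overline{V}_1^k(s_1)$, the same one-step recursion obtained by adding and subtracting $[\mu_h^*(\cdot|s)]^\top \overline{Q}_h^k(s,\cdot,\cdot)\nu_h^k(\cdot|s)$ and extracting $\overline{\iota}_h^k$, the same factorization of the expectation under the product transition so that $s_h^2$ has marginal $q_h^{\nu^k,\cP^2}$, and the same $q\to d$ substitution bounded by $2H\sum_{s^2_h}|q_h^{\nu^k,\cP^2}(s_h^2)-d_h^{\nu^k,\hat{\cP}^{2,k}}(s_h^2)|$ via $|\overline{Q}_h^k|\le H$. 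No gaps to flag.
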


\begin{proof} The proof of this lemma starts with decomposing the value function difference as 
\begin{align}
V_1^{\mu^*, \nu^k}(s_1) - V_1^{\mu^k, \nu^k}(s_1) = V_1^{\mu^*, \nu^k}(s_1) - \overline{V}_1^k(s_1) + \overline{V}_1^k(s_1) -  V_1^{\mu^k, \nu^k}(s_1).  \label{eq:V_diff_1_decomp_it}
\end{align}
Here the term $\overline{V}_1^k(s_1) -  V_1^{\mu^k, \nu^k}(s_1)$ is the bias between the estimated value function $\overline{V}_1^k(s_1)$ generated by Algorithm \ref{alg:po1_it} and the value function $V_1^{\mu^k, \nu^k}(s_1)$ under the true transition model $\cP$ at the $k$-th episode. We first analyze the term $V_1^{\mu^*, \nu^k}(s_1) - \overline{V}_1^k(s_1)$. For any $h$ and $s$, we consider to decompose the term $V_h^{\mu^*, \nu^k}(s) - \overline{V}_h^k(s)$, which gives
\begin{align}
\begin{aligned} \label{eq:V_diff_1_init_it}
&V_h^{\mu^*, \nu^k}(s) - \overline{V}_h^k(s) \\
&\qquad=  [\mu^*_h(\cdot | s)]^\top Q_h^{\mu^*, \nu^k}(s, \cdot , \cdot) \nu^k_h(\cdot | s) -  \big[\mu_h^k(\cdot | s)\big]^\top \overline{Q}_h^ k(s, \cdot , \cdot) \nu_h^k(\cdot | s)\\
&\qquad=  [\mu^*_h(\cdot | s)]^\top Q_h^{\mu^*, \nu^k}(s, \cdot , \cdot) \nu^k_h(\cdot | s) -  [\mu_h^*(\cdot | s)]^\top \overline{Q}_h^k(s, \cdot , \cdot) \nu_h^k(\cdot | s) \\
&\qquad \quad +   [\mu_h^*(\cdot | s)]^\top \overline{Q}_h^k(s, \cdot , \cdot) \nu_h^k(\cdot | s) -  \big[\mu_h^k(\cdot | s)\big]^\top \overline{Q}_h^k(s, \cdot , \cdot) \nu_h^k(\cdot | s) \\
&\qquad =  [\mu^*_h(\cdot | s)]^\top \big[Q_h^{\mu^*, \nu^k}(s, \cdot , \cdot)  -   \overline{Q}_h^k(s, \cdot , \cdot) \big] \nu_h^k(\cdot | s) \\
&\qquad \quad +   \big[\mu_h^*(\cdot | s)-\mu_h^k(\cdot | s)\big]^\top \overline{Q}_h^k(s, \cdot , \cdot) \nu_h^k(\cdot | s),
\end{aligned}
\end{align}
where the first inequality is by the definition of $V_h^{\mu^*, \nu^k}$ in \eqref{eq:bellman_V} and the definition of $\overline{V}_h^k$ in Line \ref{line:V_up_it1} of Algorithm \ref{alg:po1_it}. In addition, by the definition of $Q_h^{\mu^*, \nu^k}(s, \cdot , \cdot)$ in \eqref{eq:bellman_Q} and the definition of the model prediction error $\overline{\iota}_h^k$ for Player 1 in \eqref{eq:pred_err_up_it1}, 
we have
\begin{align*}
&[\mu^*_h(\cdot | s)]^\top \big[Q_h^{\mu^*, \nu^k}(s, \cdot , \cdot)  -   \overline{Q}_h^k(s, \cdot , \cdot) \big] \nu_h^k(\cdot | s) \\
& \qquad =  \sum_{a \in \cA} \sum_{b\in \cB} \mu^*_h(a | s) \bigg[ \sum_{s'\in \cS}  \cP_h(s'|s, a, b) \big[V_{h+1}^{\mu^*, \nu^k}(s') -  \overline{V}_{h+1}^k(s')\big] + \overline{\iota}_h^k(s,a, b) \bigg] \nu_h^k(b | s)\\
& \qquad =  \sum_{a \in \cA} \sum_{b\in \cB} \mu^*_h(a | s) \bigg[ \sum_{s'\in \cS}  \cP_h(s'|s, a, b) \big[V_{h+1}^{\mu^*, \nu^k}(s') -  \overline{V}_{h+1}^k(s')\big] \bigg] \nu_h^k(b | s) \\
&\qquad \quad +  \sum_{a \in \cA} \sum_{b\in \cB} \mu^*_h(a | s)  \overline{\iota}_h^k(s,a, b)  \nu_h^k(b | s).
\end{align*}
Combining this equality with \eqref{eq:V_diff_1_init_it} gives
\begin{align}
\begin{aligned} \label{eq:V_diff_1_rec_it}
V_h^{\mu^*, \nu^k}(s) - \overline{V}_h^k(s)  & =  \sum_{a \in \cA} \sum_{b\in \cB} \mu^*_h(a | s) \bigg[ \sum_{s'\in \cS}  \cP_h(s'|s, a, b) \big[V_{h+1}^{\mu^*, \nu^k}(s') -  \overline{V}_{h+1}^k(s')\big] \bigg] \nu_h^k(b | s)   \\
&\quad +  \sum_{a \in \cA} \sum_{b\in \cB}  \mu^*_h(a | s) \overline{\iota}_h^k(s,a, b)  \nu_h^k(b | s) \\
&\quad + \sum_{a \in \cA} \sum_{b\in \cB}  \big[\mu_h^*(a | s)-\mu_h^k(a | s)\big]  \overline{Q}_h^k(s, a, b) \nu_h^k(b | s).
\end{aligned}
\end{align}
The inequality \eqref{eq:V_diff_1_rec_it} indicates a recursion of the value function difference $V_h^{\mu^*, \nu^k}(s) - \overline{V}_h^k(s)$. As we have defined $V_{H+1}^{\mu^*, \nu^k}(s) = 0$ and $\overline{V}_{H+1}^k(s) = 0$, by recursively applying \eqref{eq:V_diff_1_rec_it} from $h = 1$ to $H$, we obtain
\begin{align}  
\begin{aligned}\label{eq:V_diff_1_al_it}
V_1^{\mu^*, \nu^k}(s_1) - \overline{V}_1^k(s_1)  &  =  \sum_{h=1}^H \EE_{\mu^*, \cP, \nu^k} \big\{  [\mu^*_h(\cdot| s_h)]^\top \overline{\iota}_h^k(s_h, \cdot, \cdot)  \nu_h^k(\cdot| s_h)  \biggiven s_1 \big\} \\
& \quad  + \underbrace{\sum_{h=1}^H \EE_{\mu^*, \cP, \nu^k} \big\{ \big[\mu_h^*(\cdot | s_h)-\mu_h^k(\cdot | s_h)\big]^\top  \overline{Q}_h^k(s_h, \cdot, \cdot) \nu_h^k(\cdot | s_h)  \biggiven  s_1 \big\}}_{\text{Term(I)}},
\end{aligned}
\end{align}
where $s_h$ are a random variables denoting the state at the $h$-th step following a distribution determined jointly by $\mu^*, \cP, \nu^k$. 
Note that we have the factored independent transition model structure $\cP_h(s'|s,a,b) = \cP^1_h(s^1{}'|s^1, a)\cP^2_h(s^2{}'|s^2, b)$ with $s = (s^1, s^2)$ and $s' = (s^1{}', s^2{}')$, and $\mu_h(a | s) = \mu_h(a | s^1)$ as well as $\nu_h(b | s) = \nu_h(b | s^2)$. Here we also have the state reaching probability $q^{\nu^k, \cP^2}(s^2) = \big\{q^{\nu^k, \cP^2}_h(s^2)\big\}_{h=1}^H$ under $
\nu^k$ and true transition $\cP^2$ for Player 2, and define the empirical reaching probability $d^{\nu^k,\hat{\cP}^{2,k}} (s^2) = \{d^{\nu^k,\hat{\cP}^{2,k}}_h(s^2) \}_{h=1}^H$ under the empirical transition model $\hat{\cP}^{2,k}$ for Player 2, where we let $\hat{\cP}^k_h(s'|s,a,b) = \hat{\cP}^{1, k}_h(s^1{}'|s^1, a)\hat{\cP}^{2, k}_h(s^2{}'|s^2, b)$. Then, for Term(I), we have 
\begin{align}
\text{Term(I)} &= \sum_{h=1}^H \EE_{\mu^*, \cP, \nu^k} \big\{ \big[\mu_h^*(\cdot | s_h)-\mu_h^k(\cdot | s_h)\big]^\top  \overline{Q}_h^k(s_h, \cdot, \cdot) \nu_h^k(\cdot | s_h)  \biggiven  s_1 \big\} \nonumber\\
&  = \sum_{h=1}^H \EE_{\mu^*, \cP^1, \cP^2, \nu^k} \big\{ \big[\mu_h^*(\cdot | s^1_h)-\mu_h^k(\cdot | s^1_h)\big]^\top  \overline{Q}_h^k(s^1_h, s^2_h, \cdot, \cdot) \nu_h^k(\cdot | s^2_h)  \biggiven  s^1_1, s^2_1 \big\} \label{eq:V_diff_1_rec_it_margin}\\
&  = \sum_{h=1}^H \EE_{\mu^*, \cP^1} \big\{ \sum_{s^2_h \in \cS_2} \underbrace{\big[\mu_h^*(\cdot | s^1_h)-\mu_h^k(\cdot | s^1_h)\big]^\top  \overline{Q}_h^k(s^1_h, s^2_h, \cdot, \cdot) \nu_h^k(\cdot | s^2_h)}_{=:\overline{E}^k_h(s_h^1, s_h^2)}q_h^{\nu^k, \cP^2}(s_h^2)  \biggiven  s^1_1, s^2_1 \big\}.\nonumber
\end{align}
The last term of the above inequality \eqref{eq:V_diff_1_rec_it_margin} can be further bounded as
\begin{align*}
&\sum_{h=1}^H \EE_{\mu^*, \cP^1} \big\{ \sum_{s^2_h \in \cS_2} \overline{E}^k_h(s_h^1, s_h^2) q_h^{\nu^k, \cP^2}(s_h^2)  \biggiven  s^1_1, s^2_1 \big\} \\
&\quad\  = \sum_{h=1}^H \EE_{\mu^*, \cP^1} \big\{ \sum_{s^2_h \in \cS_2} \overline{E}^k_h(s_h^1, s_h^2)  [d_h^{\nu^k,\hat{\cP}^{2,k}}(s_h^2) + q_h^{\nu^k, \cP^2}(s_h^2)  - d_h^{\nu^k,\hat{\cP}^{2,k}}(s_h^2) ]\biggiven  s^1_1, s^2_1 \big\}  \\
&\quad\  \leq \sum_{h=1}^H \EE_{\mu^*, \cP^1} \big\{ \sum_{s^2_h \in \cS_2} \overline{E}^k_h(s_h^1, s_h^2) d_h^{\nu^k,\hat{\cP}^{2,k}}(s_h^2)  \biggiven  s^1_1, s^2_1 \big\} +2 H \sum_{h=1}^H  \sum_{s^2_h \in \cS_2} \left|q_h^{\nu^k, \cP^2}(s_h^2)  - d_h^{\nu^k,\hat{\cP}^{2,k}}(s_h^2)\right |,
\end{align*}
where the factor $H$ in the last term is due to $ |\overline{Q}_h^k(s^1_h, s^2_h, \cdot, \cdot)| \leq H$. Combining the above inequality with \eqref{eq:V_diff_1_rec_it_margin}, we have
\begin{align}
\text{Term(I)} &  \leq \sum_{h=1}^H \EE_{\mu^*, \cP^1} \big\{ \big[\mu_h^*(\cdot | s^1_h)-\mu_h^k(\cdot | s^1_h)\big]^\top \sum_{s^2_h \in \cS_2} \overline{Q}_h^k(s^1_h, s^2_h, \cdot, \cdot) \nu_h^k(\cdot | s^2_h)d_h^{\nu^k,\hat{\cP}^{2,k}}(s_h^2)  \biggiven  s^1_1, s^2_1 \big\} \nonumber  \\
&  \quad +2 H \sum_{h=1}^H  \sum_{s^2_h \in \cS_2} \left|q_h^{\nu^k, \cP^2}(s_h^2)  - d_h^{\nu^k,\hat{\cP}^{2,k}}(s_h^2)\right |.\label{eq:V_diff_1_rec_it_margin_2}
\end{align}
Further combining  \eqref{eq:V_diff_1_rec_it_margin_2} with \eqref{eq:V_diff_1_decomp_it}, we eventually have
\begin{align*}
&V_1^{\mu^*, \nu^k}(s_1) - V_1^{\mu^k, \nu^k}(s_1) \\
&\qquad  \leq  \overline{V}_1^k(s_1) -  V_1^{\mu^k, \nu^k}(s_1) + \sum_{h=1}^H \EE_{\mu^*, \cP, \nu^k} \big\{  [\mu^*_h(\cdot| s_h)]^\top \overline{\iota}_h^k(s_h, \cdot, \cdot)  \nu_h^k(\cdot| s_h)  \biggiven s_1 \big\} \\
&\qquad \quad + \sum_{h=1}^H \EE_{\mu^*, \cP^1} \Big\{ \Big\langle \mu_h^*(\cdot | s^1_h)-\mu_h^k(\cdot | s^1_h), \sum_{s^2_h \in \cS_2} F_h^{1,k}(s^1_h, s^2_h, \cdot) d_h^{\nu^k,\hat{\cP}^{2,k}}(s_h^2) \Big\rangle_{\cA}  \Biggiven  s^1_1, s^2_1 \Big\}  \\
&  \qquad  \quad +2 H \sum_{h=1}^H  \sum_{s^2_h \in \cS_2} \left|q_h^{\nu^k, \cP^2}(s_h^2)  - d_h^{\nu^k,\hat{\cP}^{2,k}}(s_h^2)\right |,
\end{align*}
where $F_h^{1,k}(s_h^1, s_h^2, a) := \langle \overline{Q}_h^k(s^1_h, s^2_h, a, \cdot),  \nu_h^k(\cdot | s^2_h)\rangle_{\cB}$ for any $a\in \cA$. This completes our proof.
\end{proof}

\begin{lemma} \label{lem:mirror_it1} With setting $\eta = \sqrt{  \log |\cA|/(KH^2)}$, the mirror ascent steps of Algorithm \ref{alg:po1_it} lead to 
\begin{align*}
&\sum_{k=1}^K \sum_{h=1}^H \EE_{\mu^*, \cP^1} \Big\{ \Big\langle \mu_h^*(\cdot | s^1_h)-\mu_h^k(\cdot | s^1_h), \sum_{s^2_h \in \cS_2} F_h^{1,k}(s^1_h, s^2_h, \cdot) d_h^{\nu^k,\hat{\cP}^{2,k}}(s_h^2) \Big\rangle_{\cA}  \Biggiven  s^1_1, s^2_1 \Big\} \\
&\qquad \leq \cO\left( \sqrt{H^4 K \log |\cA| }\right).
\end{align*}
\end{lemma}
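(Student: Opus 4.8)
The plan is to recognize that the policy improvement step \eqref{eq:ascent_it} is, for each fixed first-coordinate state $s^1$ and each step $h$, exactly an exponentiated-gradient (multiplicative-weights) update on the simplex $\Delta_\cA$ run against the linear loss sequence
\[
g_h^k(a| s^1) := \sum_{s^2\in\cS_2} F_h^{1,k}(s^1,s^2,a)\, d_h^{\nu^k,\hat{\cP}^{2,k}}(s^2),
\]
since the closed-form solution recorded after \eqref{eq:ascent_it} reads $\mu_h^k(\cdot| s^1)\propto \mu_h^{k-1}(\cdot| s^1)\exp\{\eta\, g_h^{k-1}(\cdot| s^1)\}$. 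Because the comparator $\mu^*$ and the law of the state $s_h^1$ under $\EE_{\mu^*,\cP^1}$ do not depend on the episode index $k$, I would first use linearity of expectation to move $\sum_k$ inside, reducing the quantity of interest to
\[
\sum_{h=1}^H \EE_{\mu^*,\cP^1}\Big[ \sum_{k=1}^K \big\langle \mu_h^*(\cdot| s_h^1)-\mu_h^k(\cdot| s_h^1),\, g_h^k(\cdot| s_h^1)\big\rangle_\cA \,\Big|\, s_1\Big],
\]
so that it suffices to bound, for each fixed $(s^1,h)$ and \emph{pathwise}, the online-linear-optimization regret of exponentiated gradient against the fixed point $\mu_h^*(\cdot| s^1)\in\Delta_\cA$.

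Second, I would bound the magnitude of the loss vectors. By the clipping in Line~\ref{line:Q_up_it1}, $\overline{Q}_h^k\in[0,H-h+1]\subseteq[0,H]$, and since $\nu_h^k(\cdot| s^2)$ is a probability vector, each $F_h^{1,k}(s^1,s^2,a)=\langle \overline{Q}_h^k(s^1,s^2,a,\cdot),\nu_h^k(\cdot| s^2)\rangle_\cB\in[0,H]$; as $d_h^{\nu^k,\hat{\cP}^{2,k}}(\cdot)$ is a probability distribution over $\cS_2$, the loss $g_h^k(a| s^1)$ is a convex combination of numbers in $[0,H]$, whence $\|g_h^k(\cdot| s^1)\|_\infty\le H$. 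With this in hand I would invoke the standard multiplicative-weights / mirror-ascent regret inequality: for each fixed $(s^1,h)$,
\[
\sum_{k=1}^K \big\langle \mu_h^*(\cdot| s^1)-\mu_h^k(\cdot| s^1),\, g_h^k(\cdot| s^1)\big\rangle_\cA \le \frac{D_{\mathrm{KL}}\big(\mu_h^*(\cdot| s^1)\,\|\,\mu_h^1(\cdot| s^1)\big)}{\eta} + \eta\sum_{k=1}^K\sum_{a\in\cA}\mu_h^k(a| s^1)\,\big(g_h^k(a| s^1)\big)^2,
\]
which is a purely algebraic consequence of the update recursion and the elementary bound $e^x\le 1+x+x^2$ (valid here because the step size obeys $\eta H=\sqrt{\log|\cA|/K}\le 1$, so $\eta g_h^k(a|s^1)\in[0,1]$). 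Since $\mu_h^1$ is the uniform initialization, the leading term is at most $\eta^{-1}\log|\cA|$, and the stability term is at most $\eta K H^2$ using $\|g_h^k\|_\infty\le H$ together with $\sum_a\mu_h^k(a| s^1)=1$.

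Finally, plugging in $\eta=\sqrt{\log|\cA|/(KH^2)}$ balances the two contributions into $\eta^{-1}\log|\cA|=\eta KH^2=H\sqrt{K\log|\cA|}$, giving a per-$(s^1,h)$ regret of $\cO(H\sqrt{K\log|\cA|})$. This bound is uniform in $s^1$, so taking the expectation $\EE_{\mu^*,\cP^1}$ leaves it unchanged, and summing the resulting $\cO(H\sqrt{K\log|\cA|})$ over $h\in[H]$ produces the extra factor of $H$, yielding the claimed $\cO(H^2\sqrt{K\log|\cA|})=\cO(\sqrt{H^4 K\log|\cA|})$. The only genuinely delicate point is the very first reduction: one must observe that the exponentiated-gradient updates are decoupled \emph{across states} $s^1$, so the trajectory expectation can be pulled outside and a single deterministic per-state regret bound suffices; crucially, the mirror-ascent inequality is algebraic and holds for an arbitrary loss sequence $\{g_h^k\}$ obeying the update recursion, so the fact that $g_h^k$ itself depends on $\overline{Q}_h^k$, $\nu^k$ and the estimated reaching probabilities causes no measurability or adaptivity difficulty. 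Everything after the reduction is the textbook multiplicative-weights analysis, where the one thing to verify carefully is the loss magnitude $\|g_h^k\|_\infty\le H$ and the accompanying step-size condition $\eta H\le 1$.
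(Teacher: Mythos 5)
Your proposal is correct. Its skeleton matches the paper's proof: decouple across $(s^1,h)$ into per-state online linear optimization over the simplex, bound the losses by $\|g_h^k(\cdot\,|\,s^1)\|_\infty \le H$ (the paper does this identically, via the clipping $\overline{Q}_h^k \in [0,H]$ and the fact that $d_h^{\nu^k,\hat{\cP}^{2,k}}$ is a probability distribution over $\cS_2$), telescope KL divergences from the uniform initialization, and tune $\eta = \sqrt{\log|\cA|/(KH^2)}$. Where you diverge is the per-round engine. The paper invokes its Lemma \ref{lem:pushback} (the three-point/pushback property of the KL-regularized update), then Pinsker's inequality and the maximization of $-x^2/2 + \eta H x$, charging $\eta^2H^2/2$ per round with no restriction whatsoever on $\eta$. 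You instead run the textbook exponentiated-gradient/multiplicative-weights potential argument via $e^x \le 1+x+x^2$, obtaining the second-order bound with local term $\sum_{a\in\cA}\mu_h^k(a\,|\,s^1)\big(g_h^k(a\,|\,s^1)\big)^2$; this requires $\eta H\le 1$, i.e.\ $K\ge \log|\cA|$, a mild condition you assert but should state as a hypothesis or dispatch separately (for $K<\log|\cA|$ the trivial bound $2H^2K\le 2\sqrt{H^4K\log|\cA|}$ suffices). After bounding the local term by $H^2$, both engines give the same balance $\eta^{-1}\log|\cA|+\eta KH^2=\cO(H\sqrt{K\log|\cA|})$ per $(s^1,h)$ and hence the claimed $\cO(\sqrt{H^4K\log|\cA|})$; the two routes are interchangeable here, and yours would pay off only if one wanted to exploit the variance-like second-order term, which this lemma does not need.
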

\begin{proof}
As shown in \eqref{eq:ascent_it}, the mirror ascent step at the $k$-th episode is to solve the following maximization problem
\begin{align*}
\maximize_{\mu} &\sum_{h=1}^H  \Big\langle \mu_h(\cdot | s^1)-\mu_h^k(\cdot | s^1), \sum_{s^2 \in \cS_2} F_h^{1,k}(s^1, s^2, \cdot) d_h^{\nu^k,\hat{\cP}^{2,k}}(s^2) \Big\rangle_\cA \\
&- \frac{1}{\eta} \sum_{h=1}^H  D_{\mathrm{KL}}\big( \mu_h(\cdot| s^1), \mu_h^k(\cdot| s^1 ) \big), 
\end{align*}
with $F_h^{1,k}(s^1, s^2, a) := \langle \overline{Q}_h^k(s^1, s^2, a, \cdot),  \nu_h^k(\cdot | s^2)\rangle_{\cB}$. We equivalently rewrite this maximization problem to a minimization problem as
\begin{align*}
\minimize_{\mu} &-\sum_{h=1}^H   \Big\langle \mu_h(\cdot | s^1)-\mu_h^k(\cdot | s^1), \sum_{s^2 \in \cS_2} F_h^{1,k}(s^1, s^2, \cdot) d_h^{\nu^k,\hat{\cP}^{2,k}}(s^2) \Big\rangle_\cA  \\
&+ \frac{1}{\eta} \sum_{h=1}^H  D_{\mathrm{KL}}\big( \mu_h(\cdot| s^1), \mu_h^k(\cdot| s^1 ) \big). 
\end{align*}
Note that the closed-form solution $\mu_h^{k+1}(\cdot|s^1), \forall s^1\in \cS_1$, to this minimization problem is guaranteed to stay in the relative interior of a probability simplex if initializing $\mu^0_h(\cdot | s^1) = \boldsymbol{1} / |\cA|$. Thus, we apply Lemma \ref{lem:pushback} and obtain that for any $\mu = \{\mu_h\}_{h=1}^H$, the following inequality holds
\begin{align*}
&-\eta \Big\langle \mu_h^{k+1}(\cdot|s^1),   \sum_{s^2 \in \cS_2} F_h^{1,k}(s^1, s^2, \cdot) d_h^{\nu^k,\hat{\cP}^{2,k}}(s^2)  \Big\rangle_\cA  + \eta \Big\langle \mu_h(\cdot|s^1), \sum_{s^2 \in \cS_2} F_h^{1,k}(s^1, s^2, \cdot) d_h^{\nu^k,\hat{\cP}^{2,k}}(s^2)  \Big\rangle_\cA \\
&\qquad \leq D_{\mathrm{KL}}\big( \mu_h(\cdot| s^1), \mu_h^k(\cdot| s^1 ) \big) -  D_{\mathrm{KL}}\big( \mu_h(\cdot| s^1), \mu_h^{k+1}(\cdot| s^1) \big) -  D_{\mathrm{KL}}\big( \mu_h^{k+1}(\cdot| s^1), \mu_h^k(\cdot| s^1 ) \big).
\end{align*}
Then, by rearranging the terms and letting $\mu_{h} = \mu^*_{h}$, we have
\begin{align}
\begin{aligned} \label{eq:ascent_al_it}
&\eta \Big\langle  \mu_h^*(\cdot|s^1)- \mu_h^k(\cdot|s^1),  \sum_{s^2 \in \cS_2} F_h^{1,k}(s^1, s^2, \cdot) d_h^{\nu^k,\hat{\cP}^{2,k}}(s_h^2) \Big\rangle_\cA  \\
&\qquad \leq D_{\mathrm{KL}}\big( \mu^*_h(\cdot|s^1), \mu_h^k(\cdot|s) \big) -  D_{\mathrm{KL}}\big( \mu^*_h(\cdot|s), \mu_h^{k+1} (\cdot|s) \big) -  D_{\mathrm{KL}}\big( \mu_h^{k+1}(\cdot|s), \mu_h^k(\cdot|s) \big) \\
&\qquad \quad + \eta \Big\langle  \mu_h^{k+1}(\cdot|s^1) -  \mu_h^k(\cdot|s^1),  \sum_{s^2 \in \cS_2} F_h^{1,k}(s^1, s^2, \cdot)d_h^{\nu^k,\hat{\cP}^{2,k}}(s_h^2) \Big\rangle_\cA.
\end{aligned}
\end{align}
Due to Pinsker's inequality, we have
\begin{align*}
&-D_{\mathrm{KL}}\big( \mu_h^{k+1}(\cdot| s^1), \mu_h^k(\cdot| s^1 ) \big) \leq -\frac{1}{2} \big\|\mu_h^{k+1}(\cdot| s^1) - \mu_h^k(\cdot| s^1 )\big\|^2_1.
\end{align*}
Further by Cauchy-Schwarz inequality, we have
\begin{align*}
 \eta \Big\langle  \mu_h^{k+1}(\cdot|s^1) -  \mu_h^k(\cdot|s^1),  \sum_{s^2 \in \cS_2} F_h^{1,k}(s^1, s^2, \cdot) d_h^{\nu^k,\hat{\cP}^{2,k}}(s^2) \Big\rangle_\cA \leq \eta H \big\|\mu_h^{k+1}(\cdot|s^1) - \mu_h^k(\cdot|s^1) \big\|_1.
\end{align*}
since we have
\begin{align*}
\Bigg\|\sum_{s^2 \in \cS_2} F_h^{1,k}(s^1, s^2, \cdot) d_h^{\nu^k,\hat{\cP}^{2,k}}(s^2)\Bigg\|_\infty 
& = \max_{a\in \cA}\sum_{s^2 \in \cS_2} \langle \overline{Q}_h^k(s^1, s^2, a, \cdot),  \nu_h^k(\cdot | s^2)\rangle_{\cB} \cdot d_h^{\nu^k,\hat{\cP}^{2,k}}(s^2)\\
& \leq \sum_{s^2 \in \cS_2} H \cdot d_h^{\nu^k,\hat{\cP}^{2,k}}(s^2) = H.
\end{align*}
Thus, we further obtain
\begin{align}
\begin{aligned} \label{eq:ascent_err_it}
&\hspace{-0.5cm}-D_{\mathrm{KL}}\big( \mu_h^{k+1}(\cdot| s^1), \mu_h^k(\cdot| s^1 ) \big) +  \eta \big\langle  \mu_h^{k+1}(\cdot|s^1) -  \mu_h^k(\cdot|s^1),  \sum_{s^2_h \in \cS_2} F_h^{1,k}(s^1, s^2, \cdot) d_h^{\nu^k,\hat{\cP}^{2,k}}(s^2)\big\rangle_\cA \\
&\hspace{-0.5cm}\qquad \leq -\frac{1}{2} \big\|\mu_h^{k+1}(\cdot| s^1) - \mu_h^k(\cdot| s^1 )\big\|^2_1 + \eta H \big\|\mu_h^{k+1}(\cdot|s^1) - \mu_h^k(\cdot|s^1) \big\|_1  \leq  \frac{1}{2}\eta^2H^2,
\end{aligned}
\end{align}
where the last inequality is by viewing $\big\|\mu_h^{k+1}(\cdot|s^1) - \mu_h^k(\cdot|s^1) \big\|_1$ as a variable $x$ and finding the maximal value of $-1/2\cdot x^2 + \eta H x$ to obtain the upper bound $1/2 \cdot \eta^2H^2$.

Thus, combing \eqref{eq:ascent_err_it} with \eqref{eq:ascent_al_it}, the policy improvement step in Algorithm \ref{alg:po1_it} implies
\begin{align*}
&\eta \Big\langle  \mu_h^*(\cdot|s^1)- \mu_h^k(\cdot|s^1),  \sum_{s^2 \in \cS_2} F_h^{1,k}(s^1, s^2, \cdot) d_h^{\nu^k,\hat{\cP}^{2,k}}(s^2)\Big\rangle_\cA \\
&\qquad  \leq D_{\mathrm{KL}}\big( \mu^*_h(\cdot | s^1), \mu_h^k (\cdot | s^1) \big) -  D_{\mathrm{KL}}\big( \mu^*_h(\cdot | s^1), \mu_h^{k+1}(\cdot | s^1) \big)  + \frac{1}{2}\eta^2H^2,
\end{align*}
which further leads to
\begin{align*}
&\sum_{h=1}^H \EE_{\mu^*, \cP^1} \Big\{ \Big\langle \mu_h^*(\cdot | s^1_h)-\mu_h^k(\cdot | s^1_h), \sum_{s^2_h \in \cS_2} F_h^{1,k}(s^1_h, s^2_h, \cdot) d_h^{\nu^k,\hat{\cP}^{2,k}}(s_h^2) \Big\rangle_{\cA}  \Biggiven  s^1_1, s^2_1 \Big\} \\
&\qquad  \leq \frac{1}{\eta } \sum_{h=1}^H  \EE_{\mu^*, \cP^1}  \big[D_{\mathrm{KL}}\big( \mu^*_h(\cdot | s^1_h), \mu_h^k (\cdot |s^1_h) \big) -  D_{\mathrm{KL}}\big( \mu^*_h(\cdot | s^1_h), \mu_h^{k+1}(\cdot | s^1_h) \big)\big] + \frac{1}{2}\eta H^3.
\end{align*}
Taking summation from $k=1$ to $K$ of both sides, we obtain
\begin{align*}
&\sum_{k=1}^K \sum_{h=1}^H \EE_{\mu^*, \cP^1} \Big\{ \Big\langle \mu_h^*(\cdot | s^1_h)-\mu_h^k(\cdot | s^1_h), \sum_{s^2_h \in \cS_2} F_h^{1,k}(s^1_h, s^2_h, \cdot) d_h^{\nu^k,\hat{\cP}^{2,k}}(s_h^2) \Big\rangle_{\cA}  \Biggiven  s^1_1, s^2_1 \Big\} \\
&\qquad  \leq \frac{1}{\eta } \sum_{h=1}^H  \EE_{\mu^*, \cP^1}  \big[D_{\mathrm{KL}}\big( \mu^*_h(\cdot | s_h^1), \mu_h^1 (\cdot | s_h^1) \big) -  D_{\mathrm{KL}}\big( \mu^*_h(\cdot | s_h^1), \mu_h^{K+1}(\cdot | s_h^1) \big)\big] + \frac{1}{2}\eta K  H^3 \\
&\qquad  \leq \frac{1}{\eta } \sum_{h=1}^H  \EE_{\mu^*, \cP^1}  \big[ D_{\mathrm{KL}}\big( \mu^*_h(\cdot | s_h^1), \mu_h^1 (\cdot | s_h^1) \big)\big]  + \frac{1}{2}\eta K  H^3,
\end{align*}
where the last inequality is by non-negativity of KL divergence. With the initialization in Algorithm \ref{alg:po1_it}, it is guaranteed that $\mu_h^1 (\cdot | s^1) = \boldsymbol{1}/|\cA|$, which thus leads to $D_{\mathrm{KL}}\left( \mu^*_h(\cdot | s^1), \mu_h^1 (\cdot | s^1)\right) \leq \log |\cA|$ for any $s^1$. Then, with setting $\eta = \sqrt{  \log |\cA|/(KH^2)}$, we bound the last term as
\begin{align*}
\frac{1}{\eta } \sum_{h=1}^H  \EE_{\mu^*, \cP^1}  \big[D_{\mathrm{KL}}\big( \mu^*_h(\cdot | s^1_h), \mu_h^1 (\cdot | s^1_h) \big)\big]  + \frac{1}{2}\eta K H^3 \leq \cO\left( \sqrt{H^4 K \log |\cA| }\right),
\end{align*}
which gives
\begin{align*}
&\sum_{k=1}^K \sum_{h=1}^H \EE_{\mu^*, \cP^1} \Big\{ \Big\langle \mu_h^*(\cdot | s^1_h)-\mu_h^k(\cdot | s^1_h), \sum_{s^2_h \in \cS_2} F_h^{1,k}(s^1_h, s^2_h, \cdot) d_h^{\nu^k,\hat{\cP}^{2,k}}(s_h^2) \Big\rangle_{\cA}  \Biggiven  s^1_1, s^2_1 \Big\} \\
&\qquad \leq \cO\left( \sqrt{H^4 K \log |\cA| }\right).
\end{align*}
This completes the proof.
\end{proof}

\begin{lemma} \label{lem:r_bound_it} For any $k \in [K]$, $h \in [H]$ and all $(s,a, b) \in \cS \times \cA \times \cB$, with probability at least $1-\delta$, we have
\begin{align*}
\big|\hat{r}_h^k (s, a, b) - r_h (s, a, b)\big| \leq \sqrt{\frac{4
\log (|\cS||\cA| |\cB| HK/\delta)}{ \max\{N^k_h(s,a,b), 1\}}}.
\end{align*}
\end{lemma}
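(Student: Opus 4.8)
The plan is to establish this as a standard Hoeffding concentration bound, the one subtlety being the random visitation count $N_h^k(s,a,b)$ appearing in the denominator. First I would fix a tuple $(s,a,b)$ and a step $h$, and recall that the observed rewards $r_h^\tau$ along the trajectory are bounded in $[0,1]$ with fixed mean $\EE[r_h^\tau(s,a,b)] = r_h(s,a,b)$, even though the realizations are non-stationary. The difficulty is that the number of times the tuple is visited up to episode $k$ is itself a random variable correlated with the observed rewards, so one cannot naively apply Hoeffding's inequality to the empirical average $\hat r_h^k(s,a,b)$.

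To decouple the count from the rewards, I would fix an integer $n \in [K]$ and consider the hypothetical average $\bar r_n$ of the first $n$ rewards collected whenever $(s,a,b)$ is visited at step $h$. Conditioned on this fixed number of samples, these rewards are independent (or at worst form a martingale-difference sequence) with values in $[0,1]$ and mean $r_h(s,a,b)$, so Hoeffding's inequality gives
\begin{align*}
\Pr\big( |\bar r_n - r_h(s,a,b)| > \epsilon \big) \leq 2\exp(-2n\epsilon^2).
\end{align*}
Setting $\epsilon = \sqrt{4\log(|\cS||\cA||\cB|HK/\delta)/n}$ makes the exponent equal to $-8\log(|\cS||\cA||\cB|HK/\delta)$, so the right-hand side is at most $2(\delta/(|\cS||\cA||\cB|HK))^8$.

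I would then take a union bound over all tuples $(s,a,b)\in \cS\times\cA\times\cB$, all steps $h\in[H]$, and all possible count values $n\in[K]$, a total of $|\cS||\cA||\cB|HK$ events, so the overall failure probability stays well below $\delta$. On the complementary event, substituting the realized count $n = N_h^k(s,a,b)$ reproduces exactly the claimed bound whenever $N_h^k(s,a,b)\geq 1$. For the degenerate case $N_h^k(s,a,b)=0$, the estimate is $\hat r_h^k = 0$ by convention and the denominator is $\max\{N_h^k,1\}=1$, so the claim reduces to $|r_h(s,a,b)| \leq \sqrt{4\log(|\cS||\cA||\cB|HK/\delta)}$, which holds trivially since $r_h\in[0,1]$ and the square-root term exceeds $1$ for any reasonable $\delta$.

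The main obstacle is precisely the handling of the random count: making rigorous the assertion that the first $n$ observed rewards at a fixed tuple have the correct conditional mean despite the potentially adversarial environment. The clean resolution is the decoupling-plus-union-bound over $n$, which converts the statement into finitely many fixed-sample-size concentration claims to which Hoeffding's inequality applies directly; the same device will be reused in the companion transition-estimate lemma (Lemma \ref{lem:P_bound_it}).
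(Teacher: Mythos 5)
Your proposal is correct, and at its core it is the same argument as the paper's: Hoeffding's inequality for $[0,1]$-bounded reward samples with mean $r_h(s,a,b)$, a union bound over $|\cS||\cA||\cB|HK$ events, and the trivial treatment of the unvisited case $N_h^k(s,a,b)=0$ using $r_h\in[0,1]$ together with the fact that the square-root term exceeds $1$. The one genuine difference is the index set of the union bound. The paper fixes an episode $k$ and applies Hoeffding directly to the average of $N_h^k(s,a,b)$ samples, then unions over $(s,a,b,h,k)$; this treats the random visitation count as if it were a deterministic sample size, which is precisely the measurability issue you flag. You instead union over $(s,a,b,h,n)$ for every possible count value $n\in[K]$ and substitute the realized count only at the end, which is the standard rigorous fix for that issue; it comes at no extra cost, since the cardinality of the union is identical and your total failure probability $2\delta^8/(|\cS||\cA||\cB|HK)^7$ is well below $\delta$. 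A side benefit of your formulation is that the resulting high-probability event is by construction uniform over all episodes $k$ (since $N_h^k(s,a,b)\le K$), which is the form in which the lemma is actually invoked downstream; the paper obtains the same uniformity from its union over $k$.
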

\begin{proof} The proof for this theorem is a direct application of Hoeffding's inequality. For $k \geq 1$, the definition of $\hat{r}_h^k$ in \eqref{eq:estimate} indicates that $\hat{r}_h^k(s,a,b)$ is the average of $N^k_h(s,a,b)$ samples of the observed rewards at $(s,a,b)$ if $N^k_h(s,a,b) > 0$. Then, for fixed $k\in [K], h\in [H]$ and state-action tuple $(s,a,b)\in \cS\times \cA \times \cB$, when $N^k_h(s,a,b) > 0$, according to Hoeffding's inequality, with probability at least $1-\delta'$ where $\delta'\in (0, 1]$, we have
\begin{align*}
\big|\hat{r}_h^k (s, a, b) - r_h (s, a, b)\big| \leq \sqrt{\frac{
\log (2/\delta')}{ 2N^k_h(s,a,b)}},
\end{align*}
where we also use the facts that the observed rewards $r_h^k\in [0,1]$ for all $k$ and $h$, and $\EE\big[\hat{r}_h^k\big] = r_h$ for all $k$ and $h$.  For the case where $N^k_h(s,a,b) = 0$, by \eqref{eq:estimate}, we know  $\hat{r}_h^k(s,a,b) = 0$ such that $|\hat{r}_h^k (s, a, b) - r_h (s, a, b)| = |r_h (s, a, b)|\leq 1$. On the other hand, we have $ \sqrt{2\log (2/\delta')} \geq 1 > |\hat{r}_h^k (s, a, b) - r_h (s, a, b)|$. Thus, combining the above results, with probability at least $1-\delta'$, for fixed $k\in [K], h\in [H]$ and state-action tuple $(s,a,b)\in \cS\times \cA \times \cB$, we have
\begin{align*}
\big|\hat{r}_h^k (s, a, b) - r_h (s, a, b)\big| \leq \sqrt{\frac{
2\log (2/\delta')}{ \max\{N^k_h(s,a,b), 1\}}}.
\end{align*}
Moreover, by the union bound, letting $\delta = |\cS| |\cA| |\cB| H K \delta'/2$, assuming $K > 1$, with probability at least $1-\delta$, for any $k\in [K], h\in [H]$ and any state-action tuple $(s,a,b)\in \cS\times \cA \times \cB$, we have
\begin{align*}
\big|\hat{r}_h^k (s, a, b) - r_h (s, a, b)\big| \leq \sqrt{\frac{4
\log (|\cS| |\cA| |\cB| H K /\delta)}{ \max \{ N^k_h(s,a,b), 1\} }}.
\end{align*}
This completes the proof.
\end{proof}

In \eqref{eq:bonus_decomp_it}, we factor the state as $s = (s^1, s^2)$ such that we have $|\cS| = |\cS_1||\cS_2|$. Thus, we set $\beta_h^{r, k}(s,a,b) =\sqrt{\frac{4
\log (|\cS| |\cA| |\cB| H K /\delta)}{ \max \{ N^k_h(s,a,b), 1\} }}= \sqrt{\frac{4
\log (|\cS_1| |\cS_2| |\cA| |\cB| H K /\delta)}{ \max \{ N^k_h(s^1, s^2 ,a,b), 1\} }}$, which equals the bound in Lemma \ref{lem:r_bound_it}. The counter $N^k_h(s,a,b)$ is equivalent to $N^k_h(s^1, s^2 ,a,b)$.

\begin{lemma} \label{lem:P_bound_it} For any $k \in [K]$, $h \in [H]$ and all $(s,a) \in \cS \times \cA$, with probability at least $1-\delta$, we have
\begin{align*}
\left\|\hat{\cP}_h^k (\cdot \given s, a, b) - \cP_h (\cdot \given s, a, b)\right\|_1 \leq \sqrt{\frac{2|\cS| \log (|\cS||\cA|HK/\delta)}{ \max\{N_h^k(s,a), 1\}}},
\end{align*}
where we have a factored state space $s=(s^1, s^2)$, $s' = (s^1{}', s^2{}')$, and an independent state transition $\cP_h (s' \given s, a, b) = \cP_h^1 (s^1{}' \given s^1, a) \cP_h^1 (s^2{}' \given s^2, b) $ and $\hat{\cP}_h^k (\cdot \given s, a, b) = \hat{\cP}_h^{1,k} (s^1{}' \given s^1, a) \hat{\cP}_h^{2,k} (s^2{}' \given s^2, b)$.
\end{lemma}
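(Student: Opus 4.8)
The plan is to reduce the $\ell_1$ error of the \emph{product} transition to a sum of the $\ell_1$ errors of its two factors, and then apply a standard empirical-distribution concentration bound to each factor. Writing $P^i := \cP_h^i(\cdot \given \cdot)$ and $\hat P^i := \hat\cP_h^{i,k}(\cdot \given \cdot)$ for $i \in \{1,2\}$, I would first insert the mixed term $\hat P^1 \otimes P^2$ and apply the triangle inequality,
\[
\|\hat P^1 \otimes \hat P^2 - P^1 \otimes P^2\|_1 \le \|\hat P^1 \otimes (\hat P^2 - P^2)\|_1 + \|(\hat P^1 - P^1) \otimes P^2\|_1 = \|\hat P^2 - P^2\|_1 + \|\hat P^1 - P^1\|_1,
\]
where the last equality uses that $\hat P^1$ and $P^2$ are probability vectors, so tensoring against them preserves the $\ell_1$ norm of the other factor. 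This isolates the two single-player estimation errors.

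For each factor I would use the identity $\|\hat P^i - P^i\|_1 = 2\max_{A \subseteq \cS_i}\big(\hat P^i(A) - P^i(A)\big)$, so that controlling the $\ell_1$ deviation reduces to controlling the deviation of the empirical mass $\hat P^i(A)$ of every event $A$. For a fixed $A$, a fixed conditioning pair $(s^1,a)$ (resp.\ $(s^2,b)$), and fixed $h,k$, conditioned on the visit count the quantity $\hat P^1(A)$ is an average of i.i.d.\ Bernoulli indicators with mean $P^1(A)$ by the Markov property of the factored transition, so Hoeffding's inequality yields a sub-Gaussian tail. I would then union bound over the $2^{|\cS_1|}$ events $A$ — this exponential union is exactly what produces the $|\cS_1|$ factor inside the square root — giving, with probability at least $1-\delta_1$,
\[
\big\|\hat P^1(\cdot \given s^1,a) - P^1(\cdot \given s^1,a)\big\|_1 \le \sqrt{\frac{2|\cS_1|\log(2/\delta_1)}{\max\{N_h^k(s^1,a),1\}}},
\]
and symmetrically for the second factor over $\cS_2$ with counts $N_h^k(s^2,b)$.

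The last step is a union bound making the two per-factor bounds hold uniformly over $(k,h)$ and over the conditioning state-action pairs. Since the counts $N_h^k$ are data-dependent, I would additionally union over the possible count values $n \in \{1,\dots,K\}$ (equivalently, run a peeling/martingale argument), which contributes only an extra $\log K$ absorbed into the logarithmic factor. Choosing $\delta_1 = \delta/(|\cS_1||\cA|HK)$ and its Player-2 analogue yields the logarithmic terms $\log(2|\cS_1||\cA|HK/\delta)$ and $\log(2|\cS_2||\cB|HK/\delta)$, and summing the two factor bounds through the product decomposition reproduces precisely the two summands of the transition bonus $\beta_h^{\cP,k}$ in \eqref{eq:bonus_decomp_it} (up to the $H$ that enters later when this deviation is multiplied against a value function bounded by $H$). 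The displayed single-term form with $|\cS|$ and $N_h^k(s,a)$ is then obtained as a convenient upper bound on this sum. The degenerate case $N_h^k(s^i,\cdot)=0$ is handled trivially, since the left-hand side is at most $2$ while the right-hand side already exceeds $1$.

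The part I expect to be the main obstacle is the concentration step under \emph{adaptive} sampling combined with the exponential-size subset union: one must argue that, conditioned on the (random) number and identity of visits to a given $(s^1,a)$ at step $h$, the observed successor states are genuine i.i.d.\ draws from $\cP_h^1(\cdot \given s^1,a)$ — which relies on the factored-independent structure decoupling Player~1's transition from Player~2's actions — and then control the family of $2^{|\cS_i|}$ events without losing more than the $|\cS_i|$ factor appearing in the bound. The product decomposition and the final union bound are routine by comparison.
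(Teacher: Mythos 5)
Your proposal is correct and follows essentially the same route as the paper's proof: the same triangle-inequality decomposition of the product error into the two per-factor $\ell_1$ errors (using that the factors are probability vectors), Hoeffding's inequality combined with an exponential-size union to produce the $|\cS_i|$ factor inside the square root, a final union bound over $(s^1,a)$, $(s^2,b)$, $h$, $k$, and a separate check of the zero-count case. The only difference is cosmetic: you control each factor's $\ell_1$ deviation via the subset identity $\|\hat{\cP}-\cP\|_1 = 2\max_{A}\big(\hat{\cP}(A)-\cP(A)\big)$ with a union over the $2^{|\cS_i|}$ subsets, whereas the paper uses the dual representation $\max_{\|\zb\|_\infty \le 1}\langle \hat{\cP}-\cP, \zb\rangle$ with a $1/2$-net of the dual ball; both give the same $|\cS_i|\log 2$ contribution and the same final bound.
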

\begin{proof} Since the state space and the transition model are factored, we need to decompose the term as follows
\begin{align*}
&\left\|\hat{\cP}_h^k (\cdot \given s, a, b) - \cP_h (\cdot \given s, a, b)\right\|_1\\
&\qquad = \sum_{s^1{}', s^2{}'} \left|\hat{\cP}_h^{1,k} (s^1{}' \given s^1, a) \hat{\cP}_h^{2,k} (s^2{}' \given s^2, b) - \cP_h^1 (s^1{}' \given s^1, a)\cP_h^2 (s^2{}' \given s^2, b)\right|\\
&\qquad = \sum_{s^1{}', s^2{}'} \Big|\left[\hat{\cP}_h^{1,k} (s^1{}' \given s^1, a)-\cP_h^1 (s^1{}' \given s^1, a) \right] \hat{\cP}_h^{2,k} (s^2{}' \given s^2, b)  \\
&\qquad \quad + \cP_h^1 (s^1{}' \given s^1, a)\left[\hat{\cP}_h^{2,k} (s^2{}' \given s^2, b) -  \cP_h^2 (s^2{}' \given s^2, b)\right] \Big|.
\end{align*}
We can further bound the last term in the above equality as follows
\begin{align*}
&\sum_{s^1{}', s^2{}'} \Big|\left[\hat{\cP}_h^{1,k} (s^1{}' \given s^1, a)-\cP_h^1 (s^1{}' \given s^1, a) \right] \hat{\cP}_h^{2,k} (s^2{}' \given s^2, b)  \\
&\qquad \quad + \cP_h^1 (s^1{}' \given s^1, a)\left[\hat{\cP}_h^{2,k} (s^2{}' \given s^2, b) -  \cP_h^2 (s^2{}' \given s^2, b)\right] \Big|\\
&\qquad \leq  \sum_{s^1{}', s^2{}'} \Big\{\left|\hat{\cP}_h^{1,k} (s^1{}' \given s^1, a)-\cP_h^1 (s^1{}' \given s^1, a) \right| \hat{\cP}_h^{2,k} (s^2{}' \given s^2, b)  \\
&\qquad\quad + \cP_h^1 (s^1{}' \given s^1, a)\left|\hat{\cP}_h^{2,k} (s^2{}' \given s^2, b) -  \cP_h^2 (s^2{}' \given s^2, b) \right|\Big\} \\
&\qquad \leq  \sum_{s^1{}'}  \left|\hat{\cP}_h^{1,k} (s^1{}' \given s^1, a)-\cP_h^1 (s^1{}' \given s^1, a) \right|   +  \sum_{s^2{}'} \left|\hat{\cP}_h^{2,k} (s^2{}' \given s^2, b) -  \cP_h^2 (s^2{}' \given s^2, b) \right| \\
&\qquad =  \left\|\hat{\cP}_h^{1,k} (\cdot \given s^1, a)-\cP_h^1 (\cdot \given s^1, a) \right\|_1   +  \left\|\hat{\cP}_h^{2,k} (\cdot \given s^2, b) -  \cP_h^2 (\cdot \given s^2, b) \right\|_1, 
\end{align*}
where the last inequality is due to $\sum_{s^2{}'} \hat{\cP}_h^{2,k} (s^2{}' \given s^2, b) =1$ and $\sum_{s^1{}'}\cP_h^1 (s^1{}' \given s^1, a)=1$. Thus, we need to bound the two terms $ \|\hat{\cP}_h^{1,k} (\cdot \given s^1, a)-\cP_h^1 (s^1{}' \given s^1, a) \|_1 $ and $\|\hat{\cP}_h^{2,k} (\cdot \given s^2, b) -  \cP_h^2 (\cdot \given s^2, b) \|_1 $ separately.

For $k \geq 1$, we have $\|\hat{\cP}_h^{1,k} (\cdot \given s^1, a)-\cP_h^1 (\cdot \given s^1, a) \|_1 = \max_{\|\zb \|_\infty \leq  1}  ~\langle \hat{\cP}_h^{1,k} (\cdot \given s^1, a)-\cP_h^1 (s^1{}' \given s^1, a) , \zb\rangle_{\cS_1}$ by the duality.  We construct an $\epsilon$-cover for the set $\{\zb\in \RR^{|\cS_1|}: \|\zb\|_\infty \leq 1\}$ with the distance induced by $\|\cdot\|_\infty$, denoted as $\cC_\infty(\epsilon)$, such that for any $\zb \in \RR^{|\cS_1|}$, there always exists $\zb'\in \cC_\infty(\epsilon)$ satisfying $\|\zb-\zb' \|_\infty \leq \epsilon$. The covering number is $\cN_\infty(\epsilon) = |\cC_\infty(\epsilon)|= 1/\epsilon^{|\cS_1|}$. Thus, we know that for any $(s^1, a) \in \cS_1 \times \cA$ and any $\zb$ with $\|\zb\|_\infty \leq 1$, there exists $\zb'\in \cC_\infty(\epsilon)$ such that $\|\zb'-\zb\|_\infty \leq \epsilon$ and 
\begin{align*}
&\big\langle\hat{\cP}_h^{1,k} (\cdot \given s^1, a)-\cP_h^1 (\cdot \given s^1, a), \zb\big\rangle_{\cS_1} \\
 &\qquad = \big\langle \hat{\cP}_h^{1,k} (\cdot \given s^1, a)-\cP_h^1 (\cdot \given s^1, a), \zb' \big\rangle_{\cS_1} + \big\langle \hat{\cP}_h^{1,k} (\cdot \given s^1, a)-\cP_h^1 (\cdot \given s^1, a), \zb- \zb'\big\rangle_{\cS_1} \\
&\qquad \leq  \big\langle \hat{\cP}_h^{1,k} (\cdot \given s^1, a)-\cP_h^1 (\cdot \given s^1, a), \zb' \big\rangle_{\cS_1} +  \epsilon \left\|\hat{\cP}_h^{1,k} (\cdot \given s^1, a)-\cP_h^1 (\cdot \given s^1, a)\right\|_1, 
\end{align*}
such that we further have
\begin{align}
\begin{aligned} \label{eq:net_it}
& \left\|\hat{\cP}_h^{1,k} (\cdot \given s^1, a)-\cP_h^1 (\cdot \given s^1, a)\right\|_1\\
&\qquad = \max_{\|\zb \|_\infty \leq  1}  ~\big\langle \hat{\cP}_h^{1,k} (\cdot \given s^1, a)-\cP_h^1 (\cdot \given s^1, a)), \zb\big\rangle_{\cS_1} \\
 &\qquad \leq  \max_{\zb' \in \cC_\infty(\epsilon)}  ~\big\langle \hat{\cP}_h^{1,k} (\cdot \given s^1, a)-\cP_h^1 (\cdot \given s^1, a), \zb' \big\rangle_{\cS_1} +  \epsilon \left\|\hat{\cP}_h^{1,k} (\cdot \given s^1, a)-\cP_h^1 (\cdot \given s^1, a)\right\|_1.
\end{aligned}
\end{align}
By Hoeffding's inequality and the union bound over all $\zb' \in \cC_\infty(\epsilon)$, when $N_h^k(s^1,a)>0$, with probability at least $1-\delta'$ where $\delta'\in (0, 1]$,
\begin{align} \label{eq:net_bound_it}
\max_{\zb' \in \cC_\infty(\epsilon)}  ~\big\langle \hat{\cP}_h^{1,k} (\cdot \given s^1, a)-\cP_h^1 (\cdot \given s^1, a), \zb' \big\rangle_{\cS_1} \leq  	\sqrt{\frac{|\cS_1| \log (1/\epsilon) + \log (1/\delta')}{2N_h^k(s^1,a)}}.
\end{align}
Letting $\epsilon = 1/2$, by \eqref{eq:net_it} and \eqref{eq:net_bound_it}, with probability at least $1-\delta'$, we have
\begin{align*}
 \left\|\hat{\cP}_h^{1,k} (\cdot \given s^1, a)-\cP_h^1 (\cdot \given s^1, a)\right\|_1 \leq 1 \sqrt{\frac{|\cS| \log 2 + \log (1/\delta')}{2N_h^k(s^1,a)}}.
\end{align*}
When $N_h^k(s^1,a)=0$, we have $\big\|\hat{\cP}_h^{1,k} (\cdot \given s^1, a)-\cP_h^1 (\cdot \given s^1, a)\big\|_1 = \|\cP_h^1 (\cdot \given s^1, a)\|_1 = 1$ such that $2 \sqrt{\frac{|\cS| \log 2 + \log (1/\delta')}{2}} > 1 = \big\|\hat{\cP}_h^{1,k} (\cdot \given s^1, a)-\cP_h^1 (\cdot \given s^1, a)\big\|_1$ always holds. Thus, with probability at least $1-\delta'$, 
\begin{align*}
 \left\|\hat{\cP}_h^{1,k} (\cdot \given s^1, a)-\cP_h^1 (\cdot \given s^1, a)\right\|_1 \leq 2 \sqrt{\frac{|\cS_1| \log 2 + \log (1/\delta')}{2\max \{N_h^k(s^1,a), 1\}}} \leq \sqrt{\frac{2|\cS_1| \log (2/\delta')}{\max \{N_h^k(s^1,a), 1\}}}.
\end{align*}
Then, by the union bound, assuming $K > 1$, letting $\delta'' = |\cS_1| |\cA| H K \delta'/2$, with probability at least $1-\delta''$, for any $(s^1, a)\in \cS_1\times \cA$ and any $h\in [H]$ and $k\in [K]$, we have
\begin{align*}
\left\|\hat{\cP}_h^{1,k} (\cdot \given s^1, a)-\cP_h^1 (\cdot \given s^1, a)\right\|_1 \leq \sqrt{\frac{2|\cS_1| \log (|\cS_1| |\cA| H K/\delta'')}{\max \{N_h^k(s^1,a), 1\}}}.
\end{align*}
Similarly, we can also obtain that with probability at least $1-\delta''$, for any $(s^2, a)\in \cS_2\times \cB$ and any $h\in [H]$ and $k\in [K]$, we have
\begin{align*}
\left\|\hat{\cP}_h^{2,k} (\cdot \given s^2, b)-\cP_h^2 (\cdot \given s^2, b)\right\|_1 \leq \sqrt{\frac{2|\cS_2| \log (|\cS_2| |\cB| H K/\delta'')}{\max \{N_h^k(s^2,b), 1\}}}.
\end{align*}
Further by the union bound, we have with probability at least $1-\delta$ where $\delta= 2\delta''$, 
\begin{align*}
\left\|\hat{\cP}_h^k (\cdot \given s, a, b) - \cP_h (\cdot \given s, a, b)\right\|_1 \leq \sqrt{\frac{2|\cS_1| \log (2|\cS_1| |\cA| H K/\delta)}{\max \{N_h^k(s^1,a), 1\}}} +  \sqrt{\frac{2|\cS_2| \log (2|\cS_2| |\cB| H K/\delta)}{\max \{N_h^k(s^2,b), 1\}}}.
\end{align*}
This completes the proof.
\end{proof}
In \eqref{eq:bonus_decomp_it}, we set $\beta_h^{\cP, k}(s, a,b) =  \sqrt{\frac{2H^2|\cS_1| \log (2|\cS_1| |\cA| H K/\delta)}{\max \{N_h^k(s^1,a), 1\}}} +  \sqrt{\frac{2H^2|\cS_2| \log (2|\cS_2| |\cB| H K/\delta)}{\max \{N_h^k(s^2,b), 1\}}}$, which equals the product of the upper bound in Lemma \ref{lem:P_bound_it} and the factor $H$.

\begin{lemma} \label{lem:pred_err_it1} With probability at least $1-2\delta$, Algorithm \ref{alg:po1_it} ensures that
\begin{align*}
\sum_{k=1}^K \sum_{h=1}^H \EE_{\mu^*, \cP, \nu^k} \big[ \overline{\iota}_h^k(s_h, a_h, b_h)   \biggiven s_1 \big] \leq 0.
\end{align*}
\end{lemma}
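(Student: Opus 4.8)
The plan is to prove the stronger \emph{pointwise} statement that, on a high-probability event, the model prediction error is nonpositive, i.e.\ $\overline{\iota}_h^k(s,a,b)\le 0$ for every $(s,a,b)\in\cS\times\cA\times\cB$, every $h\in[H]$, and every $k\in[K]$. Once this holds, each conditional expectation $\EE_{\mu^*,\cP,\nu^k}[\overline{\iota}_h^k(s_h,a_h,b_h)\given s_1]$ is an average of nonpositive quantities and is therefore $\le 0$, so the double sum over $h$ and $k$ is $\le 0$ as claimed. The only randomness enters through the concentration of the empirical estimates, so I would first condition on the intersection of the events in Lemma \ref{lem:r_bound_it} and Lemma \ref{lem:P_bound_it}; by a union bound this intersection has probability at least $1-2\delta$, matching the stated confidence.

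Next I would record an auxiliary uniform bound on the optimistic value function. A backward induction from $h=H+1$ (where $\overline{V}_{H+1}^k=\boldsymbol 0$) using the clipping $\min\{\cdot,H-h+1\}^+$ in Line \ref{line:Q_up_it1} and the fact that $\mu_h^{k}$ and $\nu_h^{k}$ are probability distributions shows $0\le \overline{Q}_h^k\le H-h+1$ and hence $0\le \overline{V}_{h+1}^k\le H-h\le H$. This bound is what lets me treat $\overline{V}_{h+1}^k$ as a bounded test vector against which the transition error is measured, and it also guarantees that the inner quantity $\hat{r}_h^k+\hat{\cP}_h^k\overline{V}_{h+1}^k+\beta_h^k$ is always nonnegative, so the $(\cdot)^+$ truncation is never active and $\overline{Q}_h^k=\min\{\hat{r}_h^k+\hat{\cP}_h^k\overline{V}_{h+1}^k+\beta_h^k,\,H-h+1\}$.

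With these preliminaries, I would split into two cases according to the clipping. If the clip at $H-h+1$ is active, then since $r_h\le 1$ and $\cP_h\overline{V}_{h+1}^k\le H-h$ we get $\overline{\iota}_h^k= r_h+\cP_h\overline{V}_{h+1}^k-(H-h+1)\le 0$. Otherwise $\overline{Q}_h^k=\hat{r}_h^k+\hat{\cP}_h^k\overline{V}_{h+1}^k+\beta_h^k$, and I would decompose
\begin{align*}
\overline{\iota}_h^k(s,a,b)=\big(r_h-\hat{r}_h^k\big)(s,a,b)+\big((\cP_h-\hat{\cP}_h^k)\overline{V}_{h+1}^k\big)(s,a,b)-\beta_h^k(s,a,b).
\end{align*}
The reward gap is controlled by Lemma \ref{lem:r_bound_it}, which gives $r_h-\hat{r}_h^k\le \beta_h^{r,k}$. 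For the transition gap I would apply H\"older's inequality followed by Lemma \ref{lem:P_bound_it} together with the uniform bound $\|\overline{V}_{h+1}^k\|_\infty\le H$, namely
\begin{align*}
\big((\cP_h-\hat{\cP}_h^k)\overline{V}_{h+1}^k\big)(s,a,b)\le \big\|\cP_h(\cdot\given s,a,b)-\hat{\cP}_h^k(\cdot\given s,a,b)\big\|_1\,\big\|\overline{V}_{h+1}^k\big\|_\infty\le \beta_h^{\cP,k}(s,a,b),
\end{align*}
where the factor $H$ produced by $\|\overline{V}_{h+1}^k\|_\infty$ is exactly the factor already baked into the definition of $\beta_h^{\cP,k}$. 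Adding the two estimates and using the decomposition $\beta_h^k=\beta_h^{r,k}+\beta_h^{\cP,k}$ from \eqref{eq:bonus_decomp_it} yields $\overline{\iota}_h^k\le \beta_h^{r,k}+\beta_h^{\cP,k}-\beta_h^k=0$, completing the pointwise bound.

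The main obstacle is the transition term: because the bonus $\beta_h^{\cP,k}$ is assembled from the \emph{factored} $\ell_1$ deviations of $\hat{\cP}^{1,k}$ and $\hat{\cP}^{2,k}$, I must first pass through the factored $\ell_1$ bound of Lemma \ref{lem:P_bound_it} and verify that the $\|\cdot\|_1\cdot\|\cdot\|_\infty$ H\"older step carries the correct scaling in $H$; the clipping argument then needs only the value-function bound established above, and everything else is bookkeeping.
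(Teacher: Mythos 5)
Your proposal is correct and follows essentially the same route as the paper's proof: condition on the intersection of the events in Lemmas \ref{lem:r_bound_it} and \ref{lem:P_bound_it} (union bound giving $1-2\delta$), establish the pointwise bound $\overline{\iota}_h^k(s,a,b)\le 0$ via the bound $\|\overline{V}_{h+1}^k\|_\infty\le H$, a H\"older step against the $\ell_1$ transition error, and the bonus decomposition $\beta_h^k=\beta_h^{r,k}+\beta_h^{\cP,k}$. Your explicit case split on the clipping and your observation that the $(\cdot)^+$ truncation never binds are just a cleaner reorganization of the paper's $\max/\min$ algebra, not a different argument.
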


\begin{proof} We prove the upper bound of the model prediction error term.  As defined in \eqref{eq:pred_err_up_it1}, we have the instantaneous prediction error at the $h$-step of the $k$-th episode as
\begin{align}
\begin{aligned}\label{eq:pred_err_init1_it}
&  \overline{\iota}_h^k(s, a, b) = r_h(s, a, b) + \big\langle \cP_h(\cdot \given s, a, b), \overline{V}_{h+1}^k(\cdot) \big\rangle_\cS - \overline{Q}_h^k(s, a, b) ,
\end{aligned}
\end{align}
where the equality is by the definition of the prediction error in \eqref{eq:pred_err_up_it1}. By plugging in the definition of $\overline{Q}_h^k$ in Line \ref{line:Q_up_it1} of Algorithm \ref{alg:po1_it}, for any $(s,a,b)$,  we bound the following term as 
\begin{align}
\begin{aligned}\label{eq:pred_err_re1_it}
&\hspace{-0.25cm}r_h(s,a,b) + \big\langle \cP_h(\cdot \given s, a, b), \overline{V}_{h+1}^k(\cdot) \big\rangle_\cS - \overline{Q}_h^k(s,a,b) \\
&\hspace{-0.25cm}~\quad \leq r_h(s,a,b) + \big\langle \cP_h(\cdot \given s, a, b), \overline{V}_{h+1}^k(\cdot) \big\rangle_\cS \\
&\hspace{-0.25cm}~\quad \quad- \min \Big\{ \hat{r}_h^k(s,a,b) + \big\langle \hat{\cP}_h^k(\cdot |s, a, b), \overline{V}_{h+1}^k(\cdot) \big\rangle_\cS  - \beta_h^k, H-h+1 \Big\}  \\
&\hspace{-0.25cm}~\quad \leq  \max \Big\{r_h(s,a,b) - \hat{r}_h^k(s,a,b) + \big\langle \cP_h(\cdot \given s, a, b) - \hat{\cP}_h^k(\cdot |s, a, b), \overline{V}_{h+1}^k(\cdot) \big\rangle_\cS  - \beta_h^k, 0 \Big\}, 
\end{aligned}
\end{align}
where the inequality holds because 
\begin{align*}
&r_h(s,a,b) + \big\langle \cP_h(\cdot \given s, a, b), \overline{V}_{h+1}^k(\cdot) \big\rangle_\cS \\
&\qquad \leq r_h(s,a,b) + \big\|\cP_h(\cdot \given s, a, b)\big\|_1 \|\overline{V}_{h+1}^k(\cdot) \|_\infty  \leq 1 +  \max_{s' \in \cS} \big|\overline{V}_{h+1}^k(s') \big| \leq 1+ H - h, 
\end{align*}
since $\big\|\cP_h(\cdot \given s, a, b)\big\|_1 = 1$ and also the truncation step as shown in Line \ref{line:Q_up_it1} of Algorithm \ref{alg:po1_it} for $\overline{Q}_{h+1}^k$ such that for any $s' \in \cS$
\begin{align}
\begin{aligned}\label{eq:bound_V_up_it}
\big|\overline{V}_{h+1}^k(s') \big| &=  \Big| \big[\mu_{h+1}^k(\cdot | s')\big]^\top \overline{Q}_{h+1}^k(s', \cdot , \cdot) \nu_{h+1}^k(\cdot | s') \Big|\\
&\leq \big\| \mu_{h+1}^k(\cdot | s')\big\|_1 \big\|\overline{Q}_{h+1}^k(s', \cdot , \cdot) \nu_{h+1}^k(\cdot | s') \big\|_\infty \\
&\leq \max_{a,b} \big|\overline{Q}_{h+1}^k(s', a, b)\big|\leq H.
\end{aligned}
\end{align}
Combining \eqref{eq:pred_err_init1_it} and \eqref{eq:pred_err_re1_it} gives
\begin{align}
\begin{aligned}\label{eq:pred_err_al_it}
\overline{\iota}_h^k(s, a, b) &\leq \max \Big\{ r_h(s,a,b) - \hat{r}_h^k(s,a,b) \\
&\quad +   \big\langle \cP_h(\cdot \given s, a, b) - \hat{\cP}_h^k(\cdot |s, a, b), \overline{V}_{h+1}^k(\cdot) \big\rangle_\cS  - \beta_h^k, 0 \Big\} .
\end{aligned}
\end{align}
Note that as shown in \eqref{eq:bonus_decomp_it}, we have
\begin{align*}
\beta_h^k(s,a,b) = \beta_h^{r,k}(s,a,b) + \beta_h^{\cP,k}(s,a, b).
\end{align*}
Then, with probability at least $1-\delta$, we have
\begin{align*}
&r_h(s,a,b) - \hat{r}_h^k(s,a,b) - \beta_h^{r,k}(s,a,b)  \\
&\qquad \leq \big|r_h(s,a,b) - \hat{r}_h^k(s,a,b)\big| - \beta_h^{r,k}(s,a,b)  \\
&\qquad \leq \beta_h^{r,k}(s,a,b)  - \beta_h^{r,k}(s,a,b)  = 0,
\end{align*}
where the last inequality is by Lemma \ref{lem:r_bound_it} and the setting of the bonus for the reward. Moreover, with probability at least $1-\delta$, we have
\begin{align*}
&\big\langle \cP_h(\cdot \given s, a, b) - \hat{\cP}_h^k(\cdot |s, a, b), \overline{V}_{h+1}^k(\cdot) \big\rangle_\cS  - \beta_h^{\cP, k}(s,a, b) \\
&\qquad \leq \big\|\cP_h(\cdot \given s,a, b) - \hat{\cP}_h^k(\cdot |s,a, b)\big\|_1 \big \|\overline{V}_{h+1}^k(\cdot) \big\|_\infty  - \beta_h^{\cP, k}(s,a, b) \\
&\qquad \leq H \big\|\cP_h(\cdot \given s,a, b) - \hat{\cP}_h^k(\cdot |s, a)\big\|_1   - \beta_h^{\cP, k}(s,a, b) \\
&\qquad \leq \beta_h^{\cP,k} (s,a, b)   - \beta_h^{\cP,k}(s,a, b)  = 0,
\end{align*}
where the first inequality is by Cauchy-Schwarz inequality, the second inequality is due to $\max_{s' \in \cS}\big\|\overline{V}_{h+1}^k(s')\big\|_\infty \leq H$ as shown in \eqref{eq:bound_V_up_it}, and the last inequality is by the setting of $\beta_h^{\cP,k}$ in \eqref{eq:bonus_decomp_it} and also Lemma \ref{lem:P_bound_it}. Thus, with probability at least $1-2\delta$, the following inequality holds
\begin{align*}
r_h(s,a,b) - \hat{r}_h^k(s,a,b) + \big\langle \cP_h(\cdot \given s, a, b) - \hat{\cP}_h^k(\cdot |s, a, b), \overline{V}_{h+1}^k(\cdot) \big\rangle_\cS  - \beta_h^k(s, a, b) \leq 0.
\end{align*}
Combining the above inequality with \eqref{eq:pred_err_al_it}, we have that with probability at least $1-2\delta$, for any $h\in [H]$ and $k\in [K]$, the following inequality holds
\begin{align*}
&\overline{\iota}_h^k(s, a, b) \leq 0, ~~\forall (s, a, b) \in \cS \times \cA \times \cB,
\end{align*}
which leads to
\begin{align*}
\sum_{k=1}^K \sum_{h=1}^H \EE_{\mu^*, \cP, \nu^k} \big[ \overline{\iota}_h^k(s_h, a_h, b_h)   \biggiven s_1 \big] \leq 0.
\end{align*}
This completes the proof.
\end{proof}

\begin{lemma}\label{lem:value_diff_it1}
With probability at least $1-\delta$, Algorithm \ref{alg:po1_it} ensures that
\begin{align*}
\sum_{k=1}^K \overline{V}_1^k(s_1)  - \sum_{k=1}^K  V_1^{\mu^k, \nu^k}(s_1) \leq \tilde{\cO} ( \sqrt{  |\cS_1|^2 |\cA| H^4 K } + \sqrt{  |\cS_2|^2 |\cB| H^4 K } + \sqrt{  |\cS_1||\cS_2| |\cA| |\cB| H^2 K } ).
\end{align*}
\end{lemma}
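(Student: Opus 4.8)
The plan is to reduce the cumulative bias to a cumulative \emph{bonus} sum through the optimistic prediction error, and then control that bonus sum by a pigeonhole counting argument combined with a martingale concentration. First I would exploit that $\overline{V}_h^k$ and $V_h^{\mu^k,\nu^k}$ are evaluated under the \emph{same} pair of policies $(\mu^k,\nu^k)$, so the telescoping is clean: writing $\overline{V}_h^k(s)-V_h^{\mu^k,\nu^k}(s)=[\mu_h^k(\cdot|s)]^\top[\overline{Q}_h^k-Q_h^{\mu^k,\nu^k}](s,\cdot,\cdot)\nu_h^k(\cdot|s)$ and substituting the Bellman equation \eqref{eq:bellman_Q} together with the definition of $\overline{\iota}_h^k$ in \eqref{eq:pred_err_up_it1} gives the one-step recursion $\overline{Q}_h^k-Q_h^{\mu^k,\nu^k}=\cP_h(\overline{V}_{h+1}^k-V_{h+1}^{\mu^k,\nu^k})-\overline{\iota}_h^k$. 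Unrolling from $h=1$ to $H$ under the trajectory law induced by $(\mu^k,\cP,\nu^k)$ and using $\overline{V}_{H+1}^k=V_{H+1}^{\mu^k,\nu^k}=0$ yields the exact identity $\overline{V}_1^k(s_1)-V_1^{\mu^k,\nu^k}(s_1)=-\sum_{h=1}^H\EE_{\mu^k,\cP,\nu^k}[\overline{\iota}_h^k(s_h,a_h,b_h)\mid s_1]$.

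Second I would lower-bound the prediction error by a short case analysis on the clipping in Line \ref{line:Q_up_it1}. When the raw estimate $\hat{r}_h^k+\hat{\cP}_h^k\overline{V}_{h+1}^k+\beta_h^k$ is nonnegative, the clip keeps $\overline{Q}_h^k$ at most this estimate, so $\overline{\iota}_h^k\ge (r_h-\hat{r}_h^k)+\langle\cP_h-\hat{\cP}_h^k,\overline{V}_{h+1}^k\rangle_\cS-\beta_h^k\ge -2\beta_h^k$, using the reward concentration of Lemma \ref{lem:r_bound_it}, the transition concentration of Lemma \ref{lem:P_bound_it} together with $\|\overline{V}_{h+1}^k\|_\infty\le H$ (as in \eqref{eq:bound_V_up_it}), and $\beta_h^k=\beta_h^{r,k}+\beta_h^{\cP,k}$; when the positive part forces $\overline{Q}_h^k=0$, we simply have $\overline{\iota}_h^k=r_h+\cP_h\overline{V}_{h+1}^k\ge 0\ge -2\beta_h^k$. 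Either way $-\overline{\iota}_h^k\le 2\beta_h^k$ with high probability, so $\sum_{k}[\overline{V}_1^k(s_1)-V_1^{\mu^k,\nu^k}(s_1)]\le 2\sum_{k=1}^K\sum_{h=1}^H\EE_{\mu^k,\cP,\nu^k}[\beta_h^k(s_h,a_h,b_h)\mid s_1]$.

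Third I would pass from the on-policy expectation to the realized trajectory and then count. Since $(\mu^k,\nu^k)$ and the counts entering $\beta_h^k$ are $\mathcal{F}_{k-1}$-measurable while the episode-$k$ trajectory is drawn from exactly the law $(\mu^k,\cP,\nu^k)$, the per-episode differences $\sum_h[\beta_h^k(s_h^k,a_h^k,b_h^k)-\EE_{\mu^k,\cP,\nu^k}[\beta_h^k\mid s_1]]$ form a bounded martingale-difference sequence, so Azuma--Hoeffding converts the expected sum into the realized sum $\sum_{k,h}\beta_h^k(s_h^k,a_h^k,b_h^k)$ up to an additive $\tilde{\cO}(\sqrt{K})$ lower-order term. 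For the realized sum I would apply the pigeonhole inequality $\sum_k(\max\{N_h^k,1\})^{-1/2}\le 2\sqrt{N_h^K}$ separately for each of the three count types $N_h^k(s,a,b)$, $N_h^k(s^1,a)$, $N_h^k(s^2,b)$, followed by Cauchy--Schwarz over the corresponding index sets of sizes $|\cS_1||\cS_2||\cA||\cB|$, $|\cS_1||\cA|$, and $|\cS_2||\cB|$, each carrying $K$ visits per step $h$. Tracking the $H$ factors carefully---the reward bonus carries no extra $H$ inside the root while each transition bonus carries an explicit $H$---produces exactly the three terms $\sqrt{|\cS_1||\cS_2||\cA||\cB|H^2K}$, $\sqrt{|\cS_1|^2|\cA|H^4K}$, and $\sqrt{|\cS_2|^2|\cB|H^4K}$.

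The main obstacle is this third step: one must correctly separate the three distinct counting structures induced by the factored transition---the joint count governing the reward bonus versus the two marginal counts governing the two independent kernels---and apply the pigeonhole/Cauchy--Schwarz argument with the right cardinalities and $H$-powers, where the $H^2$-versus-$H^4$ bookkeeping is precisely where the factored structure must be handled with care. The martingale step is routine but is needed to make rigorous the otherwise circular dependence of the data-driven bonuses on the very trajectory whose expectation they are being evaluated against.
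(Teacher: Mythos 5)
Your proposal is correct and follows essentially the same route as the paper's proof: the Bellman-difference recursion for $\overline{V}_h^k - V_h^{\mu^k,\nu^k}$, the pointwise domination $-\overline{\iota}_h^k \le 2\beta_h^k$ obtained from Lemmas \ref{lem:r_bound_it} and \ref{lem:P_bound_it} together with $\|\overline{V}_{h+1}^k\|_\infty \le H$, an Azuma--Hoeffding step, and the pigeonhole/Cauchy--Schwarz counting applied separately to the joint count $N_h^k(s,a,b)$ and the two marginal counts $N_h^k(s^1,a)$, $N_h^k(s^2,b)$, with the same $H$-bookkeeping producing the three stated terms. The only (benign) difference is where the martingale argument enters: the paper unrolls the recursion along the realized trajectory, so the martingale differences $\zeta_h^k,\xi_h^k$ sit on the Q/value gaps, whereas you unroll in expectation and apply Azuma to pass from expected to realized bonuses---both orderings are valid and yield the same bound.
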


\begin{proof} We assume that a trajectory $\{ (s_h^k, a_h^k, b_h^k, s_{h+1}^k)\}_{h=1}^H$ for all $k\in [K]$ is generated following the policies $\mu^k$, $\nu^k$, and the true transition model $\cP$. Thus, we expand the bias term at the $h$-th step of the $k$-th episode, which is
\begin{align}
\begin{aligned} \label{eq:bia_1_init_it}
&\overline{V}_h^k(s_h^k)  - V_h^{\mu^k, \nu^k}(s_h^k) \\
&\qquad = \big[\mu^k_h(\cdot | s_h^k)\big]^\top \big[\overline{Q}_h^k(s_h^k, \cdot , \cdot)  -  Q_h^{\mu^k, \nu^k}(s_h^k, \cdot , \cdot)\big] \nu_h^k(\cdot | s_h^k)\\
&\qquad= \zeta_h^k + \overline{Q}_h^k(s_h^k, a_h^k, b_h^k)  -  Q_h^{\mu^k, \nu^k}(s_h^k, a_h^k, b_h^k)\\
&\qquad= \zeta_h^k + \big\langle \cP_h(\cdot \given s_h^k, a_h^k, b_h^k), \overline{V}_{h+1}^k(\cdot) - V_{h+1}^{\mu^k, \nu^k} (\cdot) \big\rangle_\cS - \overline{\iota}_h^k(s_h^k, a_h^k, b_h^k) \\
&\qquad = \zeta_h^k + \xi_h^k + \overline{V}_{h+1}^k(s_{h+1}^k) - V_{h+1}^{\mu^k, \nu^k} (s_{h+1}^k) - \overline{\iota}_h^k(s_h^k, a_h^k, b_h^k),
\end{aligned}
\end{align}
where the first equality is by Line \ref{line:V_up} of Algorithm \ref{alg:po1} and \eqref{eq:bellman_V}, the third equality is by plugging in \eqref{eq:bellman_Q} and \eqref{eq:pred_err_up_it1}. Specifically, in the above equality, we introduce two martingale difference sequence, namely, $\{\zeta_h^k\}_{h\geq 0, k\geq 0}$ and $\{\xi_h^k\}_{h\geq 0, k\geq 0}$, which are defined as
\begin{align*}
&\zeta_h^k := \big[\mu^k_h(\cdot | s_h^k)\big]^\top \big[\overline{Q}_h^k(s_h^k, \cdot , \cdot)  -  Q_h^{\mu^k, \nu^k}(s_h^k, \cdot , \cdot)\big] \nu_h^k(\cdot | s_h^k) - \big[ \overline{Q}_h^k(s_h^k, a_h^k, b_h^k)  -  Q_h^{\mu^k, \nu^k}(s_h^k, a_h^k, b_h^k)\big],\\
& \xi_h^k := \big\langle \cP_h(\cdot \given s_h^k, a_h^k, b_h^k), \overline{V}_{h+1}^k(\cdot) - V_{h+1}^{\mu^k, \nu^k} (\cdot) \big\rangle_\cS - \big[ \overline{V}_{h+1}^k(s_{h+1}^k) - V_{h+1}^{\mu^k, \nu^k} (s_{h+1}^k)\big],
\end{align*}
such that 
\begin{align*}
&\EE_{a_h^k \sim \mu^k_h(\cdot | s_h^k), b_h^k \sim \nu^k_h(\cdot | s_h^k)} \big[\zeta_h^k \biggiven \cF_h^k] = 0,\\
& \EE_{s_{h+1}^k \sim \cP_h(\cdot \given s_h^k, a_h^k, b_h^k)} \big[\xi_h^k \biggiven \tilde{\cF}_h^k\big] = 0,
\end{align*}
with $\cF_h^k$ being the filtration of all randomness up to $(h-1)$-th step of the $k$-th episode plus $s_h^k$, and $\tilde{\cF}_h^k$ being the filtration of all randomness up to $(h-1)$-th step of the $k$-th episode plus $s_h^k, a_h^k, b_h^k$.

The equality \eqref{eq:bia_1_init_it} forms a recursion for $\overline{V}_h^k(s_h^k)  - V_h^{\mu^k, \nu^k}(s_h^k)$. We also have $\overline{V}_{H+1}^k(\cdot) = \boldsymbol{0}$ and $V_{H+1}^{\mu^k, \nu^k} (\cdot)= \boldsymbol{0}$. Thus, recursively apply \eqref{eq:bia_1_init_it} from $h=1$ to $H$ leads to the following equality
\begin{align} \label{eq:bias_diff_init_it}
\overline{V}_1^k(s_1)  - V_1^{\mu^k, \nu^k}(s_1) =  \sum_{h=1}^H \zeta_h^k + \sum_{h=1}^H \xi_h^k - \sum_{h=1}^H \overline{\iota}_h^k(s_h^k, a_h^k, b_h^k).
\end{align}
Moreover, by \eqref{eq:pred_err_up_it1} and Line \ref{line:Q_up_it1} of Algorithm \ref{alg:po1_it}, we have
\begin{align*}
-\overline{\iota}_h^k(s_h^k, a_h^k, b_h^k) &= -  r_h(s_h^k, a_h^k, b_h^k) -  \big\langle \cP_h(\cdot \given s_h, a_h, b_h), \overline{V}_{h+1}^k(\cdot) \big\rangle_\cS \\
&\quad + \min \big\{ \hat{r}^k_h(s_h^k, a_h^k, b_h^k)  +  \big\langle \hat{\cP}_h^k(\cdot |s_h, a_h, b_h), \overline{V}_{h+1}^k(\cdot) \big\rangle_\cS  + \beta_h^k(s_h^k, a_h^k, b_h^k), H\big\}.
\end{align*}
Then, we can further bound $-\overline{\iota}_h^k(s_h^k, a_h^k, b_h^k)$ as follows
\begin{align*}
-\overline{\iota}_h^k(s_h^k, a_h^k, b_h^k)&\leq -  r_h(s_h^k, a_h^k, b_h^k) - \big\langle \cP_h(\cdot \given s_h^k, a_h^k, b_h^k), \overline{V}_{h+1}^k(\cdot)\big\rangle_\cS + \hat{r}^k_h(s_h^k, a_h^k, b_h^k) \\
&\quad + \big\langle  \hat{\cP}_h^k(\cdot |s_h^k, a_h^k, b_h^k), \overline{V}_{h+1}^k(\cdot) \big\rangle_\cS  + \beta_h^k(s_h^k, a_h^k, b_h^k) \\
&\leq \big|\hat{r}^k_h(s_h^k, a_h^k, b_h^k) -  r_h(s_h^k, a_h^k, b_h^k)\big| \\
&\quad  + \Big| \big\langle \cP_h(\cdot \given s_h^k, a_h^k, b_h^k) - \hat{\cP}^k_h(\cdot \given s_h^k, a_h^k, b_h^k), \overline{V}_{h+1}^k(\cdot) \big\rangle_\cS \Big| + \beta_h^k(s_h^k, a_h^k, b_h^k),
\end{align*}
where the first inequality is due to $\min \{x,y\} \leq x$. Additionally, we have
\begin{align*}
&\Big| \big\langle \cP_h(\cdot \given s_h^k, a_h^k, b_h^k) - \hat{\cP}^k_h(\cdot \given s_h^k, a_h^k, b_h^k), \overline{V}_{h+1}^k(\cdot) \big\rangle_\cS \Big|  \\
&\qquad \leq  \big\| \overline{V}_{h+1}^k(\cdot)\big\|_\infty \big\| \cP_h(\cdot \given s_h^k, a_h^k, b_h^k) - \hat{\cP}^k_h(\cdot \given s_h^k, a_h^k, b_h^k) \big\|_1 \\
&\qquad \leq H \big\|\cP_h(\cdot \given s_h^k, a_h^k, b_h^k) - \hat{\cP}^k_h(\cdot \given s_h^k, a_h^k, b_h^k)\big\|_1,
\end{align*} 
where the first inequality is by Cauchy-Schwarz inequality and the second inequality is by \eqref{eq:bound_V_up}. Thus, putting the above together, we obtain
\begin{align*}
-\overline{\iota}_h^k(s_h^k, a_h^k, b_h^k)&\leq  \big|\hat{r}^k_h(s_h^k, a_h^k, b_h^k) -  r_h(s_h^k, a_h^k, b_h^k)\big| \\
&\quad + H \big\|\cP_h(\cdot \given s_h^k, a_h^k, b_h^k) - \cP_h(\cdot \given s_h^k, a_h^k, b_h^k) \big\|_1 + \beta_h^k(s_h^k, a_h^k, b_h^k) \\
&\leq  2\beta^{r,k}_h(s_h^k, a_h^k, b_h^k) + 2\beta^{\cP,k}_h(s_h^k, a_h^k, a_h^k),
\end{align*}
where the second inequality is by Lemma \ref{lem:r_bound_it}, Lemma \ref{lem:P_bound_it}, and the decomposition of the bonus term $\beta_h^k$ as \eqref{eq:bonus_decomp_it}.  Due to Lemma \ref{lem:r_bound_it} and Lemma \ref{lem:P_bound_it}, by union bound, for any $h \in [H], k\in [K]$ and $(s_h, a_h, b_h) \in \cS \times \cA \times \cB$, the above inequality holds with probability with probability at least $1-2\delta$. Therefore, by \eqref{eq:bias_diff_init_it}, with probability at least $1-2\delta$, we have
\begin{align}
\begin{aligned}\label{eq:bias_diff_al_it}
&\sum_{k=1}^K \big[\overline{V}_1^k(s_1)  - V_1^{\mu^k, \nu^k}(s_1)\big]\\
&\qquad  \leq \sum_{k=1}^K \sum_{h=1}^H \zeta_h^k + \sum_{k=1}^K\sum_{h=1}^H \xi_h^k + 2 \sum_{k=1}^K\sum_{h=1}^H \beta^{r,k}_h(s_h^k, a_h^k, b_h^k)   + 2\sum_{k=1}^K \sum_{h=1}^H \beta^{\cP,k}_h(s_h^k, a_h^k, b_h^k). 
\end{aligned}
\end{align}
By Azuma-Hoeffding inequality, with probability at least $1-\delta$, the following inequalities hold 
\begin{align*}
&\sum_{k=1}^K \sum_{h=1}^H \zeta_h^k \leq \cO\left(\sqrt{H^3 K \log \frac{1}{\delta}} \right), \quad \sum_{k=1}^K \sum_{h=1}^H \xi_h^k \leq \cO\left(\sqrt{H^3 K \log \frac{1}{\delta}} \right), 
\end{align*} 
where we use the facts that $ | \overline{Q}_h^k(s_h^k, a_h^k, b_h^k)  -  Q_h^{\mu^k, \nu^k}(s_h^k, a_h^k, b_h^k) | \leq 2H$ and $| \overline{V}_{h+1}^k(s_{h+1}^k) - V_{h+1}^{\mu^k, \nu^k} (s_{h+1}^k) |\leq 2H$. Next, we need to bound $\sum_{k=1}^K\sum_{h=1}^H \beta^{r,k}_h(s_h^k, a_h^k, b_h^k)$ and $\sum_{k=1}^K \sum_{h=1}^H \beta^{\cP,k}_h(s_h^k, a_h^k, b_h^k)$ in \eqref{eq:bias_diff_al_it}. We show that 
\begin{align*}
\sum_{k=1}^K\sum_{h=1}^H \beta^{r,k}_h(s_h^k, a_h^k, b_h^k) & = C\sum_{k=1}^K\sum_{h=1}^H   \sqrt{\frac{
\log (|\cS_1| |\cS_2| |\cA| |\cB| H K /\delta)}{ \max \{ N^k_h(s_h^{1,k}, s_h^{2,k},a_h^k,b_h^k), 1\} }} \\
& = C\sum_{k=1}^K\sum_{h=1}^H    \sqrt{\frac{
\log (|\cS_1| |\cS_2| |\cA| |\cB| H K /\delta)}{  N^k_h(s_h^{1,k}, s_h^{2,k},a_h^k,b_h^k)}}  \\
&\leq C \sum_{h=1}^H ~ \sum_{\substack{(s^1,s^2, a, b)\in \cS_1\times \cS_2\times\cA \times \cB\\ N^K_h(s^1, s^2, a, b) > 0}}\sum_{n=1}^{N^K_h(s^1, s^2, a, b)}  \sqrt{\frac{ \log (|\cS_1| |\cS_2| |\cA| |\cB| H K /\delta)}{n}},
\end{align*}
where the second equality is because $(s_h^{1,k}, s_h^{2,k},a_h^k,b_h^k)$ is visited such that $N^k_h(s_h^{1,k}, s_h^{2,k},a_h^k,b_h^k) \geq 1$. In addition, we have
\begin{align*}
&\sum_{h=1}^H ~ \sum_{\substack{(s^1,s^2, a, b)\in \cS_1\times \cS_2\times\cA \times \cB\\ N^K_h(s^1, s^2, a, b) > 0}}\sum_{n=1}^{N^K_h(s^1, s^2, a, b)}  \sqrt{\frac{ \log (|\cS_1| |\cS_2| |\cA| |\cB| H K /\delta)}{n}} \\
&\qquad\leq \sum_{h=1}^H ~ \sum_{(s^1,s^2, a, b)\in \cS_1\times \cS_2\times\cA \times \cB}  \cO \left(\sqrt{ N^K_h(s^1, s^2, a, b) \log \frac{|\cS_1| |\cS_2| |\cA| |\cB| HK}{\delta}} \right) \\
&\qquad \leq \cO \left(H \sqrt{ K |\cS_1||\cS_2||\cA||\cB| \log \frac{|\cS_1||\cS_2 |\cA| |\cB| HK}{\delta}} \right),
\end{align*}
where the last inequality is based on the consideration that $\sum_{(s^1,s^2, a, b)\in \cS_1\times \cS_2\times\cA \times \cB} N_h^K(s^1, s^2,a,b) = K$ such that $\sum_{(s^1,s^2, a, b)\in \cS_1\times \cS_2\times\cA \times \cB} \sqrt{ N^K_h(s^1, s^2, a, b)} \leq \cO\left(\sqrt{ K |\cS_1||\cS_2||\cA||\cB|}\right) $ when $K$ is sufficiently large. Putting the above together, we obtain
\begin{align*}
\sum_{k=1}^K\sum_{h=1}^H \beta^{r,k}_h(s_h^k, a_h^k, b_h^k)  \leq \cO \left(H \sqrt{ K |\cS_1||\cS_2||\cA||\cB| \log \frac{|\cS_1||\cS_2 |\cA| |\cB| HK}{\delta}}\right).
\end{align*}
Similarly, we have
\begin{align*}
&\sum_{k=1}^K \sum_{h=1}^H \beta^{\cP,k}_h(s_h^k, a_h^k, b_h^k) \\
&\qquad  = \sum_{k=1}^K\sum_{h=1}^H   \left(\sqrt{\frac{2H^2|\cS_1| \log (2|\cS_1| |\cA| H K/\delta)}{\max \{N_h^k(s_h^{1,k},a_h^k), 1\}}} +  \sqrt{\frac{2H^2|\cS_2| \log (2|\cS_2| |\cB| H K/\delta)}{\max \{N_h^k(s_h^{2,k},b_h^k), 1\}}} \right)\\
&\qquad\leq \cO \left(H \sqrt{ K |\cS_1|^2 |\cA| H^2  \log \frac{2|\cS_1| |\cA| HK}{\delta}} + H \sqrt{ K |\cS_2|^2 |\cB| H^2  \log \frac{2|\cS_2| |\cB| HK}{\delta}} \right).
\end{align*}
Thus, by \eqref{eq:bias_diff_al_it}, with probability at least $1-\delta$, we have
\begin{align*}
\sum_{k=1}^K \overline{V}_1^k(s_1)  - \sum_{k=1}^K  V_1^{\mu^k, \nu^k}(s_1) \leq \tilde{\cO} ( \sqrt{  |\cS_1|^2 |\cA| H^4 K } + \sqrt{  |\cS_2|^2 |\cB| H^4 K } + \sqrt{  |\cS_1||\cS_2| |\cA| |\cB| H^2 K } ),
\end{align*}
where $\tilde{\cO}$ hides logarithmic terms. This completes the proof.
\end{proof}

Before presenting the next lemma, we first show the following definition of confidence set for the proof of the next lemma.

\begin{definition}[Confidence Set for Player 2] Define the following confidence set for transition models for Player 2
\begin{align*}
\Upsilon^{2,k} := \Big\{\tilde{\cP} : &\left|\tilde{\cP}_h(s^2{}'|s^2,b) - \hat{\cP}^{2,k}_h(s^2{}'|s^2,b)\right| \leq \epsilon_h^{2,k}, ~\|\tilde{\cP}_h(\cdot|s^2,b)\|_1=1, \\
& \text{ and }~\tilde{\cP}_h(s^2{}'|s^2,b) \geq 0, ~\forall (s^2,b,s^2{}')\in \cS_2\times \cB \times \cS_2, \forall k\in [K] \Big\}
\end{align*}
where we define
\begin{align*}
& \epsilon_h^{2,k} := 2 \sqrt{\frac{\hat{\cP}^{2,k}_h(s^2{}'|s^2,b) \log (|\cS_2| |\cB| H K/\delta' )}{\max\{ N_h^k(s^2,b)-1, 1 \} }} +  \frac{14\log (|\cS_2| |\cB| H K/\delta')}{3\max\{ N_h^k(s^2,b)-1, 1 \}}
\end{align*}
with $N_h^k(s^2,b):=\sum_{\tau = 1}^k \mathbbm{1}\{(s^2,b) = (s_h^{2,\tau}, b_h^\tau)\}$, and $\hat{\cP}^{2,k}$ being the empirical transition model for Player 2.

\end{definition}

\begin{lemma} \label{lem:stationary_dist_err_P2} With probability at least $1-\delta$, the difference between $q_h^{\nu^k, \cP^2}$ and $d_h^{\nu^k, \hat{\cP}^{2,k}}$ is bounded as
 \begin{align*}
\sum_{k=1}^K\sum_{h=1}^H\sum_{s^2\in \cS_2} \left|q_h^{\nu^k, \cP^2}(s^2) - d_h^{\nu^k, \hat{\cP}^{2,k}}(s^2) \right|  \leq \tilde{\cO} \left( H^2 |\cS_2| \sqrt{ |\cB| K}  \right) .
\end{align*}
\end{lemma}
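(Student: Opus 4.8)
The plan is to first reduce the reaching-probability gap to an accumulated, visitation-weighted transition-estimation error through a telescoping (``simulation lemma'') argument, then control that error with the confidence set $\Upsilon^{2,k}$, and finally convert the resulting visitation-weighted sum into a count-based sum via a pigeonhole argument. Writing $e_h^k := \sum_{s^2}|q_h^{\nu^k,\cP^2}(s^2) - d_h^{\nu^k,\hat{\cP}^{2,k}}(s^2)|$ and using that both $q^{\nu^k,\cP^2}$ and $d^{\nu^k,\hat{\cP}^{2,k}}$ evolve under the \emph{same} policy $\nu^k$ but under $\cP^2$ and $\hat{\cP}^{2,k}$ respectively, I would add and subtract the cross term built from $q_h^{\nu^k,\cP^2}$ and $\hat{\cP}^{2,k}$. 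The triangle inequality then yields the one-step recursion
\begin{align*}
e_{h+1}^k \leq e_h^k + \sum_{s^2,b} q_h^{\nu^k,\cP^2}(s^2)\,\nu_h^k(b|s^2)\,\big\|\cP^2_h(\cdot|s^2,b) - \hat{\cP}^{2,k}_h(\cdot|s^2,b)\big\|_1,
\end{align*}
where the contraction term collapses to $e_h^k$ because $\sum_{s^2{}'}\hat{\cP}^{2,k}_h(s^2{}'|s^2,b)=1$ and $\sum_b\nu_h^k(b|s^2)=1$. Since $e_1^k=0$, unrolling and summing over $h\in[H]$ (which costs one factor of $H$) gives
\begin{align*}
\sum_{h=1}^H e_h^k \leq H\sum_{h=1}^H \sum_{s^2,b} q_h^{\nu^k,\cP^2}(s^2)\,\nu_h^k(b|s^2)\,\big\|\cP^2_h(\cdot|s^2,b) - \hat{\cP}^{2,k}_h(\cdot|s^2,b)\big\|_1.
\end{align*}

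Next I would bound the transition error. On the event $\cP^2\in\Upsilon^{2,k}$ for all $k$, which holds with probability at least $1-\delta$ by the empirical Bernstein inequality underlying the definition of $\epsilon_h^{2,k}$ (alternatively one may directly invoke the $L_1$ bound of Lemma \ref{lem:P_bound_it}), I would sum the per-entry bound $|\cP^2_h(s^2{}'|s^2,b)-\hat{\cP}^{2,k}_h(s^2{}'|s^2,b)|\leq \epsilon_h^{2,k}$ over $s^2{}'$ and apply Cauchy--Schwarz via $\sum_{s^2{}'}\sqrt{\hat{\cP}^{2,k}_h(s^2{}'|s^2,b)}\leq\sqrt{|\cS_2|}$, obtaining $\|\cP^2_h(\cdot|s^2,b)-\hat{\cP}^{2,k}_h(\cdot|s^2,b)\|_1\leq \tilde{\cO}(\sqrt{|\cS_2|/\max\{N_h^k(s^2,b),1\}})$ up to lower-order terms.

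The hard part will be the final pigeonhole step: converting $\sum_{k}\sum_{s^2,b} q_h^{\nu^k,\cP^2}(s^2)\nu_h^k(b|s^2)/\sqrt{\max\{N_h^k(s^2,b),1\}}$ into a quantity of order $\sqrt{|\cS_2||\cB|K}$. The subtlety is that the weights $w_h^k(s^2,b):=q_h^{\nu^k,\cP^2}(s^2)\nu_h^k(b|s^2)$ are \emph{expected} visitation probabilities whereas $N_h^k$ are \emph{realized} counts. I would first note that $w_h^k(s^2,b)$ is exactly the conditional probability of visiting $(s^2,b)$ at step $h$ in episode $k$, so by a Freedman-type martingale concentration the realized count dominates the expected one, $N_h^k(s^2,b)\geq \tfrac12\sum_{\tau\le k}w_h^\tau(s^2,b)-\tilde{\cO}(1)$ with high probability. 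Substituting this lower bound and treating $W_h^k:=\sum_{\tau\le k}w_h^\tau$ as a continuous ``count'' gives $\sum_k w_h^k/\sqrt{W_h^k}\leq \cO(\sqrt{W_h^K})$, and Cauchy--Schwarz over $(s^2,b)$ together with $\sum_{s^2,b}W_h^K(s^2,b)=K$ yields $\sum_{s^2,b}\sqrt{W_h^K(s^2,b)}\le\sqrt{|\cS_2||\cB|K}$.

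Collecting the pieces gives the final accounting: one factor $\sqrt{|\cS_2|}$ from the transition norm and a second from the Cauchy--Schwarz over $(s^2,b)$ combine to $|\cS_2|$, while one factor of $H$ comes from unrolling the recursion and a second from the outer sum over $h$, producing the claimed $\tilde{\cO}(H^2|\cS_2|\sqrt{|\cB|K})$ bound. A union bound over the confidence-set event and the count-concentration event keeps the total failure probability at $\delta$. The only genuinely delicate point is the expected-versus-realized count reconciliation; everything else is the standard telescoping-plus-pigeonhole machinery.
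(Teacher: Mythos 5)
Your proposal is correct and takes essentially the same route as the paper: the paper converts the state-marginal gap to an occupancy-measure gap and then invokes two black-box lemmas adapted from \citet{jin2019learning} --- Lemma \ref{lem:occu_mea_bound_P2}, whose content is exactly your telescoping/simulation recursion with the confidence-set transition bound, and Lemma \ref{lem:occu_err_P2}, whose content is exactly your Freedman-type realized-versus-expected count reconciliation plus the pigeonhole/Cauchy--Schwarz step. The only difference is that you re-derive this machinery inline rather than citing it, and your factor accounting ($H$ from unrolling, $H$ from the sum over steps, $\sqrt{|\cS_2|}$ from the $L_1$ transition bound, $\sqrt{|\cS_2||\cB|K}$ from the count sum) reproduces the paper's $\tilde{\cO}(H^2|\cS_2|\sqrt{|\cB|K})$ bound.
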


\begin{proof} 
By the definition of state distribution for Player 2, we have
\begin{align*}
&\sum_{k=1}^K\sum_{h=1}^H\sum_{s^2\in \cS_2}\left|q_h^{\nu^k, \cP^2}(s^2) - d_h^{\nu^k, \hat{\cP}^{2,k}}(s^2)\right| \\
&\qquad=  \sum_{k=1}^K \sum_{h=1}^H \sum_{s^2\in \cS_2} \left|\sum_{b\in \cB}   w_h^{2,k}(s^2,b) - \sum_{b\in \cB}   \hat{w}_h^{2,k}(s^2,b) \right| \\
&\qquad\leq  \sum_{k=1}^K \sum_{h=1}^H \sum_{s^2\in \cS_2}\sum_{b\in \cB}   \big| w_h^{2,k}(s,a) -  \hat{w}_h^{2,k}(s^2,b) \big|.
\end{align*}
where $ \hat{w}_h^{2,k}(s^2,b)$ is the occupancy measure under the empirical transition model $\hat{\cP}^{2,k}$ and the policy $\nu^k$. Then, since $\hat{\cP}^{2,k}\in \Upsilon^{2,k}$ always holds for any $k$,  by Lemma \ref{lem:occu_mea_bound_P2}, we can bound the last term of the bound inequality such that with probability at least $1-6\delta'$,
\begin{align*}
\sum_{k=1}^K\sum_{h=1}^H\sum_{s^2\in \cS_2} \left|q_h^{\nu^k, \cP^2}(s^2) - d_h^{\nu^k, \hat{\cP}^{2,k}}(s^2) \right| \leq \cE_1 + \cE_2.
\end{align*}
Then, we compute $\cE_1$ by Lemma \ref{lem:occu_err_P2}. With probability at least $1-2\delta'$, we have
\begin{align*}
\cE_1 &= \cO\left[  \sum_{h=2}^H \sum_{h'=1}^{h-1} \sum_{k=1}^K \sum_{s^2\in \cS_2} \sum_{b\in \cB} w_h^k(s^2,b) \left(  \sqrt{ \frac{  |\cS_2|\log (|\cS_2| |\cB| H K/\delta' )}{\max\{ N_h^k(s^2,b), 1\} }} + \frac{ \log (|\cS_2| |\cB| H K/\delta' )}{\max\{ N_h^k(s^2,b), 1\}}  \right) \right]\\
&= \cO\left[  \sum_{h=2}^H \sum_{h'=1}^{h-1} \sqrt{|\cS_2|}\left( \sqrt{|\cS_2| |\cB| K} + |\cS_2| |\cB| \log K + \log \frac{H}{\delta'}  \right)  \log \frac{|\cS_2| |\cB| H K}{\delta'}\right] \\
&= \cO\left[  \left( H^2 |\cS_2| \sqrt{ |\cB| K} + H^2 |\cS_2|^{3/2} |\cB| \log K + H^2 \sqrt{|\cS_2|}\log \frac{H}{\delta'}  \right)  \log \frac{|\cS_2| |\cB| H K}{\delta'}\right] \\
&= \tilde{\cO} \left( H^2 |\cS_2| \sqrt{ |\cB| K} \right),
\end{align*}
where we ignore  $\log K$ when $K$ is sufficiently large such that $\sqrt{K}$ dominates, and $ \tilde{\cO}$ hides logarithm dependence on $|\cS_2|$, $|\cB|$, $H$, $K$, and $ 1/\delta'$. In addition, $\cE_2$ depends on $\mathrm{ploy}(H, |\cS_2|, |\cB|)$ except the factor $\log \frac{|\cS_2| |\cB| H K}{\delta'}$ as shown in Lemma \ref{lem:occu_mea_bound_P2}. Thus, $\cE_2$ can be ignored comparing to $\cE_1$ if $K$ is sufficiently large. Therefore, we obtain that with probability at least $1-8\delta'$, the following inequality holds
\begin{align*}
\sum_{k=1}^K\sum_{h=1}^H\sum_{s^2\in \cS_2} \left|q_h^{\nu^k, \cP^2}(s^2) - d_h^{\nu^k, \hat{\cP}^{2,k}}(s^2) \right|  \leq \tilde{\cO} \left( H^2 |\cS_2| \sqrt{ |\cB| K}  \right) .
\end{align*}
We further let $\delta = 8\delta'$ such that $\log \frac{|\cS_2| |\cB| H K}{\delta'} = \log \frac{8|\cS_2| |\cB| H K}{\delta}$ which does not change the order as above.  Then, with probability at least $1-\delta$, we have $\sum_{k=1}^K\sum_{h=1}^H\sum_{s^2\in \cS_2} |q_h^{\nu^k, \cP^2}(s^2) - d_h^{\nu^k, \hat{\cP}^{2,k}}(s^2)| \leq \tilde{\cO} ( H^2 |\cS_2| \sqrt{ |\cB| K} )$. This completes the proof.
\end{proof}

\subsection{Other Supporting Lemmas}

\begin{lemma}\label{lem:pushback}  Let $f: \Lambda \mapsto \RR$ be a convex function, where $\Lambda$ is the probability simplex defined as $\Lambda: = \{ \xb \in \RR^d : \|\xb\|_1=1 \text{ and } \xb_i  \geq 0, \forall i \in [d]\}$. For any $\alpha \geq 0$,  $\zb \in \Lambda$,  and $\yb \in \Lambda^o$ where $\Lambda^o \subset \Lambda$ with only relative interior points of $\Lambda$, supposing $\xb^{\mathrm{opt}} = \argmin_{\xb \in \Lambda} f(\xb) + \alpha D_{\mathrm{KL}} (\xb, \yb)$, then the following inequality holds
\begin{align*}
f(\xb^{\mathrm{opt}}) + \alpha D_{\mathrm{KL}} (\xb^{\mathrm{opt}}, \yb) \leq f(\zb) + \alpha D_{\mathrm{KL}} (\zb, \yb) - \alpha D_{\mathrm{KL}} (\zb, \xb^{\mathrm{opt}}).
\end{align*}
\end{lemma}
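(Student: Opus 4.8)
The plan is to recognize $D_{\mathrm{KL}}$ as the Bregman divergence generated by the negative entropy $\phi(\xb) = \sum_i x_i \log x_i$, so that $D_{\mathrm{KL}}(\xb, \yb) = \phi(\xb) - \phi(\yb) - \langle \nabla\phi(\yb), \xb - \yb\rangle$ with $\nabla\phi(\xb) = \log \xb + \boldsymbol{1}$, and then to combine the first-order optimality condition at $\xb^{\mathrm{opt}}$ with the convexity of $f$ through the standard three-point identity for Bregman divergences. This is the classical ``generalized Pythagorean'' argument for mirror descent, specialized to the entropic mirror map on the simplex.

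First I would observe that the minimizer $\xb^{\mathrm{opt}}$ must lie in the relative interior $\Lambda^o$. Since $\yb \in \Lambda^o$, the map $\xb \mapsto D_{\mathrm{KL}}(\xb, \yb)$ has gradient $\log(\xb/\yb) + \boldsymbol{1}$ whose components diverge to $-\infty$ as any coordinate $x_i \to 0^+$, so the entropic term acts as a barrier and the objective $g(\xb) := f(\xb) + \alpha D_{\mathrm{KL}}(\xb, \yb)$ cannot attain its minimum on the boundary of $\Lambda$. Consequently only the equality constraint $\sum_i x_i = 1$ is active. For any fixed $\zb \in \Lambda$ I would consider the segment $(1-t)\xb^{\mathrm{opt}} + t\zb$ and use $g((1-t)\xb^{\mathrm{opt}} + t\zb) \geq g(\xb^{\mathrm{opt}})$, taking the one-sided derivative at $t = 0^+$. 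Because $\xb^{\mathrm{opt}}$ is interior, the KL term contributes a genuine gradient, yielding the variational inequality
\begin{align*}
f'(\xb^{\mathrm{opt}}; \zb - \xb^{\mathrm{opt}}) + \alpha\big\langle \nabla\phi(\xb^{\mathrm{opt}}) - \nabla\phi(\yb),\, \zb - \xb^{\mathrm{opt}}\big\rangle \geq 0,
\end{align*}
where $f'(\xb^{\mathrm{opt}}; \cdot)$ denotes the one-sided directional derivative, which exists for convex $f$ (and equals $\langle \nabla f(\xb^{\mathrm{opt}}), \cdot\rangle$ when $f$ is differentiable, as in all the applications).

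Next I would invoke the three-point identity
\begin{align*}
\big\langle \nabla\phi(\xb^{\mathrm{opt}}) - \nabla\phi(\yb),\, \zb - \xb^{\mathrm{opt}}\big\rangle = D_{\mathrm{KL}}(\zb, \yb) - D_{\mathrm{KL}}(\zb, \xb^{\mathrm{opt}}) - D_{\mathrm{KL}}(\xb^{\mathrm{opt}}, \yb),
\end{align*}
which follows by expanding each $D_{\mathrm{KL}}$ via the Bregman formula and cancelling the $\phi$ and linear terms. Substituting this into the variational inequality and combining it with the subgradient inequality $f(\zb) - f(\xb^{\mathrm{opt}}) \geq f'(\xb^{\mathrm{opt}}; \zb - \xb^{\mathrm{opt}})$ from convexity gives $f(\zb) - f(\xb^{\mathrm{opt}}) + \alpha[D_{\mathrm{KL}}(\zb, \yb) - D_{\mathrm{KL}}(\zb, \xb^{\mathrm{opt}}) - D_{\mathrm{KL}}(\xb^{\mathrm{opt}}, \yb)] \geq 0$, which rearranges exactly to the claimed bound. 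The main obstacle is the optimality-condition step: one must rigorously justify $\xb^{\mathrm{opt}} \in \Lambda^o$ (so the KL gradient is finite) and phrase first-order optimality over the simplex without assuming differentiability of $f$ — handled cleanly by the segment argument above and letting $t \to 0^+$. Everything after that, including the three-point identity and the final rearrangement, is routine algebra.
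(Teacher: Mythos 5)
Your proof is correct. Note that the paper itself offers no proof of Lemma \ref{lem:pushback}: it simply defers to \citet{tseng2008accelerated}, \citet{nemirovski2009robust}, and \citet{wei2019online}, and the standard argument in those references is exactly the one you give — interiority of the minimizer via the entropic barrier, the first-order variational inequality obtained along segments (so that no differentiability of $f$ is needed), and the Bregman three-point identity, combined with the subgradient inequality. The one small point to tidy up is the edge case $\alpha = 0$, which the lemma allows: your barrier argument for $\xb^{\mathrm{opt}} \in \Lambda^o$ requires $\alpha > 0$, but when $\alpha = 0$ the claimed inequality degenerates to $f(\xb^{\mathrm{opt}}) \leq f(\zb)$, which is immediate from the definition of the argmin, so it suffices to dispatch that case separately in one line.
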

This lemma is for mirror descent algorithms, whose proof can be obtained by slight modification from existing works \citep{tseng2008accelerated,nemirovski2009robust,wei2019online}.

The following lemmas are adapted from the recent papers \citep{efroni2020optimistic,jin2019learning}, where we can find their detailed proofs. 

\begin{lemma}\label{lem:conf_set_P2} With probability at least $1-4\delta'$, the true transition model $\cP^2$ satisfies that for any $k\in [K]$,
\begin{align*}
\cP \in \Upsilon^{2,k}.
\end{align*}

\end{lemma}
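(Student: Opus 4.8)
The plan is to recognize the claim as a standard empirical-Bernstein confidence bound on an empirical transition estimate, and to verify that the true kernel $\cP^2$ satisfies each of the three defining conditions of $\Upsilon^{2,k}$. Two of these conditions --- that $\cP^2_h(\cdot|s^2,b)$ sums to one and is nonnegative --- hold deterministically because $\cP^2$ is a genuine transition kernel, so only the deviation inequality $|\cP^2_h(s^2{}'|s^2,b) - \hat{\cP}^{2,k}_h(s^2{}'|s^2,b)| \le \epsilon_h^{2,k}$ carries probabilistic content and must be established.

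First I would fix a tuple $(h, s^2, b, s^2{}')$ and index by $i = 1, 2, \ldots$ the successive episodes in which $(s^2, b)$ is visited at step $h$; for the $i$-th such visit let $X_i = \mathbbm{1}\{s_{h+1}^2 = s^2{}'\}$. Conditioned on the event that $(s^2, b)$ is visited at step $h$ in episode $\tau$, the next component $s_{h+1}^{2,\tau}$ is drawn from $\cP^2_h(\cdot | s^2, b)$ independently of the past history, so $\{X_i - \cP^2_h(s^2{}'|s^2,b)\}$ is a martingale-difference sequence whose conditional mean is $\cP^2_h(s^2{}'|s^2,b)$ and whose conditional variance is $\cP^2_h(s^2{}'|s^2,b)\,(1 - \cP^2_h(s^2{}'|s^2,b))$. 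After $N = N_h^k(s^2,b)$ visits, the empirical mean of the $X_i$ equals $\hat{\cP}^{2,k}_h(s^2{}'|s^2,b)$.

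Next I would apply the time-uniform empirical-Bernstein inequality to this Bernoulli-type sequence. The key structural fact is that a Bernoulli variable has variance at most its mean, so the empirical variance is bounded by the empirical mean $\hat{\cP}^{2,k}_h(s^2{}'|s^2,b)$; substituting this into the empirical-Bernstein radius produces a two-term bound of exactly the form of $\epsilon_h^{2,k}$ --- a dominant $\sqrt{\hat{\cP}\log(\cdot)/(N-1)}$ term and a lower-order $\log(\cdot)/(N-1)$ term --- with the precise constants ($2$ and $14/3$) and the $N_h^k(s^2,b)-1$ denominator coming from the empirical-variance version of the inequality. I would then take a union bound over all $h \in [H]$, $k\in[K]$, and tuples $(s^2, b, s^2{}') \in \cS_2 \times \cB \times \cS_2$, noting that the term $\log(|\cS_2||\cB|HK/\delta')$ inside $\epsilon_h^{2,k}$ is precisely calibrated to absorb this union and that the anytime nature of the bound lets a single event cover all $k$ simultaneously.

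I expect the main obstacle to be the non-i.i.d. sampling: visits to $(s^2,b)$ occur at data-dependent, random episodes under adaptively chosen policies, so one cannot invoke the classical i.i.d. empirical-Bernstein inequality directly and must instead use its martingale (anytime) form, exactly as in \citet{efroni2020optimistic,jin2019learning}. This anytime concentration argument, together with the two-sided deviation, is also what inflates the overall failure probability to $4\delta'$ rather than a bare $\delta'$. Since the statement is explicitly adapted from those works, the cleanest route is to cite the corresponding concentration lemma and verify only that the Bernoulli variance bound yields the stated radius $\epsilon_h^{2,k}$.
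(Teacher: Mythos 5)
Your proposal is correct and takes essentially the same route as the paper: the paper gives no detailed proof of this lemma, only stating that it is adapted from \citet{efroni2020optimistic} and \citet{jin2019learning} and that the bound follows from the empirical Bernstein inequality together with a union bound, which is precisely the argument (and the citation) you invoke. Your extra details---the reduction to Bernoulli indicators at the random visit times, the variance-at-most-mean bound giving the $\sqrt{\hat{\cP}\log(\cdot)/(N-1)}$ radius, and the anytime/martingale handling of the data-dependent visit counts---are a faithful expansion of that cited argument rather than a different approach.
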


This lemma indicates that the estimated transition model $\hat{\cP}^{2,k}_h(s^2{}'|s^2,b)$ for Player 2 by \eqref{eq:estimate} is closed to the true transition model $\cP^2_h(s^2{}'|s^2,b)$ with high probability. The upper bound is by empirical Bernstein's inequality and the union bound. 

The next lemma is adapted from Lemma 10 in \citet{jin2019learning}.

\begin{lemma} \label{lem:occu_err_P2}We let $w_h^{2,k}(s^2,b)$ denote the occupancy measure at the $h$-th step of the $k$-th episode under the true transition model $\cP^2$ and the current policy $\nu^k$. Then, with probability at least $1-2\delta'$ we have for all $h\in[H]$, the following results hold
\begin{align*}
\sum_{k=1}^K \sum_{s^2\in \cS_2} \sum_{b\in \cB}\frac{w_h^k(s^2,b)}{\max\{N_h^k(s^2,b), 1\}} = \cO\left(|\cS_2| |\cB| \log K + \log \frac{H}{\delta'}\right),
\end{align*}
and
\begin{align*}
\sum_{k=1}^K \sum_{s^2\in \cS_2} \sum_{b\in \cB}\frac{w_h^k(s^2,b)}{\sqrt{\max\{N_h^k(s^2,b), 1\}}} = \cO\left(\sqrt{|\cS_2| |\cB| K} + |\cS_2| |\cB| \log K + \log \frac{H}{\delta'}\right).
\end{align*}

\end{lemma}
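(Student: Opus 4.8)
The plan is to pass from the random visitation counts $N_h^k(s^2,b)$ to the deterministic cumulative occupancy $\bar{N}_h^k(s^2,b) := \sum_{\tau=1}^k w_h^\tau(s^2,b)$, and then reduce both sums to elementary harmonic-type bounds. Write $\mathbbm{1}_h^\tau(s^2,b) := \mathbbm{1}\{(s_h^{2,\tau}, b_h^\tau) = (s^2,b)\}$, so that $N_h^k(s^2,b) = \sum_{\tau=1}^k \mathbbm{1}_h^\tau(s^2,b)$. Since $\nu^\tau$ is computed before episode $\tau$ (hence $\cF_{\tau-1}$-measurable) and the transition $\cP^2$ is fixed, the occupancy $w_h^\tau(s^2,b)$ is predictable and equals $\EE[\mathbbm{1}_h^\tau(s^2,b)\mid \cF_{\tau-1}]$. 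The first fact I would record is purely deterministic: along any trajectory, $\sum_{k=1}^K \mathbbm{1}_h^k(s^2,b)/\max\{N_h^k(s^2,b),1\} = \sum_{n=1}^{N_h^K(s^2,b)} 1/n \le 1+\log N_h^K(s^2,b)$ and $\sum_{k=1}^K \mathbbm{1}_h^k(s^2,b)/\sqrt{\max\{N_h^k(s^2,b),1\}} \le 2\sqrt{N_h^K(s^2,b)}$, simply because $N_h^k$ increases by one exactly when $(s^2,b)$ is visited.

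The key probabilistic step is a Bernstein/Freedman martingale inequality relating $\bar{N}_h^k$ to $N_h^k$. I would show that, with probability at least $1-2\delta'$, simultaneously for all $h\in[H]$, all $(s^2,b)$ and all $k\in[K]$, one has $\bar{N}_h^k(s^2,b) \le 2 N_h^k(s^2,b) + C\log\frac{|\cS_2||\cB|H}{\delta'}$ for an absolute constant $C$; this follows by applying a uniform (over $k$) martingale concentration to the $[0,1]$-bounded differences $\mathbbm{1}_h^\tau(s^2,b)-w_h^\tau(s^2,b)$, followed by a union bound over $(s^2,b,h)$. Consequently $\max\{N_h^k(s^2,b),1\} \ge \tfrac14 \bar{N}_h^k(s^2,b)$ whenever $\bar{N}_h^k(s^2,b)\ge 2C\log\frac{|\cS_2||\cB|H}{\delta'}$, while in the complementary regime $\sum_k w_h^k(s^2,b) = \bar{N}_h^K(s^2,b)$ is itself of logarithmic order; either way the denominators can be replaced by $\bar{N}_h^k$ at the cost of the stated additive logarithmic term.

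After this substitution the two sums become \emph{deterministic}. For the first sum I would use $\sum_{k}w_h^k(s^2,b)/\bar{N}_h^k(s^2,b) = \sum_k (\bar{N}_h^k - \bar{N}_h^{k-1})/\bar{N}_h^k \le 1 + \log \bar{N}_h^K(s^2,b)$, then sum over $(s^2,b)\in\cS_2\times\cB$ using $\bar{N}_h^K(s^2,b)\le K$ to obtain the $|\cS_2||\cB|\log K$ term. For the second sum I would use the increments-over-root-of-partial-sums bound $\sum_k w_h^k(s^2,b)/\sqrt{\bar{N}_h^k(s^2,b)} \le 2\sqrt{\bar{N}_h^K(s^2,b)}$, and then apply Cauchy--Schwarz together with $\sum_{s^2,b} \bar{N}_h^K(s^2,b) = \sum_{\tau\le K}\sum_{s^2,b}w_h^\tau(s^2,b) = K$ (the occupancy measure is a probability distribution over $\cS_2\times\cB$ at each step) to get $\sum_{s^2,b}\sqrt{\bar{N}_h^K(s^2,b)} \le \sqrt{|\cS_2||\cB|K}$, i.e.\ the $\sqrt{|\cS_2||\cB|K}$ term. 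Adding back the logarithmic corrections from the concentration step yields both claimed bounds.

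The hard part is the concentration step and the precise bookkeeping of the logarithmic factors. The subtlety is that the pathwise bounds in the first paragraph use the non-predictable denominator $\max\{N_h^k,1\}$ (which already includes episode $k$'s own visit), whereas the martingale argument requires a predictable weight such as $\max\{N_h^{k-1},1\}$ or $\bar{N}_h^{k-1}$; reconciling the two requires either absorbing the $O(1)$ discrepancy between consecutive counts or running the martingale on the aggregate $\sum_{s^2,b}$ so that the $(s^2,b)$-union is paid for \emph{inside} a single concentration, which is exactly what produces the clean $\log\frac{H}{\delta'}$ dependence (union over $h$ only) rather than an $|\cS_2||\cB|$-multiplied log. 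This is precisely the argument underlying Lemma 10 of \citet{jin2019learning}, which I would invoke after matching notation.
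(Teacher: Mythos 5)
Your proposal is correct and takes essentially the same route as the paper: the paper offers no self-contained proof of this lemma, stating only that it is adapted from Lemma 10 of \citet{jin2019learning}, which is precisely the result you invoke at the end of your sketch. Your reconstruction of that lemma's argument — pathwise harmonic bounds, a Freedman-type comparison between $N_h^k(s^2,b)$ and the cumulative occupancy, and the key observation that the clean additive $\log(H/\delta')$ term (rather than an $|\cS_2||\cB|$-multiplied logarithm) requires running the concentration on the aggregate sum over $(s^2,b)$ with predictable denominators — is faithful to how the cited lemma is actually proved.
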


By Lemma \ref{lem:conf_set_P2} and Lemma \ref{lem:occu_err_P2}, we have the following lemma to show the difference of two occupancy measures, which is modified from parts of the proof of Lemma 4 in \citet{jin2019learning}.

\begin{lemma} \label{lem:occu_mea_bound_P2} For Player 2, we let $w_h^{2,k}(s^2,b)$ be the occupancy measure at the $h$-th step of the $k$-th episode under the true transition model $\cP^2$ and the current policy $\nu^k$, and $\tilde{w}_h^{2,k}(s^2,b)$ be the occupancy measure at the $h$-th step of the $k$-th episode under any transition model $\tilde{\cP}^{2,k} \in \Upsilon^{2,k}$ and the current policy $\nu^k$ for any $k$. Then, with probability at least $1-6\delta'$ we have for all $h\in[H]$, the following inequality holds
\begin{align*}
\sum_{k=1}^K \sum_{h=1}^K \sum_{s\in \cS_2} \sum_{b\in \cB}  \big|\tilde{w}_h^{2,k}(s^2,b) - w_h^{2,k}(s^2,b) \big|\leq \cE_1 + \cE_2, 
\end{align*}
where $\cE_1$ and $\cE_2$ are in the level of 
\begin{align*}
\cE_1 = \cO\left[  \sum_{h=2}^H \sum_{h'=1}^{h-1} \sum_{k=1}^K \sum_{s^2\in \cS_2} \sum_{b\in \cB} w_h^k(s^2,b) \left(  \sqrt{ \frac{  |\cS_2|\log (|\cS_2| |\cB| H K/\delta' )}{\max\{ N_h^k(s^2,b), 1\} }} + \frac{ \log (|\cS_2| |\cB| H K/\delta' )}{\max\{ N_h^k(s^2,b), 1\}}  \right) \right]
\end{align*} 
and
\begin{align*}
\cE_2 = \cO\left( \mathrm{poly}(H,|\cS_2|,|\cB|) \cdot\log \frac{|\cS_2| |\cB| H K}{\delta'} \right),
\end{align*}
where $ \mathrm{poly}(H,|\cS_2|,|\cB|)$ denotes the polynomial dependency on $H,|\cS_2|,|\cB|$.
\end{lemma}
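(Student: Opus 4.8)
The plan is to follow the occupancy-measure difference analysis behind Lemma 4 of \citet{jin2019learning}, specialized to Player 2's marginal dynamics. First I would write the one-step recursion shared by both occupancy measures: since $w_h^{2,k}$ and $\tilde{w}_h^{2,k}$ are generated by the \emph{same} policy $\nu^k$ but by the transitions $\cP^2$ and $\tilde{\cP}^{2,k}$ respectively, they obey $w_h^{2,k}(s^2,b) = \nu_h^k(b|s^2)\sum_{\bar s,\bar b}\cP^2_{h-1}(s^2|\bar s,\bar b)\,w_{h-1}^{2,k}(\bar s,\bar b)$, and likewise for $\tilde{w}$ with $\tilde{\cP}^{2,k}$. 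Subtracting these and inserting the cross term $\tilde{\cP}^{2,k}_{h-1}w_{h-1}^{2,k}$ splits the per-step difference into a transition-mismatch part $(\tilde{\cP}^{2,k}_{h-1}-\cP^2_{h-1})w_{h-1}^{2,k}$ weighted by the true occupancy, plus a propagated part $\tilde{\cP}^{2,k}_{h-1}(\tilde{w}_{h-1}^{2,k}-w_{h-1}^{2,k})$.

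Unrolling this recursion from step $h$ down to $h=1$, where the two occupancy measures coincide because the initial state $s^2_1$ is fixed, expresses the error $|\tilde{w}_h^{2,k}-w_h^{2,k}|$ as a sum over earlier steps $h'<h$ of the transition mismatch at step $h'$, carried forward by a product of propagation operators built from $\tilde{\cP}^{2,k}$ and $\nu^k$ whose $\ell_1$ operator norm is at most one (sub-stochasticity). Summing over $(s^2,b)$ and then over $h$ therefore yields a double sum $\sum_{h=2}^H\sum_{h'=1}^{h-1}$ whose summand is $\sum_{s^2,b}w^{2,k}(s^2,b)\,\|\tilde{\cP}^{2,k}(\cdot|s^2,b)-\cP^2(\cdot|s^2,b)\|_1$, matching the structure of $\cE_1$.

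Next I would bound the transition mismatch through the confidence set. Because $\tilde{\cP}^{2,k}\in\Upsilon^{2,k}$ by hypothesis and, by Lemma \ref{lem:conf_set_P2}, the true $\cP^2\in\Upsilon^{2,k}$ for all $k$ with probability at least $1-4\delta'$, the triangle inequality gives the coordinatewise bound $|\tilde{\cP}^{2,k}_{h'}(s^2{}'|s^2,b)-\cP^2_{h'}(s^2{}'|s^2,b)|\le 2\epsilon_{h'}^{2,k}$. Summing over the $|\cS_2|$ next-states turns the $\sqrt{\hat{\cP}/N}$ term of $\epsilon_{h'}^{2,k}$ into a $\sqrt{|\cS_2|/\max\{N,1\}}$ term (via Cauchy--Schwarz together with $\sum_{s^2{}'}\hat{\cP}^{2,k}_{h'}=1$) and the $1/N$ term into $|\cS_2|/\max\{N,1\}$, producing exactly the summand of $\cE_1$. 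I would then substitute this into the double sum and invoke Lemma \ref{lem:occu_err_P2}, which with probability $1-2\delta'$ controls $\sum_k\sum_{s^2,b}w^{2,k}/\sqrt{\max\{N,1\}}$ and $\sum_k\sum_{s^2,b}w^{2,k}/\max\{N,1\}$; the leading $\sqrt{|\cS_2||\cB|K}$ contributions accumulated over the $O(H^2)$ index pairs collect into $\cE_1$, while the $\log K$, $1/N$, and Bernstein lower-order residuals are absorbed into $\cE_2=\cO(\mathrm{poly}(H,|\cS_2|,|\cB|)\log\frac{|\cS_2||\cB|HK}{\delta'})$. A union bound over the two events gives the stated $1-6\delta'$ probability.

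The main obstacle is the bookkeeping in the telescoping step: I must verify that the forward-propagation operators stay sub-stochastic so the carried error does not amplify, and then track the $|\cS_2|$ factors through the empirical-Bernstein radius $\epsilon_{h'}^{2,k}$ carefully enough that the dominant and residual pieces split precisely as $\cE_1$ and $\cE_2$. A secondary subtlety is that $\epsilon_{h'}^{2,k}$ is defined via $\hat{\cP}^{2,k}$ rather than $\cP^2$, so relating the two (again using $\cP^2\in\Upsilon^{2,k}$) contributes an additional lower-order term that must be shown to fall into $\cE_2$.
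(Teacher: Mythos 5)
Your proposal is correct and takes essentially the same route as the paper: the paper does not spell out a proof of this lemma at all, presenting it as a modification of (parts of the proof of) Lemma 4 in \citet{jin2019learning}, built on Lemma \ref{lem:conf_set_P2} and Lemma \ref{lem:occu_err_P2}. Your reconstruction --- the one-step recursion under the common policy $\nu^k$, telescoping with sub-stochastic propagation, the coordinatewise confidence-set bound giving the $2\epsilon^{2,k}_{h'}$ mismatch, Cauchy--Schwarz to extract the $\sqrt{|\cS_2|/\max\{N,1\}}$ radius with lower-order pieces pushed into $\cE_2$, and the union bound yielding $1-6\delta'$ --- is precisely that cited argument specialized to Player 2's marginal dynamics.
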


\section{Proofs for Section \ref{sec:SCT}}

\begin{lemma} \label{lem:V_diff_1} At the $k$-th episode of Algorithm \ref{alg:po1}, the difference between value functions $V_1^{\mu^*, \nu^k}(s_1)$ and $V_1^{\mu^k, \nu^k}(s_1)$ is
\begin{align*}
&V_1^{\mu^*, \nu^k}(s_1) - V_1^{\mu^k, \nu^k}(s_1) \\
&\qquad =  \overline{V}_1^k(s_1) -  V_1^{\mu^k, \nu^k}(s_1) + \sum_{h=1}^H \EE_{\mu^*, \cP} \Big[ \big\langle \mu_h^*(\cdot | s_h)-\mu_h^k(\cdot | s_h), U_h^k(s_h,\cdot) \big\rangle_{\cA}  \Biggiven s_1 \Big] \\
&\qquad\quad + \sum_{h=1}^H \EE_{\mu^*, \cP, \nu^k} \big[ \overline{\varsigma}_h^k(s_h, a_h, b_h) \biggiven s_1 \big].
\end{align*}
where $s_h, a_h, b_h$ are random variables for state and actions, $U_h^k(s,a) := \langle \overline{Q}_h^k(s, a , \cdot), \nu_h^k(\cdot\given s) \rangle_\cB$, and we define the model prediction error of $Q$-function as
\begin{align}
\begin{aligned} \label{eq:pred_err_up} 
&\overline{\varsigma}_h^k(s, a, b) = r_h(s,a,b) +  \cP_h\overline{V}_{h+1}^k(s, a) - \overline{Q}_h^k(s,a,b).
\end{aligned}
\end{align} 
\end{lemma}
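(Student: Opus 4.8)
The plan is to follow the same telescoping argument used for Lemma~\ref{lem:V_diff_it1}, specialized to the single-controller structure where the transition depends only on $(s,a)$ and no opponent reaching-probability estimation is needed. First I would write $V_1^{\mu^*,\nu^k}(s_1)-V_1^{\mu^k,\nu^k}(s_1)=\big[V_1^{\mu^*,\nu^k}(s_1)-\overline{V}_1^k(s_1)\big]+\big[\overline{V}_1^k(s_1)-V_1^{\mu^k,\nu^k}(s_1)\big]$, keeping the second bracket intact as the bias term that appears verbatim in the statement, and analyze the first bracket via a recursion over $h$.

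To set up the recursion, for each $h$ and $s$ I would expand $V_h^{\mu^*,\nu^k}(s)-\overline{V}_h^k(s)$ using the Bellman equation~\eqref{eq:bellman_V} together with the value update in Line~\ref{line:V_up} of Algorithm~\ref{alg:po1}, and then add and subtract $[\mu_h^*(\cdot|s)]^\top \overline{Q}_h^k(s,\cdot,\cdot)\nu_h^k(\cdot|s)$. This splits the difference into a policy-mismatch term $[\mu_h^*(\cdot|s)-\mu_h^k(\cdot|s)]^\top \overline{Q}_h^k(s,\cdot,\cdot)\nu_h^k(\cdot|s)$, which by the definition $U_h^k(s,a)=\langle \overline{Q}_h^k(s,a,\cdot),\nu_h^k(\cdot|s)\rangle_\cB$ equals $\langle \mu_h^*(\cdot|s)-\mu_h^k(\cdot|s),U_h^k(s,\cdot)\rangle_\cA$, and a $Q$-mismatch term $[\mu_h^*(\cdot|s)]^\top\big[Q_h^{\mu^*,\nu^k}(s,\cdot,\cdot)-\overline{Q}_h^k(s,\cdot,\cdot)\big]\nu_h^k(\cdot|s)$.

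The crucial step is to rewrite the $Q$-mismatch term. Using the Bellman equation~\eqref{eq:bellman_Q} together with the definition of the prediction error $\overline{\varsigma}_h^k$ in~\eqref{eq:pred_err_up}, I would obtain $Q_h^{\mu^*,\nu^k}(s,a,b)-\overline{Q}_h^k(s,a,b)=\cP_h\big(V_{h+1}^{\mu^*,\nu^k}-\overline{V}_{h+1}^k\big)(s,a)+\overline{\varsigma}_h^k(s,a,b)$, where the single-controller property $\cP_h(s'\given s,a,b)=\cP_h(s'\given s,a)$ lets me drop the dependence on $b$ in the one-step transition term; this is precisely where the argument diverges from (and is simpler than) Lemma~\ref{lem:V_diff_it1}, since no factored reaching-probability gap arises. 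Averaging against $\mu_h^*(\cdot|s)\nu_h^k(\cdot|s)$ then yields a recursive contribution $\EE_{a\sim\mu_h^*,b\sim\nu_h^k}[\cP_h(V_{h+1}^{\mu^*,\nu^k}-\overline{V}_{h+1}^k)(s,a)]$ plus the expected prediction error $\EE_{a\sim\mu_h^*,b\sim\nu_h^k}[\overline{\varsigma}_h^k(s,a,b)]$.

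Finally I would unroll this recursion from $h=1$ to $H$ using the terminal conditions $V_{H+1}^{\mu^*,\nu^k}=\overline{V}_{H+1}^k=\boldsymbol 0$, collecting the accumulated $\overline{\varsigma}_h^k$ contributions into $\sum_{h=1}^H \EE_{\mu^*,\cP,\nu^k}[\overline{\varsigma}_h^k(s_h,a_h,b_h)\given s_1]$ and the $U_h^k$ contributions into $\sum_{h=1}^H \EE_{\mu^*,\cP}[\langle \mu_h^*(\cdot|s_h)-\mu_h^k(\cdot|s_h),U_h^k(s_h,\cdot)\rangle_\cA\given s_1]$. The one point needing care is justifying the expectation measures: because the transition $\cP_h$ depends on actions only through Player~1's action and because $U_h^k(s,\cdot)$ already absorbs the opponent's $\nu^k$, the state marginal at step $h$ is governed entirely by $\mu^*$ and $\cP$, so the $U_h^k$ term carries the expectation $\EE_{\mu^*,\cP}$ rather than $\EE_{\mu^*,\cP,\nu^k}$, whereas the $\overline{\varsigma}_h^k$ term, which depends on $(s_h,a_h,b_h)$, retains the full $\EE_{\mu^*,\cP,\nu^k}$. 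I do not expect a genuine obstacle here, as the result is an exact equality involving no inequalities or high-probability events; the main effort is bookkeeping the recursion and correctly tracking which distribution each surviving term is integrated against.
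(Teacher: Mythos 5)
Your proposal is correct and follows essentially the same route as the paper's proof: the same split into $\bigl[V_1^{\mu^*,\nu^k}-\overline{V}_1^k\bigr]+\bigl[\overline{V}_1^k-V_1^{\mu^k,\nu^k}\bigr]$, the same add-and-subtract of $[\mu_h^*(\cdot|s)]^\top\overline{Q}_h^k(s,\cdot,\cdot)\nu_h^k(\cdot|s)$ yielding the $U_h^k$ and prediction-error terms, the same use of the single-controller structure (with $\sum_{b}\nu_h^k(b\given s)=1$ eliminating the $b$-dependence of the transition term), and the same telescoping recursion with zero terminal values. Your closing remark on which expectation measure each surviving term carries matches the paper's bookkeeping exactly, so there is nothing to fix.
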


\begin{proof} We start the proof by decomposing the value function difference as 
\begin{align}
V_1^{\mu^*, \nu^k}(s_1) - V_1^{\mu^k, \nu^k}(s_1) = V_1^{\mu^*, \nu^k}(s_1) - \overline{V}_1^k(s_1) + \overline{V}_1^k(s_1) -  V_1^{\mu^k, \nu^k}(s_1).  \label{eq:V_diff_1_decomp}
\end{align}
Note that the term $\overline{V}_1^k(s_1) -  V_1^{\mu^k, \nu^k}(s_1)$ is the bias between the estimated value function $\overline{V}_1^k(s_1)$ generated by Algorithm \ref{alg:po1} and the value function $V_1^{\mu^k, \nu^k}(s_1)$ under the true transition model $\cP$ at the $k$-th episode.

We focus on analyzing the other term $V_1^{\mu^*, \nu^k}(s_1) - \overline{V}_1^k(s_1)$ in this proof. For any $h$ and $s$, we have the following decomposition
\begin{align}
\begin{aligned} \label{eq:V_diff_1_init}
&V_h^{\mu^*, \nu^k}(s) - \overline{V}_h^k(s) \\
&\qquad=  [\mu^*_h(\cdot | s)]^\top Q_h^{\mu^*, \nu^k}(s, \cdot , \cdot) \nu^k_h(\cdot | s) -  \big[\mu_h^k(\cdot | s)\big]^\top \overline{Q}_h^ k(s, \cdot , \cdot) \nu_h^k(\cdot | s)\\
&\qquad=  [\mu^*_h(\cdot | s)]^\top Q_h^{\mu^*, \nu^k}(s, \cdot , \cdot) \nu^k_h(\cdot | s) -  [\mu_h^*(\cdot | s)]^\top \overline{Q}_h^k(s, \cdot , \cdot) \nu_h^k(\cdot | s) \\
&\qquad \quad +   [\mu_h^*(\cdot | s)]^\top \overline{Q}_h^k(s, \cdot , \cdot) \nu_h^k(\cdot | s) -  \big[\mu_h^k(\cdot | s)\big]^\top \overline{Q}_h^k(s, \cdot , \cdot) \nu_h^k(\cdot | s) \\
&\qquad =  [\mu^*_h(\cdot | s)]^\top \big[Q_h^{\mu^*, \nu^k}(s, \cdot , \cdot)  -   \overline{Q}_h^k(s, \cdot , \cdot) \big] \nu_h^k(\cdot | s) \\
&\qquad \quad +   \big[\mu_h^*(\cdot | s)-\mu_h^k(\cdot | s)\big]^\top \overline{Q}_h^k(s, \cdot , \cdot) \nu_h^k(\cdot | s),
\end{aligned}
\end{align}
where the first inequality is by the definition of $V_h^{\mu^*, \nu^k}$ in \eqref{eq:bellman_V} and the definition of $\overline{V}_h^k$ in Line \ref{line:V_up} of Algorithm \ref{alg:po1}. Moreover, by the definition of $Q_h^{\mu^*, \nu^k}(s, \cdot , \cdot)$ in \eqref{eq:bellman_Q} and the model prediction error $\overline{\varsigma}_h^k$ for Player 1 in \eqref{eq:pred_err_up}, we have
\begin{align*}
&[\mu^*_h(\cdot | s)]^\top \big[Q_h^{\mu^*, \nu^k}(s, \cdot , \cdot)  -   \overline{Q}_h^k(s, \cdot , \cdot) \big] \nu_h^k(\cdot | s) \\
& \qquad =  \sum_{a \in \cA} \sum_{b\in \cB} \mu^*_h(a | s) \bigg[ \sum_{s'\in \cS}  \cP_h(s'|s, a) \big[V_{h+1}^{\mu^*, \nu^k}(s') -  \overline{V}_{h+1}^k(s')\big] + \overline{\varsigma}_h^k(s,a, b) \bigg] \nu_h^k(b | s)\\
&\qquad =  \sum_{a \in \cA} \sum_{s'\in \cS} \mu^*_h(a | s) \cP_h(s'|s, a) \big[V_{h+1}^{\mu^*, \nu^k}(s') -  \overline{V}_{h+1}^k(s')\big]  +  \sum_{a \in \cA} \sum_{b\in \cB}  \mu^*_h(a | s) \overline{\varsigma}_h^k(s,a, b)  \nu_h^k(b | s).
\end{align*}
where the last equality holds due to $\sum_{b\in \cB} \nu_h^k(b\given s) = 1$.
Combining this equality with \eqref{eq:V_diff_1_init} gives
\begin{align}
\begin{aligned} \label{eq:V_diff_1_rec}
V_h^{\mu^*, \nu^k}(s) - \overline{V}_h^k(s)  & =  \sum_{a \in \cA}  \sum_{s'\in \cS} \mu^*_h(a | s) \cP_h(s'|s, a) \big[V_{h+1}^{\mu^*, \nu^k}(s') -  \overline{V}_{h+1}^k(s')\big]   \\
&\quad +  \sum_{a \in \cA} \sum_{b\in \cB}  \mu^*_h(a | s) \overline{\varsigma}_h^k(s,a, b)  \nu_h^k(b | s) \\
&\quad + \sum_{a \in \cA} \sum_{b\in \cB}  \big[\mu_h^*(a | s)-\mu_h^k(a | s)\big]  \overline{Q}_h^k(s, a, b) \nu_h^k(b | s).
\end{aligned}
\end{align}
Note that \eqref{eq:V_diff_1_rec} indicates a recursion of the value function difference $V_h^{\mu^*, \nu^k}(s) - \overline{V}_h^k(s)$. Since we define $V_{H+1}^{\mu^*, \nu^k}(s) = 0$ and $\overline{V}_{H+1}^k(s) = 0$, by recursively applying \eqref{eq:V_diff_1_rec} from $h = 1$ to $H$, we obtain
\begin{align}  
\begin{aligned}\label{eq:V_diff_1_al}
V_1^{\mu^*, \nu^k}(s_1) - \overline{V}_1^k(s_1)  & =   \sum_{h=1}^H \EE_{\mu^*, \cP} \big\{  [\mu^*_h(\cdot| s_h)]^\top \overline{\varsigma}_h^k(s_h, \cdot, \cdot)  \nu_h^k(\cdot| s_h)  \biggiven s_1 \big\} \\
& \quad  + \sum_{h=1}^H \EE_{\mu^*, \cP} \big\{ \big[\mu_h^*(\cdot | s_h)-\mu_h^k(\cdot | s_h)\big]^\top  \overline{Q}_h^k(s_h, \cdot, \cdot) \nu_h^k(\cdot | s_h)  \biggiven  s_1 \big\},
\end{aligned}
\end{align}
where $s_h$ are a random variables denoting the state at the $h$-th step following a distribution determined jointly by $\mu^*, \cP$. Further combining  \eqref{eq:V_diff_1_al} with \eqref{eq:V_diff_1_decomp}, we eventually have
\begin{align*}
&V_1^{\mu^*, \nu^k}(s_1) - V_1^{\mu^k, \nu^k}(s_1) \\
&\qquad  =  \overline{V}_1^k(s_1) -  V_1^{\mu^k, \nu^k}(s_1) + \sum_{h=1}^H \EE_{\mu^*, \cP} \big\{  [\mu^*_h(\cdot| s_h)]^\top \overline{\varsigma}_h^k(s_h, \cdot, \cdot)  \nu_h^k(\cdot| s_h)  \biggiven s_1 \big\} \\
&\qquad \quad  + \sum_{h=1}^H \EE_{\mu^*, \cP} \big\{ \big[\mu_h^*(\cdot | s_h)-\mu_h^k(\cdot | s_h)\big]^\top  \overline{Q}_h^k(s_h, \cdot, \cdot) \nu_h^k(\cdot | s_h)  \biggiven  s_1 \big\},
\end{align*}
which is equivalent to the result in this lemma. 
This completes our proof.
\end{proof}

%
\begin{lemma} \label{lem:V_diff_2} At the $k$-th episode of Algorithm \ref{alg:po2}, with probability at least $1-\delta$, the difference between the value functions $V_1^{\mu^k, \nu^k}(s_1)$ and $V_1^{\mu^k, \nu^*}(s_1)$ for all $k\in [K]$ is decomposed as
\begin{align*}
&V_1^{\mu^k, \nu^k}(s_1) - V_1^{\mu^k, \nu^*}(s_1) \\
&\qquad \leq 2 \sum_{h=1}^H  \EE_{\mu^k, \cP, \nu^k} \big[\beta_h^{r,k}(s_h,a_h,b_h) \biggiven s_1\big]   + \sum_{h=1}^H \sum_{s\in \cS} d_h^{\mu^k, \hat{\cP}^k}(s) \big[\mu_h^k(\cdot| s)\big]^\top  \underline{\varsigma}_h^k(s, \cdot, \cdot)\nu_h^*(\cdot | s) \nonumber\\ 
&\qquad\quad +  \sum_{h=1}^H \sum_{s\in \cS} d_h^{\mu^k, \hat{\cP}^k}(s) \big\langle W_h^k(s, \cdot),  \nu_h^k(\cdot | s) - \nu_h^*(\cdot | s) \big\rangle_\cB + 2\sum_{h=1}^H \sum_{s\in \cS}  \left|q_h^{\mu^k, \cP}(s)- d_h^{\mu^k, \hat{\cP}^k}(s)\right|,\nonumber
\end{align*}
where $s_h, a_h, b_h$ are random variables for state and actions, $W_h^k(s, b) =  \langle \tilde{r}_h^k(s, \cdot , b),   \mu^k_h( \cdot \given s)  \rangle_\cA $, and we define the error term as
\begin{align}\label{eq:pred_err_up2}
\underline{\varsigma}_h^k(s, a, b) = \tilde{r}_h^k (s, a, b)-r_h (s, a, b).
\end{align} 
\end{lemma}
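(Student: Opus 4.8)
The plan is to first collapse the value-function gap into a purely reward-based expression, exploiting that under the single-controller structure \eqref{eq:sc_trans} the state marginal is governed by $\mu^k$ alone. Concretely, I would set $\Delta_h(s):=V_h^{\mu^k,\nu^k}(s)-V_h^{\mu^k,\nu^*}(s)$ and expand both value functions through the Bellman equations \eqref{eq:bellman_V}--\eqref{eq:bellman_Q}. Because $Q_h^{\mu^k,\nu}(s,a,b)=r_h(s,a,b)+\langle\cP_h(\cdot|s,a),V_{h+1}^{\mu^k,\nu}\rangle_\cS$ has a transition piece independent of $b$, contracting with $\nu_h(\cdot|s)$ and using $\sum_b\nu_h(b|s)=1$ reduces that piece to $\sum_a\mu_h^k(a|s)\langle\cP_h(\cdot|s,a),V_{h+1}^{\mu^k,\nu}\rangle_\cS$, which depends on $\nu$ only through $V_{h+1}^{\mu^k,\nu}$. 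This gives the recursion $\Delta_h(s)=[\mu_h^k(\cdot|s)]^\top r_h(s,\cdot,\cdot)[\nu_h^k-\nu_h^*](\cdot|s)+\sum_a\mu_h^k(a|s)\langle\cP_h(\cdot|s,a),\Delta_{h+1}(\cdot)\rangle_\cS$, and unrolling from $h=1$ to $H$ with $\Delta_{H+1}\equiv 0$ yields the key identity
\[ V_1^{\mu^k,\nu^k}(s_1) - V_1^{\mu^k,\nu^*}(s_1) = \sum_{h=1}^H \sum_{s\in\cS} q_h^{\mu^k,\cP}(s)\,[\mu_h^k(\cdot|s)]^\top r_h(s,\cdot,\cdot)\,[\nu_h^k(\cdot|s)-\nu_h^*(\cdot|s)]. \]

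Next I would substitute the algorithm's truncated reward estimate via $r_h=\tilde{r}_h^k-\underline{\varsigma}_h^k$ with $\underline{\varsigma}_h^k$ as in \eqref{eq:pred_err_up2}, splitting the difference $\nu_h^k-\nu_h^*$ where needed. This separates the identity into three pieces: $-\sum_{h,s}q_h^{\mu^k,\cP}(s)[\mu_h^k]^\top\underline{\varsigma}_h^k\nu_h^k$, then $\sum_{h,s}q_h^{\mu^k,\cP}(s)[\mu_h^k]^\top\tilde{r}_h^k[\nu_h^k-\nu_h^*]$, and $\sum_{h,s}q_h^{\mu^k,\cP}(s)[\mu_h^k]^\top\underline{\varsigma}_h^k\nu_h^*$. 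For the first, I invoke the reward concentration bound (Lemma \ref{lem:r_bound}) together with $\tilde{r}_h^k\ge\hat{r}_h^k-\beta_h^{r,k}$ from Line \ref{line:def_til_r} of Algorithm \ref{alg:po2} to get $r_h-\tilde{r}_h^k\le 2\beta_h^{r,k}$ with probability at least $1-\delta$; reweighting by the true state distribution $q_h^{\mu^k,\cP}$ turns it into $2\sum_h\EE_{\mu^k,\cP,\nu^k}[\beta_h^{r,k}(s_h,a_h,b_h)\,|\,s_1]=\err_k(\text{III.1})$. In the second piece I recognize $[\mu_h^k(\cdot|s)]^\top\tilde{r}_h^k(s,\cdot,\cdot)=W_h^k(s,\cdot)$, matching $\err_k(\text{III.3})$, while the third piece already has the form of $\err_k(\text{III.2})$.

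Finally, the decomposition carries the \emph{empirical} reaching probability $d_h^{\mu^k,\hat{\cP}^k}$ in the $W_h^k$ and $\underline{\varsigma}_h^k\nu_h^*$ terms, so for each I write $q_h^{\mu^k,\cP}=d_h^{\mu^k,\hat{\cP}^k}+(q_h^{\mu^k,\cP}-d_h^{\mu^k,\hat{\cP}^k})$, keep the $d$-weighted part as $\err_k(\text{III.2})$ and $\err_k(\text{III.3})$, and bound the remainders in absolute value. Using $|\langle W_h^k(s,\cdot),\nu_h^k-\nu_h^*\rangle_\cB|\le 1$ (after centering $W_h^k$ by $\frac{1}{2}\mathbf{1}$, since $\tilde{r}_h^k\in[0,1]$) and $|[\mu_h^k]^\top\underline{\varsigma}_h^k\nu_h^*|\le 1$ (since $\underline{\varsigma}_h^k\in[-1,1]$ and the policies are distributions), the two remainders combine to $2\sum_{h,s}|q_h^{\mu^k,\cP}(s)-d_h^{\mu^k,\hat{\cP}^k}(s)|=\err_k(\text{III.4})$. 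Assembling the four labeled terms gives the claimed bound. I expect the genuine obstacle to be the first step: one must argue cleanly that single-controller transitions make the state marginal independent of Player 2's policy, so that both value functions share the \emph{same} $q_h^{\mu^k,\cP}$ and the gap reduces to a reward-only form — this is precisely what lets Algorithm \ref{alg:po2} dispense with any transition-value bonus and use only a reward bonus. Once this identity and the reward concentration bound are in place, the remaining manipulations are routine.
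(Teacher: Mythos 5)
Your proposal is correct and follows essentially the same route as the paper's proof: both reduce the gap to the reward-only identity $V_1^{\mu^k,\nu^k}(s_1)-V_1^{\mu^k,\nu^*}(s_1)=\sum_{h,s}q_h^{\mu^k,\cP}(s)[\mu_h^k(\cdot|s)]^\top r_h(s,\cdot,\cdot)[\nu_h^k(\cdot|s)-\nu_h^*(\cdot|s)]$ via the single-controller structure, then exchange $r_h$ for $\tilde{r}_h^k$ (paying the bonus term via Lemma \ref{lem:r_bound}) and $q_h^{\mu^k,\cP}$ for $d_h^{\mu^k,\hat{\cP}^k}$ (paying the reaching-probability error). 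The only cosmetic difference is that the paper organizes the swap through an auxiliary quantity $\underline{V}_1^k$ so that each $|q_h-d_h|$ remainder multiplies a single inner product lying in $[0,1]$, whereas your direct substitution $r_h=\tilde{r}_h^k-\underline{\varsigma}_h^k$ puts one remainder against $\langle W_h^k,\nu_h^k-\nu_h^*\rangle_\cB$ and hence needs your $\tfrac{1}{2}\boldsymbol{1}$-centering trick to retain the constant $2$ — both yield the identical bound.
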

\begin{proof}

We start our proof by decomposing the value difference term for any $h$ and $s$ as follows
\begin{align}
\begin{aligned} \label{eq:V_diff_2_init}
&V_h^{\mu^k, \nu^k}(s) - V_h^{\mu^k, \nu^*}(s) \\
&\qquad=  \big[\mu^k_h(\cdot | s)\big]^\top Q_h^{\mu^k, \nu^k}(s, \cdot , \cdot) \nu^k_h(\cdot | s) -  \big[\mu_h^k(\cdot | s)\big]^\top Q_h^{\mu^k, \nu^*}(s, \cdot , \cdot) \nu_h^*(\cdot | s)\\
&\qquad =  \big[\mu_h^k(\cdot | s)\big]^\top Q_h^{\mu^k, \nu^k}(s, \cdot , \cdot) \big[\nu_h^k(\cdot | s) - \nu_h^*(\cdot | s)\big]  \\
&\qquad \quad +   \big[\mu^k_h(\cdot | s)\big]^\top \big[Q_h^{\mu^k, \nu^k}(s, \cdot , \cdot)  -   Q_h^{\mu^k, \nu^*}(s, \cdot , \cdot) \big] \nu_h^*(\cdot | s),
\end{aligned}
\end{align}
where the first equality is by the Bellman equation for $V_h^{\mu, \nu}(s) $ in \eqref{eq:bellman_V} and the second equality is obtained by subtracting and adding the term $\big[\mu_h^k(\cdot | s)\big]^\top  Q_h^{\mu^k, \nu^k}(s, \cdot , \cdot) \nu_h^*(\cdot | s)$ in the first equality. Moreover, by the Bellman equation for $Q_h^{\mu, \nu}$ in \eqref{eq:bellman_Q}, we can expand the last term in \eqref{eq:V_diff_2_init} as 
\begin{align}
\begin{aligned} \label{eq:V_diff_2_init2}
& \big[\mu^k_h(\cdot | s)\big]^\top \big[Q_h^{\mu^k, \nu^k}(s, \cdot , \cdot)  -   Q_h^{\mu^k, \nu^*}(s, \cdot , \cdot) \big] \nu_h^*(\cdot | s) \\
& \qquad =  \sum_{a \in \cA} \sum_{b\in \cB} \mu^k_h(a | s)  \sum_{s'\in \cS}  \cP_h(s'|s, a) \big[ V_{h+1}^{\mu^k, \nu^k}(s')- V_{h+1}^{\mu^k, \nu^*}(s') \big]  \nu_h^*(b | s)\\
&\qquad =  \sum_{a \in \cA} \sum_{s'\in \cS} \mu^k_h(a | s) \cP_h(s'|s, a) \big[V_{h+1}^{\mu^k, \nu^k}(s')- V_{h+1}^{\mu^k, \nu^*}(s')\big].
\end{aligned}
\end{align}
where the last equality holds due to $\sum_{b\in \cB} \nu_h^*(b\given s) = 1$.
Combining \eqref{eq:V_diff_2_init2} with \eqref{eq:V_diff_2_init} gives
\begin{align}
\begin{aligned} \label{eq:V_diff_2_rec}
V_h^{\mu^k, \nu^k}(s) - V_h^{\mu^k, \nu^*}(s)   & =  \sum_{a \in \cA} \sum_{b\in \cB}  \mu_h^k(a | s)  Q_h^{\mu^k, \nu^k}(s, a, b) \big[\nu_h^k(b | s) - \nu_h^*(b | s)\big] \\
&\quad +  \sum_{a \in \cA}  \sum_{s'\in \cS} \mu^k_h(a | s) \cP_h(s'|s, a) \big[V_{h+1}^{\mu^k, \nu^k}(s')- V_{h+1}^{\mu^k, \nu^*}(s')\big].
\end{aligned}
\end{align}
Note that \eqref{eq:V_diff_2_rec} indicates a recursion of the value function difference $V_h^{\mu^k, \nu^k}(s) - V_h^{\mu^k, \nu^*}(s)$. Since we define $V_{H+1}^{\mu, \nu}(s) = 0$ for any $\mu$ and $\nu$, by recursively applying \eqref{eq:V_diff_2_rec} from $h = 1$ to $H$, we obtain
\begin{align}  
\begin{aligned}\label{eq:V_diff_2_al}
&V_1^{\mu^k, \nu^k}(s_1) - V_1^{\mu^k, \nu^*}(s_1)   \\
&\qquad =  \sum_{h=1}^H \EE_{\mu^k, \cP} \big\{ \big[\mu_h^k(\cdot | s_h)\big]^\top  Q_h^{\mu^k, \nu^k}(s_h, \cdot, \cdot) \big[\nu_h^k(\cdot | s_h)-\nu_h^*(\cdot | s_h)\big]  \biggiven  s_1 \big\},
\end{aligned}
\end{align}
where $s_h$ are a random variables following a distribution determined jointly by $\mu^k, \cP$. Note that since we have defined the distribution of $s_h$ under $\mu^k$ and $\cP$ as
\begin{align*}
q_h^{\mu^k, \cP}(s) = \Pr\big(s_h = s \biggiven  \mu^k, \cP, s_1  \big), 
\end{align*}
we can rewrite \eqref{eq:V_diff_2_al} as
\begin{align} 
\begin{aligned}\label{eq:V_diff_2_al2}
&V_1^{\mu^k, \nu^k}(s_1) - V_1^{\mu^k, \nu^*}(s_1)  \\
&\qquad =  \sum_{h=1}^H \sum_{s\in \cS} \sum_{a\in \cA}\sum_{b\in \cB} q_h^{\mu^k, \cP}(s) \mu_h^k(a| s) Q_h^{\mu^k, \nu^k}(s, a, b) \big[\nu_h^k(b | s)-\nu_h^*(b | s)\big].
\end{aligned}
\end{align}
By plugging the Bellman equation for Q-function as \eqref{eq:bellman_Q} into \eqref{eq:V_diff_2_al2}, we further expand \eqref{eq:V_diff_2_al2} as
\begin{align*}  
&V_1^{\mu^k, \nu^k}(s_1) - V_1^{\mu^k, \nu^*}(s_1)  \\
&\qquad  =  \sum_{h=1}^H \sum_{s\in \cS} \sum_{a\in \cA}\sum_{b\in \cB} q_h^{\mu^k, \cP}(s) \mu_h^k(a| s) \big[ r_h (s, a, b) +  \big\langle  \cP_h(\cdot | s, a), V_{h+1}^{\mu^k, \nu^k} (\cdot) \big\rangle \big] [\nu_h^k(b | s)-\nu_h^*(b | s)]\\
&\qquad  =  \sum_{h=1}^H \sum_{s\in \cS} \sum_{a\in \cA}\sum_{b\in \cB} q_h^{\mu^k, \cP}(s) \mu_h^k(a| s) \left[ r_h (s, a, b) \right] [\nu_h^k(b | s)-\nu_h^*(b | s)]\\
&\qquad  =  \sum_{h=1}^H \sum_{s\in \cS} q_h^{\mu^k, \cP}(s) [\mu_h^k(\cdot| s)]^\top   r_h (s, \cdot, \cdot) \big[\nu_h^k(\cdot | s)-\nu_h^*(\cdot | s)\big],
\end{align*}
where the second equality by
\begin{align*}
&\sum_{h=1}^H \sum_{s\in \cS} \sum_{a\in \cA}\sum_{b\in \cB} q_h^{\mu^k, \cP}(s) \mu_h^k(a| s)  \big\langle  \cP_h(\cdot | s, a), V_{h+1}^{\mu^k, \nu^k} (\cdot) \big\rangle_\cS  [\nu_h^k(b | s)-\nu_h^*(b | s)] \\
&\qquad = \sum_{h=1}^H \sum_{s\in \cS} \sum_{a\in \cA} q_h^{\mu^k, \cP}(s) \mu_h^k(a| s)  \big\langle  \cP_h(\cdot | s, a), V_{h+1}^{\mu^k, \nu^k} (\cdot) \big\rangle_\cS  \sum_{b\in \cB}[\nu_h^k(b | s)-\nu_h^*(b | s)] \\
&\qquad = 0.
\end{align*}
In particular, the last equality above is due to
\begin{align*}
\sum_{b\in \cB}\big[\nu_h^k(b | s)-\nu_h^*(b | s)\big] = 1-1=0.
\end{align*}
Thus, we have 
\begin{align}
V_1^{\mu^k, \nu^k}(s_1) - V_1^{\mu^k, \nu^*}(s_1)   =  \sum_{h=1}^H \sum_{s\in \cS} q_h^{\mu^k, \cP}(s) \big[\mu_h^k(\cdot| s)\big]^\top   r_h (s, \cdot, \cdot) \big[\nu_h^k(\cdot | s)-\nu_h^*(\cdot | s)\big]. \label{eq:decomp2_al}
\end{align}
Recall that we also define the estimate of the state reaching probability $q_h^{\mu^k, \cP}(s)$ as 
\begin{align*}
d_h^{\mu^k,\hat{\cP}^k}(s) = \Pr\big(s_h = s \biggiven  \mu^k, \hat{\cP}^k, s_1  \big).
\end{align*}
Now we define the following term associated with $\hat{\cP}^k$, $\hat{r}^k$, $\mu^k$, $\nu^k$, and the initial state $s_1$ as
\begin{align*}
\underline{V}^k_1 := \sum_{h=1}^H \sum_{s\in \cS} d_h^{\mu^k, \hat{\cP}^k}(s) \big[\mu_h^k(\cdot| s)\big]^\top   \tilde{r}_h^k (s, \cdot, \cdot) \nu_h^k(\cdot | s),
\end{align*}
with $\tilde{r}$ defined in Line \ref{line:def_til_r} of Algorithm \ref{alg:po2}, which is
\begin{align*}
\tilde{r}_h^k(s, a, b) =  \max \big\{ \hat{r}^k_h(s,a,b)  - \beta_h^{r,k}(s,a,b), ~0 \big\}.
\end{align*}
Thus, by \eqref{eq:decomp2_al}, we have the following decomposition
\begin{align}
\begin{aligned} \label{eq:decomp2_al1}
&\hspace{-0.2cm}V_1^{\mu^k, \nu^k}(s_1) - V_1^{\mu^k, \nu^*}(s_1)  \\
&\hspace{-0.2cm} = V_1^{\mu^k, \nu^k}(s_1) - V_1^{\mu^k, \nu^*}(s_1) - \underline{V}^k_1 + \underline{V}^k_1 \\
&\hspace{-0.2cm}=  \underbrace{\sum_{h=1}^H \sum_{s\in \cS} \left\{ q_h^{\mu^k, \cP}(s) \big[\mu_h^k(\cdot| s)\big]^\top   r_h (s, \cdot, \cdot) \nu_h^k(\cdot | s) - d_h^{\mu^k, \hat{\cP}^k}(s) \big[\mu_h^k(\cdot| s)\big]^\top   \tilde{r}_h^k (s, \cdot, \cdot) \nu_h^k(\cdot | s) \right\}}_{\text{Term(I)}}\\
&\hspace{-0.2cm}\quad+ \underbrace{\sum_{h=1}^H \sum_{s\in \cS} \left\{d_h^{\mu^k, \hat{\cP}^k}(s) \big[\mu_h^k(\cdot| s)\big]^\top   \tilde{r}_h^k (s, \cdot, \cdot) \nu_h^k(\cdot | s) - q_h^{\mu^k, \cP}(s) \big[\mu_h^k(\cdot| s)\big]^\top   r_h (s, \cdot, \cdot) \nu_h^*(\cdot | s) \right\}}_{\text{Term(II)}}.
\end{aligned}
\end{align}
We first bound Term(I) as
\begin{align} 
\text{Term(I)} &= \sum_{h=1}^H \sum_{s\in \cS} \left\{ q_h^{\mu^k, \cP}(s) \big[\mu_h^k(\cdot| s)\big]^\top   r_h (s, \cdot, \cdot) \nu_h^k(\cdot | s) - d_h^{\mu^k, \hat{\cP}^k}(s) \big[\mu_h^k(\cdot| s)\big]^\top   \tilde{r}_h^k (s, \cdot, \cdot) \nu_h^k(\cdot | s) \right\}\nonumber\\
&= \sum_{h=1}^H \sum_{s\in \cS} q_h^{\mu^k, \cP}(s) \big[\mu_h^k(\cdot| s)\big]^\top   \big[r_h (s, \cdot, \cdot) - \tilde{r}^k_h (s, \cdot, \cdot)\big] \nu_h^k(\cdot | s) \nonumber\\
&\quad + \sum_{h=1}^H \sum_{s\in \cS}  \Big[q_h^{\mu^k, \cP}(s)-d_h^{\mu^k, \hat{\cP}^k}(s)\Big] \big[\mu_h^k(\cdot| s)\big]^\top   \tilde{r}_h^k (s, \cdot, \cdot) \nu_h^k(\cdot | s)  \nonumber\\
&\leq 2\sum_{h=1}^H  \EE_{\mu^k, \cP, \nu^k} \big[\beta_h^{r,k}(s,a,b)\big]  + \sum_{h=1}^H \sum_{s\in \cS}  \left|q_h^{\mu^k, \cP}(s)- d_h^{\mu^k, \hat{\cP}^k}(s)\right|, \label{eq:sc_p2_term1}
\end{align}
where the inequality is due to $|\hat{r}_h^k (s, a, b) - r_h (s, a, b)| \leq \beta_h^{r,k}(s,a,b)$ with probability at least $1-\delta$ by Lemma \ref{lem:r_bound} such that we have
\begin{align*}
r_h (s, a, b) -\tilde{r}_h^k (s, a, b) &= r_h (s, a, b) -\max\big\{ \hat{r}_h^k(s,a,b) - \beta_h^{r,k}(s,a,b), 0\big\} \\
&= \min\big\{ r_h (s, a, b) - \hat{r}_h^k(s,a,b) + \beta_h^{r,k}(s,a,b), r_h (s, a, b)\big\} \\
&\leq r_h (s, a, b) - \hat{r}_h^k(s,a,b) + \beta_h^{r,k}(s,a,b) \leq 2\beta_h^{r,k}(s,a,b)
\end{align*}
and then
\begin{align*}
&\sum_{s\in \cS}  q_h^{\mu^k, \cP}(s) \big[\mu_h^k(\cdot| s)\big]^\top   \big[r_h (s, \cdot, \cdot) -\tilde{r}_h^k (s, \cdot, \cdot)\big] \nu_h^k(\cdot | s) \leq 2 \EE_{\mu^k, \cP, \nu^k} \big[\beta_h^{r,k}(s,a,b)\big]. 
\end{align*}
In addition, the inequality in \eqref{eq:sc_p2_term1} is also due to
\begin{align*}
\bigg|\big[\mu_h^k(\cdot| s)\big]^\top   \tilde{r}^k_h (s, \cdot, \cdot) \nu_h^k(\cdot | s) \bigg| &\leq \bigg|\sum_a \sum_b \mu_h^k(a| s) \tilde{r}^k_h (s, a, b) \nu_h^k(b | s)\bigg| \\
&\leq \sum_a \sum_b \mu_h^k(a| s) \cdot \big|\tilde{r}^k_h (s, a, b)\big|\cdot \nu_h^k(b | s) \leq 1,
\end{align*}
because of $0\leq \tilde{r}^k_h (s, a, b) = \max\big\{ \hat{r}_h^k(s, a, b) -\beta^{r,k}_h(s, a, b), 0\big\} \leq \hat{r}_h^k(s, a, b) \leq 1$. Therefore, with probability at least $1-\delta$, we have
\begin{align}\label{eq:decomp2_term1}
&\text{Term(I)} \leq 2 \sum_{h=1}^H  \EE_{\mu^k, \cP, \nu^k} \big[\beta_h^{r,k}(s_h,a_h,b_h)\big]  + \sum_{h=1}^H \sum_{s\in \cS}  \left|q_h^{\mu^k, \cP}(s)- d_h^{\mu^k, \hat{\cP}^k}(s)\right|.
\end{align}
Next, we bound Term(II) in the following way
\begin{align*}
\text{Term(II)} &= \sum_{h=1}^H \sum_{s\in \cS} d_h^{\mu^k, \hat{\cP}^k}(s) \big[\mu_h^k(\cdot| s)\big]^\top   \tilde{r}_h^k (s, \cdot, \cdot) \big[\nu_h^k(\cdot | s) - \nu_h^*(\cdot | s) \big] \\
&\quad  + \sum_{h=1}^H \sum_{s\in \cS} \Big[d_h^{\mu^k, \hat{\cP}^k}(s)-q_h^{\mu^k, \cP}(s)\Big] \big[\mu_h^k(\cdot| s)\big]^\top   r_h (s, \cdot, \cdot) \nu_h^*(\cdot | s)  \\
&\quad   + \sum_{h=1}^H \sum_{s\in \cS} d_h^{\mu^k, \hat{\cP}^k}(s) \big[\mu_h^k(\cdot| s)\big]^\top  \underline{\varsigma}_h^k (s, \cdot, \cdot) \nu_h^*(\cdot | s),
\end{align*}
where $\underline{\varsigma}_h^k (s, a, b)$ is defined in \eqref{eq:pred_err_up2}.
Here the first term in the above equality is associated with the mirror descent step in Algorithm \ref{alg:po2}. The second term can be similarly bounded by $\sum_{h=1}^H \sum_{s\in \cS}  |q_h^{\mu^k, \cP}(s)- d_h^{\mu^k, \hat{\cP}^k}(s)|$.  Thus, we have
\begin{align}
\text{Term(II)} &\leq \sum_{h=1}^H \sum_{s\in \cS} d_h^{\mu^k, \hat{\cP}^k}(s) \big[\mu_h^k(\cdot| s)\big]^\top   \tilde{r}_h^k (s, \cdot, \cdot) \big[\nu_h^k(\cdot | s) - \nu_h^*(\cdot | s) \big] \label{eq:decomp2_term2} \\ 
&\quad + \sum_{h=1}^H \sum_{s\in \cS}  \left|q_h^{\mu^k, \cP}(s)- d_h^{\mu^k, \hat{\cP}^k}(s)\right| + \sum_{h=1}^H \sum_{s\in \cS} d_h^{\mu^k, \hat{\cP}^k}(s) \big[\mu_h^k(\cdot| s)\big]^\top  \underline{\varsigma}_h^k (s, \cdot, \cdot) \nu_h^*(\cdot | s). \nonumber
\end{align}
Combining \eqref{eq:decomp2_term1}, \eqref{eq:decomp2_term2} with \eqref{eq:decomp2_al1}, we obtain that with probability at least $1-\delta$, the following inequality holds
\begin{align*}
&V_1^{\mu^k, \nu^k}(s_1) - V_1^{\mu^k, \nu^*}(s_1)\\
&\qquad \leq 2 \sum_{h=1}^H  \EE_{\mu^k, \cP, \nu^k} \big[\beta_h^{r,k}(s_h,a_h,b_h) \biggiven s_1\big]   + \sum_{h=1}^H \sum_{s\in \cS} d_h^{\mu^k, \hat{\cP}^k}(s) \big[\mu_h^k(\cdot| s)\big]^\top  \underline{\varsigma}_h^k(s, \cdot, \cdot)\nu_h^*(\cdot | s) \nonumber\\ 
&\qquad\quad +  \sum_{h=1}^H \sum_{s\in \cS} d_h^{\mu^k, \hat{\cP}^k}(s) \big\langle W_h^k(s, \cdot),  \nu_h^k(\cdot | s) - \nu_h^*(\cdot | s) \big\rangle_\cB + 2\sum_{h=1}^H \sum_{s\in \cS}  \left|q_h^{\mu^k, \cP}(s)- d_h^{\mu^k, \hat{\cP}^k}(s)\right|,
\end{align*}
where $W_h^k(s, b) =  \langle \tilde{r}_h^k(s, \cdot , b),   \mu^k_h( \cdot \given s)  \rangle_\cA $. This completes our proof.
\end{proof}

\begin{lemma} \label{lem:mirror_1} With setting $\eta = \sqrt{  \log |\cA|/(KH^2)}$, the mirror ascent steps of Algorithm \ref{alg:po1} lead to 
\begin{align*}
\sum_{k=1}^K \sum_{h=1}^H \EE_{\mu^*, \cP} \Big[\big\langle  \mu_h^*(\cdot|s)- \mu_h^k(\cdot|s),  U_h^k(s, \cdot)\big\rangle_\cA \Big]  \leq  \cO\left( \sqrt{H^4 K \log |\cA| }\right),
\end{align*}
where $U_h^k(s, a) = \langle \overline{Q}_h^k(s, a, \cdot), \nu_h^k(\cdot|s)\rangle_\cB$, $\forall (s,a) \in \cS\times\cA$.
\end{lemma}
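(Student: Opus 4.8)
The plan is to replay the argument of Lemma~\ref{lem:mirror_it1}, noting that the single-controller structure makes Player~1's case strictly simpler, because no reaching-probability weight $d_h$ is entangled with the gradient here. The first step is to observe that the policy improvement problem \eqref{eq:ascent} is, for each fixed state $s$, a standard mirror-descent step: since $\overline{L}_h^k(\mu_h) = \langle \mu_h(\cdot|s) - \mu_h^k(\cdot|s),\, U_h^k(s,\cdot)\rangle_\cA$ by the definition $U_h^k(s,a) = \langle \overline{Q}_h^k(s,a,\cdot), \nu_h^k(\cdot|s)\rangle_\cB$, the maximization \eqref{eq:ascent} is equivalent to minimizing $-\langle \mu_h(\cdot|s) - \mu_h^k(\cdot|s), U_h^k(s,\cdot)\rangle_\cA + \eta^{-1} D_{\mathrm{KL}}(\mu_h(\cdot|s), \mu_h^k(\cdot|s))$ over the simplex. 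Because the iterates are initialized at the uniform distribution, each $\mu_h^{k+1}(\cdot|s)$ lies in the relative interior of the simplex, so Lemma~\ref{lem:pushback} applies with $f(\mu_h) = -\langle \mu_h(\cdot|s), U_h^k(s,\cdot)\rangle_\cA$, $\alpha = 1/\eta$, reference point $\mu_h^k$, minimizer $\mu_h^{k+1}$, and comparator $\mu_h^*$.

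Rearranging the pushback inequality with comparator $\mu_h^*$, and splitting $\langle \mu_h^* - \mu_h^k, U_h^k\rangle = \langle \mu_h^* - \mu_h^{k+1}, U_h^k\rangle + \langle \mu_h^{k+1} - \mu_h^k, U_h^k\rangle$, I would reach the per-step bound
\begin{align*}
\eta\,\langle \mu_h^*(\cdot|s) - \mu_h^k(\cdot|s), U_h^k(s,\cdot)\rangle_\cA &\leq D_{\mathrm{KL}}(\mu_h^*, \mu_h^k) - D_{\mathrm{KL}}(\mu_h^*, \mu_h^{k+1}) \\
&\quad - D_{\mathrm{KL}}(\mu_h^{k+1}, \mu_h^k) + \eta\,\langle \mu_h^{k+1} - \mu_h^k, U_h^k(s,\cdot)\rangle_\cA.
\end{align*}
The residual last two terms are controlled exactly as in \eqref{eq:ascent_err_it}: Pinsker's inequality gives $-D_{\mathrm{KL}}(\mu_h^{k+1}, \mu_h^k) \leq -\tfrac12\|\mu_h^{k+1} - \mu_h^k\|_1^2$, while Cauchy--Schwarz together with the uniform bound $\|U_h^k(s,\cdot)\|_\infty \leq H$ (which follows from $|\overline{Q}_h^k|\leq H$ after truncation and $\|\nu_h^k(\cdot|s)\|_1 = 1$) gives $\eta\langle \mu_h^{k+1} - \mu_h^k, U_h^k\rangle_\cA \leq \eta H \|\mu_h^{k+1} - \mu_h^k\|_1$. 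Completing the square on $x = \|\mu_h^{k+1} - \mu_h^k\|_1$ bounds the residual by $\tfrac12\eta^2 H^2$, yielding
\begin{equation*}
\eta\,\langle \mu_h^*(\cdot|s) - \mu_h^k(\cdot|s), U_h^k(s,\cdot)\rangle_\cA \leq D_{\mathrm{KL}}(\mu_h^*, \mu_h^k) - D_{\mathrm{KL}}(\mu_h^*, \mu_h^{k+1}) + \tfrac12\eta^2 H^2.
\end{equation*}

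To finish, I would take the expectation $\EE_{\mu^*,\cP}$ over the state $s = s_h$, sum over $h \in [H]$ and telescope over $k = 1,\dots,K$. The one point to state carefully is that the law of $s_h$ under $(\mu^*,\cP)$ does not depend on the episode index $k$, so the KL terms telescope cleanly inside the expectation; dropping the nonnegative terminal term $D_{\mathrm{KL}}(\mu_h^*, \mu_h^{K+1})$ and using $D_{\mathrm{KL}}(\mu_h^*(\cdot|s), \mu_h^1(\cdot|s)) \leq \log|\cA|$ (since $\mu_h^1 = \mathbf 1/|\cA|$) bounds the sum by $\eta^{-1} H\log|\cA| + \tfrac12\eta K H^3$. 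Substituting $\eta = \sqrt{\log|\cA|/(KH^2)}$ balances the two terms and gives the claimed $\cO(\sqrt{H^4 K \log|\cA|})$ rate. There is no genuine obstacle in this lemma — it is a deterministic optimization argument with no high-probability events — and the only care needed is the $k$-independence of the state distribution that validates the telescoping, together with the $\|U_h^k\|_\infty \leq H$ bound driving the step-size tuning.
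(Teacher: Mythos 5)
Your proposal is correct and follows essentially the same route as the paper's proof: the pushback inequality of Lemma \ref{lem:pushback}, Pinsker plus Cauchy--Schwarz with $\|U_h^k(s,\cdot)\|_\infty \le H$ to absorb the residual into $\tfrac12\eta^2H^2$, telescoping the KL terms, and balancing with $\eta=\sqrt{\log|\cA|/(KH^2)}$. Your explicit remark that the law of $s_h$ under $(\mu^*,\cP)$ is independent of $k$ (valid here because of the single-controller structure) is a point the paper leaves implicit, but it is the same argument.
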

\begin{proof}
As shown in \eqref{eq:ascent}, the mirror ascent step at the $k$-th episode is to solve the following maximization problem
\begin{align*}
\maximize_{\mu } \sum_{h=1}^H  \big\langle \mu_h(\cdot|s) - \mu_h^k(\cdot|s),  U_h^k(s, \cdot) \big\rangle_\cA - \frac{1}{\eta} \sum_{h=1}^H  D_{\mathrm{KL}}\big( \mu_h(\cdot| s), \mu_h^k(\cdot| s ) \big), 
\end{align*}
with $U_h^k(s, a) = \langle \overline{Q}_h^k(s, a, \cdot), \nu_h^k(\cdot|s)\rangle_\cB$. We can further equivalently rewrite this maximization problem as a minimization problem as
\begin{align*}
\minimize_{\mu} -\sum_{h=1}^H  \big\langle \mu_h(\cdot|s) - \mu_h^k(\cdot|s),  U_h^k(s, \cdot) \big\rangle_\cA + \frac{1}{\eta} \sum_{h=1}^H  D_{\mathrm{KL}}\big( \mu_h(\cdot| s), \mu_h^k(\cdot| s ) \big). 
\end{align*}
Note that the closed-form solution $\mu_h^{k+1}(\cdot|s), \forall s\in \cS$, to this minimization problem is guaranteed to stay in the relative interior of a probability simplex when initialize $\mu^0_h(\cdot | s) = \boldsymbol{1} / |\cA|$. Thus, we can apply Lemma \ref{lem:pushback} and obtain that for any $\mu = \{\mu_h\}_{h=1}^H$, the following inequality holds
\begin{align*}
&-\eta \big\langle \mu_h^{k+1}(\cdot|s),  U_h^k(s, \cdot)\big\rangle_\cA  + \eta \big\langle \mu_h(\cdot|s),  U_h^k(s, \cdot)\big\rangle_\cA \\
&\qquad \leq D_{\mathrm{KL}}\big( \mu_h(\cdot| s), \mu_h^k(\cdot| s ) \big) -  D_{\mathrm{KL}}\big( \mu_h(\cdot| s), \mu_h^{k+1}(\cdot| s ) \big) -  D_{\mathrm{KL}}\big( \mu_h^{k+1}(\cdot| s), \mu_h^k(\cdot| s ) \big).
\end{align*}
Then, by rearranging the terms, we have
\begin{align}
\begin{aligned} \label{eq:ascent_al}
&\eta \big\langle  \mu_h^*(\cdot|s)- \mu_h^k(\cdot|s),  U_h^k(s, \cdot)\big\rangle_\cA  \\
&\qquad \leq D_{\mathrm{KL}}\big( \mu^*_h(\cdot|s), \mu_h^k(\cdot|s) \big) -  D_{\mathrm{KL}}\big( \mu^*_h(\cdot|s), \mu_h^{k+1} (\cdot|s) \big) -  D_{\mathrm{KL}}\big( \mu_h^{k+1}(\cdot|s), \mu_h^k(\cdot|s) \big) \\
&\qquad \quad + \eta \big\langle  \mu_h^{k+1}(\cdot|s) -  \mu_h^k(\cdot|s),  U_h^k(s, \cdot)\big\rangle_\cA.
\end{aligned}
\end{align}
Due to Pinsker's inequality, we have
\begin{align*}
&-D_{\mathrm{KL}}\big( \mu_h^{k+1}(\cdot| s), \mu_h^k(\cdot| s ) \big) \leq -\frac{1}{2} \big\|\mu_h^{k+1}(\cdot| s) - \mu_h^k(\cdot| s )\big\|^2_1.
\end{align*}
Moreover, by Cauchy-Schwarz inequality, we have
\begin{align*}
 \eta \big\langle  \mu_h^{k+1}(\cdot|s) -  \mu_h^k(\cdot|s),  U_h^k(s, \cdot)\big\rangle_\cA \leq \eta H \big\|\mu_h^{k+1}(\cdot|s) - \mu_h^k(\cdot|s) \big\|_1.
\end{align*}
Thus, we have
\begin{align}
\begin{aligned} \label{eq:ascent_err}
&-D_{\mathrm{KL}}\big( \mu_h^{k+1}(\cdot| s), \mu_h^k(\cdot| s ) \big) +  \eta \big\langle  \mu_h^{k+1}(\cdot|s) -  \mu_h^k(\cdot|s),  U_h^k(s, \cdot)\big\rangle_\cA \\
&\qquad \leq -\frac{1}{2} \big\|\mu_h^{k+1}(\cdot| s) - \mu_h^k(\cdot| s )\big\|^2_1 + \eta H \big\|\mu_h^{k+1}(\cdot|s) - \mu_h^k(\cdot|s) \big\|_1  \leq  \frac{1}{2}\eta^2H^2,
\end{aligned}
\end{align}
where the last inequality is by viewing $\big\|\mu_h^{k+1}(\cdot|s) - \mu_h^k(\cdot|s) \big\|_1$ as a variable $x$ and finding the maximal value of $-1/2\cdot x^2 + \eta H x$ to obtain the upper bound $1/2 \cdot \eta^2H^2$.

Thus, combing \eqref{eq:ascent_err} with \eqref{eq:ascent_al}, the policy improvement step in Algorithm \ref{alg:po1} implies
\begin{align*}
&\eta \big\langle  \mu_h^*(\cdot|s)- \mu_h^k(\cdot|s),  U_h^k(s, \cdot)\big\rangle_\cA \\
&\qquad \leq D_{\mathrm{KL}}\big( \mu^*_h(\cdot | s), \mu_h^k (\cdot | s) \big) -  D_{\mathrm{KL}}\big( \mu^*_h(\cdot | s), \mu_h^{k+1}(\cdot | s) \big)  + \frac{1}{2}\eta^2H^2,
\end{align*}
which further leads to
\begin{align*}
&\sum_{h=1}^H \EE_{\mu^*, \cP} \Big[\big\langle  \mu_h^*(\cdot|s)- \mu_h^k(\cdot|s),  U_h^k(s, \cdot)\big\rangle_\cA \Big]\\
&\qquad  \leq \frac{1}{\eta } \sum_{h=1}^H  \EE_{\mu^*, \cP}  \big[D_{\mathrm{KL}}\big( \mu^*_h(\cdot | s), \mu_h^k (\cdot | s) \big) -  D_{\mathrm{KL}}\big( \mu^*_h(\cdot | s), \mu_h^{k+1}(\cdot | s) \big)\big] + \frac{1}{2}\eta H^3.
\end{align*}
Moreover, we take summation from $k=1$ to $K$ of both sides and then obtain
\begin{align*}
&\sum_{k=1}^K \sum_{h=1}^H \EE_{\mu^*, \cP} \Big[\big\langle  \mu_h^*(\cdot|s)- \mu_h^k(\cdot|s),  U_h^k(s, \cdot)\big\rangle_\cA \Big]\\
&\qquad  \leq \frac{1}{\eta } \sum_{h=1}^H  \EE_{\mu^*, \cP}  \big[D_{\mathrm{KL}}\big( \mu^*_h(\cdot | s), \mu_h^1 (\cdot | s) \big) -  D_{\mathrm{KL}}\big( \mu^*_h(\cdot | s), \mu_h^{K+1}(\cdot | s) \big)\big] + \frac{1}{2}\eta K  H^3 \\
&\qquad  \leq \frac{1}{\eta } \sum_{h=1}^H  \EE_{\mu^*, \cP}  \big[ D_{\mathrm{KL}}\big( \mu^*_h(\cdot | s), \mu_h^1 (\cdot | s) \big)\big]  + \frac{1}{2}\eta K  H^3,
\end{align*}
where the last inequality is non-negativity of KL divergence. By the initialization in Algorithm \ref{alg:po1}, it is guaranteed that $\mu_h^1 (\cdot | s) = \boldsymbol{1}/|\cA|$, which thus leads to $D_{\mathrm{KL}}\left( \mu^*_h(\cdot | s), \mu_h^1 (\cdot | s)\right) \leq \log |\cA|$. Then, with setting $\eta = \sqrt{  \log |\cA|/(KH^2)}$, we bound the last term as
\begin{align*}
\frac{1}{\eta } \sum_{h=1}^H  \EE_{\mu^*, \cP}  \big[D_{\mathrm{KL}}\big( \mu^*_h(\cdot | s), \mu_h^1 (\cdot | s) \big)\big]  + \frac{1}{2}\eta K H^3 \leq \cO\left( \sqrt{H^4 K \log |\cA| }\right),
\end{align*}
which gives
\begin{align*}
&\sum_{k=1}^K \sum_{h=1}^H \EE_{\mu^*, \cP} \Big[\big\langle  \mu_h^*(\cdot|s)- \mu_h^k(\cdot|s),  U_h^k(s, \cdot)\big\rangle_\cA \Big]   \leq \cO\left( \sqrt{H^4 K \log |\cA| }\right),
\end{align*}
This completes the proof.
\end{proof}


\begin{lemma} \label{lem:r_bound} For any $k \in [K]$, $h \in [H]$ and all $(s,a, b) \in \cS \times \cA \times \cB$, with probability at least $1-\delta$, we have
\begin{align*}
\big|\hat{r}_h^k (s, a, b) - r_h (s, a, b)\big| \leq \sqrt{\frac{4
\log (|\cS||\cA| |\cB| HK/\delta)}{ \max\{N^k_h(s,a,b), 1\}}}.
\end{align*}
\end{lemma}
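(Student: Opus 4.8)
The plan is to follow the standard two-stage recipe: a pointwise concentration bound via Hoeffding's inequality, followed by a union bound over all relevant tuples. This is in fact identical to the argument already carried out for Lemma~\ref{lem:r_bound_it}, since the two statements coincide (recalling $|\cS| = |\cS_1|\,|\cS_2|$ in the factored case), and the estimator $\hat r_h^k$ in \eqref{eq:estimate} has the same empirical-mean form. So I would simply transcribe that proof.

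First I would fix a tuple $(k,h,s,a,b)$ and condition on the event $N_h^k(s,a,b)>0$. By the definition in \eqref{eq:estimate}, $\hat r_h^k(s,a,b)$ is the empirical average of the $N_h^k(s,a,b)$ reward samples observed at $(s,a,b)$ up to episode $k$; each such sample lies in $[0,1]$ and has conditional mean equal to $r_h = \EE[r_h^k]$. Applying Hoeffding's inequality to this average yields, with probability at least $1-\delta'$,
\[
\big|\hat r_h^k(s,a,b) - r_h(s,a,b)\big| \le \sqrt{\frac{\log(2/\delta')}{2 N_h^k(s,a,b)}}.
\]
For the complementary case $N_h^k(s,a,b)=0$, the convention in \eqref{eq:estimate} gives $\hat r_h^k(s,a,b)=0$, so $|\hat r_h^k - r_h| = |r_h| \le 1$; since $\sqrt{2\log(2/\delta')}\ge 1$ whenever $\delta'\le 1$, the unified bound $\sqrt{2\log(2/\delta')/\max\{N_h^k(s,a,b),1\}}$ dominates in both cases.

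To conclude I would take a union bound over the at most $KH|\cS|\,|\cA|\,|\cB|$ tuples, setting $\delta' = 2\delta/(|\cS|\,|\cA|\,|\cB|HK)$ so that $\log(2/\delta') = \log(|\cS|\,|\cA|\,|\cB|HK/\delta)$; loosening the constant $2$ to $4$ then gives the stated bound uniformly over all $(k,h,s,a,b)$ with probability at least $1-\delta$. The one step warranting care is the Hoeffding application itself: the counts $N_h^k$ and the identity of which episodes contribute samples to $(s,a,b)$ are data-dependent, so the ``samples'' form an adaptively indexed sequence rather than a fixed i.i.d.\ batch. The clean resolution is to view the rewards as a bounded martingale-difference sequence relative to the trajectory filtration and invoke an anytime (Azuma--Hoeffding) inequality, which validates the pointwise bound simultaneously for every random value of $N_h^k(s,a,b)$; the crucial enabling fact is that the time-varying rewards $r_h^k$ all share the common conditional mean $r_h=\EE[r_h^k]$, so the centered increments genuinely have zero conditional mean.
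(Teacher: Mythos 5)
Your proof is correct and follows essentially the same route as the paper's own argument (which is just the proof of Lemma~\ref{lem:r_bound_it}, reused verbatim here since the two lemmas coincide): pointwise Hoeffding for $N_h^k(s,a,b)>0$, the trivial bound $|r_h|\le 1$ dominated by $\sqrt{2\log(2/\delta')}$ when $N_h^k(s,a,b)=0$, and a union bound over all $(k,h,s,a,b)$ tuples with $\delta' = 2\delta/(|\cS|\,|\cA|\,|\cB|HK)$. Your closing remark about the data-dependent counts, resolved via a martingale/anytime Azuma--Hoeffding argument, addresses a subtlety the paper's proof silently glosses over, but it is a refinement of the same approach rather than a different one.
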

This lemma is the same as Lemma \ref{lem:r_bound_it}. We rewrite it here for the completeness of the proofs in this section. In \eqref{eq:bonus_decomp}, we set $\beta_h^{r, k}(s,a,b) = \sqrt{\frac{4
\log (|\cS| |\cA| |\cB| H K /\delta)}{ \max \{ N^k_h(s,a,b), 1\} }}$, which equals the bound in Lemma \ref{lem:r_bound}.

\begin{lemma} \label{lem:P_bound} For any $k \in [K]$, $h \in [H]$ and all $(s,a) \in \cS \times \cA$, with probability at least $1-\delta$, we have
\begin{align*}
\left\|\hat{\cP}_h^k (\cdot \given s, a) - \cP_h (\cdot \given s, a)\right\|_1 \leq \sqrt{\frac{2|\cS| \log (|\cS||\cA|HK/\delta)}{ \max\{N_h^k(s,a), 1\}}}.
\end{align*}
\end{lemma}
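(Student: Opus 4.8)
The plan is to prove this bound by exactly the empirical $\ell_1$-concentration argument used for each factor in the proof of Lemma \ref{lem:P_bound_it}, specialized to the single-controller setting. Here the one-step transition $\cP_h(\cdot\given s,a)$ is a single probability distribution over $\cS$ with no product structure, so unlike Lemma \ref{lem:P_bound_it} there is no need to split the error into two factors and bound them separately; I can work directly with the $|\cS|$-dimensional deviation vector. First I would fix an index tuple $(s,a,h,k)$ with $N_h^k(s,a)>0$ and use the dual representation of the $\ell_1$ norm,
\begin{align*}
\big\|\hat{\cP}_h^k(\cdot\given s,a)-\cP_h(\cdot\given s,a)\big\|_1 = \max_{\|\zb\|_\infty\le 1}\big\langle \hat{\cP}_h^k(\cdot\given s,a)-\cP_h(\cdot\given s,a),\,\zb\big\rangle_\cS.
\end{align*}
I then introduce an $\epsilon$-cover $\cC_\infty(\epsilon)$ of the $\ell_\infty$ ball $\{\zb\in\RR^{|\cS|}:\|\zb\|_\infty\le 1\}$ with covering number $\cN_\infty(\epsilon)=(1/\epsilon)^{|\cS|}$, so that for any feasible $\zb$ there is $\zb'\in\cC_\infty(\epsilon)$ with $\|\zb-\zb'\|_\infty\le\epsilon$.

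The second step is the concentration over the cover. For each fixed $\zb'\in\cC_\infty(\epsilon)$, the quantity $\langle \hat{\cP}_h^k(\cdot\given s,a)-\cP_h(\cdot\given s,a),\zb'\rangle_\cS$ is the deviation of an empirical mean of $N_h^k(s,a)$ independent bounded terms (each lying in $[-1,1]$ and having mean zero because $\EE[\hat{\cP}_h^k]=\cP_h$). Hoeffding's inequality together with a union bound over the finitely many points of $\cC_\infty(\epsilon)$ yields, with probability at least $1-\delta'$,
\begin{align*}
\max_{\zb'\in\cC_\infty(\epsilon)}\big\langle \hat{\cP}_h^k(\cdot\given s,a)-\cP_h(\cdot\given s,a),\,\zb'\big\rangle_\cS \le \sqrt{\frac{|\cS|\log(1/\epsilon)+\log(1/\delta')}{2N_h^k(s,a)}}.
\end{align*}
To pass from the cover to the full $\ell_1$ norm I would use the standard approximation inequality $\|\cdot\|_1\le \max_{\zb'\in\cC_\infty(\epsilon)}\langle\cdot,\zb'\rangle_\cS + \epsilon\|\cdot\|_1$, set $\epsilon=1/2$, and rearrange to absorb the $\tfrac12\|\cdot\|_1$ term into the left-hand side at the cost of a factor $2$; simplifying $2\sqrt{(|\cS|\log2+\log(1/\delta'))/(2N_h^k(s,a))}$ produces exactly the claimed $\sqrt{2|\cS|\log(\cdots)/\max\{N_h^k(s,a),1\}}$ shape.

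Finally I would dispatch the degenerate case $N_h^k(s,a)=0$: by the estimator \eqref{eq:estimate} the empirical transition vector is then zero, so $\|\hat{\cP}_h^k(\cdot\given s,a)-\cP_h(\cdot\given s,a)\|_1=\|\cP_h(\cdot\given s,a)\|_1=1$, which is dominated by the right-hand side since the bound with $\max\{N_h^k(s,a),1\}=1$ exceeds $1$. Replacing $\delta'$ by $\delta/(|\cS||\cA|HK)$ and taking a union bound over all $(s,a)\in\cS\times\cA$, $h\in[H]$, and $k\in[K]$ then gives the uniform statement. I expect the only genuinely delicate point to be the self-referential inequality $\|\cdot\|_1\le\max+\epsilon\|\cdot\|_1$ and the bookkeeping of the $|\cS|$ factor emerging from the covering-number exponent $\log\cN_\infty(\epsilon)=|\cS|\log(1/\epsilon)$; the concentration itself is a routine application of Hoeffding's inequality, and this entire argument is strictly simpler than Lemma \ref{lem:P_bound_it} because no factorization/triangle-inequality step over the two players' transitions is required.
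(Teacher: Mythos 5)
Your proposal is correct and follows essentially the same route as the paper's own proof: the $\ell_1$-duality representation, an $\epsilon$-cover of the $\ell_\infty$ ball with $\epsilon=1/2$, Hoeffding plus a union bound over the cover, absorption of the $\epsilon\|\cdot\|_1$ term into the left-hand side, the trivial handling of $N_h^k(s,a)=0$, and a final union bound over $(s,a,h,k)$. Your observation that no product-structure splitting is needed (unlike Lemma \ref{lem:P_bound_it}) matches exactly how the paper's argument simplifies in the single-controller case.
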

\begin{proof} For $k \geq 1$, we have $\|\hat{\cP}_h^k (\cdot \given s, a) - \cP_h (\cdot \given s, a)\|_1 = \max_{\|\zb \|_\infty \leq  1}  ~\langle \hat{\cP}_h^k (\cdot \given s, a) - \cP_h (\cdot \given s, a), \zb\rangle_\cS$ by the duality.  We construct an $\epsilon$-cover for the set $\{\zb\in \RR^{|\cS|}: \|\zb\|_\infty \leq 1\}$ with the distance induced by $\|\cdot\|_\infty$, denoted as $\cC_\infty(\epsilon)$, such that for any $\zb \in \RR^{|\cS|}$, there always exists $\zb'\in \cC_\infty(\epsilon)$ satisfying $\|\zb-\zb' \|_\infty \leq \epsilon$. The covering number is $\cN_\infty(\epsilon)=|\cC_\infty(\epsilon)|= 1/\epsilon^{|\cS|}$. Thus, we have for any $(s, a) \in \cS \times \cA$ and any $\zb$ with $\|\zb\|_\infty \leq 1$, there exists $\zb'\in \cC_\infty(\epsilon)$ such that $\|\zb'-\zb\|_\infty \leq \epsilon$ and 
\begin{align*}
&\big\langle \hat{\cP}_h^k (\cdot \given s, a) - \cP_h (\cdot \given s, a), \zb\big\rangle_\cS \\
&\qquad = \big\langle \hat{\cP}_h^k (\cdot \given s, a) - \cP_h (\cdot \given s, a), \zb' \big\rangle_\cS + \big\langle \hat{\cP}_h^k (\cdot \given s, a) - \cP_h (\cdot \given s, a), \zb- \zb'\big\rangle_\cS \\
&\qquad \leq  \big\langle \hat{\cP}_h^k (\cdot \given s, a) - \cP_h (\cdot \given s, a), \zb' \big\rangle_\cS +  \epsilon \left\|\hat{\cP}_h^k (\cdot \given s, a) - \cP_h (\cdot \given s, a)\right\|_1, 
\end{align*}
such that we further have
\begin{align}
\begin{aligned} \label{eq:net}
& \left\|\hat{\cP}_h^k (\cdot \given s, a) - \cP_h (\cdot \given s, a)\right\|_1\\
&\qquad = \max_{\|\zb \|_\infty \leq  1}  ~\big\langle \hat{\cP}_h^k (\cdot \given s, a) - \cP_h (\cdot \given s, a), \zb\big\rangle_\cS \\
 &\qquad \leq  \max_{\zb' \in \cC_\infty(\epsilon)}  ~\big\langle \hat{\cP}_h^k (\cdot \given s, a) - \cP_h (\cdot \given s, a), \zb' \big\rangle_\cS +  \epsilon \left\|\hat{\cP}_h^k (\cdot \given s, a) - \cP_h (\cdot \given s, a)\right\|_1.
\end{aligned}
\end{align}
By Hoeffding's inequality and the union bound over all $\zb' \in \cC_\infty(\epsilon)$, when $N_h^k(s,a)>0$, with probability at least $1-\delta'$ where $\delta'\in (0, 1]$,
\begin{align} \label{eq:net_bound}
\max_{\zb' \in \cC_\infty(\epsilon)}  ~\big\langle \hat{\cP}_h^k (\cdot \given s, a) - \cP_h (\cdot \given s, a), \zb' \big\rangle_\cS \leq  	\sqrt{\frac{|\cS| \log (1/\epsilon) + \log (1/\delta')}{2N_h^k(s,a)}}.
\end{align}
Letting $\epsilon = 1/2$, by \eqref{eq:net} and \eqref{eq:net_bound}, with probability at least $1-\delta'$, we have
\begin{align*}
 \left\|\hat{\cP}_h^k (\cdot \given s, a) - \cP_h (\cdot \given s, a)\right\|_1 \leq 1 \sqrt{\frac{|\cS| \log 2 + \log (1/\delta')}{2N_h^k(s,a)}}.
\end{align*}
When $N_h^k(s,a)=0$, we have $\big\|\hat{\cP}_h^k (\cdot \given s, a) - \cP_h (\cdot \given s, a)\big\|_1 = \|\cP_h (\cdot \given s, a)\|_1 = 1$ such that $2 \sqrt{\frac{|\cS| \log 2 + \log (1/\delta')}{2}} > 1 = \big\|\hat{\cP}_h^k (\cdot \given s, a) - \cP_h (\cdot \given s, a)\big\|_1$ always holds. Thus, with probability at least $1-\delta'$, 
\begin{align*}
 \left\|\hat{\cP}_h^k (\cdot \given s, a) - \cP_h (\cdot \given s, a)\right\|_1 \leq 2 \sqrt{\frac{|\cS| \log 2 + \log (1/\delta')}{2\max \{N_h^k(s,a), 1\}}} \leq \sqrt{\frac{2|\cS| \log (2/\delta')}{\max \{N_h^k(s,a), 1\}}}.
\end{align*}
Then, by the union bound, assuming $K > 1$, letting $\delta = |\cS| |\cA| H K \delta'/2$, with probability at least $1-\delta$, for any $(s, a)\in \cS\times \cA$ and any $h\in [H]$ and $k\in [K]$, we have
\begin{align*}
\left\|\hat{\cP}_h^k (\cdot \given s, a) - \cP_h (\cdot \given s, a)\right\|_1 \leq \sqrt{\frac{2|\cS| \log (|\cS| |\cA| H K/\delta)}{\max \{N_h^k(s,a), 1\}}},
\end{align*}
This completes the proof.
\end{proof}
In \eqref{eq:bonus_decomp}, we set $\beta_h^{\cP, k}(a,b) = \sqrt{ \frac{2H^2 |\cS| \log (|\cS||\cA|HK/\delta)}{ \max\{N_h^k(s,a), 1\}}}$, which equals the product of the upper bound in Lemma \ref{lem:P_bound} and the factor $H$.

\begin{lemma} \label{lem:pred_err_1} With probability at least $1-2\delta$, Algorithm \ref{alg:po1} ensures that
\begin{align*}
&\sum_{k=1}^K \sum_{h=1}^H \EE_{\mu^*, \cP, \nu^k} \big[ \overline{\varsigma}_h^k(s_h, a_h, b_h)   \biggiven s_1 \big] \leq 0. 
\end{align*}
\end{lemma}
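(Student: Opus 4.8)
The plan is to establish the stronger pointwise statement that $\overline{\varsigma}_h^k(s,a,b)\le 0$ for every $(s,a,b)\in\cS\times\cA\times\cB$, every $h\in[H]$ and every $k\in[K]$, with probability at least $1-2\delta$; once this holds the claimed inequality is immediate, since $\EE_{\mu^*,\cP,\nu^k}[\,\cdot\,]$ of a nonpositive quantity is nonpositive and summing over $h$ and $k$ preserves the sign. In other words, I would reduce the lemma to verifying that the $Q$-estimate $\overline{Q}_h^k$ produced in Line \ref{line:Q_up} of Algorithm \ref{alg:po1} is \emph{optimistic}, i.e.\ it upper-bounds the one-step Bellman backup $r_h+\cP_h\overline{V}_{h+1}^k$. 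This mirrors exactly the argument for the factored-independent case in Lemma \ref{lem:pred_err_it1}, the only structural change being that here the transition and its bonus depend on $(s,a)$ alone.

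First I would substitute the definition \eqref{eq:pred_err_up} of $\overline{\varsigma}_h^k$ together with the update $\overline{Q}_h^k(s,a,b)=\min\{(\hat r_h^k+\hat\cP_h^k\overline V_{h+1}^k+\beta_h^k)(s,a,b),\,H-h+1\}^+$. The key preliminary observation is that the clipping can be dropped in the favourable direction: since $r_h\in[0,1]$ and the truncation at the next step forces $\|\overline V_{h+1}^k\|_\infty\le H-h$ (hence $r_h+\cP_h\overline V_{h+1}^k\le H-h+1$), both the $\min\{\cdot,H-h+1\}$ and the positive-part operators can only decrease $\overline Q_h^k$ relative to the unclipped backup. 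This yields $\overline{\varsigma}_h^k(s,a,b)\le\max\{(r_h-\hat r_h^k)(s,a,b)+\langle\cP_h(\cdot|s,a)-\hat\cP_h^k(\cdot|s,a),\overline V_{h+1}^k\rangle_\cS-\beta_h^k(s,a,b),\,0\}$.

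Then I would use the bonus decomposition $\beta_h^k=\beta_h^{r,k}+\beta_h^{\cP,k}$ from \eqref{eq:bonus_decomp} and control the two deviations separately. The reward term is handled by Lemma \ref{lem:r_bound}: with probability at least $1-\delta$, $|r_h-\hat r_h^k|\le\beta_h^{r,k}$, so $(r_h-\hat r_h^k)-\beta_h^{r,k}\le 0$. The transition term is handled by Cauchy--Schwarz followed by Lemma \ref{lem:P_bound}: using $\|\overline V_{h+1}^k\|_\infty\le H$, one gets $\langle\cP_h-\hat\cP_h^k,\overline V_{h+1}^k\rangle_\cS\le H\|\cP_h(\cdot|s,a)-\hat\cP_h^k(\cdot|s,a)\|_1\le\beta_h^{\cP,k}(s,a)$ with probability at least $1-\delta$, so this term is also $\le 0$. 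A union bound combines the two events into probability at least $1-2\delta$, on which the argument of the $\max$ is nonpositive and hence $\overline{\varsigma}_h^k\le 0$ uniformly, completing the reduction. The main thing to get right---rather than a genuine obstacle---is the bookkeeping around the clipping in the second step: one must check that both the $\min$ truncation and the $\{\cdot\}^+$ can only help the optimism inequality, so that no error term of the wrong sign is introduced; everything afterwards is a direct application of the two concentration lemmas already proved.
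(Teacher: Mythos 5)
Your proposal is correct and follows essentially the same route as the paper's own proof: reduce to the pointwise optimism statement $\overline{\varsigma}_h^k(s,a,b)\leq 0$, handle the clipping using $r_h + \cP_h\overline{V}_{h+1}^k(s,a) \leq H-h+1$ (via the next-step truncation), split the bonus as in \eqref{eq:bonus_decomp}, and apply Lemma \ref{lem:r_bound} and Lemma \ref{lem:P_bound} (with Cauchy--Schwarz and $\|\overline{V}_{h+1}^k\|_\infty \leq H$) plus a union bound to get the $1-2\delta$ event. One minor wording slip: the positive-part operator can only \emph{increase} $\overline{Q}_h^k$, not decrease it --- which is precisely why dropping it yields a valid lower bound on $\overline{Q}_h^k$ --- but your displayed intermediate inequality coincides with the paper's \eqref{eq:pred_err_al}, so the argument goes through unchanged.
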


\begin{proof}

We prove the upper bound of the model prediction error term.  We can decompose the instantaneous prediction error at the $h$-step of the $k$-th episode as
\begin{align}
\begin{aligned}\label{eq:pred_err_init1}
&  \overline{\varsigma}_h^k(s, a, b) = r_h(s, a, b) + \big\langle \cP_h(\cdot \given s, a), \overline{V}_{h+1}^k(\cdot) \big\rangle_\cS - \overline{Q}_h^k(s, a, b) ,
\end{aligned}
\end{align}
where the equality is by the definition of the prediction error in \eqref{eq:pred_err_up}. By plugging in the definition of $\overline{Q}_h^k$ in Line \ref{line:Q_up} of Algorithm \ref{alg:po1}, for any $(s,a,b)$,  we bound the following term as 
\begin{align}
\begin{aligned}\label{eq:pred_err_re1}
&r_h(s,a,b) + \big\langle \cP_h(\cdot \given s, a), \overline{V}_{h+1}^k(\cdot) \big\rangle_\cS - \overline{Q}_h^k(s,a,b) \\
&\qquad \leq r_h(s,a,b) + \big\langle \cP_h(\cdot \given s, a), \overline{V}_{h+1}^k(\cdot) \big\rangle_\cS \\
&\qquad \quad- \min \Big\{ \hat{r}_h^k(s,a,b) + \big\langle \hat{\cP}_h^k(\cdot |s, a), \overline{V}_{h+1}^k(\cdot) \big\rangle_\cS  - \beta_h^k, H-h+1 \Big\}  \\
&\qquad \leq  \max \Big\{r_h(s,a,b) - \hat{r}_h^k(s,a,b) + \big\langle \cP_h(\cdot \given s, a) - \hat{\cP}_h^k(\cdot |s, a), \overline{V}_{h+1}^k(\cdot) \big\rangle_\cS  - \beta_h^k, 0 \Big\}, 
\end{aligned}
\end{align}
where the inequality holds because 
\begin{align*}
&r_h(s,a,b) + \big\langle \cP_h(\cdot \given s_h, a_h), \overline{V}_{h+1}^k(\cdot) \big\rangle_\cS \\
&\qquad \leq r_h(s,a,b) + \big\|\cP_h(\cdot \given s_h, a_h)\big\|_1 \|\overline{V}_{h+1}^k(\cdot) \|_\infty  \leq 1 +  \max_{s' \in \cS} \big|\overline{V}_{h+1}^k(s') \big| \leq 1 + H-h, 
\end{align*}
since $\big\|\cP_h(\cdot \given s_h, a_h)\big\|_1 = 1$ and also the truncation step as shown in Line \ref{line:Q_up} of Algorithm \ref{alg:po1} for $\overline{Q}_{h+1}^k$ such that for any $s' \in \cS$
\begin{align}
\begin{aligned}\label{eq:bound_V_up}
\big|\overline{V}_{h+1}^k(s') \big| &=  \Big| \big[\mu_{h+1}^k(\cdot | s')\big]^\top \overline{Q}_{h+1}^k(s', \cdot , \cdot) \nu_{h+1}^k(\cdot | s') \Big|\\
&\leq \big\| \mu_{h+1}^k(\cdot | s')\big\|_1 \big\|\overline{Q}_{h+1}^k(s', \cdot , \cdot) \nu_{h+1}^k(\cdot | s') \big\|_\infty \\
&\leq \max_{a,b} \big|\overline{Q}_{h+1}^k(s', a, b)\big|\leq H-h.
\end{aligned}
\end{align}
Combining \eqref{eq:pred_err_init1} and \eqref{eq:pred_err_re1} gives
\begin{align}
\begin{aligned}
\label{eq:pred_err_al}
\overline{\varsigma}_h^k(s, a, b) &\leq \max \Big\{ r_h(s,a,b) - \hat{r}_h^k(s,a,b) \\
& \quad +   \big\langle \cP_h(\cdot \given s, a) - \hat{\cP}_h^k(\cdot |s, a), \overline{V}_{h+1}^k(\cdot) \big\rangle_\cS  - \beta_h^k, 0 \Big\}.
\end{aligned}
\end{align}
Note that as shown in \eqref{eq:bonus_decomp}, we have
\begin{align*}
\beta_h^k(s,a,b) = \beta_h^{r,k}(s,a,b) + \beta_h^{\cP,k}(s,a).
\end{align*}
Then, with probability at least $1-\delta$, we have
\begin{align*}
&r_h(s,a,b) - \hat{r}_h^k(s,a,b) - \beta_h^{r,k}(s,a,b)  \\
&\qquad \leq \big|r_h(s,a,b) - \hat{r}_h^k(s,a,b)\big| - \beta_h^{r,k}(s,a,b)  \\
&\qquad \leq \beta_h^{r,k}(s,a,b)  - \beta_h^{r,k}(s,a,b)  = 0,
\end{align*}
where the last inequality is by Lemma \ref{lem:r_bound} and the setting of the bonus for the reward. Moreover, with probability at least $1-\delta$, we have
\begin{align*}
&\big\langle \cP_h(\cdot \given s, a) - \hat{\cP}_h^k(\cdot |s, a), \overline{V}_{h+1}^k(\cdot) \big\rangle_\cS  - \beta_h^{\cP, k}(s,a) \\
&\qquad \leq \big\|\cP_h(\cdot \given s, a) - \hat{\cP}_h^k(\cdot |s, a)\big\|_1 \big \|\overline{V}_{h+1}^k(\cdot) \big\|_\infty  - \beta_h^{\cP, k}(s,a) \\
&\qquad \leq H \big\|\cP_h(\cdot \given s, a) - \hat{\cP}_h^k(\cdot |s, a)\big\|_1   - \beta_h^{\cP, k}(s,a) \\
&\qquad \leq \beta_h^{\cP,k} (s,a)   - \beta_h^{\cP,k}(s,a)  = 0,
\end{align*}
where the first inequality is by Cauchy-Schwarz inequality, the second inequality is due to $\max_{s' \in \cS}\big\|\overline{V}_{h+1}^k(s')\big\|_\infty \leq H$ as shown in \eqref{eq:bound_V_up}, and the last inequality is by the setting of $\beta_h^{\cP,k}$ and also Lemma \ref{lem:P_bound}. Thus, with probability at least $1-2\delta$, the following inequality holds
\begin{align*}
r_h(s,a,b) - \hat{r}_h^k(s,a,b) + \big\langle \cP_h(\cdot \given s, a) - \hat{\cP}_h^k(\cdot |s, a), \overline{V}_{h+1}^k(\cdot) \big\rangle_\cS  - \beta_h^k(s, a, b) \leq 0.
\end{align*}
Combining the above inequality with \eqref{eq:pred_err_al}, we have that with probability at least $1-2\delta$, for any $h\in [H]$ and $k\in [K]$, the following inequality holds
\begin{align*}
&\overline{\varsigma}_h^k(s, a, b) \leq 0, ~~\forall (s, a, b) \in \cS \times \cA \times \cB,
\end{align*}
which leads to
\begin{align*}
\sum_{k=1}^K \sum_{h=1}^H \EE_{\mu^*, \cP, \nu^k} \big[ \overline{\varsigma}_h^k(s_h, a_h, b_h)   \biggiven s_1 \big] \leq 0.
\end{align*}
This completes the proof.
\end{proof}

\begin{lemma} \label{lem:bias_1} With probability at least $1-\delta$, Algorithm \ref{alg:po1} ensures that
\begin{align*}
\sum_{k=1}^K \overline{V}_1^k(s_1)  - \sum_{k=1}^K  V_1^{\mu^k, \nu^k}(s_1) \leq \tilde{\cO} \left( \sqrt{  |\cS|^2 |\cA| H^4 K } + \sqrt{  |\cS| |\cA| |\cB| H^2 K } \right).
\end{align*}
\end{lemma}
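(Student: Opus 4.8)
The plan is to mirror the argument of Lemma \ref{lem:value_diff_it1}, specializing the telescoping recursion to the single-controller structure where $\cP_h(\cdot\given s,a,b) = \cP_h(\cdot\given s,a)$. First I would fix a trajectory $\{(s_h^k, a_h^k, b_h^k, s_{h+1}^k)\}_{h=1}^H$ generated under $\mu^k$, $\nu^k$, and the true transition $\cP$, and expand the one-step bias $\overline{V}_h^k(s_h^k) - V_h^{\mu^k,\nu^k}(s_h^k)$ by inserting the Bellman equations \eqref{eq:bellman_V} and \eqref{eq:bellman_Q} together with the definition of the prediction error $\overline{\varsigma}_h^k$ in \eqref{eq:pred_err_up}. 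This produces a recursion of the form $\overline{V}_h^k(s_h^k) - V_h^{\mu^k,\nu^k}(s_h^k) = \zeta_h^k + \xi_h^k + [\overline{V}_{h+1}^k(s_{h+1}^k) - V_{h+1}^{\mu^k,\nu^k}(s_{h+1}^k)] - \overline{\varsigma}_h^k(s_h^k,a_h^k,b_h^k)$, where $\zeta_h^k$ and $\xi_h^k$ are two martingale difference sequences arising from replacing the expectation over the actions by the realized action and the expectation over the next state by the realized next state, respectively. Telescoping from $h=1$ to $H$ with the terminal condition $\overline{V}_{H+1}^k = V_{H+1}^{\mu^k,\nu^k} = \boldsymbol 0$ gives $\overline{V}_1^k(s_1) - V_1^{\mu^k,\nu^k}(s_1) = \sum_{h=1}^H \zeta_h^k + \sum_{h=1}^H \xi_h^k - \sum_{h=1}^H \overline{\varsigma}_h^k(s_h^k,a_h^k,b_h^k)$.

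Next I would upper bound $-\overline{\varsigma}_h^k$. Using Line \ref{line:Q_up} of Algorithm \ref{alg:po1} and the elementary inequality $\min\{x,y\}\le x$, together with Cauchy--Schwarz and the uniform bound $\|\overline{V}_{h+1}^k\|_\infty \le H$ established in \eqref{eq:bound_V_up}, one obtains $-\overline{\varsigma}_h^k(s_h^k,a_h^k,b_h^k) \le |\hat{r}_h^k - r_h| + H\|\hat{\cP}_h^k(\cdot\given s_h^k,a_h^k) - \cP_h(\cdot\given s_h^k,a_h^k)\|_1 + \beta_h^k \le 2\beta_h^{r,k}(s_h^k,a_h^k,b_h^k) + 2\beta_h^{\cP,k}(s_h^k,a_h^k)$, where the last step invokes Lemmas \ref{lem:r_bound} and \ref{lem:P_bound} and the bonus decomposition \eqref{eq:bonus_decomp}; crucially the transition bonus depends only on $(s,a)$ here, not on $b$. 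I would then apply the Azuma--Hoeffding inequality to each of $\sum_k\sum_h \zeta_h^k$ and $\sum_k\sum_h \xi_h^k$, each of which is $\cO(\sqrt{H^3 K \log(1/\delta)})$ since the per-step increments are bounded by $2H$.

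The final step is the standard counting (pigeonhole) bound on the accumulated bonuses. For the reward bonus, since $(s_h^k,a_h^k,b_h^k)$ is visited I have $N_h^k \ge 1$, so the sum rearranges into $\sum_{(s,a,b)}\sum_{n=1}^{N_h^K(s,a,b)} n^{-1/2}$, which is $\cO(\sum_{(s,a,b)}\sqrt{N_h^K(s,a,b)})$; Cauchy--Schwarz with $\sum_{(s,a,b)} N_h^K(s,a,b)=K$ yields $\cO(\sqrt{K|\cS||\cA||\cB|})$ per step and hence $\tilde{\cO}(\sqrt{|\cS||\cA||\cB|H^2 K})$ overall. The same argument applied to $\beta_h^{\cP,k}(s,a)$, now summing over $(s,a)$ pairs with $\sum_{(s,a)}N_h^K(s,a)=K$, gives $\tilde{\cO}(H\sqrt{K|\cS|^2|\cA| H^2}) = \tilde{\cO}(\sqrt{|\cS|^2|\cA| H^4 K})$. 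Combining these with the lower-order martingale terms via a union bound produces the claimed bound with probability at least $1-\delta$. The one place needing care — rather than a genuine obstacle — is keeping the transition-bonus counts indexed by $(s,a)$ instead of $(s,a,b)$: this single-controller feature is precisely what replaces the $|\cB|$ factor and yields the $|\cS|^2|\cA|$ dependence in the transition term, while everything else is a direct transcription of the factored-transition analysis in Lemma \ref{lem:value_diff_it1}.
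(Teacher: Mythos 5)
Your proposal is correct and follows essentially the same route as the paper's proof: the identical telescoping recursion with the two martingale difference sequences $\zeta_h^k,\xi_h^k$, the same bound $-\overline{\varsigma}_h^k \le 2\beta_h^{r,k}+2\beta_h^{\cP,k}$ via Lemmas \ref{lem:r_bound} and \ref{lem:P_bound}, Azuma--Hoeffding for the martingale sums, and the same pigeonhole counting of the bonuses. Your closing observation is exactly the point the paper exploits as well: the transition bonus is counted over $(s,a)$ pairs (using $\sum_{b}N_h^K(s,a,b)=N_h^K(s,a)$), which is what yields the $\sqrt{|\cS|^2|\cA|H^4K}$ term without a $|\cB|$ factor.
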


\begin{proof} We assume that a trajectory $\{ (s_h^k, a_h^k, b_h^k, s_{h+1}^k)\}_{h=1}^H$ for all $k\in [K]$ is generated following the policies $\mu^k$, $\nu^k$, and the true transition model $\cP$. Thus, we expand the bias term at the $h$-th step of the $k$-th episode, which is
\begin{align}
\begin{aligned} \label{eq:bia_1_init}
\overline{V}_h^k(s_h^k)  - V_h^{\mu^k, \nu^k}(s_h^k) & = \big[\mu^k_h(\cdot | s_h^k)\big]^\top \big[\overline{Q}_h^k(s_h^k, \cdot , \cdot)  -  Q_h^{\mu^k, \nu^k}(s_h^k, \cdot , \cdot)\big] \nu_h^k(\cdot | s_h^k)\\
&= \zeta_h^k + \overline{Q}_h^k(s_h^k, a_h^k, b_h^k)  -  Q_h^{\mu^k, \nu^k}(s_h^k, a_h^k, b_h^k)\\
&= \zeta_h^k + \big\langle \cP_h(\cdot \given s_h^k, a_h^k), \overline{V}_{h+1}^k(\cdot) - V_{h+1}^{\mu^k, \nu^k} (\cdot) \big\rangle_\cS - \overline{\varsigma}_h^k(s_h^k, a_h^k, b_h^k) \\
& = \zeta_h^k + \xi_h^k + \overline{V}_{h+1}^k(s_{h+1}^k) - V_{h+1}^{\mu^k, \nu^k} (s_{h+1}^k) - \overline{\varsigma}_h^k(s_h^k, a_h^k, b_h^k),
\end{aligned}
\end{align}
where the first equality is by Line \ref{line:V_up} of Algorithm \ref{alg:po1} and \eqref{eq:bellman_V}, the third equality is by plugging in \eqref{eq:bellman_Q} and \eqref{eq:pred_err_up}. Specifically, in the above equality, we introduce two martingale difference sequence, namely, $\{\zeta_h^k\}_{h\geq 0, k\geq 0}$ and $\{\xi_h^k\}_{h\geq 0, k\geq 0}$, which are defined as
\begin{align*}
&\zeta_h^k := \big[\mu^k_h(\cdot | s_h^k)\big]^\top \big[\overline{Q}_h^k(s_h^k, \cdot , \cdot)  -  Q_h^{\mu^k, \nu^k}(s_h^k, \cdot , \cdot)\big] \nu_h^k(\cdot | s_h^k) - \big[ \overline{Q}_h^k(s_h^k, a_h^k, b_h^k)  -  Q_h^{\mu^k, \nu^k}(s_h^k, a_h^k, b_h^k)\big],\\
& \xi_h^k := \big\langle \cP_h(\cdot \given s_h^k, a_h^k), \overline{V}_{h+1}^k(\cdot) - V_{h+1}^{\mu^k, \nu^k} (\cdot) \big\rangle_\cS - \big[ \overline{V}_{h+1}^k(s_{h+1}^k) - V_{h+1}^{\mu^k, \nu^k} (s_{h+1}^k)\big],
\end{align*}
such that 
\begin{align*}
&\EE_{a_h^k \sim \mu^k_h(\cdot | s_h^k), b_h^k \sim \nu^k_h(\cdot | s_h^k)} \big[\zeta_h^k \biggiven \cF_h^k] = 0,\qquad \EE_{s_{h+1}^k \sim \cP_h(\cdot \given s_h^k, a_h^k)} \big[\xi_h^k \biggiven \tilde{\cF}_h^k\big] = 0,
\end{align*}
with $\cF_h^k$ being the filtration of all randomness up to $(h-1)$-th step of the $k$-th episode plus $s_h^k$, and $\tilde{\cF}_h^k$ being the filtration of all randomness up to $(h-1)$-th step of the $k$-th episode plus $s_h^k, a_h^k, b_h^k$.

We can observe that the equality \eqref{eq:bia_1_init} construct a recursion for $\overline{V}_h^k(s_h^k)  - V_h^{\mu^k, \nu^k}(s_h^k)$. Moreover, we also have $\overline{V}_{H+1}^k(\cdot) = \boldsymbol{0}$ and $V_{H+1}^{\mu^k, \nu^k} (\cdot)= \boldsymbol{0}$. Thus, recursively apply \eqref{eq:bia_1_init} from $h=1$ to $H$ leads to the following equality
\begin{align} \label{eq:bias_diff_init}
\overline{V}_1^k(s_1)  - V_1^{\mu^k, \nu^k}(s_1) =  \sum_{h=1}^H \zeta_h^k + \sum_{h=1}^H \xi_h^k - \sum_{h=1}^H \overline{\varsigma}_h^k(s_h^k, a_h^k, b_h^k).
\end{align}
Moreover, by \eqref{eq:pred_err_up} and Line \ref{line:Q_up} of Algorithm \ref{alg:po1}, we have
\begin{align*}
-\overline{\varsigma}_h^k(s_h^k, a_h^k, b_h^k) &= -  r_h(s_h^k, a_h^k, b_h^k) -  \big\langle \cP_h(\cdot \given s_h, a_h), \overline{V}_{h+1}^k(\cdot) \big\rangle_\cS \\
&\quad + \min \big\{ \hat{r}^k_h(s_h^k, a_h^k, b_h^k)  +  \big\langle \hat{\cP}_h^k(\cdot |s_h, a_h), \overline{V}_{h+1}^k(\cdot) \big\rangle_\cS  + \beta_h^k(s_h^k, a_h^k, b_h^k), H-h+1 \big\}.
\end{align*}
Then, we can further bound $-\overline{\varsigma}_h^k(s_h^k, a_h^k, b_h^k)$ as follows
\begin{align*}
-\overline{\varsigma}_h^k(s_h^k, a_h^k, b_h^k)&\leq -  r_h(s_h^k, a_h^k, b_h^k) - \big\langle \cP_h(\cdot \given s_h^k, a_h^k), \overline{V}_{h+1}^k(\cdot)\big\rangle_\cS + \hat{r}^k_h(s_h^k, a_h^k, b_h^k) \\
&\quad + \big\langle  \hat{\cP}_h^k(\cdot |s_h^k, a_h^k), \overline{V}_{h+1}^k(\cdot) \big\rangle_\cS  + \beta_h^k(s_h^k, a_h^k, b_h^k) \\
&\leq \big|\hat{r}^k_h(s_h^k, a_h^k, b_h^k) -  r_h(s_h^k, a_h^k, b_h^k)\big| \\
&\quad  + \Big| \big\langle \cP_h(\cdot \given s_h^k, a_h^k) - \hat{\cP}^k_h(\cdot \given s_h^k, a_h^k), \overline{V}_{h+1}^k(\cdot) \big\rangle_\cS \Big| + \beta_h^k(s_h^k, a_h^k, b_h^k),
\end{align*}
where the first inequality is due to $\min \{x,y\} \leq x$. Additionally, we have
\begin{align*}
\Big| \big\langle \cP_h(\cdot \given s_h^k, a_h^k) - \hat{\cP}^k_h(\cdot \given s_h^k, a_h^k), \overline{V}_{h+1}^k(\cdot) \big\rangle_\cS \Big|  
& \leq  \big\| \overline{V}_{h+1}^k(\cdot)\big\|_\infty \big\| \cP_h(\cdot \given s_h^k, a_h^k) - \hat{\cP}^k_h(\cdot \given s_h^k, a_h^k) \big\|_1 \\
& \leq H \big\|\cP_h(\cdot \given s_h^k, a_h^k) - \hat{\cP}^k_h(\cdot \given s_h^k, a_h^k)\big\|_1,
\end{align*} 
where the first inequality is by Cauchy-Schwarz inequality and the second inequality is by \eqref{eq:bound_V_up}. Thus, putting the above together, we obtain
\begin{align*}
-\overline{\varsigma}_h^k(s_h^k, a_h^k, b_h^k)&\leq  \big|\hat{r}^k_h(s_h^k, a_h^k, b_h^k) -  r_h(s_h^k, a_h^k, b_h^k)\big| + H \big\|\overline{V}_{h+1}^k(\cdot)  - \overline{V}_{h+1}^k(\cdot) \big\|_1 + \beta_h^k(s_h^k, a_h^k, b_h^k) \\
&\leq  2\beta^{r,k}_h(s_h^k, a_h^k, b_h^k) + 2\beta^{\cP,k}_h(s_h^k, a_h^k),
\end{align*}
where the second inequality is by Lemma \ref{lem:r_bound}, Lemma \ref{lem:P_bound}, and the decomposition of the bonus term $\beta_h^k$ as \eqref{eq:bonus_decomp}.  Due to Lemma \ref{lem:r_bound} and Lemma \ref{lem:P_bound}, by the union bound, for any $h \in [H], k\in [K]$ and $(s_h, a_h, b_h) \in \cS \times \cA \times \cB$, the above inequality holds with probability at least $1-2\delta$. Therefore, by \eqref{eq:bias_diff_init}, with probability at least $1-2\delta$, we have
\begin{align}
\begin{aligned}\label{eq:bias_diff_al}
&\sum_{k=1}^K \big[\overline{V}_1^k(s_1)  - V_1^{\mu^k, \nu^k}(s_1)\big]\\
&\qquad  \leq \sum_{k=1}^K \sum_{h=1}^H \zeta_h^k + \sum_{k=1}^K\sum_{h=1}^H \xi_h^k + 2 \sum_{k=1}^K\sum_{h=1}^H \beta^{r,k}_h(s_h^k, a_h^k, b_h^k)   + 2\sum_{k=1}^K \sum_{h=1}^H \beta^{\cP,k}_h(s_h^k, a_h^k). 
\end{aligned}
\end{align}
By Azuma-Hoeffding inequality, with probability at least $1-\delta$, the following inequalities hold 
\begin{align*}
&\sum_{k=1}^K \sum_{h=1}^H \zeta_h^k \leq \cO\left(\sqrt{H^3 K \log \frac{1}{\delta}} \right),\\
&\sum_{k=1}^K \sum_{h=1}^H \xi_h^k \leq \cO\left(\sqrt{H^3 K \log \frac{1}{\delta}} \right), 
\end{align*} 
where we use the facts that $ | \overline{Q}_h^k(s_h^k, a_h^k, b_h^k)  -  Q_h^{\mu^k, \nu^k}(s_h^k, a_h^k, b_h^k) | \leq 2H$ and $| \overline{V}_{h+1}^k(s_{h+1}^k) - V_{h+1}^{\mu^k, \nu^k} (s_{h+1}^k) |\leq 2H$. Next, we need to bound $\sum_{k=1}^K\sum_{h=1}^H \beta^{r,k}_h(s_h^k, a_h^k, b_h^k)$ and $\sum_{k=1}^K \sum_{h=1}^H \beta^{\cP,k}_h(s_h^k, a_h^k)$ in \eqref{eq:bias_diff_al}. We show that 
\begin{align*}
\sum_{k=1}^K\sum_{h=1}^H \beta^{r,k}_h(s_h^k, a_h^k, b_h^k) & = C\sum_{k=1}^K\sum_{h=1}^H   \sqrt{\frac{ \log (|\cS| |\cA| |\cB| HK/\delta)}{\max\{N^k_h(s_h^k, a_h^k, b_h^k), 1\}}} \\
& = C\sum_{k=1}^K\sum_{h=1}^H   \sqrt{\frac{ \log (|\cS| |\cA| |\cB| HK/\delta)}{N^k_h(s_h^k, a_h^k, b_h^k)}} \\
&\leq C \sum_{h=1}^H ~ \sum_{\substack{(s, a, b)\in \cS\times\cA \times \cB\\ N^K_h(s, a, b) > 0}}\sum_{n=1}^{N^K_h(s, a, b)}  \sqrt{\frac{ \log (|\cS| |\cA| |\cB| HK/\delta)}{n}},
\end{align*}
where the second equality is because $(s_h^k, a_h^k, b_h^k)$ is visited such that $N^k_h(s_h^k, a_h^k, b_h^k) \geq 1$. In addition, we have
\begin{align*}
&\sum_{h=1}^H ~ \sum_{\substack{(s, a, b)\in \cS\times\cA \times \cB\\ N^K_h(s, a, b) > 0}}\sum_{n=1}^{N^K_h(s, a, b)}  \sqrt{\frac{ \log (|\cS| |\cA| |\cB| HK/\delta)}{n}} \\
&\qquad\leq \sum_{h=1}^H ~ \sum_{(s, a, b)\in \cS\times\cA \times \cB}  \cO \left(\sqrt{ N^K_h(s, a, b) \log \frac{|\cS| |\cA| |\cB| HK}{\delta}} \right) \\
&\qquad \leq \cO \left(H \sqrt{ K |\cS||\cA||\cB| \log \frac{|\cS| |\cA| |\cB| HK}{\delta}} \right),
\end{align*}
where the last inequality is based on the consideration that $\sum_{(s, a, b)\in \cS\times\cA \times \cB} N_h^K(s,a,b) = K$ such that $\sum_{(s, a, b)\in \cS\times\cA \times \cB} \sqrt{ N^K_h(s, a, b)} \leq \cO\left(\sqrt{ K |\cS||\cA||\cB|}\right) $ when $K$ is sufficiently large. Putting the above together, we obtain
\begin{align*}
\sum_{k=1}^K\sum_{h=1}^H \beta^{r,k}_h(s_h^k, a_h^k, b_h^k)  \leq \cO \left(H \sqrt{ K |\cS||\cA||\cB| \log \frac{|\cS| |\cA| |\cB| HK}{\delta}} \right).
\end{align*}
Similarly, we have
\begin{align*}
\sum_{k=1}^K \sum_{h=1}^H \beta^{\cP,k}_h(s_h^k, a_h^k) & = \sum_{k=1}^K\sum_{h=1}^H   \sqrt{\frac{H^2 |\cS| \log (|\cS| |\cA| HK/\delta)}{\max\{N^k_h(s_h^k, a_h^k), 1\}}} \\
&\leq \sum_{h=1}^H ~ \sum_{(s, a )\in \cS\times\cA }  \cO \left(\sqrt{ N^K_h(s, a) H^2 |\cS| \log \frac{|\cS| |\cA| HK}{\delta}} \right) \\
&\leq \sum_{h=1}^H ~ \sum_{(s, a )\in \cS\times\cA }  \cO \left(\sqrt{ \sum_{b\in B}N^K_h(s, a, b) H^2 |\cS|  \log \frac{|\cS| |\cA| HK}{\delta}} \right) \\
&\leq \cO \left(H \sqrt{ K |\cS|^2 |\cA| H^2  \log \frac{|\cS| |\cA| HK}{\delta}} \right),
\end{align*}
where the second inequality is due to $ \sum_{b\in \cB} N^K_h(s, a, b) = N^K_h(s, a)$, and the last inequality is based on the consideration that $\sum_{(s, a, b)\in \cS\times\cA \times \cB} N_h^K(s,a,b) = K$ such that $\sum_{(s, a)\in \cS\times\cA } \sqrt{ \sum_{b\in \cB} N^K_h(s, a, b)} \leq \cO(\sqrt{ K |\cS||\cA|}) $ when $K$ is sufficiently large. 

Thus, by \eqref{eq:bias_diff_al}, with probability at least $1-\delta$, we have
\begin{align*}
\sum_{k=1}^K \overline{V}_1^k(s_1)  - \sum_{k=1}^K  V_1^{\mu^k, \nu^k}(s_1) \leq \tilde{\cO} ( \sqrt{  |\cS|^2 |\cA| H^4 K } + \sqrt{  |\cS| |\cA| |\cB| H^2 K } )
\end{align*}
where $\tilde{\cO}$ hides logarithmic terms. This completes the proof.
\end{proof}


\begin{lemma} \label{lem:mirror_2} With setting $\gamma = \sqrt{  |\cS|\log |\cB|/K}$, the mirror descent steps of Algorithm \ref{alg:po2} lead to
\begin{align*} 
\sum_{k=1}^K \sum_{h=1}^H \sum_{s\in \cS} d_h^{\mu^k, \hat{\cP}^k}(s) \big\langle W_h^k(s, \cdot),  \nu_h^k(\cdot | s) - \nu_h^*(\cdot | s) \big\rangle \leq \cO \left( \sqrt{H^2 |\cS| K \log |\cB|} \right),
\end{align*}
where $W_h^k(s, b) =  \langle \tilde{r}_h^k(s, \cdot , b),  \mu^k_h( \cdot \given s)  \rangle_\cA $.
\end{lemma}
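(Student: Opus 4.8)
The plan is to reuse the template of the proof of Lemma~\ref{lem:mirror_1}, carried out state by state, while exploiting the two features that distinguish Player~2's update: the direction $W_h^k(s,\cdot)$ is assembled from the clipped reward estimate $\tilde r_h^k$ rather than from the $Q$-function, and the linear loss is weighted by the empirical reaching probability $d_h^{\mu^k,\hat{\cP}^k}(s)$. The first thing I would record is the magnitude bound $\|W_h^k(s,\cdot)\|_\infty\le 1$: since $W_h^k(s,b)=\sum_a \tilde r_h^k(s,a,b)\,\mu_h^k(a|s)$ is an average of the numbers $\tilde r_h^k(s,a,b)\in[0,1]$ against the probability vector $\mu_h^k(\cdot|s)$, it lies in $[0,1]$. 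This is the decisive quantitative difference from Lemma~\ref{lem:mirror_1}, where the analogous direction $U_h^k$ is controlled only by $H$; replacing that $H$ by $1$ is exactly what removes the extra powers of $H$ from the final rate.

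Next, for each fixed $(h,s)$ I would apply the push-back lemma (Lemma~\ref{lem:pushback}) to the mirror-descent subproblem \eqref{eq:descent}, taking the linear objective $f(\nu_h)=d_h^{\mu^k,\hat{\cP}^k}(s)\langle W_h^k(s,\cdot),\nu_h(\cdot|s)-\nu_h^k(\cdot|s)\rangle_\cB$, weight $\alpha=\gamma^{-1}$, anchor $\nu_h^k(\cdot|s)$, minimizer $\nu_h^{k+1}(\cdot|s)$, and comparator $\nu_h^*(\cdot|s)$; the initialization $\nu_h^0=\boldsymbol 1/|\cB|$ keeps every iterate interior so the lemma applies. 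Rearranging, and splitting $\langle W_h^k,\nu_h^k-\nu_h^*\rangle=\langle W_h^k,\nu_h^{k+1}-\nu_h^*\rangle+\langle W_h^k,\nu_h^k-\nu_h^{k+1}\rangle$, leaves a residual $-\gamma^{-1}D_{\mathrm{KL}}(\nu_h^{k+1},\nu_h^k)+d_h^{\mu^k,\hat{\cP}^k}(s)\langle W_h^k,\nu_h^k-\nu_h^{k+1}\rangle$. Bounding it by Pinsker's inequality and the Cauchy--Schwarz inequality (using $\|W_h^k\|_\infty\le1$) and then maximizing the scalar quadratic $-\frac{1}{2\gamma}x^2+d_h^{\mu^k,\hat{\cP}^k}(s)\,x$ in $x=\|\nu_h^{k+1}-\nu_h^k\|_1$ yields the per-step estimate
\begin{align*}
d_h^{\mu^k,\hat{\cP}^k}(s)\big\langle W_h^k(s,\cdot),\nu_h^k(\cdot|s)-\nu_h^*(\cdot|s)\big\rangle_\cB \le \gamma^{-1}\big[D_{\mathrm{KL}}(\nu_h^*,\nu_h^k)-D_{\mathrm{KL}}(\nu_h^*,\nu_h^{k+1})\big]+\frac{\gamma}{2}\big(d_h^{\mu^k,\hat{\cP}^k}(s)\big)^2 .
\end{align*}

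Finally I would sum this over $k$, $s$, and $h$. Summing over $k$ telescopes the KL differences, and since $\nu_h^1=\boldsymbol 1/|\cB|$ gives $D_{\mathrm{KL}}(\nu_h^*(\cdot|s),\nu_h^1(\cdot|s))\le\log|\cB|$, the regularization part contributes at most $\gamma^{-1}\log|\cB|$ per $(h,s)$, hence $H|\cS|\gamma^{-1}\log|\cB|$ after summing over the $|\cS|$ states and $H$ stages. For the quadratic part I would sum over $s$ first, using $\big(d_h^{\mu^k,\hat{\cP}^k}(s)\big)^2\le d_h^{\mu^k,\hat{\cP}^k}(s)$ and $\sum_{s} d_h^{\mu^k,\hat{\cP}^k}(s)=1$ to collapse $\sum_s\frac{\gamma}{2}\big(d_h^{\mu^k,\hat{\cP}^k}(s)\big)^2\le\frac{\gamma}{2}$, so that this part totals only $\frac{\gamma}{2}HK$. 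Choosing $\gamma=\sqrt{|\cS|\log|\cB|/K}$ balances $H|\cS|\gamma^{-1}\log|\cB|=H\sqrt{|\cS|K\log|\cB|}$ against $\frac{\gamma}{2}HK=\frac12 H\sqrt{|\cS|K\log|\cB|}$, giving the claimed $\cO(\sqrt{H^2|\cS|K\log|\cB|})$.

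The only real obstacle is the $|\cS|$-bookkeeping in the last step: the telescoped KL sum unavoidably accrues a factor $|\cS|$ (one $\log|\cB|$ per state), whereas the quadratic error term must be kept \emph{free} of that factor via the observation $\sum_s\big(d_h^{\mu^k,\hat{\cP}^k}(s)\big)^2\le 1$. Handling both pieces with the same crude bound would spoil the exponent of $|\cS|$ and destroy the balance achieved by the stated $\gamma$; the weighting by the reaching probability $d_h^{\mu^k,\hat{\cP}^k}(s)$ is precisely what makes this asymmetric treatment possible.
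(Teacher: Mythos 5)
Your proposal is correct and follows essentially the same route as the paper's own proof: apply the push-back lemma (Lemma \ref{lem:pushback}) to each $(h,s)$ subproblem, control the residual via Pinsker plus Cauchy--Schwarz using $\|W_h^k(s,\cdot)\|_\infty \le 1$, maximize the resulting scalar quadratic to get the $\tfrac{\gamma}{2}\bigl(d_h^{\mu^k,\hat{\cP}^k}(s)\bigr)^2$ term, then telescope the KL terms and use $\bigl(d_h^{\mu^k,\hat{\cP}^k}(s)\bigr)^2 \le d_h^{\mu^k,\hat{\cP}^k}(s)$ with $\sum_s d_h^{\mu^k,\hat{\cP}^k}(s)=1$ before balancing with the stated $\gamma$. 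The two points you flag as essential --- replacing the factor $H$ of Lemma \ref{lem:mirror_1} by $1$, and keeping the quadratic term free of the $|\cS|$ factor --- are exactly the steps the paper relies on as well.
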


\begin{proof}

Similar to the proof of Lemma \ref{lem:mirror_1}, and also by Lemma \ref{lem:pushback}, for any $\nu = \{\nu_h\}_{h=1}^H$ and $s\in \cS$, the mirror descent step in Algorithm \ref{alg:po2} leads to 
\begin{align*}
&\gamma  d_h^{\mu^k, \hat{\cP}^k}(s)  \big\langle W_h^k(s,\cdot), \nu^{k+1}_h(\cdot|s) \big\rangle_\cB   - \gamma d_h^{\mu^k, \hat{\cP}^k}(s) \big\langle   W_h^k(s,\cdot), \nu_h(\cdot|s) \big\rangle_\cB \\
&\qquad  \leq  D_{\mathrm{KL}}\big( \nu_h(\cdot| s), \nu_h^k(\cdot| s) \big) - D_{\mathrm{KL}}\big( \nu_h(\cdot| s), \nu_h^{k+1}(\cdot| s ) \big) -  D_{\mathrm{KL}}\big( \nu_h^{k+1}(\cdot| s), \nu_h^k(\cdot| s) \big),
\end{align*}
according to \eqref{eq:descent}, where $W_h^k(s, b) = \big\langle \mu^k_h(\cdot | s), \tilde{r}_h^k (s,\cdot,b) \big\rangle$.  Then, by rearranging the terms, we have
\begin{align}
\begin{aligned}\label{eq:mirror_descent_al}
&\gamma d_h^{\mu^k, \hat{\cP}^k}(s)  \big\langle  W_h^k(s,\cdot), \nu_h^k(\cdot|s)-\nu_h^*(\cdot|s)\big\rangle_\cB  \\
&\qquad  \leq D_{\mathrm{KL}}\big( \nu^*_h(\cdot|s), \nu_h^k(\cdot|s) \big) -  D_{\mathrm{KL}}\big( \nu^*_h(\cdot|s), \nu_h^{k+1}(\cdot|s) \big) -  D_{\mathrm{KL}}\big( \nu_h^{k+1}(\cdot|s), \nu_h^k(\cdot|s) \big) \\
&\qquad  \quad  - \gamma d_h^{\mu^k, \hat{\cP}^k}(s) \big\langle  W_h^k(s,\cdot), \nu_h^{k+1}(\cdot|s) - \nu_h^k(\cdot|s) \big\rangle_\cB.
\end{aligned}
\end{align}
Due to Pinsker's inequality, we have
\begin{align} \label{eq:pins_2}
&-D_{\mathrm{KL}}\big( \nu_h^{k+1}(\cdot| s), \nu_h^k(\cdot| s) \big)  \leq -\frac{1}{2} \big\|\nu_h^{k+1}(\cdot| s) - \nu_h^k(\cdot| s)\big\|^2_1.
\end{align}
Moreover, we have
\begin{align}
\begin{aligned}\label{eq:inner_diff_2}
&- \gamma d_h^{\mu^k, \hat{\cP}^k}(s)\big\langle  W_h^k(s, \cdot) , \nu_h^k(\cdot|s) - \nu_h^{k+1}(\cdot|s)\big\rangle_\cB  \\
&\qquad \leq \gamma  d_h^{\mu^k, \hat{\cP}^k}(s) \big\|W_h^k(s, \cdot) \big\|_\infty \big\|\nu_h^{k+1}(\cdot|s)-\nu_h^k(\cdot|s)\big\|_1\\
&\qquad \leq \gamma  d_h^{\mu^k, \hat{\cP}^k}(s) \big\|\nu_h^{k+1}(\cdot|s)-\nu_h^k(\cdot|s)\big\|_1,
\end{aligned}
\end{align}
where the last inequality is by 
\begin{align*}
\|W_h^k(s, \cdot) \|_\infty &= \max_{b\in \cB} W_h^k(s, b)\leq  \max_{s\in \cS, b\in \cB} W_h^k(s, b)\\
&\leq \max_{s\in \cS, b\in \cB} \big\langle \tilde{r}_h^{k-1}(s, \cdot , b),   \mu^k_h( \cdot \given s)  \big\rangle\\
&\leq \max_{s\in \cS, b\in \cB} \big\|\tilde{r}_h^{k-1}(s, \cdot , b)\big\|_\infty  \big\| \mu^k_h( \cdot \given s)  \big\|_1 \leq 1.
\end{align*}
due to the definition of $W_h^k$ and $0\leq \tilde{r}_h^k(s, a, b) =  \max \{ \hat{r}^k_h(s,a,b)  - \beta_h^{r, k}, 0 \} \leq \hat{r}^k_h(s,a,b) \leq 1$. Combining \eqref{eq:pins_2} and \eqref{eq:inner_diff_2} gives
\begin{align*}
&-D_{\mathrm{KL}}\big( \nu_h^{k+1}(\cdot| s), \nu_h^k(\cdot| s) \big) - \gamma d_h^{\mu^k, \hat{\cP}^k}(s)\big\langle  W_h^k(s, \cdot) , \nu_h^k(\cdot|s) - \nu_h^{k+1}(\cdot|s)\big\rangle \\
&\qquad \leq -\frac{1}{2} \big\|\nu_h^{k+1}(\cdot| s) - \nu_h^k(\cdot| s)\big\|^2_1 + \gamma  d_h^{\mu^k, \hat{\cP}^k}(s) \big\|\nu_h^{k+1}(\cdot|s)-\nu_h^k(\cdot|s)\big\|_1\\
&\qquad \leq  \frac{1}{2} \big[d_h^{\mu^k, \hat{\cP}^k}(s)\big]^2 \gamma^2 \leq  \frac{1}{2} d_h^{\mu^k, \hat{\cP}^k}(s) \gamma^2,
\end{align*}
where the second inequality is obtained via solving $\max_{x} \{ -1/2 \cdot x^2 + \gamma  d_h^{\mu^k, \hat{\cP}^k}(s) \cdot x
\}$ if letting $x = \|\nu_h^{k+1}(\cdot|s)-\nu_h^k(\cdot|s)\|_1$. Plugging the above inequality into \eqref{eq:mirror_descent_al} gives
\begin{align*}
& \gamma d_h^{\mu^k, \hat{\cP}^k}(s) \big\langle W_h^k(s, \cdot) , \nu_h^k(\cdot|s) - \nu_h^*(\cdot|s)\big\rangle_\cB \\
&\qquad \leq D_{\mathrm{KL}}\big( \nu^*_h(\cdot|s), \nu_h^k(\cdot|s) \big) -  D_{\mathrm{KL}}\big( \nu^*_h(\cdot|s), \nu_h^{k+1}(\cdot|s) \big) + \frac{1}{2} d_h^{\mu^k, \hat{\cP}^k}(s) \gamma^2.
\end{align*}
Thus, the policy improvement step implies
\begin{align*}
&\sum_{h=1}^H \sum_{s\in \cS} d_h^{\mu^k, \hat{\cP}^k}(s) \big\langle W_h^k(s, \cdot),  \nu_h^k(\cdot | s) - \nu_h^*(\cdot | s)\big\rangle_\cB\\
&\qquad \leq \frac{1}{\gamma}\sum_{h=1}^H \sum_{s\in \cS} \big[ D_{\mathrm{KL}}\big( \nu^*_h(\cdot|s), \nu_h^k(\cdot|s) \big) -  D_{\mathrm{KL}}\big( \nu^*_h(\cdot|s), \nu_h^{k+1}(\cdot|s) \big)\big] + \frac{1}{\gamma} \sum_{h=1}^H \sum_{s\in \cS} \frac{1}{2} d_h^{\mu^k, \hat{\cP}^k}(s) \gamma^2\\
&\qquad\leq \frac{1}{\gamma} \sum_{h=1}^H \sum_{s\in \cS} \big[ D_{\mathrm{KL}}\big( \nu^*_h(\cdot|s), \nu_h^k(\cdot|s) \big) -  D_{\mathrm{KL}}\big( \nu^*_h(\cdot|s), \nu_h^{k+1}(\cdot|s) \big) \big] +  \frac{1}{2} H \gamma.
\end{align*}
Further taking summation from $k=1$ to $K$ on both sides of the above inequality gives
\begin{align*}
&\sum_{k=1}^K \sum_{h=1}^H \sum_{s\in \cS} d_h^{\mu^k, \hat{\cP}^k}(s) \big\langle W_h^k(s, \cdot),  \nu_h^k(\cdot | s) - \nu_h^*(\cdot | s) \big\rangle_\cB \\
& \qquad \leq \frac{1}{\gamma} \sum_{h=1}^H \sum_{s\in \cS} \big[ D_{\mathrm{KL}}\big( \nu^*_h(\cdot|s), \nu_h^1(\cdot|s) \big) -  D_{\mathrm{KL}}\big( \nu^*_h(\cdot|s), \nu_h^{K+1}(\cdot|s) \big) \big] +  \frac{1}{2} H K\gamma\\
& \qquad \leq \frac{1}{\gamma} \sum_{h=1}^H \sum_{s\in \cS} D_{\mathrm{KL}}\big( \nu^*_h(\cdot|s), \nu_h^1(\cdot|s) \big)  +  \frac{1}{2} H K\gamma.
\end{align*}
Note that by the initialization in Algorithm \ref{alg:po2}, it is guaranteed that $\nu_h^1 (\cdot | s) = \boldsymbol{1}/|\cB|$, which thus leads to $D_{\mathrm{KL}}\left( \mu^*_h(\cdot | s), \mu_h^1 (\cdot | s)\right) \leq \log |\cB|$. By setting $\gamma = \sqrt{  |\cS|\log |\cB|/K}$, we further bound the term as
\begin{align*}
\frac{1}{\gamma} \sum_{h=1}^H \sum_{s\in \cS} D_{\mathrm{KL}}\big( \nu^*_h(\cdot|s), \nu_h^1(\cdot|s) \big)  +  \frac{1}{2} H K\gamma  \leq \cO \left( \sqrt{H^2 |\cS| K \log |\cB|} \right),
\end{align*}
which gives
\begin{align*}
\sum_{k=1}^K \sum_{h=1}^H \sum_{s\in \cS} d_h^{\mu^k, \hat{\cP}^k}(s) \big\langle W_h^k(s, \cdot),  \nu_h^k(\cdot | s) - \nu_h^*(\cdot | s) \big\rangle_\cB \leq \cO \left( \sqrt{H^2 |\cS| K \log |\cB|} \right).
\end{align*}
This completes the proof.
\end{proof}

\begin{lemma} \label{lem:pred_err2} With probability at least $1-\delta$, Algorithm \ref{alg:po2} ensures that
\begin{align*}
\sum_{k=1}^K\sum_{h=1}^H \sum_{s\in \cS} d_h^{\mu^k, \hat{\cP}^k}(s) [\mu_h^k(\cdot| s)]^\top \underline{\varsigma}_h^k(s, \cdot, \cdot)  \nu_h^*(\cdot | s)  \leq 0,
\end{align*}
where $\underline{\varsigma}_h^k(s, a, b)= \tilde{r}_h^k (s,a,b)-r_h (s,a,b)$.
\end{lemma}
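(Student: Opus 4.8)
The plan is to show that, on the high-probability event supplied by Lemma \ref{lem:r_bound}, the pointwise error $\underline{\varsigma}_h^k(s,a,b) = \tilde{r}_h^k(s,a,b) - r_h(s,a,b)$ is nonpositive for \emph{every} $(s,a,b,h,k)$, and then observe that the claimed sum is merely a nonnegatively weighted combination of such terms. This is the reward-only analogue of the optimism argument used in Lemma \ref{lem:pred_err_1}, but it is simpler because no transition or value terms enter.

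First I would invoke Lemma \ref{lem:r_bound}, which guarantees with probability at least $1-\delta$ that $|\hat{r}_h^k(s,a,b) - r_h(s,a,b)| \leq \beta_h^{r,k}(s,a,b)$ holds simultaneously for all $(s,a,b)\in\cS\times\cA\times\cB$, $h\in[H]$, and $k\in[K]$ (the union bound over all indices is already folded into that lemma, which is the key point: it gives a \emph{uniform} bound). In particular this yields the one-sided inequality $\hat{r}_h^k(s,a,b) - \beta_h^{r,k}(s,a,b) \leq r_h(s,a,b)$ for all arguments at once.

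Next I would use the definition $\tilde{r}_h^k(s,a,b) = \max\{\hat{r}_h^k(s,a,b) - \beta_h^{r,k}(s,a,b),\, 0\}$ from Line \ref{line:def_til_r} of Algorithm \ref{alg:po2} and split into the two cases of the maximum. If the first argument attains the max, then $\tilde{r}_h^k(s,a,b) = \hat{r}_h^k(s,a,b) - \beta_h^{r,k}(s,a,b) \leq r_h(s,a,b)$ by the one-sided bound above. If instead $0$ attains the max, then $\tilde{r}_h^k(s,a,b) = 0 \leq r_h(s,a,b)$ since $r_h(s,a,b)\in[0,1]$ is nonnegative. In either case $\tilde{r}_h^k(s,a,b) \leq r_h(s,a,b)$, i.e.\ $\underline{\varsigma}_h^k(s,a,b) \leq 0$ for all $(s,a,b,h,k)$ on this event.

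Finally I would conclude by the nonnegativity of the weights. The reaching probability satisfies $d_h^{\mu^k,\hat{\cP}^k}(s)\geq 0$, and $\mu_h^k(\cdot|s)$, $\nu_h^*(\cdot|s)$ are probability distributions with nonnegative entries, so each summand expands as $d_h^{\mu^k,\hat{\cP}^k}(s)\,[\mu_h^k(\cdot|s)]^\top \underline{\varsigma}_h^k(s,\cdot,\cdot)\nu_h^*(\cdot|s) = \sum_{a,b} d_h^{\mu^k,\hat{\cP}^k}(s)\,\mu_h^k(a|s)\,\underline{\varsigma}_h^k(s,a,b)\,\nu_h^*(b|s)$, a sum of products of nonnegative factors with the nonpositive quantity $\underline{\varsigma}_h^k(s,a,b)$, hence is $\leq 0$. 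Summing over $s$, then $h$, then $k$ preserves the inequality, yielding the stated bound on the same event of probability at least $1-\delta$. I do not anticipate any substantial obstacle; the only point needing care is that the sign statement must hold \emph{uniformly} over all indices before summation, which is precisely what the internal union bound of Lemma \ref{lem:r_bound} provides.
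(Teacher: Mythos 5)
Your proposal is correct and follows essentially the same argument as the paper's proof: invoke Lemma \ref{lem:r_bound} for the uniform high-probability bound, do a case analysis on the max in the definition of $\tilde{r}_h^k$ (using nonnegativity of $r_h$ for the zero branch) to get $\underline{\varsigma}_h^k(s,a,b)\leq 0$ pointwise, and then conclude since the sum is a nonnegatively weighted combination of these terms. The only cosmetic difference is that the paper folds the subtraction of $r_h$ inside the max before splitting cases, which is the same computation.
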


\begin{proof} With probability at least $1-\delta$, for any $(s,a,b)\in \cS\times\cA\times \cB, h\in [H], k\in [K]$, we have
\begin{align*}
    \underline{\varsigma}_h^k(s, a, b)&= \tilde{r}_h^k (s,a,b)-r_h (s,a,b)\\
    &=\max \big\{ \hat{r}^{k-1}_h(s,a,b)- r_h (s, a, b) - \beta_h^{r, k-1}, -r_h (s, a, b) \big\}\\
    &\leq \max \big\{ 0, -r_h (s, a, b) \big\}=0,
\end{align*}
where $\tilde{r}_h^k (s,a,b)$ is computed as in Algorithm \ref{alg:po2} and the inequality is by $\hat{r}^{k-1}_h(s,a,b)- r_h (s, a, b) - \beta_h^{r, k-1} \leq 0$ with probability at least $1-\delta$ by Lemma \ref{lem:r_bound}. The above result reflects the optimism of $\tilde{r}_h^k$. Therefore, with probability at least $1-\delta$, we have
\begin{align*}
&\sum_{k=1}^K\sum_{h=1}^H \sum_{s\in \cS} d_h^{\mu^k, \hat{\cP}^k}(s) [\mu_h^k(\cdot| s)]^\top \underline{\varsigma}_h^k(s, \cdot, \cdot)  \nu_h^*(\cdot | s) \\
&\quad\  = \sum_{h=1}^H \sum_{s\in \cS} d_h^{\mu^k, \hat{\cP}^k}(s) \sum_{a,b}\mu_h^k(a| s)\big[ \tilde{r}_h^k (s, a, b)-r_h (s, a, b) \big] \nu_h^*(b | s)\\
&\quad\ \leq 0.
\end{align*}
This completes the proof. 
\end{proof}


Before giving the next lemma, we first present the following definition for the proof of the next lemma.

\begin{definition}[Confidence Set] Define the following confidence set for transition models
\begin{align*}
\Upsilon^k := \Big\{\tilde{\cP} : &\left|\tilde{\cP}_h(s'|s,a) - \hat{\cP}^k_h(s'|s,a)\right| \leq \epsilon_h^k, ~\|\tilde{\cP}_h(\cdot|s,a)\|_1=1, \\
& \text{ and }~\tilde{\cP}_h(s'|s,a) \geq 0, ~\forall (s,a,s')\in \cS\times \cA \times \cS, \forall k\in [K] \Big\}
\end{align*}
where we define
\begin{align*}
& \epsilon_h^k := 2 \sqrt{\frac{\hat{\cP}^k_h(s'|s,a) \log (|\cS| |\cA| H K/\delta' )}{\max\{ N_h^k(s,a)-1, 1 \} }} +  \frac{14\log (|\cS| |\cA| H K/\delta')}{3\max\{ N_h^k(s,a)-1, 1 \}}
\end{align*}
with $N_h^k(s,a):=\sum_{\tau = 1}^k \mathbbm{1}\{(s,a) = (s_h^\tau, a_h^\tau)\}$ and $\hat{\cP}^k$ being the empirical transition model.

\end{definition}

\begin{lemma} \label{lem:stationary_dist_err} With probability at least $1-\delta$, the difference between $q^{\mu^k, \cP}$ and $d^k$ are bounded as
 \begin{align*}
&\sum_{k=1}^K\sum_{h=1}^H\sum_{s\in \cS}\left|q_h^{\mu^k, \cP}(s) - d_h^{\mu^k, \hat{\cP}^k}(s)\right| \leq \tilde{\cO}\left(H^2|\cS|\sqrt{|\cA| K} \right).
\end{align*}
\end{lemma}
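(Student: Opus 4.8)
The plan is to follow the same occupancy-measure perturbation argument that establishes Lemma~\ref{lem:stationary_dist_err_P2}, now specialized to the single-controller structure. The crucial observation is that, because the transition $\cP_h(\cdot\given s,a)$ is driven entirely by Player~1's action, the pair $(s_h,a_h)$ generated under $\mu^k$ and $\cP$ behaves exactly like the state-action process of a single-agent MDP controlled by Player~1; Player~2's policy $\nu^k$ never enters $q_h^{\mu^k,\cP}$ or $d_h^{\mu^k,\hat{\cP}^k}$. I would therefore introduce the state-action occupancy measures $w_h^k(s,a):=q_h^{\mu^k,\cP}(s)\,\mu_h^k(a\given s)$ and $\hat{w}_h^k(s,a):=d_h^{\mu^k,\hat{\cP}^k}(s)\,\mu_h^k(a\given s)$, so that $q_h^{\mu^k,\cP}(s)=\sum_{a}w_h^k(s,a)$ and $d_h^{\mu^k,\hat{\cP}^k}(s)=\sum_{a}\hat{w}_h^k(s,a)$.

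First I would reduce the target to an occupancy-measure difference via the triangle inequality:
\begin{align*}
\sum_{k=1}^K\sum_{h=1}^H\sum_{s\in\cS}\bigl|q_h^{\mu^k,\cP}(s)-d_h^{\mu^k,\hat{\cP}^k}(s)\bigr|
\le \sum_{k=1}^K\sum_{h=1}^H\sum_{s\in\cS}\sum_{a\in\cA}\bigl|w_h^k(s,a)-\hat{w}_h^k(s,a)\bigr|.
\end{align*}
Next, using the confidence set $\Upsilon^k$ defined above, I would invoke the single-controller analog of Lemma~\ref{lem:conf_set_P2} (empirical Bernstein plus a union bound) to guarantee $\cP\in\Upsilon^k$ for all $k$ with high probability, while $\hat{\cP}^k\in\Upsilon^k$ holds by construction. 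With both $\cP$ and $\hat{\cP}^k$ lying in $\Upsilon^k$, the single-controller analog of Lemma~\ref{lem:occu_mea_bound_P2} (with $\cS_2,\cB$ replaced by $\cS,\cA$) yields $\sum_{k,h}\sum_{s,a}|\hat{w}_h^k(s,a)-w_h^k(s,a)|\le\cE_1+\cE_2$, where $\cE_2=\cO(\mathrm{poly}(H,|\cS|,|\cA|)\cdot\log(|\cS||\cA|HK/\delta'))$ carries no $\sqrt{K}$ factor and is therefore subdominant.

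The dominant term $\cE_1$ I would control with the single-controller analog of Lemma~\ref{lem:occu_err_P2}, namely $\sum_k\sum_{s,a} w_h^k(s,a)/\sqrt{\max\{N_h^k(s,a),1\}}=\cO(\sqrt{|\cS||\cA|K}+|\cS||\cA|\log K+\log(H/\delta'))$ together with the corresponding $1/\max\{N_h^k(s,a),1\}$ bound. Substituting these into $\cE_1$, the per-step factor $\sqrt{|\cS|}$ coming from the transition confidence radius multiplies $\sqrt{|\cS||\cA|K}$, and the double sum $\sum_{h=2}^H\sum_{h'=1}^{h-1}$ contributes an $H^2$ factor, giving $\cE_1=\tilde{\cO}(H^2|\cS|\sqrt{|\cA|K})$ once the lower-order $\log K$ terms are discarded for $K$ large. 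Collecting the high-probability events, rescaling $\delta=8\delta'$, and combining $\cE_1$ with the negligible $\cE_2$ would then yield the claimed $\tilde{\cO}(H^2|\cS|\sqrt{|\cA|K})$ bound.

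The main obstacle is not the counting bookkeeping, which is essentially a transcription of the Player~2 proof under $(\cS_2,\cB)\mapsto(\cS,\cA)$, but making precise that the single-controller occupancy dynamics genuinely reduce to a single-agent MDP so that the Jin et al.\ perturbation machinery applies verbatim. Concretely, I must verify that the recursion relating $\hat{w}_h^k-w_h^k$ across steps involves only the counts $N_h^k(s,a)$ and the transition confidence radius $\epsilon_h^k$, and no quantity depending on $\nu^k$, and that the empirical-Bernstein set $\Upsilon^k$ is the correct object controlling $\|\hat{\cP}_h^k(\cdot\given s,a)-\cP_h(\cdot\given s,a)\|_1$ in the weighted $\ell_1$ sense needed for the occupancy comparison. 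Once that reduction is justified, the remainder follows the established template.
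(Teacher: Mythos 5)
Your proposal is correct and follows essentially the same route as the paper's own proof: reduce the reaching-probability difference to a state-action occupancy-measure difference, use the confidence set $\Upsilon^k$ (with $\cP\in\Upsilon^k$ by empirical Bernstein and $\hat{\cP}^k\in\Upsilon^k$ by construction) to invoke the occupancy-perturbation bound $\cE_1+\cE_2$, control $\cE_1$ via the counting lemma to get $\tilde{\cO}(H^2|\cS|\sqrt{|\cA|K})$, and discard $\cE_2$ as subdominant before rescaling $\delta=8\delta'$. The "single-controller analogs" you describe are exactly the paper's Lemmas \ref{lem:conf_set}, \ref{lem:occu_err}, and \ref{lem:occu_mea_bound}, so no additional verification beyond what the paper provides is needed.
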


\begin{proof}
By the definition of state distribution, we first have
\begin{align*}
&\sum_{k=1}^K\sum_{h=1}^H\sum_{s\in \cS}\left|q_h^{\mu^k, \cP}(s) - d_h^{\mu^k, \hat{\cP}^k}(s)\right| \\
&\qquad =  \sum_{k=1}^K \sum_{h=1}^H \sum_{s\in \cS} \left|\sum_{a\in \cA}   w_h^k(s,a) - \sum_{a\in \cA}   \hat{w}_h^k(s,a) \right| \\
&\qquad\leq  \sum_{k=1}^K \sum_{h=1}^H \sum_{s\in \cS} \sum_{a\in \cA}   \big| w_h^k(s,a) -  \hat{w}_h^k(s,a) \big|.
\end{align*}
where $ \hat{w}_h^k(s,a)$ is the occupancy measure under the empirical transition model $\hat{\cP}^k$ and the policy $\mu^k$. Then, since $\hat{\cP}^k\in \Upsilon^k$ always holds for any $k$,  by Lemma \ref{lem:occu_mea_bound}, we can bound the last term of the bound inequality such that with probability at least $1-6\delta'$,
\begin{align*}
\sum_{k=1}^K\sum_{h=1}^H\sum_{s\in \cS}\left|q_h^{\mu^k, \cP}(s) - d_h^{\mu^k, \hat{\cP}^k}(s)\right| \leq \cE_1 + \cE_2.
\end{align*}
Next, we compute the order of $\cE_1$ by Lemma \ref{lem:occu_err}. With probability at least $1-2\delta'$, we have
\begin{align*}
\cE_1 &= \cO\left[  \sum_{h=2}^H \sum_{h'=1}^{h-1} \sum_{k=1}^K \sum_{s\in \cS} \sum_{a\in \cA} w_h^k(s,a) \left(  \sqrt{ \frac{  |\cS|\log (|\cS| |\cA| H K/\delta' )}{\max\{ N_h^k(s,a), 1\} }} + \frac{ \log (|\cS| |\cA| H K/\delta' )}{\max\{ N_h^k(s,a), 1\}}  \right) \right]\\
&= \cO\left[  \sum_{h=2}^H \sum_{h'=1}^{h-1} \sqrt{|\cS|}\left( \sqrt{|\cS| |\cA| K} + |\cS| |\cA| \log K + \log \frac{H}{\delta'}  \right)  \log \frac{|\cS| |\cA| H K}{\delta'}\right] \\
&= \cO\left[  \left( H^2 |\cS| \sqrt{ |\cA| K} + H^2 |\cS|^{3/2} |\cA| \log K + H^2 \sqrt{|\cS|}\log \frac{H}{\delta'}  \right)  \log \frac{|\cS| |\cA| H K}{\delta'}\right] \\
&= \tilde{\cO} \left( H^2 |\cS| \sqrt{ |\cA| K} \right),
\end{align*}
where we ignore  $\log K$ terms when $K$ is sufficiently large such that $\sqrt{K}$ dominates, and $ \tilde{\cO}$ hides logarithm dependence on $|\cS|$, $|\cA|$, $H$, $K$, and $ 1/\delta'$. On the other hand, $\cE_2$ also depends on $\mathrm{ploy}(H, |\cS|, |\cA|)$ except the factor $\log \frac{|\cS| |\cA| H K}{\delta'}$ as shown in Lemma \ref{lem:occu_mea_bound}. Thus, $\cE_2$ can be ignored comparing to $\cE_1$ if $K$ is sufficiently large. Therefore, we eventually obtain that with probability at least $1-8\delta'$, the following inequality holds
\begin{align*}
\sum_{k=1}^K\sum_{h=1}^H\sum_{s\in \cS}\left|q_h^{\mu^k, \cP}(s) - d_h^{\mu^k, \hat{\cP}^k}(s)\right| \leq \tilde{\cO} \left( H^2 |\cS| \sqrt{ |\cA| K} \right) .
\end{align*}
We let $\delta = 8\delta'$ such that $\log \frac{|\cS| |\cA| H K}{\delta'} = \log \frac{8|\cS| |\cA| H K}{\delta}$ without changing the order as shown above.  Then, with probability at least $1-\delta$, we have $\sum_{k=1}^K\sum_{h=1}^H\sum_{s\in \cS} |q_h^{\mu^k, \cP}(s) - d_h^{\mu^k, \hat{\cP}^k}(s)| \leq \tilde{\cO} ( H^2 |\cS| \sqrt{ |\cA| K} )$. This completes the proof.
\end{proof}


\begin{lemma} \label{lem:reward_err} With probability at least $1-\delta$, the following inequality holds 
\begin{align*}
\sum_{k=1}^K \sum_{h=1}^H  \EE_{\mu^k, \cP, \nu^k} \big[\beta_h^{r,k}(s_h,a_h,b_h) \biggiven s_1\big] \leq \tilde{\cO} \left(\sqrt{|\cS| |\cA| |\cB| H^2 K } \right).
\end{align*} 
\end{lemma}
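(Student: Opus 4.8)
The plan is to express the expectation against the true occupancy measure induced by $(\mu^k,\nu^k,\cP)$ and then invoke the occupancy–count concentration already developed for the reaching-probability analysis. First I would write, for each $h$ and $k$,
\begin{align*}
\EE_{\mu^k,\cP,\nu^k}\big[\beta_h^{r,k}(s_h,a_h,b_h)\biggiven s_1\big] = \sum_{s\in\cS}\sum_{a\in\cA}\sum_{b\in\cB} w_h^k(s,a,b)\,\beta_h^{r,k}(s,a,b),
\end{align*}
where $w_h^k(s,a,b):=q_h^{\mu^k,\cP}(s)\,\mu_h^k(a\given s)\,\nu_h^k(b\given s)$ is the occupancy measure over the triples $(s,a,b)$ at step $h$ under the current policies and the \emph{true} transition model. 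Since $\beta_h^{r,k}(s,a,b)^2 = 4\log(|\cS||\cA||\cB|HK/\delta)/\max\{N_h^k(s,a,b),1\}$, the constant and logarithmic factor pull out of the sum as $2\sqrt{\log(|\cS||\cA||\cB|HK/\delta)}$, leaving the core quantity $\sum_{s,a,b} w_h^k(s,a,b)/\sqrt{\max\{N_h^k(s,a,b),1\}}$.

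Next I would control this core quantity by the occupancy–count bound of Lemma \ref{lem:occu_err}. That lemma is stated for the state–action visitation of Player 1, but its proof relies only on the fact that the per-episode visitation indicator $\mathbbm{1}\{(s_h^k,a_h^k,b_h^k)=(s,a,b)\}$ is bounded and has conditional mean $w_h^k(s,a,b)$ given the history before episode $k$; the same martingale/Bernstein argument therefore applies verbatim with the counting unit being the triple $(s,a,b)$ and the effective size $|\cS||\cA|$ replaced by $|\cS||\cA||\cB|$. This yields, with probability at least $1-2\delta'$ and for all $h\in[H]$ simultaneously,
\begin{align*}
\sum_{k=1}^K\sum_{s,a,b}\frac{w_h^k(s,a,b)}{\sqrt{\max\{N_h^k(s,a,b),1\}}} = \cO\Big(\sqrt{|\cS||\cA||\cB|K}+|\cS||\cA||\cB|\log K+\log(H/\delta')\Big),
\end{align*}
whose leading term comes from the Cauchy–Schwarz estimate $\sum_{s,a,b}\sqrt{N_h^K(s,a,b)}\le\sqrt{|\cS||\cA||\cB|\sum_{s,a,b}N_h^K(s,a,b)}=\sqrt{|\cS||\cA||\cB|K}$, since $\sum_{s,a,b}N_h^K(s,a,b)=K$.

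Finally I would reinsert the pulled-out $\sqrt{\log(\cdot)}$ factor, sum over $h\in[H]$ to gain the factor $H$, set $\delta=2\delta'$, and discard the lower-order $\log K$ and $\log(H/\delta')$ terms dominated by $\sqrt{K}$, folding all logarithms into $\tilde\cO$. This gives
\begin{align*}
\sum_{k=1}^K\sum_{h=1}^H\EE_{\mu^k,\cP,\nu^k}\big[\beta_h^{r,k}(s_h,a_h,b_h)\biggiven s_1\big]\le\tilde\cO\big(H\sqrt{|\cS||\cA||\cB|K}\big)=\tilde\cO\big(\sqrt{|\cS||\cA||\cB|H^2K}\big),
\end{align*}
as claimed. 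The main obstacle is the second step: justifying that the occupancy–count lemma, which matches the true occupancy $w_h^k$ (under $\cP$) against empirical counts that may include the current episode, transfers cleanly from state–action pairs to state–action–action triples while still producing the sharp $\sqrt{|\cS||\cA||\cB|K}$ leading term rather than a looser bound. Everything else is bookkeeping of logarithmic and lower-order terms.
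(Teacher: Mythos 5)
Your proposal is correct, but it finishes with a different key lemma than the paper. The paper's proof is a two-step reduction: pull the factor $2\sqrt{\log(|\cS||\cA||\cB|HK/\delta)}$ out of the expectation, then invoke Lemma \ref{lem:expect_bonus}, which bounds $\sum_{k=1}^K\sum_{h=1}^H \EE_{\mu^k,\cP,\nu^k}\big[1/\sqrt{\max\{N_h^k(s,a,b),1\}}\big]$ by $\tilde{\cO}\big(\sqrt{|\cS||\cA||\cB|H^2K}+|\cS||\cA||\cB|H\big)$; that lemma is itself proved by exactly the joint-action reduction you describe (viewing the single-controller game as an MDP with policy $\mu_h^k(a\given s)\nu_h^k(b\given s)$ and action space of size $|\cA||\cB|$) followed by a citation to Lemma 19 of \citet{efroni2020optimistic}. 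You instead rewrite the expectation against the occupancy measure $w_h^k(s,a,b)$ and transfer the paper's own occupancy--count concentration (Lemma \ref{lem:occu_err}, the analogue of Lemma 10 of \citet{jin2019learning}) from pairs $(s,a)$ to triples $(s,a,b)$. This is legitimate: the only property the martingale argument needs is that the visitation indicator for $(s,a,b)$ at step $h$ of episode $k$ has conditional mean $w_h^k(s,a,b)$ given the history before episode $k$, which holds because $\mu^k,\nu^k$ are history-measurable and the trajectory is drawn from the true $\cP$; and counts that include the current episode only make $1/\sqrt{\max\{N,1\}}$ smaller, so the bound transfers a fortiori. What each route buys: yours is more self-contained, reusing machinery already in the paper and making explicit the condition needed for the extension to triples; the paper's is shorter but outsources the concentration to an external lemma. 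Both leave the same kind of lower-order terms ($|\cS||\cA||\cB|\log K$ in your version, $|\cS||\cA||\cB|H$ in Lemma \ref{lem:expect_bonus}), which are absorbed into $\tilde{\cO}\big(\sqrt{|\cS||\cA||\cB|H^2K}\big)$ once $K$ is sufficiently large.
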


\begin{proof} Since we have
\begin{align*}
&\sum_{k=1}^K \sum_{h=1}^H  \EE_{\mu^k, \cP, \nu^k} \big[\beta_h^{r,k}(s_h,a_h,b_h) \biggiven s_1\big] \\
&\qquad = \sum_{k=1}^K \sum_{h=1}^H  \EE_{\mu^k, \cP, \nu^k} \left[C \sqrt{\frac{ \log (|\cS| |\cA| |\cB| HK/\delta)}{N^k_h(s,a, b)}}~\right] \\
&\qquad = C \sqrt{\log \frac{|\cS| |\cA| |\cB| HK}{\delta}} \sum_{k=1}^K \sum_{h=1}^H  \EE_{\mu^k, \cP, \nu^k} \left[\sqrt{\frac{ 1 }{N^k_h(s,a, b)}}~\right],
\end{align*}
then we can apply Lemma \ref{lem:expect_bonus} and obtain
\begin{align*}
\sum_{k=1}^K \sum_{h=1}^H  \EE_{\mu^k, \cP, \nu^k} \big[\beta_h^{r,k}(s_h,a_h,b_h) \biggiven s_1\big] \leq \tilde{\cO} \left(\sqrt{|\cS| |\cA| |\cB| H^2 K } \right),
\end{align*}
with probability at least $1-\delta$. Here $\tilde{\cO}$ hides logarithm dependence on $ |\cS|, |\cA|, |\cB|, H,  K$, and $1/\delta$. This completes the proof.
\end{proof}

\subsection{Other Supporting Lemmas}


\begin{lemma}\label{lem:conf_set} With probability at least $1-4\delta'$, the true transition model $\cP$ satisfies that for any $k\in [K]$,
\begin{align*}
\cP \in \Upsilon^k.
\end{align*}

\end{lemma}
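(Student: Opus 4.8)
The plan is to recognize $\Upsilon^k$ as an \emph{empirical Bernstein} confidence region and to establish the per-coordinate deviation bound $|\hat{\cP}^k_h(s'\given s,a) - \cP_h(s'\given s,a)| \leq \epsilon_h^k$ uniformly over every tuple $(s,a,s',h)$ and every episode $k$; the remaining defining constraints of $\Upsilon^k$, namely $\|\cP_h(\cdot\given s,a)\|_1 = 1$ and $\cP_h(s'\given s,a)\geq 0$, hold for the true model by definition. First I would fix a tuple $(s,a,s',h)$ and set up the probabilistic model. Under the single-controller structure, whenever $(s,a)$ is the state--action pair observed at step $h$, the successor state is drawn from $\cP_h(\cdot\given s,a)$ independently of the current policies; hence the indicators $\mathbbm{1}\{s_{h+1}^\tau = s'\}$ recorded at the successive visits to $(s,a,h)$ form conditionally i.i.d.\ Bernoulli$(\cP_h(s'\given s,a))$ random variables with respect to the natural filtration, and $\hat{\cP}^k_h(s'\given s,a)$ is exactly their running empirical mean after $N_h^k(s,a)$ samples.

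Next I would apply the empirical Bernstein inequality to this Bernoulli sequence. For Bernoulli observations the empirical variance is $\hat{\cP}^k_h(1-\hat{\cP}^k_h)\leq \hat{\cP}^k_h$, so the variance term of empirical Bernstein collapses to the first summand $2\sqrt{\hat{\cP}^k_h(s'\given s,a)\log(\cdot)/\max\{N_h^k(s,a)-1,1\}}$ of $\epsilon_h^k$, while the boundedness term yields the second summand $\frac{14}{3}\log(\cdot)/\max\{N_h^k(s,a)-1,1\}$. The subtlety is that $N_h^k(s,a)$ is itself a random, data-dependent count, so the bound must hold \emph{simultaneously} for all realized sample sizes; I would obtain this either through a union bound over the sample count $n\in[K]$ (a crude peeling that only inflates the logarithmic factors already present) or equivalently through a time-uniform martingale version of empirical Bernstein. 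A further union bound over the $|\cS|^2|\cA|H$ tuples $(s,a,s',h)$ is then absorbed by the logarithmic term $\log(|\cS||\cA|HK/\delta')$ appearing in $\epsilon_h^k$, after rescaling the per-event failure probability.

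Tracking constants through the two-sided empirical Bernstein statement --- whose derivation splits the deviation into the concentration of the empirical mean and of the empirical variance --- accounts for the stated overall failure probability $4\delta'$. Combining the per-coordinate bound with the trivially satisfied normalization and nonnegativity constraints gives $\cP\in\Upsilon^k$ for every $k\in[K]$ with probability at least $1-4\delta'$, as claimed; the argument is in essence an adaptation of the confidence-set construction of \citet{jin2019learning}. The main obstacle is precisely the time-uniform control of the random counts $N_h^k(s,a)$: a naive fixed-$n$ application of empirical Bernstein is invalid because $N_h^k(s,a)$ depends on the same trajectory data, so the core technical care lies in making the concentration anytime-valid while preserving the variance-adaptive form that produces the sharp $\sqrt{\hat{\cP}^k_h}$ scaling.
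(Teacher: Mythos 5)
Your proposal is correct and takes essentially the same route as the paper: the paper establishes this lemma by invoking the empirical Bernstein inequality together with union bounds (over tuples $(s,a,s',h)$, episodes, and realized counts), deferring the details to the confidence-set constructions of \citet{jin2019learning} and \citet{efroni2020optimistic}, which is precisely the argument you spell out. Your treatment of the data-dependent counts $N_h^k(s,a)$ and the accounting of the $4\delta'$ failure probability simply fill in the steps the paper leaves to those cited references.
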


This lemma implies that the estimated transition model $\hat{\cP}^k_h(s'|s,a)$ by \eqref{eq:estimate} is closed to the true transition model $\cP_h(s'|s,a)$ with high probability. The upper bound for their difference is by empirical Bernstein's inequality and the union bound. 

The next lemma is modified from Lemma 10 in \citet{jin2019learning}.

\begin{lemma} \label{lem:occu_err}We let $w_h^k(s,a)$ denote the occupancy measure at the $h$-th step of the $k$-th episode under the true transition model $\cP$ and the current policy $\mu^k$. Then, with probability at least $1-2\delta'$ we have for all $h\in[H]$, the following results hold
\begin{align*}
\sum_{k=1}^K \sum_{s\in \cS} \sum_{a\in \cA}\frac{w_h^k(s,a)}{\max\{N_h^k(s,a), 1\}} = \cO\left(|\cS| |\cA| \log K + \log \frac{H}{\delta'}\right),
\end{align*}
and
\begin{align*}
\sum_{k=1}^K \sum_{s\in \cS} \sum_{a\in \cA}\frac{w_h^k(s,a)}{\sqrt{\max\{N_h^k(s,a), 1\}}} = \cO\left(\sqrt{|\cS| |\cA| K} + |\cS| |\cA| \log K + \log \frac{H}{\delta'}\right).
\end{align*}

\end{lemma}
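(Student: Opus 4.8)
The plan is to compare the random visitation counts $N_h^k(s,a)$ with their predictable (conditional-mean) counterparts and then reduce both sums to elementary deterministic estimates. Fix $h\in[H]$ and let $\cF_{k-1}$ be the $\sigma$-field generated by all trajectories through episode $k-1$. Since $\mu^k$ is determined by the data available at the end of episode $k-1$, the indicator $\mathbbm{1}_h^k(s,a):=\mathbbm{1}\{(s_h^k,a_h^k)=(s,a)\}$ satisfies $\EE[\mathbbm{1}_h^k(s,a)\given\cF_{k-1}]=w_h^k(s,a)$. Hence $\{\mathbbm{1}_h^k(s,a)-w_h^k(s,a)\}_k$ is a martingale difference sequence with conditional variance at most $w_h^k(s,a)$, and the actual count $N_h^k(s,a)=\sum_{\tau=1}^k\mathbbm{1}_h^\tau(s,a)$ concentrates around the expected count $\bar N_h^k(s,a):=\sum_{\tau=1}^k w_h^\tau(s,a)$.

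The first and crucial step is a concentration bridge: applying a Freedman/Bernstein-type inequality to this martingale, together with a union bound over $(s,a)\in\cS\times\cA$ and over the $H$ step indices and a peeling argument over $k$, I would show that with probability at least $1-2\delta'$, for every $(s,a,k)$ one has
\begin{align*}
\bar N_h^k(s,a)\ \le\ 2\,N_h^k(s,a)+\cO\Big(\log\frac{|\cS||\cA|HK}{\delta'}\Big).
\end{align*}
This is precisely the content adapted from Lemma 10 of \citet{jin2019learning}; the union over step indices and the failure probability together account for the additive $\log(H/\delta')$ term in the statement.

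Next I would substitute this bridge into the two target sums so that each denominator $\max\{N_h^k(s,a),1\}$ is replaced (up to the additive logarithmic correction, which is absorbed into the stated lower-order terms) by $\max\{\bar N_h^k(s,a),1\}$, reducing the problem to deterministic sums. Writing $a_k:=w_h^k(s,a)$ and $S_k:=\sum_{\tau\le k}a_\tau$, the harmonic-type sum obeys $\sum_{k}a_k/\max\{S_k,1\}=\cO(1+\log S_K)=\cO(\log K)$ since $S_K\le K$; summing over the $|\cS||\cA|$ pairs gives the first bound. For the second, the integral comparison yields $\sum_k a_k/\sqrt{\max\{S_k,1\}}=\cO(\sqrt{S_K})=\cO(\sqrt{\bar N_h^K(s,a)})$, and Cauchy--Schwarz with $\sum_{s,a}\bar N_h^K(s,a)\le K$ gives $\sum_{s,a}\sqrt{\bar N_h^K(s,a)}\le\sqrt{|\cS||\cA|K}$, producing the leading $\sqrt{|\cS||\cA|K}$ term.

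I expect the main obstacle to be the concentration bridge. Because $N_h^k(s,a)$ appears in the denominator, the quantity is self-normalized, so a plain Azuma--Hoeffding bound is too weak; obtaining the correct logarithmic dependence requires a Bernstein/Freedman inequality exploiting the conditional-variance bound $w_h^k(s,a)$ together with a careful peeling over the dyadic range of $\bar N_h^k(s,a)$ and a uniform union bound. Once that relation between $N_h^k$ and $\bar N_h^k$ is secured, the remaining steps are standard summation estimates.
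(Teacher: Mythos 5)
Your high-level strategy---bridge the realized counts $N_h^k(s,a)$ to the predictable counts $\bar N_h^k(s,a)=\sum_{\tau\le k}w_h^\tau(s,a)$ via a Bernstein/Freedman lower-tail bound, then finish with deterministic harmonic sums and Cauchy--Schwarz---is the classical route, and the deterministic half of your argument (the $\cO(\log S_K)$ and $\cO(\sqrt{S_K})$ estimates, and $\sum_{s,a}\sqrt{\bar N_h^K(s,a)}\le\sqrt{|\cS||\cA|K}$) is correct. The gap is that this route cannot deliver the bound \emph{in the form stated}. Your concentration bridge must hold simultaneously for all $|\cS||\cA|$ pairs (and all $h,k$), so the per-pair confidence radius is necessarily $\beta=\Theta\bigl(\log(|\cS||\cA|HK/\delta')\bigr)$. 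For each pair, the ``low-count'' episodes---those $k$ with $\bar N_h^k(s,a)\le c\beta$, where the bridge says nothing and the denominator can only be bounded by $1$---contribute up to $\Theta(\beta)$, hence $\Theta\bigl(|\cS||\cA|\log(|\cS||\cA|HK/\delta')\bigr)$ in aggregate. What your plan proves is therefore $\cO\bigl(|\cS||\cA|\log K+|\cS||\cA|\log(|\cS||\cA|H/\delta')\bigr)$: the confidence term is multiplied by $|\cS||\cA|$, whereas the lemma claims an additive $\log(H/\delta')$ that is \emph{not} so multiplied. This weaker form would still suffice for the only place the paper uses the lemma (the lower-order terms inside Lemma \ref{lem:stationary_dist_err}), but it does not establish the statement as written.

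The missing idea---and the one used in the proof this lemma is actually imported from, Lemma 10 of \citet{jin2019learning}---is to decompose with realized indicators rather than conditioning pair by pair: write $w_h^k(s,a)=\mathbbm{1}_h^k(s,a)+\bigl(w_h^k(s,a)-\mathbbm{1}_h^k(s,a)\bigr)$ in each numerator. The realized part is bounded with \emph{no} failure probability at all, since $\sum_{k}\sum_{s,a}\mathbbm{1}_h^k(s,a)/\max\{N_h^k(s,a),1\}\le|\cS||\cA|(1+\log K)$ is a purely deterministic harmonic sum over each pair's visit times. The residual, summed over all $(s,a)$ and $k$, is a \emph{single} martingale per step $h$ whose increments lie in $[-1,1]$ (exactly one pair is visited at step $h$ of each episode) and whose total conditional variance is self-bounding: $\sum_k\sum_{s,a}w_h^k(s,a)/\max\{N_h^k(s,a),1\}^2\le Z$, where $Z$ is the very sum being estimated. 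Freedman's inequality, applied once per $h$ (union bound over $H$ only), then gives $Z\le\cO(|\cS||\cA|\log K)+\cO\bigl(\sqrt{Z\log(H/\delta')}+\log(H/\delta')\bigr)$, and solving this quadratic yields exactly $\cO\bigl(|\cS||\cA|\log K+\log(H/\delta')\bigr)$; the square-root version follows the same pattern, using the first bound to control the variance. Without this aggregated decomposition, the per-pair union bound in your plan unavoidably inflates the $\log(1/\delta')$ dependence by a factor of $|\cS||\cA|$.
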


Furthermore, by Lemma \ref{lem:conf_set} and Lemma \ref{lem:occu_err}, we give the following lemma to characterize the difference of two occupancy measures, which is modified from parts of the proof of Lemma 4 in \citet{jin2019learning}.

\begin{lemma} \label{lem:occu_mea_bound} Let $w_h^k(s,a)$ be the occupancy measure at the $h$-th step of the $k$-th episode under the true transition model $\cP$ and the current policy $\mu^k$, and $\tilde{w}_h^k(s,a)$ be the occupancy measure at the $h$-th step of the $k$-th episode under any transition model $\tilde{\cP}^k \in \Upsilon^k$ and the current policy $\mu^k$ for any $k$. Then, with probability at least $1-6\delta'$ we have $\forall h\in[H]$, the following inequality holds
\begin{align*}
\sum_{k=1}^K \sum_{h=1}^K \sum_{s\in \cS} \sum_{a\in \cA}  \big|\tilde{w}_h^k(s,a) - w_h^k(s,a) \big|\leq \cE_1 + \cE_2, 
\end{align*}
where $\cE_1$ and $\cE_2$ are in the level of 
\begin{align*}
\cE_1 = \cO\left[  \sum_{h=2}^H \sum_{h'=1}^{h-1} \sum_{k=1}^K \sum_{s\in \cS} \sum_{a\in \cA} w_h^k(s,a) \left(  \sqrt{ \frac{  |\cS|\log (|\cS| |\cA| H K/\delta' )}{\max\{ N_h^k(s,a), 1\} }} + \frac{ \log (|\cS| |\cA| H K/\delta' )}{\max\{ N_h^k(s,a), 1\}}  \right) \right]
\end{align*} 
and
\begin{align*}
\cE_2 = \cO\left( \mathrm{poly}(H,|\cS|,|\cA|) \cdot\log \frac{|\cS| |\cA| H K}{\delta'} \right),
\end{align*}
where $ \mathrm{poly}(H,|\cS|,|\cA|)$ denotes the polynomial dependency on $H,|\cS|,|\cA|$.
\end{lemma}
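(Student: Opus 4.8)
The plan is to track how the discrepancy between the two transition models propagates forward through the occupancy-measure recursion, following the argument of Lemma~4 in \citet{jin2019learning}. Since both $w_h^k$ and $\tilde{w}_h^k$ are generated by the \emph{same} policy $\mu^k$ and differ only in the underlying transition model, I would first write the forward recursion for the state-action visitation measures and form their difference $\Delta_h := \tilde{w}_h^k - w_h^k$. Splitting each recursive step into a \emph{fresh-error} term, in which the true occupancy $w_{h'}^k$ is multiplied by the single-step transition gap $\tilde{\cP}_{h'} - \cP_{h'}$, and a \emph{propagated-error} term, in which the accumulated $\Delta_{h'-1}$ is pushed forward by the stochastic kernel induced by $\mu^k$ and $\tilde{\cP}_{h'}$, yields the basic one-step decomposition that drives the whole argument.

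Next I would unroll this recursion from $h'=1$ up to $h$. Because the propagation operator induced by $\mu^k$ and $\tilde{\cP}_{h'}$ is a stochastic matrix and is therefore non-expansive in the $\ell_1$ norm, the leading contribution to $\sum_s |\Delta_h(s)|$ is exactly the sum over layers $h' < h$ of the fresh single-step discrepancies, each weighted by the true occupancy $w_{h'}^k(s,a)$ and summed over the next state; all terms involving the product of two or more transition gaps are genuinely of lower order and can be collected into the polynomial remainder $\cE_2$.

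To control a single fresh-error layer, I would use the bound $\sum_{s'} |\tilde{\cP}_h(s'|s,a) - \cP_h(s'|s,a)| \le \sum_{s'}(|\tilde{\cP}_h - \hat{\cP}_h^k| + |\hat{\cP}_h^k - \cP_h|) \le 2\sum_{s'}\epsilon_h^k$, where the first piece is the definition of the confidence set $\Upsilon^k$ and the second follows from $\cP \in \Upsilon^k$, guaranteed with probability at least $1-4\delta'$ by Lemma~\ref{lem:conf_set}. Summing the empirical-Bernstein radius $\epsilon_h^k$ over the $|\cS|$ next states and applying Cauchy--Schwarz to the variance term (using $\sum_{s'}\hat{\cP}_h^k(s'|s,a)=1$) converts $\sqrt{\hat{\cP}_h^k \log(\cdot)/N_h^k(s,a)}$ into $\sqrt{|\cS|\log(\cdot)/N_h^k(s,a)}$ and the bias term into $|\cS|\log(\cdot)/N_h^k(s,a)$. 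Assembling the per-layer bounds over all $(h,h',k,s,a)$ reproduces exactly the claimed form of $\cE_1$, and a union bound over Lemma~\ref{lem:conf_set} and Lemma~\ref{lem:occu_err} gives the stated $1-6\delta'$ confidence.

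The main obstacle will be the bookkeeping in the unrolling step: I must verify that the cross terms arising from compounding two or more transition discrepancies along a trajectory are uniformly dominated by $\mathrm{poly}(H,|\cS|,|\cA|)\cdot\log(|\cS||\cA|HK/\delta')$ and hence absorbable into $\cE_2$, rather than contributing an extra factor that would degrade the $\sqrt{K}$ scaling of $\cE_1$. This requires carefully exploiting that each fresh error is already of order $1/\sqrt{N}$ and that the confidence radius shrinks with the visitation count, so that second-order products remain summably small once Lemma~\ref{lem:occu_err} is applied to the weighting occupancies.
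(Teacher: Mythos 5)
Your proposal is correct and follows essentially the same route as the paper, which itself defers to (a modification of) the proof of Lemma 4 in \citet{jin2019learning}: the forward recursion for the occupancy-measure difference, the fresh-error/propagated-error split with the single-step gap controlled via the confidence set $\Upsilon^k$ (Lemma \ref{lem:conf_set}) together with Cauchy--Schwarz over next states, second-order products of transition gaps absorbed into $\cE_2$, and a union bound with Lemma \ref{lem:occu_err} giving the stated $1-6\delta'$ confidence. No gaps to flag.
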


\begin{lemma}\label{lem:expect_bonus} With probability at least $1-\delta$, the following inequality holds
\begin{align*}
\sum_{k=1}^K \sum_{h=1}^H  \EE_{\mu^k, \cP, \nu^k} \left[\sqrt{\frac{1}{\max\{ N_h^k(s,a,b), 1 \}}} ~\right] \leq \tilde{\cO} \left(\sqrt{|\cS| |\cA| |\cB| H^2 K } + |\cS| |\cA| |\cB| H \right),
\end{align*}
where $\tilde{\cO}$ hides logarithmic terms. 
\end{lemma}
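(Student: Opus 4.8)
The plan is to reduce the left-hand side to a pigeonhole-style summation over state--action--action triples, paralleling the argument behind Lemma~\ref{lem:occu_err} but carried out for the $3$-tuple counts $N_h^k(s,a,b)$. Since the random tuple $(s_h,a_h,b_h)$ generated under $\mu^k,\cP,\nu^k$ has law equal to the true occupancy measure $w_h^k(s,a,b) := q_h^{\mu^k,\cP}(s)\,\mu_h^k(a\given s)\,\nu_h^k(b\given s)$, I would first rewrite each summand as
\[
\EE_{\mu^k,\cP,\nu^k}\Big[\sqrt{1/\max\{N_h^k(s,a,b),1\}}\,\big|\,s_1\Big]
= \sum_{(s,a,b)\in\cS\times\cA\times\cB} \frac{w_h^k(s,a,b)}{\sqrt{\max\{N_h^k(s,a,b),1\}}},
\]
so the quantity to bound becomes $\sum_{h=1}^H\sum_{k=1}^K\sum_{(s,a,b)} w_h^k(s,a,b)/\sqrt{\max\{N_h^k(s,a,b),1\}}$.

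The core is to control each inner sum, for fixed $h$, by relating the empirical count $N_h^k(s,a,b)$ to its deterministic proxy, the cumulative expected count $\bar{N}_h^k(s,a,b) := \sum_{\tau=1}^{k} w_h^\tau(s,a,b)$, which satisfies $\bar N_h^k = \EE[N_h^k]$. By a martingale (Freedman/Bernstein) concentration argument together with a union bound over all $(s,a,b,h,k)$ --- the same device used for the $2$-tuple counts in Lemma~\ref{lem:occu_err} --- I would establish that, with high probability, $N_h^k(s,a,b) \ge \tfrac12\bar{N}_h^k(s,a,b) - \cO(\log(|\cS||\cA||\cB|HK/\delta))$ holds uniformly, thereby replacing the random denominator by $\bar N_h^k$ up to an additive logarithmic slack. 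Next, treating $w_h^k$ as the increment $\bar{N}_h^{k}-\bar{N}_h^{k-1}$ of the nondecreasing sequence $\{\bar{N}_h^k\}_k$, the elementary inequality $\sum_k (a_k-a_{k-1})/\sqrt{a_k}\le 2\sqrt{a_K}$ yields, for each triple, $\sum_{k=1}^K w_h^k(s,a,b)/\sqrt{\max\{\bar{N}_h^k(s,a,b),1\}} = \cO\big(\sqrt{\bar{N}_h^K(s,a,b)}\big)$.

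It then remains to sum over triples and over $h$. By Cauchy--Schwarz and the normalization $\sum_{(s,a,b)}\bar{N}_h^K(s,a,b)=\sum_{\tau=1}^K\sum_{(s,a,b)} w_h^\tau(s,a,b)=K$ --- each $w_h^\tau$ being a probability distribution over triples --- we obtain $\sum_{(s,a,b)}\sqrt{\bar{N}_h^K(s,a,b)} \le \sqrt{|\cS||\cA||\cB|\,K}$, so that each fixed-$h$ sum is $\tilde{\cO}\big(\sqrt{|\cS||\cA||\cB|K} + |\cS||\cA||\cB|\big)$, where the second term collects the logarithmic slack summed over the $|\cS||\cA||\cB|$ triples. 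Summing over $h=1,\dots,H$ turns $\sqrt{K}$ into $H\sqrt{K}=\sqrt{H^2K}$ and produces the claimed bound $\tilde{\cO}\big(\sqrt{|\cS||\cA||\cB|H^2K}+|\cS||\cA||\cB|H\big)$ after a final union bound fixing $\delta$. The main obstacle is the concentration step: because the policies $\mu^k,\nu^k$ (hence $w_h^k$) are adaptively updated and the per-episode visit indicators are dependent across $k$, the count $N_h^k(s,a,b)$ cannot be controlled by a naive Chernoff bound and instead requires the martingale/Bernstein machinery and self-bounding union bound of \citet{jin2019learning}; once that concentration is in hand, the remaining summation and Cauchy--Schwarz steps are routine.
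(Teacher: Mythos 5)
Your proof is correct, but it takes a more self-contained route than the paper does. The paper's own proof of this lemma is essentially a one-line reduction: it views the single-controller game as an ordinary MDP whose action space is the joint space $\cA \times \cB$ (with product policy $w_h^k(a,b \given s) = \mu_h^k(a \given s)\,\nu_h^k(b\given s)$ and transition $\cP_h(\cdot \given s,a)$), and then invokes Lemma 19 of \citet{efroni2020optimistic} --- an expected-bonus lemma for MDPs with non-stationary dynamics --- to conclude the stated bound with $|\cA|$ replaced by $|\cA||\cB|$. What you have done is reconstruct, from first principles, exactly the argument underlying that cited lemma: rewriting the conditional expectation via the occupancy measure $w_h^k(s,a,b)$, concentrating the visit counts $N_h^k$ around the cumulative occupancies $\bar N_h^k$ via a Freedman/Bernstein martingale bound with a union bound over $(s,a,b,h,k)$, applying the integral-comparison inequality $\sum_k (a_k - a_{k-1})/\sqrt{a_k} \le 2\sqrt{a_K}$ to the nondecreasing sequence $\bar N_h^k$, and finishing with Cauchy--Schwarz against the normalization $\sum_{(s,a,b)} \bar N_h^K(s,a,b) = K$. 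Your case split between episodes where $\bar N_h^k$ exceeds the logarithmic threshold and those where it does not is precisely what produces the two terms of the bound --- the leading $\sqrt{|\cS||\cA||\cB|H^2K}$ term and the additive $|\cS||\cA||\cB|H$ (times logarithms) term --- so your derivation makes explicit where both terms come from, which the paper's citation leaves opaque. The trade-off: the paper's proof is shorter and defers the hard concentration step to the literature, while yours is self-contained but must still import (or prove) the standard count-concentration inequality $N_h^k \ge \frac{1}{2} \bar N_h^k - \cO(\log(|\cS||\cA||\cB|HK/\delta))$, which you correctly identify as the crux and which indeed requires the adapted-martingale machinery rather than a naive Chernoff bound, since the policies and hence the per-episode visit probabilities are data-dependent.
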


\begin{proof}
The zero-sum Markov game with single-controller transition can interpreted as a regular MDP learning problem with policies $w^k_h(a,b \given s) = \mu^k_h(a|s) \nu^k_h(b|s)$ and a transition model $\cP_h(s' | s, a, b) = \cP_h(s' | s, a)$ with a joint action $(a,b)$ in the action space of size $|\cA|  |\cB|$. Thus, we apply Lemma 19 of \citet{efroni2020optimistic}, which extends lemmas in \citet{zanette2019tighter,efroni2019tight} to MDP with non-stationary dynamics by adding a factor of $H$,  to obtain our lemma. This completes the proof. 
\end{proof}

\section{Simulation Experiment} \label{sec:simul}

In this section, we provide a simulation experiment for our proposed algorithms to verify our theoretical findings. For simplicity, we only present the simulation for the single-controller game learning algorithms proposed in Section \ref{sec:SCT}.

\begin{figure}[h]
\centering
  \includegraphics[width=0.8\linewidth]{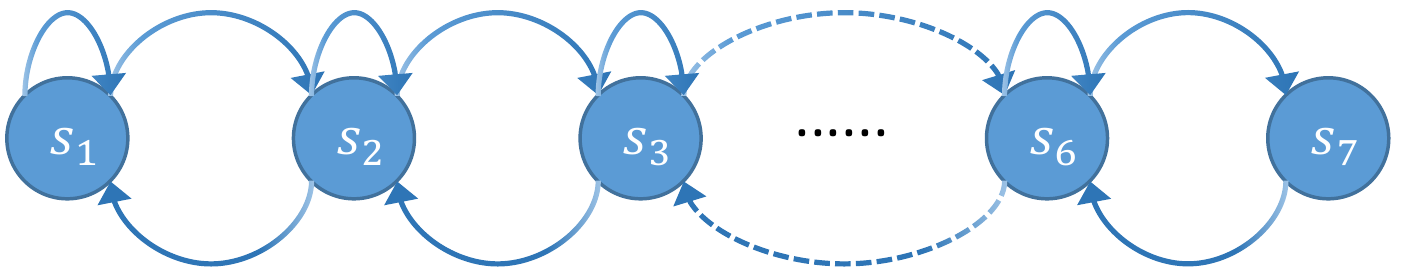}
  \caption{Environment for simulation.}
  \label{fig:env}
\end{figure}

Specifically, we consider an environment with 7 states in a chain shape as in Figure \ref{fig:env}, namely $|\cS| = 7$. We also let the episode length be $H=7$. The action spaces for both players are as defined as $\cA = \cB =\{0, 1\}$ such that $|\cA| = |\cB| = 2$. Each episode starts from the state $s_1$. Since the transition is only controlled by Player 1's action, we define is as follows: (1) $\cP_h(s_2|s_1,a=0)=0.1$, $\cP_h(s_1|s_1,a=0)=0.9$, $\cP_h(s_2|s=s_1,a=1)=0.9$, $\cP_h(s_1|s_1,a=1)=0.1$. (2) $\cP_h(s_{i+1}|s_i,a=0)=0.05$, $\cP_h(s_i|s_i,a=0)=0.05$, $\cP_h(s_{i-1}|s_i,a=0)=0.9$, $\cP_h(s_{i+1}|s_i,a=1)=0.9$, $\cP_h(s_i|s_i,a=1)=0.05$, and $\cP_h(s_{i-1}|s_i,a=1)=0.05$, for all $1<i<7$. The above definition of the transition model indicates that the state transition is only allowed within the nearest neighbors in the chain. In addition, we define the reward function in the following way: (1) $r_h(s_i,\cdot, \cdot) = 0.1$ for all $i<7$. (2) for the state $s_7$, the reward function is defined as $r_h(s_7,a=0, b=0) = 0.9$, $r_h(s_7,a=0, b=1) = 0.2$, $r_h(s_7,a=1, b=0) = 0.6$, and $r_h(s_7,a=1, b=1) = 0.4$. We let the observation $r_h^k$ of the reward follow a uniform distribution $\mathrm{Unif}[r_h(s,a,b) - 0.1, r_h(s,a,b) + 0.1]$. One can simply know that the target policies for the two players are: $\mu_h^*(a=1|s=s_h)=1$ for all $h\leq 7$ and  $\nu_7^*(a_7=1|s=s_7)=1$. Such policies indicate that Player 1 always takes the action $1$ and Player 2 can take a random policy when $h<7$ as it will not affect the observed reward and take the action $1$ at $h=7$. Then we can calculate $V^{\mu^*, \nu^*}_1(s_1) = 0.8594323$ according to the Bellman equation.

\begin{figure}[h]
\centering
  \includegraphics[width=0.8\linewidth]{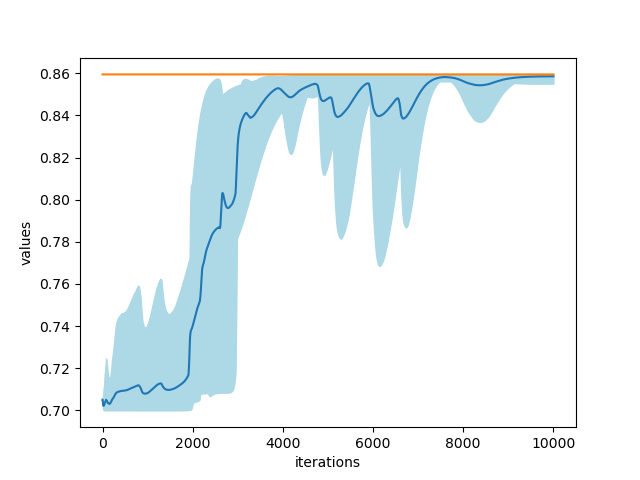}
  \caption{Simulation result.}
  \label{fig:result}
\end{figure}

We run Algorithms \ref{alg:po1} and \ref{alg:po2} together for $K = 10000$ rounds.  
In order to get a faster convergence in practice, we set relatively smaller UCB bonuses and larger step sizes. Then, the step sizes $\eta$ and $\gamma$ in Algorithms \ref{alg:po1} and \ref{alg:po2} are 50 times the theoretically suggested values, i.e., $\eta = 50\times\sqrt{\log |\cA|/(KH^2)}$ and $\gamma = 50\times\sqrt{|\cS|\log |\cB|/K}$. The bonuses are 0.01 times the theoretically suggested values, i.e., $\beta_h^{r, k}(s,a,b) = 0.01\times\sqrt{\frac{ 4\log (|\cS| |\cA| |\cB| HK/\delta)}{\max\{N^k_h(s,a, b), 1\}}}$ and $\beta_h^{\cP, k}(s,a) = 0.01\times\sqrt{\frac{2H^2 |\cS| \log (|\cS||\cA|HK/\delta)}{ \max\{N_h^k(s,a), 1\}}}$ with setting $\delta = 0.01$. In the experiment, we aim to show that the learned the value function $V_1^{\mu^k,\nu^k}(s_1)$ can converge to $V_1^{\mu^*,\nu^*}(s_1)$, where $(\mu^k,\nu^k)$ are the policies generated by the algorithms at the $k$-th round. We run the algorithms 5 times and present the averaged value function as in Figure \ref{fig:result}. Here the blue curve is $V_1^{\mu^k,\nu^k}(s_1)$ for $k\leq K$, which is averaged for the experiments of 5 times. The orange curve is $V^{\mu^*, \nu^*}_1(s_1) = 0.8594323$. From the experimental results, we can observe that after running Algorithms \ref{alg:po1} and \ref{alg:po2} together for $K = 10000$ rounds, $V_1^{\mu^k,\nu^k}(s_1)$ will converge to $V_1^{\mu^*,\nu^*}(s_1)$ in a sublinear rate.

\end{appendices}

\end{document}